\documentclass{article}

\usepackage{microtype}
\usepackage{graphicx}
\usepackage{subcaption}
\usepackage{booktabs} 
\usepackage{bbm}
\usepackage{hyperref}

\usepackage{enumitem}

\usepackage[accepted]{icml2026}

\usepackage{amsmath}
\usepackage{amssymb}
\usepackage{mathtools}
\usepackage{amsthm}
\usepackage{thmtools}
\mathtoolsset{showonlyrefs}

\usepackage[capitalize,noabbrev]{cleveref}
\usepackage{multirow}
\theoremstyle{plain}
\newtheorem{theorem}{Theorem}[section]

\newtheorem{lemma}[theorem]{Lemma}
\newtheorem{corollary}[theorem]{Corollary}
\theoremstyle{definition}

\newtheorem{assumption}[theorem]{Assumption}
\theoremstyle{remark}
\newtheorem{remark}[theorem]{Remark}

\usepackage[textsize=tiny]{todonotes}

\icmltitlerunning{Distortion of AI Alignment Revisited: RLHF is a Decent Utilitarian Aligner}

\begin{document}

\twocolumn[
  \icmltitle{Distortion of AI Alignment Revisited: RLHF is a Decent Utilitarian Aligner}

  \icmlsetsymbol{equal}{*}

  \begin{icmlauthorlist}
    \icmlauthor{Kazusato Oko}{ucb,aip}
    \icmlauthor{Annie Ulichney}{ucb}
    \icmlauthor{Nika Haghtalab}{ucb}
    \icmlauthor{Han Bao}{ism,tu,aip}
  \end{icmlauthorlist}

  \icmlaffiliation{ucb}{University of California, Berkeley}
  \icmlaffiliation{aip}{Center for Advanced Intelligence Project, RIKEN}
  \icmlaffiliation{ism}{The Institute of Statistical Mathematics and the Graduate University for Advanced Studies}
  \icmlaffiliation{tu}{Tohoku University}

  \icmlcorrespondingauthor{Kazusato Oko}{oko@berkeley.edu}
  \icmlkeywords{Machine Learning, ICML}

  \vskip 0.3in
]

\printAffiliationsAndNotice{} 

\begin{abstract}
  While Reinforcement Learning from Human Feedback (RLHF) is the standard paradigm for aligning large language models with human preferences, its effectiveness in pluralistic settings has been called into question. Notably, recent work by \citet{golz2025distortion} demonstrated that the \emph{distortion}---defined as the multiplicative gap between the average user utility of the RLHF policy and the optimal average utility---can scale exponentially with the Bradley--Terry temperature parameter $\beta$ when users have heterogeneous preferences. In this work, we present a fine-grained analysis of the distortion of RLHF with reward clipping and demonstrate that such exponential degradation is not a fundamental property of the algorithm but rather a consequence of distribution mismatch between the distribution generating preference data ($\mu$) and the KL reference policy ($\pi_{\mathrm{ref}}$). To this end, we establish tight upper and lower bounds on the distortion of RLHF across multiple regimes of the KL regularization strength. We show that in a representative regime, under the Bradley--Terry model, the distortion is $\tilde{\Theta}(\beta B + \beta)$, where $B$ is an upper bound on the log density ratio between $\mu$ and $\pi_{\mathrm{ref}}$. In particular, when there is no distribution mismatch (i.e., $\mu = \pi_{\mathrm{ref}}$), RLHF achieves the optimal distortion of $O(\beta)$ up to a constant. Our results suggest that, to reasonably maximize average utility with RLHF, it is preferable to use on-policy sampled preference data or to fine-tune before RLHF on data from a source close to $\mu$.
\end{abstract}

\section{Introduction}
In the post-training of large language models (LLMs), the alignment problem of matching model outputs to human preferences and ethical values is essential for safe and effective deployment \citep{bai2022training,wang2023aligning}. A representative approach is preference-based, which assumes the existence of an underlying reward function behind human-labeled preference data, estimates this reward, and then fine-tunes the model's policy to maximize it, as exemplified by Reinforcement Learning from Human Feedback (RLHF) \citep{christiano2017deep,ziegler2019fine,ouyang2022training}. This theoretical assumption of an underlying reward prevails even in preference-based methods without an explicit reward model \citep{ethayarajh2024kto,meng2024simpo,huangcorrecting}, including direct preference optimization (DPO) \citep{rafailov2023direct}.

Such preference-based policy optimization methods have been criticized for optimizing a single reward function, which may not represent diverse user populations well \citep{chakraborty2024maxmin,conitzer2024position}. Alternatively, the community has begun to acknowledge the existence of diverse users and to model such diversity explicitly \citep{sorensen2024roadmap}. A \emph{utilitarian} framework is one such attempt to assess how much an alignment method satisfies multiple users in terms of their average utility \citep{siththaranjan2023distributional,golz2025distortion}. To assess the utilitarian performance of alignment methods, the notion of \emph{distortion} from social choice theory is introduced \citep{procaccia2006distortion,boutilier2012optimal,anshelevich2021distortion}: it measures how far a given mechanism's average utility falls short of the highest achievable average utility. 

This framework can be applied to analyzing the utilitarian performance of alignment methods by identifying a utility and a mechanism with a reward and a distribution optimization problem under a KL constraint, respectively. Under this setup, \citet{golz2025distortion} prove that some distortion is unavoidable due to the nonlinearity of the Bradley--Terry (BT) reward model. Specifically, an algorithm-independent lower bound of the distortion is $\Omega(\beta)$ with $\beta>0$ denoting the Bradley--Terry temperature, which can be achieved by Nash Learning from Human Feedback \citep{munos2024nash}. In stark contrast, RLHF suffers from an exponential lower bound of the distortion $e^{\Omega(\beta)}$. Since this is a consequence of the nonlinearity of the BT model and practical reward models often operate in this nonlinear regime as shown later in Section~\ref{section:experiments}, RLHF can significantly amplify the curse of nonlinearity in theory.

\begin{table*}
\vspace{10pt}
  \begin{minipage}{0.48\linewidth}
      \centering
  \caption{Comparison of RLHF distortion bounds by settings.
  }
  \label{tab:distortion-overview}
    \begin{small}
      \begin{sc}
        \begin{tabular}{lcc}
          \toprule
          & AI alignment
          & Social choice \\
          \midrule
          \addlinespace
          \multirow{2}{*}{\citeauthor{golz2025distortion}}
          &  \multirow{2}{*}{$ e^{\Omega(\beta)}\ (B\to\infty)$}
          & $O(\beta^2)$ \\
          \addlinespace
          & & $\Omega(\beta)$  
         \\
         \addlinespace
         \midrule
         \addlinespace
          \multirow{2}{*}{Our results.}
          & $O(B\beta)$ (Thm.~\ref{theorem:upper-linear-main}) & \multirow{2}{*}{$O(\beta)$ (Thm.~\ref{theorem:social-choice-main})}
         \\
         \addlinespace
           & $\Omega(B \beta)$ (Thm.~\ref{theorem:lower-RLHF}) & 
           \\
           \addlinespace
         \midrule
         \addlinespace
        \citeauthor{boutilier2012optimal} & N/A & $\Omega(m^\frac12)\ (\beta=\infty)$
           \\
            \addlinespace
          \bottomrule
        \end{tabular}
      \end{sc}
    \end{small}
    \vspace{2.6mm}
    
\footnotesize
\setlength{\baselineskip}{1.0\baselineskip}
  $\ast$ We assume a constant KL budget $\tau=\Theta(1)$ in the AI alignment setting.
  $\beta$ is the temperature in the Bradley--Terry model, $B$ is the maximum log density ratio, and $m$ is the number of alternatives.
  \end{minipage}
  \hfill
  \begin{minipage}{0.48\linewidth}
      \centering
    \includegraphics[width=\textwidth]{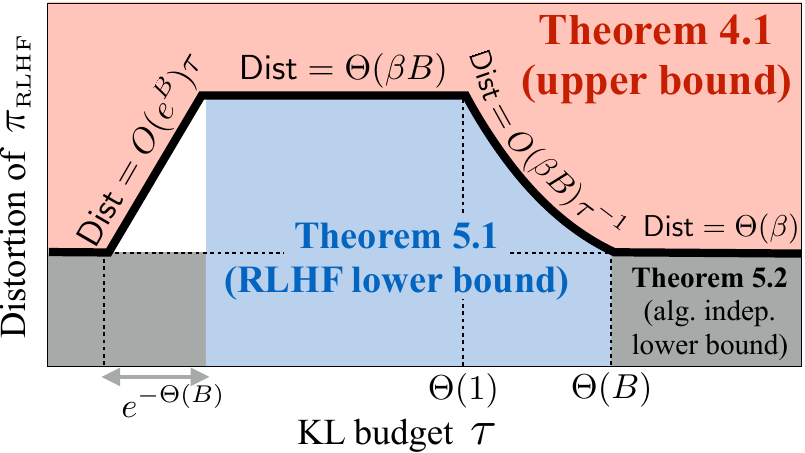}
    \captionof{figure}{
      Overview of the upper and lower bounds in the analysis in the AI alignment setting (with the KL constraint), shown for a given distribution mismatch $B$.
    }
    \label{figure:overview-of-the-bounds}
  \end{minipage}
\end{table*}

Although the distortion theory suggests a potentially catastrophic failure mode of RLHF, it has been widely applied in post-training of many LLMs such as GPT-4 \citep{achiam2023gpt}, where RLHF does not exhibit extremely poor empirical performance \citep{touvron2023llama,team2024gemini}. To reconcile this gap between theory and practice, this paper refines the distortion theory by raising the following questions:

\vspace{-1mm}
\begin{centering}
    {\it Under what conditions can the distortion of RLHF be reasonably controlled? Conversely, is the remaining distortion fundamentally unavoidable?}
\end{centering}

\subsection{Our Contributions}
We show that distortion can be controlled by the mismatch between the reference policy $\pi_{\mathrm{ref}}$ of the KL constraint and the distribution $\mu$ from which preference data are sampled. Concretely, defining the maximum log density ratio $B = \max_{x\in A}\log\frac{\mu(x)}{\pi_{\mathrm{ref}}(x)}$ 
and the temperature parameter of the Bradley--Terry model $\beta$, we show that the worst-case distortion scales as $\tilde{\Theta}(B \beta + \beta)$. This suggests that, unless there is extreme distribution mismatch, RLHF can reasonably solve the problem of maximizing the average underlying utility. We summarize our contributions below, compare bounds across settings and prior work in Table~\ref{tab:distortion-overview}, and discuss practical implications in Section~\ref{section:conclusion}.
\begin{itemize}[leftmargin=*]
\vspace{-3mm}
    \item We begin by considering a special case where no KL constraint is imposed on the policy space in Section~\ref{section:social-choice}. This corresponds to the traditional social choice setting, where the mechanism selects the alternative that maximizes the estimated reward. We show that the distortion of RLHF is bounded by $O(\beta)$. This improves upon the $O(\beta^2)$ bound of \citet{golz2025distortion}. The proof introduces \textit{effective utility}, which retains only the components of the utility that provide informative signals to the RLHF rewards, and it serves as the foundation for the general case.
    
\vspace{-1.5mm}
    \item Section~\ref{section:ai-alignment}  generalizes the analysis to the AI alignment setting, where the policy is subject to a KL constraint $\mathrm{KL}(\pi\|\pi_{\mathrm{ref}})\leq\tau$. This reflects that the policy is usually regularized to remain close to the base model, and this constraint is known to result in an interesting increase in distortion \citep{golz2025distortion}. We show that RLHF with reward clipping yields distortion that varies from $O(\beta)$ to $O(B\beta)$ depending on the interplay between the KL budget $\tau$ and the distribution mismatch $B$; see Figure~\ref{figure:overview-of-the-bounds}. Also, we show that fine-tuning base models with samples from $\mu$ prior to RLHF, making $\pi_{\mathrm{ref}}$ close to $\mu$, mitigates the effect of distribution mismatch.

\vspace{-1.5mm}
    \item In Section~\ref{section:lower-bounds}, we establish matching lower bounds, up to logarithmic factors, across all ranges of the KL budget~$\tau$.
    These results decompose into an RLHF-specific (Theorem~\ref{theorem:lower-RLHF}) and an algorithm-independent lower bound (Theorem~\ref{theorem:lower-linear-main}); see the blue and gray shades, respectively, in Figure~\ref{figure:overview-of-the-bounds}. For the former, a distortion of $\tilde{\Omega}(B\beta)$ persists even when the KL budget $\tau$ is exponentially small, i.e., $\tau = e^{-\Theta(B)}$, demonstrating the fundamental prevalence of RLHF distortion caused by distribution mismatch.

\vspace{-1.5mm}
    \item
    To help interpret the theoretical results, we present two experiments in Section~\ref{section:experiments}: one examining the practical impact of the BT model’s nonlinear regime (Section~\ref{subsection:experiment-openweight}), and another illustrating how mismatch between $\pi_{\mathrm{ref}}$ and $\mu$ induces distortion (Section~\ref{subsection:experiment-synthetic}).
    
\end{itemize}
\vspace{-2.5mm}
\subsection{Related Work}
\paragraph{Distortion in social choice theory.}
The notion of distortion, defined as the ratio of the maximum achievable average utility to the average utility achieved by the selected distribution, originates in social choice theory. In the classical setting, each user has a deterministic preference ordering that is consistent with their underlying utility values  \citep{procaccia2006distortion}. In this setting, the Borda rule, which selects an alternative with the maximum average winning probability in pairwise comparisons, is known to have infinite distortion \citep{procaccia2006distortion}.
As algorithm-independent lower bounds, 
for $m$ alternatives, the distortion is $\Omega(m^2)$ for deterministic voting rules \citep{caragiannis2011voting}, and $\Omega(m^{1/2})$ for randomized voting rules \citep{boutilier2012optimal, ebadian2024optimized}.

\vspace{-1mm}
On the other hand, following \citet{golz2025distortion}, our setting assumes that preference comparisons are generated stochastically based on a mixture of Bradley--Terry models, which recovers deterministic preference orderings in the limit $\beta \to \infty$. This stochasticity provides information about the magnitudes of utility differences; consequently, the Borda winner can achieve finite distortion of $O(\beta)$, as shown in Theorem~\ref{theorem:social-choice-main}, in contrast to the unbounded distortion under deterministic preference orderings.
Our results showing that stochasticity yields improved distortion aligns with recent results in the metric distortion setting  \citep{anshelevich2018approximating,goyal2024metric}.

\vspace{-2mm}
\paragraph{Utilitarian analysis of alignment methods.}
Our closest prior work is \citet{golz2025distortion}, which introduced distortion for the analysis of AI alignment methods and incorporated a KL-divergence constraint.
Their analysis of RLHF builds on results from \citet{siththaranjan2023distributional}, which establish an ordering equivalence between rewards fitted under a single Bradley--Terry model and the Borda score; related results can be traced back to \citet{rajkumar2014statistical}.
Also, \citet{ge2024axioms} analyze RLHF under a linear utility model and prove that it fails to satisfy several desired properties.
\section{Problem Setting}\label{section:setting}
Our problem setting follows that of \citet{golz2025distortion}, except that we define distortion with respect to individual distributions, and we apply reward clipping in Section~\ref{section:ai-alignment}. Let $A = \{1, \dots, m\}$ denote a finite set of alternatives, which corresponds 
to LLM candidate completions, or groups of semantically equivalent completions.
Each user is associated with a utility vector $u = (u(1), \dots, u(m))$, whose entries satisfy $0 \leq u(x) \leq 1$ and represent the user's utility for alternative $x$, and we denote the distribution of the utility vectors by $\mathcal{D}$. Our goal is to find a distribution $\pi\in\Delta(A)$ that maximizes the average utility $\mathbb{E}_{u \sim \mathcal{D},\, x \sim \pi}[u(x)]$ over users, with or without a KL constraint. When a KL constraint is (is not, resp.) imposed, we call the problem \emph{AI alignment (social choice, resp.)} setting.

\vspace{-2mm}
\paragraph{Generation of preference data.} 
Let $\mu \in \Delta(A)$ be a distribution which selects alternatives to be compared and satisfies $\mu(x)>0$ for all $x\in A$. We sample $n$ users' utility vectors $u^1, \dots, u^n \sim_{\mathrm{i.i.d.}} \mathcal{D}$. For each user $i$, we draw a pair of alternatives $x^i,y^i\sim_{\mathrm{i.i.d.}} \mu$. Then, user $i$ compares $x^i$ and $y^i$ via the Bradley--Terry (BT) model. The probability that the user $i$ prefers $x^i$ over $y^i$ (denoted by $x^i\succ y^i$) is:
\begin{align}
    p(x^i\succ y^i) = \sigma\big(\beta(u^i(x^i)-u^i(y^i))\big),
\end{align}
where $\sigma(t) = 1/(1+e^{-t})$ is the sigmoid function and $\beta\geq 1$ is the temperature parameter controlling preference sharpness; otherwise, $i$ prefers $y^i$ over $x^i$. Without loss of generality, we index samples such that $x^i \!\succ\! y^i$ always holds.

\vspace{-2mm}
\paragraph{Reward estimation.}
The standard RLHF proceeds in two stages~\citep{ouyang2022training}. First, a reward model is trained using human-labeled preference data
$\{x^i \cdot^i y^i\}_{i=1}^n \ (\cdot^i \in \{\succ,\prec\})$. Specifically, the preference data are assumed to be generated by a \emph{single} user, and the reward model $\bar{r}$ is learned via maximum likelihood estimation of a single BT model:
\begin{align}
    \bar{r} := \arg\max_{\tilde{r} \in \mathbb{R}^m}
\sum_{i=1}^n
\log\big(\sigma(\tilde{r}(x^i) - \tilde{r}(y^i))\big).
\label{eq:Setting-LossFunc}
\end{align}
We consider the regime where $n$ is large such that the empirical loss in \eqref{eq:Setting-LossFunc} converges to the population loss. 
If utilities are constant across users, $\bar{r}$ recovers $\beta u$ up to an additive constant as $n\to\infty$. However, when $u$ is stochastic, $\bar{r}$ cannot exactly recover the average utility across users $\mathbb{E}_{u\sim\mathcal{D}}[u]$~\citep{golz2025distortion}.

Equation~\eqref{eq:Setting-LossFunc} admits an additive constant shift in $\bar{r}(x)$. We fix the translational degree of freedom, for convenience, by imposing the constraint that a virtual alternative $x$ with utility $u(x)\equiv 0$ and infinitesimal sampling probability $\mu(x)$ satisfies $\bar{r}(x)=0$. Formally, we impose the constraint
\begin{align}
    \mathbb{E}_{x\sim \mu}\big[\sigma(-\bar{r}(x))\big] = \mathbb{E}_{x\sim \mu,u\sim\mathcal{D}}\big[\sigma(-\beta u(x))\big].
    \label{eq:reward-zero}
\end{align}
Remark~\ref{remark:virtualalternative} explains that this is indeed equivalent to considering such a virtual alternative.

\vspace{-2mm}
\paragraph{Reward clipping.}
To concentrate the reward on a range of values that is informative as a learning signal, we introduce reward clipping only in the AI alignment setting. Fix constants $r_{\min}$ and $r_{\max}$ (specified in Section~\ref{section:ai-alignment}), and truncate the learned reward by
\begin{align}
\label{eq:setting-reward-clipping}
r(x) = \max\{\min\{\bar{r}(x), r_{\max}\},r_{\min}\}.
\end{align}
While this is a technical device for obtaining the optimal rate, in Section~\ref{subsection:remove-clipping} we remove reward clipping and show an $O(\beta^2)$ distortion when $\mu = \pi_{\mathrm{ref}}$.

\vspace{-2mm}
\paragraph{Proximal policy optimization (PPO).}
In the second stage of RLHF, it optimizes a policy with respect to the learned reward. We analyze two distinct settings:
\vspace{-3mm}
\begin{itemize}[leftmargin=*]
    \item[] \textbf{(i) AI alignment setting.}\quad 
    LLMs initialized from pre-trained models are fine-tuned to maximize reward. During optimization, the generation policy is typically regularized to stay close to the pre-trained model, which serves as the reference policy \citep{schulman2017proximal}. Prior work \citep{golz2025distortion} shows that this KL constraint may lead to an increase in distortion. Formally, given a reference policy $\pi_{\mathrm{ref}} \in \Delta(A)$ and a KL budget $\tau>0$, define 
    \begin{align}
        \pi_{\mathrm{RLHF}}=\arg\max_{\pi\colon \mathrm{KL}(\pi\|\pi_{\mathrm{ref}})\leq \tau}\mathbb{E}_{u\sim \mathcal{D},x\sim \pi}[r(x)].
        \label{eq:Setting-PPO}
    \end{align}
    If the maximizer is not unique, we may select any distribution among the maximizers.

    \vspace{.8mm}
    In the following analysis, we show that large distortion arises from a mismatch between $\pi_{\mathrm{ref}}$ and $\mu$, through its interaction with the KL budget $\tau$. Among several ways to define distribution mismatch, we use the maximum log probability ratio. Although this has the drawback of being sensitive to perturbations, we choose it for its simplicity, which facilitates analysis and helps clarify the effect of distribution mismatch on distortion.
    \vspace{-1mm}
    \begin{assumption}[Distribution mismatch]\label{assumption:B_boundedness}
        The log-likelihood ratio between $\pi_{\mathrm{ref}}$ and $\mu$ is uniformly bounded: $\max_{x\in A}\log\frac{\mu(x)}{\pi_{\mathrm{ref}}(x)}= B < \infty$.
    \end{assumption}
    
    \vspace{-3mm}
    We note that the distortion upper bounds require only the upper bound on $\log\frac{\mu(x)}{\pi_{\mathrm{ref}}(x)}$, whereas assuming a bound on its absolute value does not affect the lower bounds.

    \vspace{-1mm}
    \item[] \textbf{(ii) Social choice setting.}\quad In this setting, we may output any distribution maximizing $\mathbb{E}_{u \sim \mathcal{D},\, x \sim \pi}[r(x)]$ (with no KL constraint). This corresponds to a traditional formulation in social choice theory and corresponds to the limit $\tau \to \infty$ in~\eqref{eq:Setting-PPO}. We do not need reward clipping in this setting, so we set $r_{\min}=-\infty$ and $r_{\max}=\infty$.
\end{itemize}

\vspace{-4mm}
\paragraph{Distortion.}
In the \textbf{(i) AI alignment setting}, for any $\pi \in \Delta(A)$ we define distortion as the ratio between the best achievable average utility within the KL ball and the average utility achieved by $\pi$:
\vspace{-1mm}
\begin{align}
    {\sf Dist}(\pi) = \frac{\max_{\pi'\colon \mathrm{KL}(\pi'\|\pi_{\mathrm{ref}})\leq \tau}\mathbb{E}_{u\sim\mathcal{D},x\sim\pi'}[u(x)]}{\mathbb{E}_{u\sim\mathcal{D},x\sim\pi}[u(x)]}.
\end{align}

\vspace{-2mm}
In the \textbf{(ii) social choice setting}, which corresponds to the case $\tau = \infty$, the numerator is replaced with $\max_{x\in A}\mathbb{E}_{u\sim\mathcal{D}}[u(x)]$ \citep{procaccia2006distortion}.
\section{Warm-up: Social Choice Setting}\label{section:social-choice}
We begin by analyzing RLHF in the special case where no KL constraint is imposed. This setting is equivalent to analyzing the Borda winner in social choice theory and has implications for the reliability of AI leaderboards. Moreover, our proof strategy here forms the basis for the AI alignment setting with a KL constraint.

\vspace{-1mm}
In this setting, the policy $\pi_{\mathrm{RLHF}}$ concentrates on the alternatives with the maximum rewards. Given that the reward ranking is consistent with the ordering of the Borda scores ${\sf Borda}(x) = \mathbb{E}_{u\sim\mathcal{D},y\sim \mu}[p(x \succ y)]$ under the BT model (see Lemma~\ref{lemma:Borda_order_match} borrowed from \citet[Theorem~3.1]{siththaranjan2023distributional}), the alternatives with the highest rewards can also be interpreted as the Borda winners, i.e., the maximizers of ${\sf Borda}(x)$.
Thus, the distortion of $\pi_{\mathrm{RLHF}}$ is also the distortion of the Borda winners, i.e., the maximizers of ${\sf Borda}(x)$. In deference to the long history of the Borda voting rule in social choice theory \citep{procaccia2006distortion,boutilier2012optimal,anshelevich2021distortion}, we present $\pi_{\mathrm{RLHF}}$ as $\pi_{\mathrm{Borda}}$ in the social choice setting.
\begin{theorem}\label{theorem:social-choice-main}
    In the social choice setting, no reward clipping is imposed, and we may choose any policy maximizing $\mathbb{E}_{u\sim \mathcal{D},x\sim \pi}[\bar{r}(x)]$ as $\pi_{\mathrm{Borda}}$. Then, letting $C_1$ be an absolute constant, the distortion of the policy $\pi_{\mathrm{Borda}}$ is bounded by
    \begin{align}
        {\sf Dist}(\pi_{\mathrm{Borda}}) \leq C_1 \beta + 4.
    \end{align}
\end{theorem}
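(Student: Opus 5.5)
The plan is to compare Borda scores directly, using the ordering equivalence of Lemma~\ref{lemma:Borda_order_match}: $\pi_{\mathrm{Borda}}$ is supported on maximizers of ${\sf Borda}(x)=\mathbb{E}_{u,y\sim\mu}[\sigma(\beta(u(x)-u(y)))]$. Let $x^\star\in\arg\max_x U(x)$ with $U(x):=\mathbb{E}_u[u(x)]$, and let $\hat x$ be any Borda winner. We then have ${\sf Borda}(\hat x)\ge {\sf Borda}(x^\star)$. It is convenient to subtract the Borda score of the virtual zero-utility alternative of Remark~\ref{remark:virtualalternative}. This gives the increment
\begin{align}
\Delta(x):=\mathbb{E}_{u,y\sim\mu}\big[h_{u(y)}(u(x))\big],\qquad h_b(a):=\sigma(\beta(a-b))-\sigma(-\beta b),
\end{align}
and the inequality $\Delta(\hat x)\ge\Delta(x^\star)$. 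The goal is a pair of matching bounds: an upper bound on $\Delta(\hat x)$ that is linear in $U(\hat x)$, and a lower bound on $\Delta(x^\star)$ that is linear in $U(x^\star)$, with constants differing by only a factor $O(\beta)$.

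For this I would introduce the \emph{effective utility} announced in the contributions. Against an opponent of utility $b=u(y)$, only the part of $a=u(x)$ that moves $h_b$ counts, and $h_b$ saturates once $a\gtrsim b+1/\beta$. So I will define a weight $w(b)$ capturing the size of the informative window of $h_b$. My first guess is $w(b)\asymp 1-\sigma(-\beta b)$, the headroom above the zero alternative, together with the truncation $\tilde a=\min\{a,\,b+1/\beta\}$. The two elementary inequalities I would aim for, valid for all $a,b\in[0,1]$, are:
\begin{align}
h_b(a)\le C\,\beta\, a\, w(b)\quad\text{and}\quad h_b(a)\ge c\,\min\{\beta a,1\}\, w(b)-(\text{error}).
\end{align}
They follow from monotonicity of $h_b$ together with $\sigma'\le 1/4$, convexity of $\sigma$ on $(-\infty,0]$ when $a\le b$, and concavity on $[0,\infty)$ when $a\ge b$. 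Taking expectations, the first gives $\Delta(\hat x)\lesssim \beta\,\mathbb{E}_{u,y}[u(\hat x)w(u(y))]$. The second gives $\Delta(x^\star)\gtrsim \mathbb{E}_{u,y}[\min\{\beta u(x^\star),1\}\,w(u(y))]$.

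To close the argument I need to turn weighted quantities back into $U$. The upper bound is immediate if $w\le 1$ (i.e.\ $w\le O(1)$ after rescaling), giving $\Delta(\hat x)\le C\beta U(\hat x)$. For the lower bound, I will split users according to whether $u(x^\star)\ge 1/\beta$. On the large part, $\min\{\beta a,1\}\ge 1\ge a$. On the small part, $\min\{\beta a,1\}=\beta a\ge a$. Together these give $\Delta(x^\star)\gtrsim \mathbb{E}[u(x^\star)\,\bar w]$ with $\bar w=\mathbb{E}_y[w(u(y))]$. Dividing yields ${\sf Dist}\le C_1\beta$. The additive $+4$ should absorb the degenerate regime where $\bar w$ is tiny, meaning almost all mass of $\mu$ sits on alternatives of near-zero utility. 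There I would instead bound the ratio directly via $\sigma(-\beta b)\ge 1/2-\beta b/4$, which gives ${\sf Dist}\le 4$ plus lower-order terms.

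The main obstacle is the lower-bound inequality for $h_b$. The derivative $\sigma'$ along $[-\beta b,\beta(a-b)]$ can vary by a factor $e^{\beta}$, and this is exactly the source of the $e^{\Omega(\beta)}$ and $\beta^2$ losses in prior analyses. The weight $w(b)$ must therefore be chosen so that the upper and lower estimates use the same weight up to a factor $O(\beta)$ and not $O(e^{\beta})$. If $1-\sigma(-\beta b)$ fails, my fallback is to bound each $h_b(a)$ from both sides by the chord of $\sigma$ over the window $[-\beta b,\min\{\beta(a-b),1\}]$, and to define $w$ via that chord's slope.
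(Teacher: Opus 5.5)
The step that fails is the lower bound on $\Delta(x^\star)$. Your upper bound $\Delta(\hat x)\le\frac{\beta}{4}U(\hat x)$ is fine, and the unspecified ``error'' term is exactly where the difficulty lives.

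With your first choice, $w(b)=1-\sigma(-\beta b)=\sigma(\beta b)\in[\tfrac12,1)$, so $\bar w$ is never small. Your claim then reduces to $\Delta(x^\star)\gtrsim\mathbb{E}_u[\min\{\beta u(x^\star),1\}]$, which is false even when $x^\star$ maximizes average utility. Here is a counterexample:
\begin{itemize}
\item Let $\mu$ put mass $1-\epsilon$ on a single alternative $z$, with $\epsilon\to0$.
\item With probability $1-q$ a user has $u(z)=1$ and $u(x^\star)=1-\gamma$.
\item With probability $q$ a user has $u(z)=0$ and $u(x^\star)=1$; all other utilities are $0$.
\item Take $\gamma=\log\beta/\beta$ and $q$ slightly above $\gamma/(1+\gamma)$, so that $x^\star$ maximizes $U$ and $U(x^\star)\approx1$.
\end{itemize}
Then $\Delta(x^\star)\approx(1-q)\bigl(\sigma(-\log\beta)-\sigma(-\beta)\bigr)+q\bigl(\sigma(\beta)-\tfrac12\bigr)=O(\log\beta/\beta)$, while $\mathbb{E}_u[\min\{\beta u(x^\star),1\}]=1$. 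So the best constant in your lower bound is $O(\log\beta/\beta)$, and the chain $U(x^\star)\lesssim\Delta(x^\star)\le\Delta(\hat x)\le\frac{\beta}{4}U(\hat x)$ yields only ${\sf Dist}=O(\beta^2/\log\beta)$. The theorem itself is not violated here: the Borda winner is $z$, with $\Delta(z)\approx\frac12$.

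The chord-slope fallback cannot fix this. At $b=1$ one has $h_1(1)\approx\frac12$ but $h_1(1/\beta)\approx(e-1)e^{-\beta}$. So no weight depending only on $b$ satisfies both $h_1(1)\le C\beta\,w(1)$ and $h_1(1/\beta)\ge c\,w(1)$ with $C/c=O(\beta)$. If you instead make $w$ small against high-utility opponents, $\bar w$ becomes tiny precisely in the example above. There $\mu$ sits on alternatives of \emph{high} utility, not near-zero utility, so your $+4$ argument via $\sigma(-\beta b)\ge\frac12-\frac{\beta b}{4}$ does not apply. The $+4$ in the statement is in fact just inherited from the general form of Lemma~\ref{lemma:u-to-uhat} and plays no role when $\tau=\infty$.

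The missing idea is the treatment of signal degradation: you must compare $\hat x$ against $\mu$-typical alternatives, not only against $x^\star$. Let $I_1(u,x)$ indicate $\mathbb{P}_{y\sim\mu}[u(y)-u(x)\ge c\beta^{-1}]\ge\frac12$.
\begin{itemize}
\item If $I_1(u,x)=0$, more than half of the opponents satisfy $\beta(u(x)-u(y))>-c$. For them, Lemma~\ref{lemma:LowerBoundDiffSigmoid} gives $h_{u(y)}(u(x))\ge\sigma'(2c)\min\{\beta u(x),c\}$. This is a constant-weight lower bound that is valid only against opponents that are not much better. It yields $\Delta(x)\ge\frac{c\,\sigma'(2c)}{2}\mathbb{E}_u[(1-I_1(u,x))\,u(x)]$ for every $x$.
\item If $I_1(u,x^\star)=1$, the argument in the proof of Lemma~\ref{lemma:u-to-uhat} gives $u(x^\star)\le2\,\mathbb{E}_{y\sim\mu}[u(y)(1-I_1(u,y))]$.
\end{itemize}
Combining these with $\Delta(\hat x)\ge\max\{\Delta(x^\star),\mathbb{E}_{y\sim\mu}\Delta(y)\}$ and your upper bound gives $U(x^\star)\le\frac{3\beta}{2c\,\sigma'(2c)}U(\hat x)$ for any $0<c\le1$.

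With this repair, your direct Borda comparison is a genuinely shorter route than the paper's. The paper instead passes through $\hat u$ and the reward sandwich $\mathbb{E}_u[\hat u(x)]\lesssim\beta^{-1}\min\{\bar r(x),r_{\max}\}\lesssim\mathbb{E}_u[u(x)]$. The second inequality there needs the tail condition $\mathbb{P}_{u,x\sim\mu}[\beta u(x)>c]\le c^2$, plus a separate argument when it fails. The paper pays that price because pointwise reward bounds are what carry over to the KL-constrained setting, where $\pi_{\mathrm{RLHF}}$ maximizes $\mathbb{E}_{\pi}[r]$ rather than the Borda score.
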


\vspace{-0.5mm}
Prior work \citep[Theorem~2]{golz2025distortion} established an~$O(\beta^2)$ bound in this regime. 
Their analysis does not capture the magnitude of rewards, paying a $\Theta(\beta)$ penalty from linearizing the Bradley--Terry sigmoid over the interval $[-\beta,\beta]$ \emph{twice}.
On the other hand, we introduce an \textit{effective utility} that captures the signal of the utility preserved in pairwise comparisons, and we show that the learned rewards can be upper and lower bounded in terms of the effective and true utilities. While relating the maximization of the true utility to that of the effective utility incurs a factor of $\beta$ (Section~\ref{subsubsection:effective-utility}), bounding the reward from below and above by the effective utility and the utility introduces only constant factors (Section~\ref{subsubsection:reward-sandwich}). As a result, our approach leaves only a single, unavoidable factor of $\beta$. Furthermore, this direct evaluation of the rewards allows the analysis to extend to the AI alignment setting.

\vspace{-0.5mm}
Before presenting the proof overview, we discuss an implication for interpreting AI leaderboards:

\vspace{-3mm}
\paragraph{AI leaderboards are not too distorted.}
Our results have implications for AI leaderboards such as Chatbot Arena \citep{chiang2024chatbot}, which present users with responses from two anonymized models and ask them to select the preferred one. Based on the collected data, these leaderboards assume that each model has a deterministic utility and fit a single BT model to rank the models. This setting can be naturally viewed through the lens of social choice, with models viewed as alternatives. In particular, our results characterize how suboptimal the top-ranked model on a leaderboard can be relative to the true maximizer of average utility. Our improved bound in Theorem~\ref{theorem:social-choice-main} shows that this distortion is at most a constant factor larger than the algorithm-independent lower bound (Theorem~\ref{theorem:lower-linear-main}).

\vspace{-0.25mm}
\subsection{Proof Overview of Theorem~\ref{theorem:social-choice-main}}\label{subsection:social-choice-proof}
\vspace{-0.25mm}
\subsubsection{Effective Utility}\label{subsubsection:effective-utility}
\vspace{-0.5mm}
We outline the proof of Theorem~\ref{theorem:social-choice-main}.
First, we introduce the \textit{effective utility}, which extracts the utilities that actually contribute to rewards.
Formally, we define the effective utility $\hat{u}:A \!\to\! [0,\!c\beta^{-1}]$ as
\begin{align}
\label{eq:definition-effective-utility}
    \!\!\!\!\!\!\hat{u}(x) = 
    \begin{cases}
        0 & \text{(i) if $\mathbb{P}_{y\sim\mu}[u(y)\!-\!u(x)\!>\! c\beta^{-1}]\!\geq\! \frac12$,}\\
        c\beta^{-1} & \text{(ii) else if $u(x)>c\beta^{-1}$,}\\
        u(x) & \text{(iii) otherwise}
    \end{cases}
\end{align}
for a constant $c$ with $0<c\leq \frac{1}{316}$. 
The reduction is applied to the true utility $u$ under the two cases~(i)~and~(ii),
where the preference data become uninformative to recover the true utility $u$.
We describe these two cases below.

\vspace{-3mm}
\paragraph{(i) Signal degradation in pairwise comparisons.} Even when an alternative $x$ has high utility $u(x)$, it may still lose to a majority of opponents under $\mu$ (e.g., if $\mu$ concentrates on alternatives with higher utility). In this case, pairwise outcomes provide little evidence that $x$ has high utility, so a reward model fit to comparisons will accidentally treat such an $x$ as low utility. 
Thus, we set $\hat{u}(x)=0$ in this case.

\vspace{-3mm}
\paragraph{(ii) Underweighting of large utilities.} When $u(x) - u(y)$ is large, the win probability $\sigma(\beta(u(x) - u(y)))$ is close to 1. However, further increases in $u(x)$ have negligible impact on the observed comparisons and thus on the rewards. This prevents the reward from distinguishing well between large utilities. This motivates capping the effective utility at a threshold $c\beta^{-1}$ for a fixed constant $c > 0$. 

Then, we have the following relationship between the true utility and the effective utility.
The proof is found in Appendix~\ref{subsubsection:RLHF-proof-1}.
\begin{lemma}\label{lemma:u-by-uhat-main}
    The maximal average utility can be bounded by the maximal average effective utility as follows:
    \vspace{-0.25mm}
    \begin{align}\label{eq:u-by-uhat-main}
        \max_{\pi\in \Delta(A)}\mathbb{E}_{u\sim\mathcal{D},x\sim\pi}[u(x)]\lesssim \beta \max_{\pi}\mathbb{E}_{u\sim\mathcal{D},x\sim \pi}[\hat{u}(x)].
    \end{align}
\end{lemma}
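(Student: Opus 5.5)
Write $x^\star\in\arg\max_{x\in A}\mathbb{E}_{u\sim\mathcal{D}}[u(x)]$. The maximum of a linear functional over $\Delta(A)$ is attained at a vertex, so the left-hand side of \eqref{eq:u-by-uhat-main} equals $\mathbb{E}_u[u(x^\star)]$, and the right-hand side equals $\beta\max_{z\in A}\mathbb{E}_u[\hat{u}(z)]$.

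The plan is a pointwise (per-user) inequality. For every fixed utility vector $u$ I aim to show
\begin{align}
    \hat{u}(x^\star)+\mathbb{E}_{y\sim\mu}[\hat{u}(y)] \;\ge\; \tfrac{c}{4}\,\beta^{-1}\,u(x^\star).
\end{align}
Taking $\mathbb{E}_{u\sim\mathcal{D}}$ on both sides and using
\begin{align}
    \mathbb{E}_u[\hat u(x^\star)]\le \max_z\mathbb{E}_u[\hat u(z)],\qquad
    \mathbb{E}_{y\sim\mu}\mathbb{E}_u[\hat u(y)]\le \max_z\mathbb{E}_u[\hat u(z)]
\end{align}
then gives $\mathbb{E}_u[u(x^\star)]\le \frac{8}{c}\,\beta\max_z\mathbb{E}_u[\hat u(z)]$, which is the claim. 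The only structural fact used is that averaging over $y\sim\mu$ is dominated by the best single alternative.

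For the pointwise inequality, fix $u$ and split according to which case of \eqref{eq:definition-effective-utility} applies to $x^\star$.

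\emph{Case (ii) or (iii) for $x^\star$.} Here $\hat u(x^\star)=\min\{u(x^\star),c\beta^{-1}\}$. Since $0\le u(x^\star)\le 1$, this is at least $c\beta^{-1}u(x^\star)$, and we are done.

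\emph{Case (i) for $x^\star$.} Here $\mathbb{P}_{y\sim\mu}[u(y)>u(x^\star)+c\beta^{-1}]\ge\frac12$, and I must show that $\hat u(y)$ is large on a set of $y$ of $\mu$-mass at least $\frac12$. Introduce the "upper median" level
\begin{align}
    q=\inf\{s:\mathbb{P}_{y\sim\mu}[u(y)>s]<\tfrac12\}.
\end{align}
The case-(i) condition gives $q\ge u(x^\star)+c\beta^{-1}$. By right-continuity of the tail function, $\mathbb{P}_\mu[u(y)>s]\ge\frac12$ for every $s<q$. Consider the set $T=\{y:u(y)>q-\tfrac{c}{2}\beta^{-1}\}$, which has $\mu(T)\ge\frac12$. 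For $y\in T$ two things hold:
\begin{itemize}
    \item $u(y)+c\beta^{-1}>q+\tfrac c2\beta^{-1}>q$, so $\mathbb{P}_\mu[u(z)-u(y)>c\beta^{-1}]<\frac12$. Hence $y$ is not in case (i).
    \item $u(y)>q-\tfrac c2\beta^{-1}\ge\tfrac c2\beta^{-1}$.
\end{itemize}
Therefore $\hat u(y)=\min\{u(y),c\beta^{-1}\}\ge\frac c2\beta^{-1}$ on $T$. This yields $\mathbb{E}_{y\sim\mu}[\hat u(y)]\ge\frac c4\beta^{-1}\ge\frac c4\beta^{-1}u(x^\star)$.

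The main obstacle is this case-(i) step. The naive witnesses $y$ that beat $x^\star$ may themselves fall into case (i) and have $\hat u(y)=0$. The fix is to choose witnesses near the $\mu$-upper median of $u$ rather than arbitrary winners, and the delicate points are the strict and non-strict inequalities at the level $q$. Everything else is bookkeeping, and the resulting constant $8/c$ is absorbed into $\lesssim$.
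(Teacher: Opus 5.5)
Your proof is correct. Its skeleton is the paper's own (the $\tau\to\infty$ case of Lemma~\ref{lemma:u-to-uhat}): split on the case-(i) indicator, use $u\le c^{-1}\beta\hat u$ outside case (i), and let $\mu$ absorb the case-(i) loss. The difference lies in how that last step is done.

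The paper uses relative domination. For fixed $u$, every case-(i) alternative has strictly smaller utility than every non-case-(i) alternative, and the non-case-(i) alternatives carry $\mu$-mass at least $\frac12$. Hence $u(x)I_1(u,x)\le 2\,\mathbb{E}_{y\sim\mu}[u(y)(1-I_1(u,y))]\le 2c^{-1}\beta\,\mathbb{E}_{y\sim\mu}[\hat u(y)]$. This never has to locate a quantile or check that the witnesses are themselves outside case (i), and it gives $3/c$ rather than your $8/c$.

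You instead prove the absolute bound $\mathbb{E}_{y\sim\mu}[\hat u(y)]\ge\frac c4\beta^{-1}$ whenever some alternative is in case (i), and then use $u(x^\star)\le 1$. This loses nothing in order, because $\hat u\le c\beta^{-1}$ anyway.

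A bound of your type is what the paper needs in its $\tau<1$ regimes, at \eqref{eq:u-to-uhat-13}, where the case-(i) loss is compared with an additive $20\tau$ rather than with $u(x)$. There the paper argues that every $y$ with $u(y)\ge u(x)+c\beta^{-1}$ lies outside case (i). That is false for an arbitrary case-(i) $x$: take $\mu$-masses $0.1,0.1,0.8$ on utilities $0,c\beta^{-1},2c\beta^{-1}$, with $x$ the first alternative and $y$ the second. It holds only if $x$ is chosen as the highest-utility case-(i) alternative. This is exactly the pitfall you flagged, and your quantile argument handles it correctly. Taking $T=\{y:u(y)\ge q\}$ instead of your $\frac c2\beta^{-1}$-shifted set even recovers the paper's constant $\frac12 c\beta^{-1}$.
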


\subsubsection{Reward Sandwich}\label{subsubsection:reward-sandwich}
Next, we ``sandwich'' the learned reward $\bar{r}$ with the effective utility and true utility. Specifically, we can lower bound the reward with the effective utility and upper bound the reward with the true utility as follows. The proofs for the following lemmas can be found in Appendices~\ref{subsubsection:RLHF-proof-2}~and~\ref{subsubsection:RLHF-proof-3}, respectively.

\begin{lemma}\label{lemma:uhat-by-r-main}
For any $x\in A$, the expectation of the effective utility can lower bound the upper-clipped reward as
\vspace{-0.25mm}
    \begin{align}\label{eq:uhat-by-r-main}
        \mathbb{E}_{u\sim \mathcal{D}}[\hat{u}(x)]\lesssim \beta^{-1}\min\{\bar{r}(x),R\},
    \end{align}
where $R$ is some problem dependent constant.
\end{lemma}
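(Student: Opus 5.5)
The proof works from the population first-order optimality condition of the BT maximum-likelihood problem. In the large-$n$ limit, the stationarity condition for $\bar r$ at alternative $x$ says that the model-predicted win rate of $x$ against $y\sim\mu$ equals the empirical one:
\begin{align}
\mathbb{E}_{y\sim\mu}\big[\sigma(\bar r(x)-\bar r(y))\big]=\mathbb{E}_{u\sim\mathcal{D},y\sim\mu}\big[\sigma(\beta(u(x)-u(y)))\big].
\end{align}
This is the fact behind the Borda ordering of Lemma~\ref{lemma:Borda_order_match}. The normalization \eqref{eq:reward-zero} says that the same identity holds for a virtual alternative $x_0$ with $u(x_0)\equiv 0$ and $\bar r(x_0)=0$. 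Subtracting the two identities gives
\begin{align}
\mathbb{E}_{y\sim\mu}\big[\sigma(\bar r(x)-\bar r(y))-\sigma(-\bar r(y))\big]=\mathbb{E}_{u,y}\big[\sigma(\beta(u(x)-u(y)))-\sigma(-\beta u(y))\big].
\end{align}
The left side is the "reward increment" of $x$ over the zero alternative. The right side is the "utility increment." The task is to lower bound the right side by a constant times $\mathbb{E}_u[\hat u(x)]\cdot\beta$, and to upper bound the left side by $O(\min\{\bar r(x),R\})$.

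\textbf{Right side.} Fix $u$ and split according to the cases in \eqref{eq:definition-effective-utility}. In case (i), $\hat u(x)=0$, and the integrand is nonnegative because $u(x)\ge 0$, so it contributes trivially. In cases (ii) and (iii), with probability at least $\frac12$ over $y\sim\mu$ we have $u(y)\ge u(x)-c\beta^{-1}$. For such $y$, the increment $\sigma(\beta(u(x)-u(y)))-\sigma(-\beta u(y))$ is at least $\beta\hat u(x)$ times a lower bound on $\sigma'$ over the interval $[-\beta u(y),\,-\beta u(y)+\beta\hat u(x)]$. The choice $\hat u\le c\beta^{-1}$ matters here: it keeps the interval short, of length at most $c$.

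The difficulty is that $\sigma'$ at $-\beta u(y)$ can be as small as $e^{-\beta u(y)}$ when $u(y)$ is large. This means the constant is not absolute but scales like $\sigma'(-\beta u(y))$. The same factor must therefore also appear on the left side. I will handle this by matching the two sides pointwise in $y$. Using the mean value theorem on both sides, the left side is roughly $\mathbb{E}_y[\sigma'(\xi_y)]\,\bar r(x)$ and the right side is roughly $\mathbb{E}_{u,y}[\sigma'(\eta)]\,\beta\hat u(x)$.

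\textbf{Left side and weight comparison.} I expect comparing these weights to be the main obstacle. The relation $\bar r(y)\approx$ "effective $\beta u(y)$" is itself what we are trying to prove, so the comparison cannot simply assume it. My first attempt is a fixed-point/monotonicity argument using the Borda ordering. First show that $\bar r(y)\le \beta\max u$-type bounds hold, which the upper-bound lemma would supply. Then for $y$ whose reward is at most some level $R$, the factor $\sigma'(\cdot)$ on the reward side is at most a constant times the corresponding factor on the utility side. This is where the truncation $\min\{\bar r(x),R\}$ enters: if $\bar r(x)\ge R$, the left increment saturates, and we only need the right side to be $O(R)$-comparable. Since the right side is at most $c$, this holds provided $R$ is chosen as a problem-dependent constant, for example the reward level at which the left increment reaches $\Theta(c)$.

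\textbf{Conclusion.} Combining the two sides gives $\beta\,\mathbb{E}_u[\hat u(x)]\lesssim\min\{\bar r(x),R\}$. The constraint $c\le 1/316$ should be exactly what makes the linearization errors of $\sigma$ over intervals of length $c$ absorbable into the constants.
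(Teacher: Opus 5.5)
Your proposal has a genuine gap, exactly at the step you flag as the main obstacle. That obstacle comes from two missteps.

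First, the conditioning event is wrong. In cases (ii)--(iii) we have $I_1(u,x)=0$, i.e.\ $\mathbb{P}_{y\sim\mu}[u(y)-u(x)\ge c\beta^{-1}]<\frac12$. So with probability more than $\frac12$ we get $u(y)<u(x)+c\beta^{-1}$, meaning $y$ does not beat $x$ by much. We do not get $u(y)\ge u(x)-c\beta^{-1}$; that fails, e.g., when $u(x)=1$, $u(y)=0$ for $y\neq x$, and $\mu(x)<\frac12$.

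Second, you lower bound $\sigma'$ on the \emph{left} end $[-\beta u(y),-\beta u(y)+\beta\hat u(x)]$ of the interval $[-\beta u(y),\beta(u(x)-u(y))]$. That choice is what produces the factor $e^{-\beta u(y)}$. The missing idea is to use the \emph{right} endpoint. On the correct event it satisfies $\beta(u(x)-u(y))>-c$, so this interval, of length $\beta u(x)$, overlaps $[-2c,2c]$ in a segment of length at least $\min\{\beta u(x),c\}=\beta\hat u(x)$. On that segment $\sigma'\ge\sigma'(2c)$. This is Lemma~\ref{lemma:LowerBoundDiffSigmoid} with $a=b=c$, $t=\beta u(x)$, $s=\beta u(y)$. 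It gives $\sigma(\beta(u(x)-u(y)))-\sigma(-\beta u(y))\ge\sigma'(2c)\,\beta\hat u(x)$ uniformly in $u(y)$. Hence the utility increment is at least $\frac{\sigma'(2c)}{2}\beta\,\mathbb{E}_{u}[\hat u(x)]$ with an absolute constant. When $I_1(u,x)=1$ you have $\hat u(x)=0$, and all integrands are nonnegative because $u(x)\ge 0$.

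With this bound in hand, no weight comparison is needed. The reward increment is bounded crudely by $\frac14\bar r(x)$ via $\sigma'\le\frac14$; note $\bar r(x)\ge 0$ because the utility increment is nonnegative. This gives $\mathbb{E}_u[\hat u(x)]\le(2\beta\sigma'(2c))^{-1}\bar r(x)$. The cap then costs nothing: $\beta\hat u(x)\le c$ (this, not the utility increment, is what is at most $c$), and $R=r_{\max}\ge c$ by Lemma~\ref{lemma:lower-bound-rmin}. So $\bar r(x)$ may be replaced by $\min\{\bar r(x),r_{\max}\}$. Any $R\gtrsim c$ works for this statement alone; $R=r_{\max}$ is the choice that is reused in Lemma~\ref{lemma:r-by-u-main}.

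The fixed-point/Borda plan you sketch instead is not yet an argument. The first-order condition ties $\bar r(y)$ to $\beta u(y)$ only through averaged win rates, so there is no pointwise comparison of the mean-value weights $\sigma'(\xi_y)$ and $\sigma'(-\beta u(y))$. An upper bound such as $\bar r\le\beta$ also pushes in the wrong direction: making $\sigma'(\xi_y)$ correspondingly small would require $\bar r(y)$ to be large.
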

\begin{lemma}\label{lemma:r-by-u-main}
    Suppose that 
    $\mathbb{P}_{u\sim \mathcal{D}, x\sim \mu}[\beta u(x)>c]\leq c^2$ and let $R$ be the same constant as in Lemma~\ref{eq:uhat-by-r-main}.
    Then, for any $x\in A$, we have that
    \vspace{-0.5mm}
    \begin{align}
   \min\{\bar{r}(x),R\}\lesssim \beta \mathbb{E}_{u\sim \mathcal{D}}[u(x)]. 
    \end{align}
\end{lemma}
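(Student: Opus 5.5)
The plan is to compare the population first-order optimality condition of the Bradley--Terry MLE \eqref{eq:Setting-LossFunc} with the normalization \eqref{eq:reward-zero}. As $n\to\infty$, stationarity of the population loss in the coordinate $\tilde r(x)$ gives, for every $x\in A$,
\begin{align}
\mathbb{E}_{y\sim\mu}\big[\sigma(\bar r(x)-\bar r(y))\big]=\mathbb{E}_{u\sim\mathcal{D},y\sim\mu}\big[\sigma(\beta(u(x)-u(y)))\big].
\end{align}
If $\bar r(x)\le 0$ the claim is trivial, since the right-hand side $\beta\mathbb{E}[u(x)]$ is nonnegative. So we assume $\bar r(x)>0$ and write $s:=\beta\mathbb{E}_{u}[u(x)]$.

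\textbf{Upper bound on the right-hand side.} Since $\sigma$ is $\tfrac14$-Lipschitz, we have $\sigma(\beta u(x)-\beta u(y))\le\sigma(-\beta u(y))+\tfrac{\beta}{4}u(x)$ pointwise. Taking expectations and applying \eqref{eq:reward-zero} gives
\begin{align}
\mathbb{E}_{y\sim\mu}\big[\sigma(\bar r(x)-\bar r(y))-\sigma(-\bar r(y))\big]\le \tfrac{s}{4}.
\end{align}
When $\bar r(x)>0$, the integrand on the left is nonnegative for every $y$ because $\sigma$ is monotone. It therefore suffices to lower bound it by $\gtrsim\min\{\bar r(x),R\}$ on a set of $y$ with $\mu$-mass bounded below by an absolute constant.

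\textbf{Controlling $\bar r(y)$ on most of $\mu$.} This is where the hypothesis $\mathbb{P}_{u\sim\mathcal{D},y\sim\mu}[\beta u(y)>c]\le c^2$ enters, and I expect it to be the main obstacle. By \eqref{eq:reward-zero},
\begin{align}
\mathbb{E}_{y\sim\mu}[\sigma(-\bar r(y))]=\mathbb{E}[\sigma(-\beta u(y))]\ge(1-c^2)\,\sigma(-c),
\end{align}
which is close to $\tfrac12$ because $c\le 1/316$. Since $\sigma(-\bar r(y))\le 1$, a reverse-Markov argument then shows that $\sigma(-\bar r(y))\ge \tfrac14$, i.e.\ $\bar r(y)\le\log 3$, on a set $S$ with $\mu(S)\ge\tfrac13$.

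\textbf{Conclusion.} For $y\in S$, by the mean value theorem,
\begin{align}
\sigma(\bar r(x)-\bar r(y))-\sigma(-\bar r(y))\ge \min_{t\in[-\log 3,\,R]}\sigma'(t)\cdot\min\{\bar r(x),R\}.
\end{align}
Here $R$ is an absolute constant, e.g.\ $R=1$; this is consistent with the choice in Lemma~\ref{lemma:uhat-by-r-main} provided that constant is absolute or larger, in which case we use $\min\{\bar r(x),R\}\le (R/R')\min\{\bar r(x),R'\}$. The minimum of $\sigma'$ over this interval is bounded below by an absolute constant. Multiplying by $\mu(S)\ge\tfrac13$ and combining with the upper bound yields $\min\{\bar r(x),R\}\lesssim s=\beta\,\mathbb{E}_{u\sim\mathcal{D}}[u(x)]$.

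The remaining care point is justifying the population stationarity equation: either the MLE is finite in the large-$n$ limit, which holds since $\mu$ has full support and $\sigma\in(0,1)$, or one argues via the convex population loss directly.
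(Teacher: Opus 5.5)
\textbf{The gap is the size of $R$.} Your pointwise mechanism is the paper's: subtract \eqref{eq:reward-zero} from the first-order condition, bound the utility side by $\frac{\beta}{4}\mathbb{E}_{u\sim\mathcal{D}}[u(x)]$, and lower bound the reward side via the mean value theorem on a set of $y$ of constant $\mu$-mass where $\bar r(y)$ is bounded. Getting that set from the normalization plus reverse Markov is a valid shortcut around Lemma~\ref{lemma:Num_clipped_r}.

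What you cannot take for granted is that $R$ is an absolute constant. In the paper, $R$ is the clipping level $r_{\max}=r_{\min}+2c$, with $r_{\min}$ defined implicitly by \eqref{eq:definition-clip}. It has to be this data-dependent quantity: in Theorem~\ref{theorem:RLHF-upper-representative}, the same function $\min\{\bar r(x),r_{\max}\}=r(x)$ must appear on both sides of the sandwich so that optimality of $\pi_{\mathrm{RLHF}}$ can be invoked. Separately, Lemma~\ref{lemma:utility_by_reward} needs $R\ge c$.

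Your constant $\min_{t\in[-\log 3,R]}\sigma'(t)$ decays like $e^{-R}$, and your rescaling $\min\{\bar r,R\}\le (R/R')\min\{\bar r,R'\}$ costs a factor $R/R'$. So the argument needs $R$ bounded \emph{above} by an absolute constant, not ``absolute or larger''. That bound is not automatic: without the tail hypothesis, $r_{\max}\approx\beta$ (e.g.\ $u\equiv 1$ gives $\bar r\equiv\beta$). Under the hypothesis it is exactly the paper's Lemma~\ref{lemma:upper-bound-rmax}, $r_{\max}\le 159c$, which together with Lemma~\ref{lemma:Num_clipped_r} is most of the paper's proof. Your proposal assumes it.

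\textbf{You can close the gap with your own tools.} First show $\bar r(y)\ge 0$ for every $y$. The first-order condition and $u\ge 0$ give $\mathbb{E}_{z\sim\mu}[\sigma(\bar r(y)-\bar r(z))]\ge\mathbb{E}_{u,z}[\sigma(-\beta u(z))]=\mathbb{E}_{z\sim\mu}[\sigma(-\bar r(z))]$, and $t\mapsto\mathbb{E}_z[\sigma(t-\bar r(z))]$ is strictly increasing. You need this anyway: your MVT interval $[-\bar r(y),\,-\bar r(y)+\min\{\bar r(x),R\}]$ lies in $[-\log 3,R]$ only if $\bar r(y)\ge 0$.

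It also gives $Z:=\sigma(-\bar r(y))\le\frac12$, so reverse Markov yields $\mu(S)\ge 4\mathbb{E}[Z]-1\ge 1-2c$. (Using only $Z\le 1$ gives $(4\mathbb{E}[Z]-1)/3$, which is actually slightly below $\frac13$.) Inserting $S$ into the definition of $r_{\min}$,
\begin{align}
\frac12-\frac{c^3}{16}=\mathbb{E}_{y\sim\mu}\big[\sigma(r_{\min}-\bar r(y))\big]\ge(1-2c)\,\sigma(r_{\min}-\log 3).
\end{align}
Hence $\sigma(r_{\min}-\log 3)\le\frac12+2c$, so $r_{\min}\le\log 3+O(c)$ and $r_{\max}\le 2$. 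With this, your argument gives
\begin{align}
\min\{\bar r(x),r_{\max}\}\le\frac{\beta\,\mathbb{E}_{u\sim\mathcal{D}}[u(x)]}{4(1-2c)\,\sigma'(2)},
\end{align}
which is the statement with $R=r_{\max}$.
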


\vspace{-2.5mm}
Putting the above together, we obtain Theorem~\ref{theorem:social-choice-main} if $\mathbb{P}_{u\sim \mathcal{D}, x\sim \mu}[\beta u(x)>c]\leq c^2$. According to Lemmas~\ref{lemma:u-by-uhat-main}~and~\ref{lemma:uhat-by-r-main},
\vspace{-0.5mm}
\begin{align}
    \!\!\!\!\!\!\max_{\pi\in \Delta(A)}
        \mathbb{E}_{u,x\sim \pi}[u(x)]
    \lesssim \max_{\pi\in\Delta(A)}\mathbb{E}_{x\sim\pi}[\min\{\bar{r}(x),R\}].
    \label{eq:social-choice-proof-1}
\end{align}
When $\pi_{\mathrm{Borda}}=\max_{\pi\in\Delta(A)}\mathbb{E}_{x\sim \pi}[\bar{r}(x)]$, $\pi_{\mathrm{Borda}}$ is a maximizer of RHS of \eqref{eq:social-choice-proof-1}. By using this fact together with Lemma~\ref{lemma:r-by-u-main} to \eqref{eq:social-choice-proof-1}, we obtain that
\begin{align}
    \max_{\pi\in \Delta(A)}
        \mathbb{E}_{u\sim\mathcal{D},x\sim \pi}[u(x)] 
    \lesssim \beta \mathbb{E}_{u\sim\mathcal{D},x\sim\pi_{\mathrm{Borda}}}[u(x)],
\end{align}
which proves ${\sf Dist}(\pi_{\mathrm{Borda}}) \lesssim \beta$. See Appendix~\ref{subsubsection:RLHF-proof} for the details of this argument.
 
The case where $\mathbb{P}_{u\sim \mathcal{D}, x\sim \mu}[\beta u(x)>c] > c^2$ requires a separate argument. In this case, we prove the result by observing that $\mu$ itself achieves an average utility of $\Omega(\beta^{-1})$. See Appendix~\ref{subsubsection:RLHF-proof-4} for the details.
\section{AI Alignment Setting}\label{section:ai-alignment}
We now turn to the AI alignment setting, where we incorporate the KL constraint into the reward estimation~\eqref{eq:Setting-PPO}. The main theorem in this section is the following.
\begin{theorem}\label{theorem:upper-linear-main}
Suppose that the mismatch between $\mu$ and $\pi_{\mathrm{ref}}$ satisfies Assumption~\ref{assumption:B_boundedness}, and let the KL budget be $\tau>0$. To define reward clipping~\eqref{eq:setting-reward-clipping}, we take $r_{\min}$ to be the solution to the following equation
\begin{align}\label{eq:definition-clip}
    \mathbb{E}_{y\sim \mu}\big[\sigma(r_{\min}-\bar{r}(y))\big] = \frac12 - \frac{c^3}{16},
\end{align}
and set $r_{\max} = r_{\min} + 2c$, where $c$ is an arbitrary constant satisfying $0<c \leq \frac{1}{316}$. Then, the 
distortion of $\pi_{\mathrm{RLHF}}$ satisfies
    \begin{align}
    {\sf Dist}(\pi_{\mathrm{RLHF}}) \leq 
        C_2\big(\min\big\{e^B \tau, B, B\tau^{-1}\big\} + 1\big)\beta + 4.
    \end{align}
Here, $C_2>0$ is a constant polynomially depending on $c^{-1}$. 
\end{theorem}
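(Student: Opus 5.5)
The plan is to reuse the effective-utility and reward-sandwich machinery of Section~\ref{section:social-choice}, now applied to the clipped reward $r$. The structure of the KL-constrained optimum then lets us transfer the comparison between $\pi^\star$ (the utility-optimal policy in the KL ball) and $\pi_{\mathrm{RLHF}}$.

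\textbf{Step 1: interpret the clipping.} The definition \eqref{eq:definition-clip} sets $r_{\min}$ to be the reward level at which an alternative wins just under half of its comparisons against $\mu$. This is exactly the threshold in case~(i) of \eqref{eq:definition-effective-utility}, shifted to reward scale. I would redo Lemmas~\ref{lemma:uhat-by-r-main} and~\ref{lemma:r-by-u-main} with $r - r_{\min}\in[0,2c]$ in place of $\min\{\bar r,R\}$, aiming for the two-sided sandwich
\[
\mathbb{E}_u[\hat u(x)] \lesssim \beta^{-1}\bigl(r(x)-r_{\min}\bigr) \lesssim \mathbb{E}_u[u(x)] + (\text{small additive term}).
\]
The additive term should be controlled as in the separate-case argument of Appendix~\ref{subsubsection:RLHF-proof-4}: either $\mu$ already has utility $\Omega(\beta^{-1})$, or $r_{\min}$ is close to $0$. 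Combined with Lemma~\ref{lemma:u-by-uhat-main}, restricted to the KL ball by the same argument, this gives
\[
\mathbb{E}_{\pi^\star}[u] \lesssim \beta\,\mathbb{E}_{\pi^\star}[r-r_{\min}] \le \beta\,\mathbb{E}_{\pi_{\mathrm{RLHF}}}[r-r_{\min}].
\]

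\textbf{Step 2: return to true utility under $\pi_{\mathrm{RLHF}}$.} Here the factor $\min\{e^B\tau,B,B\tau^{-1}\}$ must appear. The upper sandwich holds pointwise, but the additive slack, or equivalently the "background" reward level, is calibrated relative to $\mu$. Meanwhile $\pi_{\mathrm{RLHF}}$ is a tilted version of $\pi_{\mathrm{ref}}$, namely $\pi_{\mathrm{RLHF}}\propto \pi_{\mathrm{ref}}e^{\lambda r}$ with $\lambda$ set by the KL budget. So I would bound the excess mass that $\pi_{\mathrm{RLHF}}$ places on alternatives with $r$ near $r_{\min}+2c$ but small $\mathbb{E}[u]$, i.e.\ alternatives whose high reward came only from beating weak opponents under $\mu$.

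Using $\mu\le e^B\pi_{\mathrm{ref}}$, the $\mu$-weighted win-rate constraint \eqref{eq:reward-zero} converts into a $\pi_{\mathrm{ref}}$-weighted one at cost $e^B$. Controlling how far a KL-$\tau$ tilt can move mass bounds the distortion contribution in three ways:
\begin{itemize}
\item by $e^B\tau$, through a change-of-measure or Pinsker-type bound, when $\tau$ is small;
\item by $B$, since $\log(\pi/\pi_{\mathrm{ref}})\le \lambda\cdot 2c$ and the mass gain is limited by comparing against $\log(\mu/\pi_{\mathrm{ref}})\le B$;
\item by $B\tau^{-1}$, since for large $\tau$ the optimum $\pi^\star$ itself uses budget $\tau$, and a Donsker--Varadhan inequality $\mathbb{E}_\pi[f]\le \tau+\log\mathbb{E}_{\pi_{\mathrm{ref}}}e^f$, applied with $f=\log(\mu/\pi_{\mathrm{ref}})$-type functions, gives relative loss $B/\tau$.
\end{itemize}
Each regime would be handled by a separate Donsker--Varadhan application, with the test function chosen according to the regime, and the minimum of the three bounds taken at the end.

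\textbf{Step 3: assemble the bound.} Combining Steps 1 and 2 gives $\mathbb{E}_{\pi^\star}[u]\lesssim \beta(1+\min\{\cdot\})\,\mathbb{E}_{\pi_{\mathrm{RLHF}}}[u]$, plus the degenerate case. The degenerate case is when $\mathbb{P}_{\mu}[\beta u>c]>c^2$; I would treat it as in Appendix~\ref{subsubsection:RLHF-proof-4}, which produces the "$+4$". The main obstacle is Step 2: obtaining all three regimes, and in particular getting $B$ rather than $e^B$ in the middle regime. For that I would first try the Gibbs form of $\pi_{\mathrm{RLHF}}$ together with the boundedness of the clipped reward range ($2c$), which keeps the tilt exponent bounded by $2c\lambda$ and lets logarithmic KL bookkeeping, rather than density ratios, carry the mismatch.
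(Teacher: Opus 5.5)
\textbf{Where the mismatch cost is paid.} The gap is in where you place the cost of the mismatch. In Step~1 you carry Lemma~\ref{lemma:u-by-uhat-main} over to the KL ball ``by the same argument'' at no cost, and then look for $\min\{e^B\tau,B,B\tau^{-1}\}$ in Step~2. But the social-choice argument handles case~(i) alternatives (where $\hat u(x)=0$ although $u(x)$ may be large) only by comparing with $\mu$ itself: $\mathbb{E}_{u,x\sim\pi^\star}[u(x)I_1(u,x)]\le\frac{2\beta}{c}\mathbb{E}_{u,y\sim\mu}[\hat u(y)]$. It then uses that $\mu$ competes in the maximum of $\mathbb{E}_\pi[\hat u]$. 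Inside the KL ball $\mu$ is not feasible, and without a $B$-dependent factor the transfer is false. Let a $q$-fraction of users have $u=(0,\tfrac12,1)$ and the rest $u\equiv 0$, with $\mu=(e^{-B},e^{-B},1-2e^{-B})$, $\pi_{\mathrm{ref}}(3)=e^{-B}$, and $\pi_{\mathrm{ref}}(2)$ only polynomially small. Alternatives $1,2$ lose to $\mu$-almost every opponent, so $\hat u(1)=\hat u(2)=0$ for all users and only alternative $3$ carries effective utility. For moderate $\tau$ the ball admits mass $\tau/\log(1/\pi_{\mathrm{ref}}(2))$ on $2$ but only $O(\tau/B)$ on $3$. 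Hence the maximal average utility over the ball exceeds $\beta$ times the maximal average effective utility by a factor of order $B/\log(1/\pi_{\mathrm{ref}}(2))$; this is the mechanism of Theorem~\ref{theorem:lower-RLHF}.

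This is exactly where the paper pays, in Lemma~\ref{lemma:u-to-uhat}. It replaces $\mu$ by the feasible mixture $\pi'=(1-\lambda)\pi_{\mathrm{ref}}+\lambda\mu$ with $\lambda=\min\{1,\tau/B\}$, which is feasible by convexity (Lemma~\ref{lemma:interpolation-mu-and-base}) and satisfies $\pi'\ge\lambda\mu$; this gives $B/\tau$. For $\tau<1$, the linear tail-TV bound of Lemma~\ref{lemma:Pinsker-to-Linear} gives $\mathbb{E}_{x\sim\pi^\star}[uI_1]-4\,\mathbb{E}_{x\sim\pi_{\mathrm{RLHF}}}[uI_1]\le 20\tau$ for each $u$. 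The paper converts $20\tau$ into $O(\beta\tau\lambda^{-1})\,\mathbb{E}_{\pi'}[\hat u]$ using $\mathbb{E}_{y\sim\mu}[\hat u(y)]\ge c/(2\beta)$ whenever some $I_1(u,x)=1$, with $\lambda=\tau/B$ or with $\pi'\ge e^{-B}\mu$; this yields $B$ and $e^B\tau$. That comparison, not the degenerate case, is also where the additive $4\,\mathbb{E}_{\pi_{\mathrm{RLHF}}}[u]$, i.e.\ the ``$+4$'', comes from.

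\textbf{Step~2 targets a lossless step.} The step you make the crux of Step~2 costs nothing in the paper's main case ($\mathbb{P}_{u,x\sim\mu}[\beta u(x)>c]\le c^2$, $r_{\min}\le 0$). There $r=\min\{\bar r,r_{\max}\}$ because $\bar r\ge 0$ and $r_{\max}\ge c$. Lemma~\ref{lemma:reward_by_utility} then gives $r(x)\lesssim\beta\,\mathbb{E}_u[u(x)]$ for \emph{every} $x$, whatever its opponents under $\mu$, so $\mathbb{E}_{\pi_{\mathrm{RLHF}}}[r]\lesssim\beta\,\mathbb{E}_{\pi_{\mathrm{RLHF}}}[u]$ holds for any policy. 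No Gibbs-form or Donsker--Varadhan bookkeeping about $\pi_{\mathrm{RLHF}}$ can supply the missing comparison between $\pi^\star$ and a feasible policy with large effective utility. Step~2 as written also contains no concrete inequality.

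Your $r-r_{\min}$ sandwich also fails on both sides. For $r_{\min}>0$ the lower half is false: an alternative with $0<\bar r(x)<r_{\min}$ has $r(x)-r_{\min}=0$ but can have $\mathbb{E}_u[\hat u(x)]>0$. For $r_{\min}<0$ the upper half carries slack $|r_{\min}|/\beta$ that no multiple of $\mathbb{E}_{\pi_{\mathrm{RLHF}}}[u]$ absorbs: scale all utilities by $\epsilon\to 0$, and $r_{\min}\to\sigma^{-1}(\tfrac12-\tfrac{c^3}{16})<0$ while all utilities vanish. The paper instead uses $r$ itself when $r_{\min}\le 0$. For $r_{\min}>0$ or $\mathbb{P}_{u,x\sim\mu}[\beta u(x)>c]>c^2$ it argues separately: the Borda set $\mathcal{B}$ of Lemma~\ref{lemma:set-B-1} has $\mu(\mathcal{B})\gtrsim c^3$ and $r\ge r_{\min}+c^3/8$ on it. The same mixture $\pi'$ then certifies $\mathbb{E}_{\pi_{\mathrm{RLHF}}}[r]-r_{\min}\gtrsim\lambda$, hence $\mathbb{E}_{\pi_{\mathrm{RLHF}}}[u]\gtrsim\lambda/\beta$.

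Finally, your Step~1 chain $\mathbb{E}_{\pi^\star}[u]\lesssim\beta\,\mathbb{E}_{\pi^\star}[r-r_{\min}]$ is wrong in two ways. In form, only maxima can be compared, since $\hat u$ vanishes where $u$ may be large. In scale, the two factors of $\beta$ cancel at that stage; as written, your chain would end at $\beta^2$.
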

In the following, we take the constant $c$ in reward clipping to be the same as that in the definition of the effective utility~\eqref{eq:definition-effective-utility}. We make the following remarks about this theorem.

\vspace{-2mm}
\paragraph{Optimality of RLHF for utility maximization.}
This result addresses an open problem raised by \citet{golz2025distortion}, who introduced distortion into the AI alignment setting. They did not establish an upper bound in the AI alignment setting, and in particular left the analysis under the assumption $\mu=\pi_{\mathrm{ref}}$ as a pressing open question. Theorem~\ref{theorem:upper-linear-main} shows that when $\mu=\pi_{\mathrm{ref}}\Leftrightarrow B=0$, the distortion upper bound is $O(\beta)$, which matches the algorithm-independent lower bound up to a constant factor. 
Consequently, our results provide an answer to this open question by establishing the RLHF optimality under $\mu=\pi_{\mathrm{ref}}$. 

\vspace{-2mm}
\paragraph{On the source of RLHF distortion.}
Under the distribution mismatching scenario ($B>0$), the distortion can grow up to $O(B\beta)$, which is tight up to logarithmic factors (see Theorem~\ref{theorem:lower-RLHF} and Figure~\ref{figure:overview-of-the-bounds}). 
This decomposes the sources of RLHF distortion into two multiplicative components: the nonlinearity of the preference data generation model ($\beta$), which is unavoidable for any algorithm, and the distribution mismatch 
between the reference policy and the data distribution
($B$).
While \citet[Theorem~6]{golz2025distortion} show that RLHF distortion can scale as $e^{\Omega(\beta)}$, this behavior arises in the limit of distributions for which $B\to\infty$. 
In contrast, we quantitatively characterize how distribution mismatch drives RLHF distortion, showing that its effect is linear in the log density ratio $B$ and hence comparatively mild unless the mismatch is extreme.

\vspace{-2mm}
\paragraph{A benefit of fine-tuning on samples from $\mu$.}
To mitigate the effect of distribution mismatch for off-policy sampled preference data, one might consider bringing the reference policy closer to the preference data distribution. While this reduces distortion, it also changes the maximum average utility in the definition of distortion, and thus it is unclear whether the average utility is improved compared to the original $\pi_{\mathrm{RLHF}}$. Nevertheless, the following corollary, which can be obtained with one additional step beyond Theorem~\ref{theorem:upper-linear-main}, shows that the worst-case ratio relative to the maximum average utility under the original reference policy is in fact improved by that. See Appendix~\ref{subsection:Appendix-finetuning} for the proof.
\vspace{-2mm}
\begin{corollary}\label{corollary:finetuning}
    Let a base model $\pi_{\mathrm{base}}$ be given, and suppose that $\max_{x \in A} \log \frac{\mu(x)}{\pi_{\mathrm{base}}(x)} \leq B$. Define $\pi_{\mathrm{ref}} = (1-e^{-\lambda})\pi_{\mathrm{base}} + e^{-\lambda}\mu\ (\lambda>0)$, and perform RLHF as in Theorem~\ref{theorem:upper-linear-main}. Then, the resulting policy $\pi_{\mathrm{RLHF}}$ satisfies
\begin{align}
\frac{\max_{\pi\colon \mathrm{KL}(\pi \|\pi_{\mathrm{base}})\leq \tau} \mathbb{E}_{u,x\sim \pi}[u(x)]}{\mathbb{E}_{u,x\sim \pi_{\mathrm{RLHF}}}[u(x)]}
\leq \frac{\beta C_2(\lambda+1)+4}{1-e^{-\lambda}}.
\end{align}
\end{corollary}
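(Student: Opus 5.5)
The plan is to apply Theorem~\ref{theorem:upper-linear-main} with the new reference policy $\pi_{\mathrm{ref}}=(1-e^{-\lambda})\pi_{\mathrm{base}}+e^{-\lambda}\mu$. We then compare the optimal utility in the KL ball around $\pi_{\mathrm{ref}}$ with the optimal utility in the KL ball around $\pi_{\mathrm{base}}$.

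\textbf{Step 1: the mismatch parameter.} Since $\pi_{\mathrm{ref}}(x)\ge e^{-\lambda}\mu(x)$ for every $x$, we have $\log\frac{\mu(x)}{\pi_{\mathrm{ref}}(x)}\le\lambda$. Assumption~\ref{assumption:B_boundedness} therefore holds with mismatch at most $\lambda$. As the paper notes, the upper bounds only require an upper bound on the log ratio. Moreover, the bound of Theorem~\ref{theorem:upper-linear-main} is monotone in $B$, so we may plug in $\lambda$ in place of $B$. Using $\min\{e^{\lambda}\tau,\lambda,\lambda\tau^{-1}\}\le\lambda$, this gives
\begin{align}
\max_{\pi\colon \mathrm{KL}(\pi\|\pi_{\mathrm{ref}})\le\tau}\mathbb{E}_{u,x\sim\pi}[u(x)]\le\bigl(C_2(\lambda+1)\beta+4\bigr)\,\mathbb{E}_{u,x\sim\pi_{\mathrm{RLHF}}}[u(x)].
\end{align}

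\textbf{Step 2: comparing the two optima.} Let $\pi^\ast$ be optimal in the ball $\mathrm{KL}(\pi\|\pi_{\mathrm{base}})\le\tau$. Define the candidate $\pi'=(1-e^{-\lambda})\pi^\ast+e^{-\lambda}\mu$. By joint convexity of KL,
\begin{align}
\mathrm{KL}(\pi'\|\pi_{\mathrm{ref}})\le(1-e^{-\lambda})\mathrm{KL}(\pi^\ast\|\pi_{\mathrm{base}})+e^{-\lambda}\mathrm{KL}(\mu\|\mu)\le\tau,
\end{align}
so $\pi'$ is feasible for the $\pi_{\mathrm{ref}}$-ball. Since utilities are nonnegative,
\begin{align}
\mathbb{E}_{x\sim\pi'}[u(x)]\ge(1-e^{-\lambda})\,\mathbb{E}_{x\sim\pi^\ast}[u(x)].
\end{align}
Hence the optimum in the $\pi_{\mathrm{base}}$-ball is at most $(1-e^{-\lambda})^{-1}$ times the optimum in the $\pi_{\mathrm{ref}}$-ball.

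\textbf{Step 3: combine.} Chaining Step 2 with the display in Step 1 yields exactly the claimed bound $\frac{\beta C_2(\lambda+1)+4}{1-e^{-\lambda}}$.

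The main point needing care is Step 1. We must check that the constant $C_2$ and the form of the bound in Theorem~\ref{theorem:upper-linear-main} remain valid when the actual maximal log ratio is strictly smaller than $\lambda$, possibly even negative. This should follow because the theorem's proof only uses $\log(\mu/\pi_{\mathrm{ref}})\le B$ as an upper bound, with $B$ replaced by $\max(B,0)\le\lambda$. Step 2 is routine once the mixture coupling is chosen.
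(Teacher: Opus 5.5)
Your proposal is correct and follows the paper's proof essentially step for step: you bound the new mismatch by $\lambda$ via $\pi_{\mathrm{ref}}\ge e^{-\lambda}\mu$, apply Theorem~\ref{theorem:upper-linear-main}, and transfer the optimum from the $\pi_{\mathrm{base}}$-ball using the mixture $\pi'=(1-e^{-\lambda})\pi^\ast+e^{-\lambda}\mu$ together with $\mathrm{KL}(\pi'\|\pi_{\mathrm{ref}})\le(1-e^{-\lambda})\tau$, which the paper proves as Lemma~\ref{lemma:joint-convexity}. Your worry about a negative mismatch is unnecessary: $\mu$ and $\pi_{\mathrm{ref}}$ are both probability distributions on $A$, so $\max_x\log\frac{\mu(x)}{\pi_{\mathrm{ref}}(x)}\ge 0$ automatically.
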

\vspace{-5mm}
According to the distortion bound in Theorem~\ref{theorem:upper-linear-main}, the ratio of $\max_{\pi\colon \mathrm{KL}(\pi \|\pi_{\mathrm{base}})\leq \tau} \mathbb{E}_{u,x\sim \pi}[u(x)]$ relative to the average utility of the original $\pi_{\mathrm{RLHF}}$ is $O(\beta B)$ when $Be^{-B}\leq \tau \leq 1$. Therefore, making the reference policy closer to the preference data distribution so that $\lambda \lesssim B$ improves the ratio relative to $\max_{\pi\colon \mathrm{KL}(\pi \|\pi_{\mathrm{base}})\leq \tau} \mathbb{E}_{u,x\sim \pi}[u(x)]$. 
This implies that fine-tuning the model on samples from a source close to the preference data distribution prior to RLHF can mitigate the effect of distribution mismatch.

In the next subsection, we present an overview of the proof of Theorem~\ref{theorem:upper-linear-main}. After that, we consider removing reward clipping in Section~\ref{subsection:remove-clipping}, where we in particular present bounds for the case $\mu = \pi_{\mathrm{ref}}$.

\subsection{Proof Overview of Theorem~\ref{theorem:upper-linear-main}}\label{subsection:ai-alignment-proof-overview} 
We prove Theorem~\ref{theorem:upper-linear-main} by extending the proof in the social choice setting. Focusing on the case where $\mathbb{P}_{u\sim \mathcal{D}, x\sim \mu}[\beta u(x)>c]\leq c^2$, the proof in the social choice setting was obtained by combining Lemmas~\ref{lemma:u-by-uhat-main},~\ref{lemma:uhat-by-r-main}, and~\ref{lemma:r-by-u-main}. Among these, Lemmas~\ref{lemma:uhat-by-r-main} and~\ref{lemma:r-by-u-main} do not depend on $\pi_{\mathrm{ref}}$, and therefore can be used without modification, so the main difficulty lies in Lemma~\ref{lemma:u-by-uhat-main}. When the KL divergence constraint prevents $\pi$ from ranging over the entire simplex $\Delta(A)$, the lemma is updated as follows.

\begin{lemma}\label{lemma:u-by-uhat-main-alignment}
    Under Assumption~\ref{assumption:B_boundedness},
    the maximal average utility can be bounded by the maximal average effective utility as follows:
    \begin{align}
    &\max_{\pi\colon \mathrm{KL}(\pi\|\pi_\mathrm{ref})\leq \tau}\mathbb{E}_{u,x\sim \pi}[u(x)] 
   \\ & \!\lesssim \beta \Big(\!\min\Big\{e^B \tau, B, \frac{B}{\tau}\Big\} \!+\! 1\!\Big)\! \max_{\pi\colon\mathrm{KL}(\pi\|\pi_\mathrm{ref})\leq \tau}\!
      \mathbb{E}_{u,x\sim \pi}[\hat{u}(x)]
      \\[-4mm] & \quad + \mathbb{E}_{u,x\sim \pi_{\mathrm{RLHF}}}[u(x)].
      \label{eq:u-by-uhat-main-1}
\end{align}
\end{lemma}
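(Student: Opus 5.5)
The plan is to follow the proof of Lemma~\ref{lemma:u-by-uhat-main}: split $u$ according to the three cases of the effective utility~\eqref{eq:definition-effective-utility}. Only the case-(i) mass must be moved from $\mu$ into the KL ball around $\pi_{\mathrm{ref}}$, and the factor $\min\{e^B\tau,B,B/\tau\}$ comes from doing that transfer in the cheapest way.

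\textbf{Step 1: the easy cases.} Let $\pi^\star$ maximize $\mathbb{E}_{u,x\sim\pi}[u(x)]$ over the KL ball. Fix a user $u$ and write $S_u\subseteq A$ for the alternatives in case~(i). Off $S_u$ we have $u(x)\le (\beta/c)\,\hat u(x)$. In case~(ii) this holds because $u\le 1$ and $\hat u=c\beta^{-1}$; in case~(iii) it holds because $u=\hat u$ and $\beta/c\ge 1$. Hence
\[
\mathbb{E}_{u,x\sim\pi^\star}[u(x)]\le \tfrac{\beta}{c}\,\mathbb{E}_{u,x\sim\pi^\star}[\hat u(x)]+\mathbb{E}_{u}\big[\pi^\star(S_u)\,\max_{x\in S_u}u(x)\big].
\]
The first term is already of the required form, so everything reduces to the case-(i) term.

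\textbf{Step 2: case~(i) forces effective utility under $\mu$.} Let $m_u$ be the $\mu$-median of $u(y)$. Every $x\in S_u$ has $u(x)\le m_u-c\beta^{-1}$, so the case-(i) term is nonzero only when $m_u>c\beta^{-1}$. In that case consider the $y$ whose $u(y)$ lies at or above the $\mu$-median. Up to a constant loss, handled by using a slightly shifted quantile, no more than half of $\mu$ beats such a $y$ by $c\beta^{-1}$. So these $y$ are not in case~(i), and since $u(y)\ge m_u>c\beta^{-1}$, each has $\hat u(y)=c\beta^{-1}$. This gives, per user,
\[
\pi^\star(S_u)\max_{S_u}u\;\le\; \pi^\star(S_u)\;\lesssim\; \beta\,\mathbb{E}_{y\sim\mu}[\hat u(y)].
\]
Two refinements of this bound, with $\pi^\star(S_u)$ kept, are needed below.

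\textbf{Step 3: transfer from $\mu$ into the KL ball.} This is where $B$ and $\tau$ enter, and I expect it to be the main obstacle. I would exhibit three comparison policies and take the best.
\begin{itemize}[leftmargin=*]
\item \emph{Mixture.} Take $\pi_\alpha=(1-\alpha)\pi_{\mathrm{ref}}+\alpha\mu$. By convexity $\mathrm{KL}(\pi_\alpha\|\pi_{\mathrm{ref}})\le\alpha\,\mathrm{KL}(\mu\|\pi_{\mathrm{ref}})\le\alpha B$. Choosing $\alpha=\min\{1,\tau/B\}$ keeps $\pi_\alpha$ feasible and gives $\mathbb{E}_\mu[\hat u]\le \max\{1,B/\tau\}\,\mathbb{E}_{\pi_\alpha}[\hat u]$. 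This yields the $B/\tau$ branch.
\item \emph{Small $\tau$.} Here $\pi_{\mathrm{ref}}$ itself is feasible and Assumption~\ref{assumption:B_boundedness} gives $\mu\le e^B\pi_{\mathrm{ref}}$. I would also bound the mass $\pi^\star$ can move onto $S_u$ relative to $\pi_{\mathrm{ref}}$, using Pinsker or the data-processing inequality $\pi^\star(S)\log\frac{\pi^\star(S)}{\pi_{\mathrm{ref}}(S)}\lesssim\tau+\log 2$. Splitting $\pi^\star(S_u)$ into the part already present under $\pi_{\mathrm{ref}}$ and the part of size $O(\tau)$ created by tilting, the tilted part is controlled by $e^B\tau\cdot\beta\,\mathbb{E}_{\pi_{\mathrm{ref}}}[\hat u]$. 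The untilted part, the $\pi_{\mathrm{ref}}$-mass on $S_u$, is not covered by Step 2 directly. I plan to bound it by the utility of a feasible policy and absorb it into the additive $\mathbb{E}_{u,x\sim\pi_{\mathrm{RLHF}}}[u(x)]$ term. I expect to do this by invoking optimality of $\pi_{\mathrm{RLHF}}$ against the clipped reward together with Lemmas~\ref{lemma:uhat-by-r-main} and~\ref{lemma:r-by-u-main}, since clipping at $r_{\min}$ makes the clipped reward detect exactly the alternatives that do not lose to half of $\mu$.
\item \emph{Intermediate $\tau$.} The same data-processing bound gives $\pi^\star(S_u)\lesssim(\tau+1)/\log(1/\pi_{\mathrm{ref}}(S_u))$. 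Splitting $S_u$ according to whether $\pi_{\mathrm{ref}}(S_u)$ is above or below $e^{-B}$ gives the $B$ branch, with the logarithm of the density ratio replacing the ratio itself.
\end{itemize}

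Combining the three branches with Step 1 gives the claimed bound with factor $\min\{e^B\tau,B,B/\tau\}+1$. The delicate points are the interaction of the median argument with the transfer in Step 3, which is per user but must be done under a single policy, and the precise justification of the $\pi_{\mathrm{RLHF}}$ absorption. I would attack both by first fixing the feasible comparison policy independently of $u$ (the mixture $\pi_\alpha$, or $\pi_{\mathrm{ref}}$), and only then averaging over $u\sim\mathcal{D}$.
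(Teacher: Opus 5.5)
Your skeleton matches the paper's. You split off the case-(i) set $S_u$, use $u\le(\beta/c)\hat u$ elsewhere, use the per-user fact that $S_u\neq\emptyset$ forces $\mathbb{E}_{y\sim\mu}[\hat u(y)]\ge\frac12 c\beta^{-1}$, and fix the comparison policies independently of $u$. Your mixture branch, with factor $\max\{1,B/\tau\}$, is complete.

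The gap is in the other two branches, at the step you flagged yourself: charging the untilted part $\mathbb{E}_{u,x\sim\pi_{\mathrm{ref}}}[u(x)I_1(u,x)]$ to $\mathbb{E}_{u,x\sim\pi_{\mathrm{RLHF}}}[u(x)]$. Your proposed mechanism, optimality of $\pi_{\mathrm{RLHF}}$ for the clipped reward together with Lemmas~\ref{lemma:uhat-by-r-main} and~\ref{lemma:r-by-u-main}, cannot do this.
\begin{itemize}
\item On $S_u$ one has $\hat u=0$, so Lemma~\ref{lemma:uhat-by-r-main} gives no lower bound on the reward there.
\item What the reward chain yields is $\mathbb{E}_{\pi_{\mathrm{ref}}}[\hat u]\lesssim\mathbb{E}_{\pi_{\mathrm{RLHF}}}[u]$. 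This says nothing about case-(i) mass. It would also import the assumptions $\mathbb{P}_{u,x\sim\mu}[\beta u(x)>c]\le c^2$ and $r_{\min}\le0$, which this lemma does not have.
\item Reward maximization actually moves mass \emph{off} such alternatives. In the construction of Theorem~\ref{theorem:lower-RLHF}, alternative~$2$ is in case~(i), has $\Theta(1)$ average utility, receives the smallest reward, and gets $\pi_{\mathrm{RLHF}}(2)\le\pi_{\mathrm{ref}}(2)$.
\end{itemize}
The only reason $\pi_{\mathrm{RLHF}}$ retains most of $\pi_{\mathrm{ref}}$'s case-(i) utility is the KL constraint, not its reward.

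The missing ingredient is a one-sided tail bound that is \emph{linear} in $\tau$ (Lemma~\ref{lemma:Pinsker-to-Linear}), applied to $\pi_{\mathrm{RLHF}}$ as well as to $\pi^\star$. Write $f=\mathrm{d}\pi/\mathrm{d}\pi_{\mathrm{ref}}$ and $\phi(f)=f\log f-f+1\ge0$. Then $\phi\ge\phi(\frac12)$ on $\{f\le\frac12\}$ and $\phi(f)\ge\frac{\phi(2)}{2}f$ on $\{f\ge2\}$. Hence every feasible $\pi$ has both $\int(\mathrm{d}\pi-2\,\mathrm{d}\pi_{\mathrm{ref}})_+$ and $\int(\mathrm{d}\pi_{\mathrm{ref}}-2\,\mathrm{d}\pi)_+$ of order $\tau$. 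Using the first bound for $\pi^\star$ and the second for $\pi_{\mathrm{RLHF}}$ gives, for every fixed $u$ and without any reference to rewards,
\[
\mathbb{E}_{x\sim\pi^\star}[u(x)I_1(u,x)]-4\,\mathbb{E}_{x\sim\pi_{\mathrm{RLHF}}}[u(x)I_1(u,x)]\le 20\tau\,\mathbbm{1}[S_u\neq\emptyset].
\]
Pinsker's inequality would only give $\sqrt\tau$ here, and hence $e^B\sqrt\tau$ instead of $e^B\tau$.

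Your Step~2 bound $\mathbbm{1}[S_u\neq\emptyset]\le 2c^{-1}\beta\,\mathbb{E}_{y\sim\mu}[\hat u(y)]$ then closes both remaining branches at once:
\begin{itemize}
\item Transferring through $\mu\le e^B\pi_{\mathrm{ref}}$ gives the factor $e^B\tau$.
\item Transferring through the mixture $(1-\frac{\tau}{B})\pi_{\mathrm{ref}}+\frac{\tau}{B}\mu$ gives the factor $\tau\cdot B/\tau=B$.
\end{itemize}
So the $B$ branch needs no $\log(1/\pi_{\mathrm{ref}}(S_u))$ argument. As you sketch it, that route only gives $\pi^\star(S_u)\lesssim 1/B$ on the low-mass part, which is not $O(\tau)$ when $\tau$ is near $Be^{-B}$. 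The high-mass part would still need the absorption above.
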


\vspace{-1mm}
The proof is found in Appendix~\ref{subsubsection:RLHF-proof-1}. Intuitively, the additional factor of $\min\{e^B \tau, B, \frac{B}{\tau}\}$ appears because the KL constraint can restrict a mass to alternatives where the gap between the effective utility and the true utility is large.

More precisely, we need to consider two scenarios in which reduction is applied: {\bf (i) Signal degradation in pairwise comparisons}, corresponding to $\mathbb{P}_{y\sim \mu}\big[u(y)-u(x)\ge c\beta^{-1}\big]\ge \tfrac12$, i.e., when $x$ loses to at least half of the alternatives drawn from $\mu$, so that $x$ is treated as a low-utility alternative; and {\bf (ii) Underweighting of large utilities}, corresponding to $u(x)>c\beta^{-1}$, where the sigmoid function $\sigma(\beta(u(x)-u(y)))$ may saturate, making large utilities difficult to distinguish.

Accounting for the second effect is straightforward. Since the second case scales the utility $u(x)$ by at most $c\beta^{-1}$, as far as this effect alone is concerned, the maximum average utility is reduced by at most a factor of $c\beta^{-1}$.

The first effect is, however, more subtle, since reducing $u(x)>0$ to $\hat{u}(x)=0$ results in an unbounded ratio $\frac{u(x)}{\hat{u}(x)}$. Instead, we must compare the maximum   expectations under policy distributions. Letting $I_1(u,x)$ be the indicator of the event $\mathbb{P}_{y\sim\mu}\big[u(y)-u(x)\ge c\beta^{-1}\big]\ge \frac{1}{2}$, we would like to account for the maximum loss caused by this reduction, that is, $\max_{\pi\colon\mathrm{KL}(\pi\|\pi_{\mathrm{ref}})\leq \tau}\mathbb{E}_{u\sim \mathcal{D},x\sim \pi}[I_1(u,x)u(x)]$. If $\mu = \pi_{\mathrm{ref}}$, we have the following bound:
\begin{align}
   \!\!\!\!\!\!\!\max_{\pi\!\colon\!\mathrm{KL}(\pi\|\pi_{\mathrm{ref}})\leq \tau}\!\!\mathbb{E}_{u,x\sim \pi}&[I_1(u,x)u(x)] \!\leq\! \frac{2\beta}{c} \mathbb{E}_{\hat{u},x\sim\mu}[ \hat{u}(x)]\label{eq:expectadtionwithmu}
    \\ &\!\!\!\!\!\!\!\!\!\!\leq \frac{2\beta}{c}\! \max_{\pi\!\colon\!\mathrm{KL}(\pi\|\pi_{\mathrm{ref}})\leq \tau} \!\mathbb{E}_{\hat{u},x\sim\pi}[\hat{u}(x)].\label{eq:expectadtionwithmu-2}
\end{align}
The first inequality shows that the data distribution $\mu$ is a ``good'' policy that absorbs the loss caused by the reduction, and directly follows from combining \eqref{eq:u-to-uhat-14} and \eqref{eq:u-to-uhat-15} in the proof of Lemma~\ref{lemma:u-to-uhat}. Then the second inequality is obtained simply by choosing $\pi = \mu$ in \eqref{eq:expectadtionwithmu-2}, as $\mu=\pi_{\mathrm{ref}}$ trivially satisfies the KL constraint.

However, under distribution mismatch ($\pi_{\mathrm{ref}}\ne\mu$), $\mu$ may not satisfy the KL constraint with respect to $\pi_{\mathrm{ref}}$, and the second inequality may break down. To overcome this issue, we consider an interpolating distribution between $\mu$ and $\pi_{\mathrm{ref}}$, defined by $\pi'=\lambda \mu + (1-\lambda)\pi_{\mathrm{ref}}$. By taking $\lambda = B^{-1}\tau$, the distribution $\pi'$ lies in the KL ball, as shown in Lemma~\ref{lemma:interpolation-mu-and-base}. With this $\pi'$, the inequality \eqref{eq:expectadtionwithmu-2} is valid under the general case $\pi_{\mathrm{ref}}\ne \mu$. Specifically, since $\pi'(x)\ge \lambda \mu(x)$ holds for all $x\in A$, it follows that
$\mathbb{E}_{\hat{u},y\sim\mu}[\hat{u}(y)]
\le
\lambda^{-1}\mathbb{E}_{\hat{u},y\sim\pi'}[\hat{u}(y)]$, and therefore
\begin{align}
      \eqref{eq:expectadtionwithmu}
      &\leq \lambda^{-1}\times 2c^{-1}\beta \mathbb{E}_{\hat{u},y\sim\pi'}[\hat{u}(y)] \\ &\leq \lambda^{-1}\times 2c^{-1}\beta\max_{\pi\colon\mathrm{KL}(\pi\|\mu)\leq \tau} \mathbb{E}_{\hat{u},x\sim\pi}[\hat{u}(x)].\label{eq:i-1}
\end{align}
This is given as \eqref{eq:u-to-uhat-2} in the proof of Lemma~\ref{lemma:u-to-uhat}. Here, we have an additional factor of $\lambda^{-1}=B^{-1}\tau$ compared to the case $\pi_{\mathrm{ref}}=\mu$.

Still, for small $\tau \leq 1$, any feasible distribution $\pi$ may lie far from $\mu$, and the above argument leads to a blow-up of distortion. To handle this regime, we exploit the fact that, when $\tau$ is small, any policy $\pi$ satisfying $\mathrm{KL}(\pi\|\pi_\mathrm{ref})\leq \tau$ must lie close to the reference policy $\pi_{\mathrm{ref}}$ in the total variation distance. However, applying the standard Pinsker's inequality only yields an $O(\sqrt{\tau})$ bound on the total variation distance, which is insufficient to obtain the tight upper bound. Instead, we use Lemma~\ref{lemma:Pinsker-to-Linear}, which provides a linear bound on the total variation restricted to regions where the density ratio is large. Using this lemma, we obtain bounds involving the factors $B$ and $e^B\tau$. The effect from the regions where the density ratio is close to $1$ is handled by the additive term $\mathbb{E}_{u\sim \mathcal{D},x\sim \pi_{\mathrm{RLHF}}}[u(x)]$ in \eqref{eq:u-by-uhat-main-1}.

\subsection{Removal of Reward Clipping}\label{subsection:remove-clipping}
The purpose of reward clipping~\eqref{eq:setting-reward-clipping} is to focus optimization on the informative regime of the reward. The interval $r_{\min}\le \bar{r}(x)\le r_{\max}$ corresponds to the region where ``majority'' alternatives preferred with probability around $\frac{1}{2}$ are concentrated. Outside this interval, reward estimation can incur substantial errors. Reward clipping is therefore designed to prevent the KL budget from being spent on optimizing such uninformative regions. 

However, we do not believe that the absence of such clipping leads to a catastrophic deterioration in distortion. Indeed, even without imposing reward clipping, we can still derive a polynomial distortion bound, at least in the case $\mu=\pi_{\mathrm{ref}}$.
\begin{theorem}\label{theorem:upper-linear-main-noclip}
Suppose that $\pi_{\mathrm{ref}}=\mu$, and let the KL budget be $\tau>0$. We remove reward clipping~\eqref{eq:setting-reward-clipping} by setting $r_{\min} = -\infty$ and $r_{\max}=\infty$.

Then, letting $C_3$ be an absolute constant, distortion of $\pi_{\mathrm{RLHF}}$ is bounded by
    \begin{align}
    {\sf Dist}(\pi_{\mathrm{RLHF}}) \leq 
        C_3\beta^2 + 4.
    \end{align}
\end{theorem}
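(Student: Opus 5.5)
Without clipping, $\pi_{\mathrm{RLHF}}$ is the exponential tilt $\pi_\eta(x)\propto \mu(x)e^{\eta\bar{r}(x)}$. Here $\eta\ge 0$ is the Lagrange multiplier for which $\mathrm{KL}(\pi_\eta\|\mu)=\tau$, or $\eta=\infty$ if the budget is never binding. We reuse the social-choice machinery: Lemma~\ref{lemma:u-by-uhat-main-alignment} with $B=0$, Lemmas~\ref{lemma:uhat-by-r-main} and \ref{lemma:r-by-u-main}, and the separate treatment of the case $\mathbb{P}_{u,x\sim\mu}[\beta u(x)>c]>c^2$. The difficulty is that $\pi_\eta$ now maximizes $\mathbb{E}_{x\sim\pi}[\bar{r}(x)]$ rather than $\mathbb{E}_{x\sim\pi}[\min\{\bar r(x),R\}]$. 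Mass can therefore be wasted on alternatives whose reward is very negative (moved away from, which is harmless) or very large (overweighted, which is also fine). The real loss comes from the tilt spending KL on distinctions among rewards that lie below the informative range. We plan to pay an extra factor $\beta$ for this, using a crude linearization in place of the clipping argument.

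\textbf{Step 1 (reduction).} In the case $\mathbb{P}_{u,x\sim\mu}[\beta u(x)>c]>c^2$, we argue as in Appendix~\ref{subsubsection:RLHF-proof-4}. Since $\mu$ already attains utility $\Omega(\beta^{-1})$, it suffices to show that $\pi_\eta$ does not fall below a constant fraction of $\mathbb{E}_\mu[u]$ up to a factor $\beta$. We get this from the monotonicity of $\mathbb{E}_{\pi_\eta}[\bar r]$ in $\eta$, together with the bound $\bar r(x)\lesssim \beta\,\mathbb{E}_u[u(x)]+O(1)$, which follows from the first-order conditions of the BT MLE (the reward is the logit of a Borda-type score). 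Otherwise, Lemma~\ref{lemma:u-by-uhat-main-alignment} with $B=0$ gives
\[
\mathrm{OPT}\lesssim \beta\max_{\mathrm{KL}(\pi\|\mu)\le\tau}\mathbb{E}_{\pi}[\hat u]+\mathbb{E}_{\pi_{\mathrm{RLHF}}}[u].
\]
Lemma~\ref{lemma:uhat-by-r-main} then bounds the first term by $\max_{\pi}\mathbb{E}_\pi[\min\{\bar r,R\}]$.

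\textbf{Step 2 (comparing tilts).} We must show
\[
\max_{\mathrm{KL}(\pi\|\mu)\le\tau}\mathbb{E}_\pi[\min\{\bar r,R\}_+]\lesssim \beta\,\mathbb{E}_{\pi_\eta}[\min\{\bar r,R\}_+]+(\text{small}).
\]
Lemma~\ref{lemma:r-by-u-main} then converts the right-hand side into $\beta^2\mathbb{E}_{\pi_\eta}[u]$. We plan to use the normalization \eqref{eq:reward-zero}, which implies that $\bar r\ge -O(\beta)$ on most of the $\mu$-mass: since $\mathbb{E}_\mu\sigma(-\bar r)=\mathbb{E}\sigma(-\beta u)\ge\sigma(-\beta)$, rewards are essentially confined to $[-\beta,R]$. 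On that window, tilting by $e^{\eta\bar r}$ distorts relative weights against $e^{\eta\min\{\bar r,R\}}$ only through the part above $R$, which only helps. The part below $0$ shrinks mass by a factor at most proportional to the window length, i.e.\ $O(\beta)$, relative to the optimal tilt of the truncated reward. Concretely, we compare $\pi_\eta$ with the optimizer $\pi^\star$ for $\min\{\bar r,R\}_+$ by showing that $\pi_\eta(\{\bar r\ge t\})\ge\pi^\star(\{\bar r\ge t\})/O(\beta)$ for $t\in[0,R]$, via a layer-cake argument on level sets. For the intermediate $\eta$, we use that $\pi_\eta$ exhausts the KL budget.

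\textbf{Main obstacle.} The hardest part is the level-set comparison in Step 2. The KL budget that $\pi_\eta$ spends on separating negative-reward alternatives must cost only a factor $O(\beta)$ of the mass on positive-reward ones. We expect to prove it by bounding, for the relevant $\eta$, the ratio $\pi_\eta(\bar r\ge0)/\mu(\bar r\ge0)$ from below by $\min\{1,\pi^\star(\bar r\ge 0)/\mu(\bar r\ge0)\}/O(\beta)$, using a convexity argument in $\eta$ on the log-partition function $\log\mathbb{E}_\mu e^{\eta\bar r}$ with $\bar r\in[-O(\beta),\infty)$. Collecting constants, including the additive $4$ inherited from Step 1, yields $C_3\beta^2+4$.
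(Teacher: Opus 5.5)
There is a genuine gap: the missing ingredient is an \emph{upper} bound on $\bar r$, and the difficulty lies at the opposite end of the reward range from where you look. The negative range you worry about is empty. Since $u(x)\ge 0$, Lemma~\ref{lemma:FOC} and \eqref{eq:reward-zero} give $\mathbb{E}_{y\sim\mu}[\sigma(\bar r(x)-\bar r(y))]\ge\mathbb{E}_{u,y\sim\mu}[\sigma(-\beta u(y))]=\mathbb{E}_{y\sim\mu}[\sigma(-\bar r(y))]$, so $\bar r(x)\ge 0$ for every $x$. The inequality $\mathbb{E}_\mu\sigma(-\bar r)\ge\sigma(-\beta)$ that you quote constrains $\bar r$ from above, not below. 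Conversely, rewards are not confined below $R$. The loss comes precisely from alternatives with $\bar r\gg R$, which you dismiss as harmless.

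Step~2 cannot be derived from the structure you propose. Take reward $0$ on most of the $\mu$-mass, reward $R$ on a set of $\mu$-mass $\delta$, and reward $M$ on one alternative of $\mu$-mass $\epsilon$. Let $M\to\infty$ and then $\epsilon\to0$. The tilt spends its whole budget on the $M$-alternative yet gains vanishing mass there, so $\mathbb{E}_{\pi_\eta}[\min\{\bar r,R\}]\to R\delta$. Meanwhile $\max_{\mathrm{KL}(\pi\|\mu)\le\tau}\mathbb{E}_\pi[\min\{\bar r,R\}]\gtrsim R\min\{1,\tau/\log(1/\delta)\}$, so the ratio is unbounded as $\delta\to0$.

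What excludes this is the paper's Lemma~\ref{lemma:nonexpansiveness-main}: the mixture-BT MLE is non-expansive, $\max_x\bar r(x)-\min_x\bar r(x)\le\beta$, hence $\bar r\in[0,\beta]$. This is a nontrivial fact that your plan never supplies. With it, Step~2 is one line and needs no level-set or log-partition argument. By Lemmas~\ref{lemma:lower-bound-rmin} and~\ref{lemma:upper-bound-rmax}, $c\le R=r_{\max}\le\beta$, so $\min\{\bar r,R\}\ge(R/\beta)\bar r$ on $[0,\beta]$. Since $\pi_{\mathrm{RLHF}}$ maximizes $\mathbb{E}_\pi[\bar r]$ over the KL ball, this gives $\mathbb{E}_{\pi_{\mathrm{RLHF}}}[\min\{\bar r,R\}]\ge(R/\beta)\mathbb{E}_{\pi_{\mathrm{RLHF}}}[\bar r]\ge(R/\beta)\max_{\mathrm{KL}(\pi\|\mu)\le\tau}\mathbb{E}_\pi[\min\{\bar r,R\}]$.

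The paper does the equivalent pointwise. It keeps the unclipped bound $\mathbb{E}_u[\hat u(x)]\lesssim\beta^{-1}\bar r(x)$ and uses optimality of $\pi_{\mathrm{RLHF}}$ for $\mathbb{E}_\pi[\bar r]$. It then shows $\bar r(x)\lesssim c^{-1}\beta^2\,\mathbb{E}_u[u(x)]$, via Lemma~\ref{lemma:r-by-u-main} when $\bar r(x)\le c$ and via $\bar r(x)\le\beta$ otherwise.

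Your Step~1 high-utility case has a separate error: the bound $\bar r(x)\lesssim\beta\,\mathbb{E}_u[u(x)]+O(1)$ does not follow from the first-order conditions and is false. Take $u(1)\equiv0$, $u(2)\equiv1$, and $u(3)=1$ with probability $\beta^{-1}$ and $0$ otherwise, with $\mu=(\delta,1-\delta-\epsilon,\epsilon)$ and $\epsilon\ll\delta\to0$. Then $\bar r(1)=0$ and $\bar r(2)\to\beta$. Lemma~\ref{lemma:FOC} at $x=3$ forces $\sigma(\bar r(3)-\bar r(2))\to\frac{1}{2\beta}+(1-\beta^{-1})\sigma(-\beta)$, so $\bar r(3)\approx\beta-\log(2\beta)$ while $\beta\,\mathbb{E}_u[u(3)]=1$. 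This instance has $\mathbb{P}_{u,x\sim\mu}[\beta u(x)>c]\approx1$, exactly the case where you use the bound. Even a valid additive $O(1)$ would swamp $\mathbb{E}_\mu[\bar r]$, which in this case can be as small as $O(c^3)$.

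The paper's high case (Theorem~\ref{lemma:RLHF-proof-4-noclip}) uses no upper bound on rewards at all. It works with the Borda level sets $\mathcal{B}\subseteq\mathcal{B}'$ of Lemmas~\ref{lemma:set-B-1} and~\ref{lemma:set-B-2}. These carry $\mu$-mass $\Omega(c^3)$ and consist of alternatives with $\mathbb{E}_u[u(x)]\ge c^4/(16\beta)$.
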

The proof can be found in Appendix~\ref{section:appendix_clipping}. 

Roughly speaking, removing reward clipping requires bounding $\bar{r}(x)$ itself in Lemma~\ref{lemma:r-by-u-main}, rather than $\min\{\bar{r}(x), R\}$.
Therefore, if $\bar{r}(x) \geq R$, the following modification is required:
\vspace{-1mm}
\begin{align}\label{eq:modification-woclip}
    \bar{r}(x) &\leq \big(R^{-1} \max_{x'\in A} \bar{r}(x')\big)\times\underbrace{\min\{\bar{r}(x),R\}}_{=R}\\[-.5mm] &\!\!\!\!\!\!\underbrace{\lesssim}_{\text{Lemma~\ref{lemma:r-by-u-main}}} \big(R^{-1} \max_{x'\in A} \bar{r}(x')\big)\times\beta \mathbb{E}_{u\sim \mathcal{D}}[u(x)].
\end{align}

\vspace{-3mm}
Because Lemma~\ref{lemma:lower-bound-rmin} implies that $R=\Omega(1)$, it suffices to bound $\max_{x}\bar{r}(x)$. However, even if $\beta u\in [0,\beta]^m$, it is not immediate that the estimate $\bar{r}$ obtained from the mixture of preferences generated by heterogeneous utilities also lies in $[0,\beta]^m$. This non-expansiveness of the maximum likelihood estimator for mixtures of Bradley--Terry models is established in Lemma~\ref{lemma:nonexpansiveness-main}, which yields $\max_x \bar{r}(x)\leq \beta$.
\section{Lower Bounds}\label{section:lower-bounds}
The upper bounds of Theorem~\ref{theorem:upper-linear-main} are tight up to logarithmic factors. Specifically, we present two lower bounds corresponding to different ranges of $\tau$. The RLHF-specific lower bounds for $e^{-\Theta(B)} \leq \tau \leq B$ are given in Section~\ref{subsection:lower-RLHF}, while an algorithm-independent lower bound for the remaining cases is presented in Section~\ref{subsection:lower-linear-main}.

\subsection{Lower Bound for RLHF}\label{subsection:lower-RLHF}
We first show that the increase in RLHF distortion due to distribution mismatch is tight up to logarithmic factors. This bound holds regardless of the choice of $r_{\min}$ and $r_{\max}$.
\begin{restatable}{theorem}{lowerRLHF}\label{theorem:lower-RLHF}
Assume that the KL budget $\tau>0$, the temperature parameter $\beta\geq 3$, and the distribution mismatch $B=\max_{x\in A}\big|\log\frac{\mu(x)}{\pi_{\mathrm{ref}}(x)}\big|$ satisfy that $\max\{e^{-\frac{B}{3}},e^{-\frac{\beta}{2}}\} \leq \tau \leq B$, $\beta \leq e^{\frac{B}{3}}-1$, and $3\leq B \leq \min\{e^{\frac{B}{3}},e^{\frac{\beta}{2}}\}.$ Then, there exist an instance $\mathcal{D}$, a data distribution $\mu$, and a reference policy $\pi_{\mathrm{ref}}$ such that, regardless of how $r_{\min}$ and $r_{\max}$ are chosen in reward clipping, the distortion of $\pi_{\mathrm{RLHF}}$ is lower bounded by
\vspace{-1mm}
\begin{align}
   {\sf Dist}(\pi_{\mathrm{RLHF}})\gtrsim \min\biggl\{\frac{B}{\log B\beta},\frac{B}{\tau}\biggr\}\beta.
\end{align}
\end{restatable}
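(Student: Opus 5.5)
We build an explicit instance in which the Borda-type reward $\bar r$ ranks a ``majority'' alternative strictly above a ``niche'' alternative. The niche alternative carries almost all of the achievable utility but is heavily underweighted by $\mu$ relative to $\pi_{\mathrm{ref}}$. The KL budget is then spent moving mass toward the wrong alternative, or cannot be spent usefully at all.

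\textbf{Construction.} Take three kinds of alternatives. The first is a bulk alternative $z$ that carries most of the $\mu$-mass and has utility $0$ for everyone. The second is a niche alternative $a$ with $\mu(a)\approx e^{-B}\pi_{\mathrm{ref}}(a)$ but a sizable $\pi_{\mathrm{ref}}(a)$, chosen so that moving mass of order $\min\{1,\tau/B\}$ onto $a$ is KL-feasible. (Moving mass $q$ onto an alternative of reference mass $p$ costs roughly $q\log(q/p)$, which produces the $B/\tau$ versus $B/\log(B\beta)$ trade-off.) The third is a decoy alternative $b$ with $\mu(b)\approx e^{B}\pi_{\mathrm{ref}}(b)$. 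The user distribution $\mathcal{D}$ is a two-type mixture:
\begin{itemize}
\item a rare type, with probability $\epsilon$, assigns $u(a)=1$;
\item a common type assigns $u(b)=\delta$, with $\delta\approx 1/(B\beta)$ or so.
\end{itemize}
With these choices $\mathbb{E}[u(a)]=\epsilon\gg\mathbb{E}[u(b)]$, yet the pairwise win rates against $\mu$, which is dominated by $z$, satisfy ${\sf Borda}(b)>{\sf Borda}(a)$. The reason is saturation: the rare type's advantage contributes at most $\epsilon\,(\sigma(\beta)-\tfrac12)\approx\epsilon/2$, while the common type contributes $\approx\beta\delta/4$. So we choose $\epsilon\approx\beta\delta$ up to constants.

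\textbf{Steps.}
\begin{enumerate}
\item Invoke Lemma~\ref{lemma:Borda_order_match} to conclude $\bar r(b)>\bar r(a)>\bar r(z)$ (or $\bar r(a)\le \bar r(z)$ in a variant). Since clipping is monotone, the ordering survives any $r_{\min},r_{\max}$, possibly becoming a tie. To make the argument clipping-independent, arrange that the optimizer of \eqref{eq:Setting-PPO} never puts more than $\pi_{\mathrm{ref}}$-level mass on $a$. It suffices that $a$ never has strictly higher clipped reward than $b$ and $z$, with ties resolved adversarially as the statement permits (``any maximizer'').
\item Compute the RLHF utility. Because $\pi_{\mathrm{ref}}(b)$ is tiny, namely $e^{-B}$ times something, $\pi_{\mathrm{RLHF}}$ can move onto $b$ only mass of order $\tau/\log(1/\pi_{\mathrm{ref}}(b))\approx\tau/B$. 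It keeps mass $\approx\pi_{\mathrm{ref}}(a)$ on $a$. Hence its utility is $\lesssim \pi_{\mathrm{ref}}(a)\epsilon+\min\{1,\tau/B\}\delta$.
\item Compute the benchmark. Put mass $q\approx\min\{1,\tau/\log(1/\pi_{\mathrm{ref}}(a))\}$ on $a$, which gives utility $\approx q\epsilon$.
\item Tune $\pi_{\mathrm{ref}}(a)$ as small as $e^{-\Theta(B)}$ while keeping $\log(1/\pi_{\mathrm{ref}}(a))\lesssim \log(B\beta)$ when possible. Balance $\epsilon,\delta$ so that the ratio becomes $\min\{B/\log(B\beta),B/\tau\}\cdot\beta$. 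Here the factor $\beta$ comes from $\epsilon/\delta\approx\beta$, and the factor $B/\log(\cdot)$ from the ratio of feasible masses.
\end{enumerate}
The side conditions $\tau\ge e^{-B/3},e^{-\beta/2}$ and $B\le e^{B/3},e^{\beta/2}$ are what keep these masses and logarithms in the right ranges.

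\textbf{Main obstacle.} The hardest part is step 1 together with the normalization \eqref{eq:reward-zero}. We must verify quantitatively, not just by ordering, that the MLE of the single BT model on this mixture satisfies the required ordering and that no clipping window can favor $a$. The saturation estimate for the rare type, a win probability of at most $\sigma(\beta)$, must beat the linear-regime gain of the common type by a constant factor even after accounting for comparisons against $b$ and $a$ themselves, whose $\mu$-mass is small. I would first try to make $\mu(a),\mu(b)$ negligible, so that Borda scores are essentially win rates against $z$, and then treat the cross terms as perturbations.
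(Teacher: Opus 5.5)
There is a genuine gap, and it is exactly the requirement you state in Step~1. The bound must hold for \emph{every} clipping window, including one wide enough to clip nothing. So the niche alternative $a$ must never receive a strictly larger reward than the alternative on which $\pi_{\mathrm{ref}}$ keeps its bulk.

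In your construction that alternative is $z$, which has utility $0$ for every user. Hence $u(a)\ge u(z)$ pointwise and ${\sf Borda}(a)>{\sf Borda}(z)$. By Lemma~\ref{lemma:Borda_order_match}, $\bar r(a)>\bar r(z)$ strictly, so the ``variant'' $\bar r(a)\le\bar r(z)$ cannot be arranged.

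Quantitatively, with $\mu$ concentrated on $z$ the first-order condition gives $\bar r(a)-\bar r(z)\approx 2\epsilon$ and $\bar r(b)-\bar r(z)\approx\beta\delta$. Your tuning $\epsilon\asymp\beta\delta$, which is what produces the factor $\beta$, therefore makes $\theta=(\bar r(a)-\bar r(z))/(\bar r(b)-\bar r(z))$ a constant in $(0,1)$.

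Without clipping, $\pi_{\mathrm{RLHF}}\propto\pi_{\mathrm{ref}}e^{\eta\bar r}$, so the tilt on $a$ relative to $z$ is the $\theta$-th power of the tilt on $b$. Since $\pi_{\mathrm{ref}}(a)\gg\pi_{\mathrm{ref}}(b)\approx e^{-B}$, the binding KL budget is spent almost entirely on $a$. This holds whenever $\log(B\beta/\tau)\ll\theta B$, e.g.\ $\tau\asymp 1$ and $\beta$ polynomial in $B$. It gives $\pi_{\mathrm{RLHF}}(a)\asymp\tau/\log(1/\pi_{\mathrm{ref}}(a))$, which is the same mass your benchmark places on $a$.

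For example, take $\tau=1$, $B=30$, $\beta=10$ (so $\pi_{\mathrm{ref}}(a)=\tau/(B\beta)=1/300$, within the hypotheses) and $\theta=\tfrac12$. Both $\pi_{\mathrm{RLHF}}$ and the KL-optimal policy put about $0.28$ on $a$, so the distortion is $O(1)$. Your Step~2 claim that RLHF keeps mass $\approx\pi_{\mathrm{ref}}(a)$ on $a$ is therefore false.

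Shrinking $\theta$ to $O(1/B)$ to suppress this tilt forces $\epsilon\lesssim\beta\delta/B$ and loses the factor $B$. Step~4 is also self-contradictory: $\pi_{\mathrm{ref}}(a)=e^{-\Theta(B)}$ is incompatible with $\log(1/\pi_{\mathrm{ref}}(a))\lesssim\log(B\beta)$. The right scale is $\pi_{\mathrm{ref}}(a)\asymp\tau/(B\beta)$.

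The missing idea is to obtain clipping-independence from \emph{exact Borda ties}. This requires separating the bulk of $\pi_{\mathrm{ref}}$ from the bulk of $\mu$ and exploiting signal degradation, not only saturation. The paper's construction works as follows.
\begin{itemize}
\item $\mu$'s mass sits on alternative $3$ and $\pi_{\mathrm{ref}}$'s mass on alternative $1$.
\item A $\Theta(1)$-probability type has $u=(0,\tfrac12,1,0)$. Alternative $2$ then has $\Theta(1)$ expected utility, but its supporters prefer the $\mu$-dominant alternative $3$, so $2$ almost always loses. Its Borda score exceeds that of the nearly useless alternative $1$ only by about $\sigma(-\beta/2)-\sigma(-\beta)$, plus $O(e^{-B})$.
\item A rare type $u=(\tfrac1\beta,0,0,0)$ with probability $\asymp e^{-B}+e^{-\beta/2}$ closes this gap exactly, giving ${\sf Borda}(1)={\sf Borda}(2)$.
\item A third type $u=(0,0,0,\tfrac1\beta)$ is weighted so that ${\sf Borda}(3)={\sf Borda}(4)$.
\end{itemize}
Monotone clipping preserves $r(1)=r(2)\le r(3)=r(4)$ for all $r_{\min},r_{\max}$. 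Hence $\pi_{\mathrm{RLHF}}$ never increases the mass on $2$. It splits whatever it moves onto $\{3,4\}$ in the ratio $1:\beta$ fixed by $\pi_{\mathrm{ref}}(3)=e^{-B}$ and $\pi_{\mathrm{ref}}(4)=\beta e^{-B}$.

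The KL bound then gives $\pi_{\mathrm{RLHF}}(3)\le 3\tau/(B\beta)$, so RLHF's utility is $O(\tau/(B\beta))$. The benchmark instead moves mass $\asymp\min\{\tau/\log(B\beta),1\}$ onto alternative $2$, whose reference mass is $\tau/(B\beta)$. The factor $\beta$ comes from the $1:\beta$ split, and the factor $B/\log(B\beta)$ from the reference masses of $2$ versus $\{3,4\}$.
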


\vspace{-1mm}
An implication from this lower bound is that a distortion of $\tilde{\Theta}(B\beta)$ is unavoidable even under an exponentially small KL budget $\tau = e^{-\Theta(B)}$. This shows that distortion arising from distribution mismatch is persistent even when LLM updates are restricted to small-scale fine-tuning. 

Below, we illustrate how the two cases in the effective-utility reduction lead to reward misalignment, and how this misalignment interacts with distribution mismatch to produce distortion. See Appendix~\ref{subsection:lower-dependent} for the full proof.
\vspace{-2mm}
\begin{proof}[Proof overview]
{\bf (i) User distribution $\mathcal{D}$.}\quad We consider four alternatives $x=1,2,3,4$, where the utility vector $u$ takes three possible values: $u = (\frac1\beta,0,0,0)$ with probability $p_1 \asymp e^{-B}+e^{-\frac{\beta}{2}}$, $(0,\frac12,1,0)$ with probability $p_2 \asymp 1$, and $(0,0,0,\beta^{-1})$ with probability $p_3 \asymp 1$. Under this construction, the average utilities are $\mathbb{E}[u(1)] \asymp (e^{-B}+e^{-\frac{\beta}{2}})\beta^{-1}$, $\mathbb{E}[u(2)]\asymp 1, \mathbb{E}[u(3)] \asymp 1$, and $\mathbb{E}[u(4)]\asymp \beta^{-1}$.

\noindent
{\bf (ii) Data distribution and misalignment of the rewards.}\quad We set $\mu(1)\!=\!\mu(2)\!=\!\mu(4)\!=\!e^{-B}$ and $\mu(3)\!=\!1\!-\!3e^{-B}$ so that most preference comparisons are made against $x\!=\!3$. 
As a result of the signal degradation in pairwise comparisons between $x=2$ and $x=3$, $x=2$ receives the smallest reward despite having $\Theta(1)$ expected utility. Also, because of the underweighting of large utilities, $x=3$ and $x=4$ receive the same largest reward, although $x=4$ has only a $\beta^{-1}$ fraction of the expected utility of $x=3$.

\noindent
{\bf (iii) Distribution mismatch.}\quad We set $(\pi_{\mathrm{ref}}(1), \pi_{\mathrm{ref}}(2), $ $\pi_{\mathrm{ref}}(3),\pi_{\mathrm{ref}}(4)\!)\!=\! (1\!-\!\frac{\tau}{B\beta}\!-\!(1+\beta)e^{-B}, \frac{\tau}{B\beta}, e^{-B},\beta e^{-B})$; most of the mass of $\pi_{\mathrm{ref}}$ is concentrated on $x=1$, while $x=2$ has polynomially small mass and $x=3,4$ have exponentially small mass.
To maximize the average utility, it is nearly optimal to assign as much mass as the KL constraint permits to $x=2$, whose expected utility is $\Theta(1)$.

\noindent
{\bf (iv) Derivation of the distortion.}\quad 
On the other hand, allocating mass according to the reward ordering increases the mass on $x=3$ and $x=4$, rather than on $x=2$, relative to $\pi_{\mathrm{ref}}$.
However, since $\pi_{\mathrm{ref}}(3)$ and $\pi_{\mathrm{ref}}(4)$ are exponentially small, the KL constraint allows only a $\tilde{\Theta}(B^{-1})$ fraction as much mass to be assigned to $x=3,4$ as could be assigned to $x=2$.
Moreover, since $r(3)=r(4)$ and $\pi_{\mathrm{ref}}(4)=\beta\pi_{\mathrm{ref}}(3)$, the policy allocates more mass to $x=4$ than to $x=3$, even though the average utility of $x=4$ is only a $\Theta(\beta^{-1})$ fraction of those of $x=2$ and $x=3$.
Combining these two factors, the average utility decreases by a factor of roughly $B^{-1}\times \beta^{-1}$ from the maximum average utility, which yields a lower bound of $\tilde{\Omega}(B\beta)$ on the distortion.
\end{proof}

\subsection{Algorithm Independent Lower Bound}\label{subsection:lower-linear-main}
To cover all regimes of the KL budget $\tau$, we establish an $\Omega(\beta)$ lower bound for all $\tau$ in the case $\pi_{\mathrm{ref}}=\mu$, i.e., $B=0$. We note that, while \citet[Theorem 3]{golz2025distortion} also prove a similar lower bound, it applies to the social choice setting in which the alternative with the maximum reward is selected, and thus extends to the AI alignment setting ($\tau<\infty$) only for sufficiently large $\tau$, rather than for all $\tau$.\footnote{We remark that \citet{golz2025distortion} report a $(\tfrac12+o(1))\beta$ lower bound for both the social choice setting and the AI alignment setting in their Table~1. However, both of these refer to \citet[Theorem~3]{golz2025distortion}, which considers the setting in which a single alternative is selected, namely the social choice setting. Indeed, in their construction, as $\tau\to 0$, any $\pi$ satisfying $\mathrm{KL}(\pi\|\pi_{\mathrm{ref}})\leq \tau$ can no longer allocate mass to the alternative $a$ with exceptionally high utility, and since the remaining alternatives also have positive average utility, the distortion converges to $1$.}
\begin{restatable}{theorem}{lowerlinear}\label{theorem:lower-linear-main}
When $\mu = \pi_{\mathrm{ref}}$, for any KL budget $\tau > 0$ and any Bradley--Terry temperature parameter $\beta > 0$, there exist a collection of instances $\{\mathcal{D}_i\}_{i=1}^{N}$ and a data distribution $\mu$ such that, when an instance is drawn uniformly at random from $\{\mathcal{D}_i\}_{i=1}^{N}$, the output of any algorithm incurs expected distortion at least
\begin{align}
    \frac{\beta}{2}\frac{1+e^{-\beta}}{1-e^{-\beta}} - \epsilon,
\end{align}
where $\epsilon>0$ is an arbitrarily small constant.
\end{restatable}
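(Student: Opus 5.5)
We prove the bound with a Yao-type indistinguishability construction: a family of instances that all induce the \emph{same} distribution over pairwise comparison outcomes under $\mu$, while differing in which alternative carries the large average utility. Since the algorithm sees only the comparison data (and $\mu=\pi_{\mathrm{ref}}$, $\tau$), its output $\pi$ is the same for every $\mathcal{D}_i$. The distortion averaged over the uniform choice of $i$ can then be lower bounded by comparing $\max_{\pi'}$ with the average over $i$ of $\mathbb{E}_{\mathcal{D}_i,x\sim\pi}[u(x)]$.

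\textbf{Construction.} Take $N$ large and $A=\{1,\dots,N\}$ with $\mu=\pi_{\mathrm{ref}}$ uniform. For instance $i$, the utility profile mixes two user types. With probability $q$, a user has $u(i)=1$ and all other utilities $0$. With probability $1-q$, a user has $u(x)=t$ for all $x\neq i$ and $u(i)=0$, or more generally some ``background'' profile. The value of $t$ is tuned so that the induced win probability $\mathbb{E}_u[\sigma(\beta(u(i)-u(y)))]$ of $i$ against any $y$ equals $1/2$, and all pairs $y,y'\neq i$ are also at $1/2$. Matching the win probability of $i$ against others requires
\[
q\,\sigma(\beta)+(1-q)\,\sigma(-\beta t)=\tfrac12 .
\]
With $t=1$, this gives $q=\tfrac12$ exactly, which is degenerate. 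We instead use a cleaner construction. Users with $u(i)=1$ appear with probability $q$, and the remaining users give utility $1$ to a random uniformly chosen other alternative. If needed, the background utility is spread so that every alternative's marginal comparison statistics are identical; for this, make the instance symmetric in distribution by drawing the background favorite uniformly. Then all pairwise comparison probabilities are a fixed symmetric value, so the data are uninformative about $i$.

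\textbf{Computing the ratio.} The optimum within the KL ball places as much mass on $i$ as $\tau$ allows. By symmetry, the algorithm's output, averaged over $i$, behaves like $\mu$ itself, so the denominator is roughly $\mathbb{E}_{x\sim\mu}\mathbb{E}[u(x)]\approx 1/N$ times the total utility mass. The numerator gains a factor from $\mathbb{E}[u(i)]$ being larger than the others'. Optimizing $q$ subject to indistinguishability yields the ratio $\frac{\beta}{2}\frac{1+e^{-\beta}}{1-e^{-\beta}}=\frac{\beta}{2}\coth(\beta/2)$. This is exactly the ratio of the sigmoid slope bound $\frac{\sigma(\beta)-\sigma(-\beta)}{\beta}$ versus $\sigma'(0)$, so the natural design balances a small-gap user type, which gives linear sigmoid response, against a full-gap type, which gives the saturated response. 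A user type $\delta$ with small utility $\varepsilon$ on many alternatives contributes a comparison signal $\approx\beta\varepsilon/4$. A type with utility $1$ on $i$ contributes $\sigma(\beta)-\tfrac12$. Equating these signals while comparing utilities gives the ratio $\frac{1}{\varepsilon}\cdot\frac{\beta/4\cdot\varepsilon}{\sigma(\beta)-1/2}=\frac{\beta}{2}\coth(\beta/2)$.

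\textbf{Handling small $\tau$.} This is the main obstacle: the footnote notes that naive constructions give distortion $\to1$ as $\tau\to0$. The fix is to make the utility of the background alternatives essentially zero in the aggregate, with $\varepsilon\to0$ and $N\to\infty$. Then any policy's average utility is dominated by its mass deviation toward $i$ and by the small-$\varepsilon$ contributions. Both the numerator and denominator then scale linearly in the mass movable within the KL ball, which is uniform over alternatives because $\mu$ is uniform. The ratio is therefore independent of $\tau$. The remaining work is checking that the $\varepsilon,1/N$ error terms give the arbitrarily small $\epsilon$ in the statement.
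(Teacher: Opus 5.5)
The core mechanism is right and is the paper's. A saturated user type puts utility $1$ on a hidden alternative. It is balanced against a linear-regime type that puts tiny utility on a uniformly random other alternative. Then all pairwise win rates coincide across instances, while the hidden alternative's average utility is $R=\frac{\beta}{4(\sigma(\beta)-1/2)}=\frac{\beta}{2}\frac{1+e^{-\beta}}{1-e^{-\beta}}$ times that of each other alternative. Your Jensen-style averaging over $i$ is also fine; it replaces the paper's symmetrization of the output.

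The instance you actually write down, however, is degenerate. If background users put utility $1$ on a random other alternative, matching win rates forces $q=1/N$, so all average utilities are equal. The correct second type is the one in your heuristic. In the paper it is $u(y)=\varepsilon\beta^{-1}$ for $y$ uniform among the non-hidden alternatives, with $p_1(\sigma(\beta)-\tfrac12)=\tfrac{p_2}{K}(\sigma(\varepsilon)-\tfrac12)$.

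The genuine gap is the choice of a uniform $\mu=\pi_{\mathrm{ref}}$, which cannot give the bound for small $\tau$. This is exactly the failure the paper's footnote describes.

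\begin{itemize}
\item Let each background alternative have average utility $a$ and the hidden one $Ra$.
\item The algorithm that simply outputs $\pi_{\mathrm{ref}}$ earns $\frac{R+N-1}{N}a$ in every instance.
\item The optimum is $(1+p^*(R-1))a$, where $p^*$ is the largest mass placeable on $i$ subject to $\mathrm{KL}(\pi\|\mathrm{Unif}_N)\le\tau$.
\item So that algorithm's distortion is exactly $\frac{N(1+p^*(R-1))}{R+N-1}$. This tends to $1$ as $\tau\to0$, since $p^*\to1/N$.
\item For any fixed $\tau$, your own limit $N\to\infty$ gives $p^*\approx\tau/\log N\to0$, so the distortion again tends to $1$.
\item Reaching $R$ would require $p^*\to1$, i.e.\ $\tau\gtrsim\log N$ with $N\gg R$.
\end{itemize}

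Your claim that numerator and denominator both scale linearly in the movable mass is therefore false here. Sending $\varepsilon\to0$ cannot make the background negligible, because indistinguishability pins its utility at a $1/R$ fraction of the hidden one. Under uniform $\pi_{\mathrm{ref}}$, the background also holds almost all of the reference mass.

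The missing idea is the paper's device.
\begin{itemize}
\item Add a dummy alternative $K+2$ with $u\equiv0$ carrying $\mu$-mass $1-(K+1)\delta$.
\item Give each of the $K+1$ informative alternatives mass $\delta\le\tau/\bigl((1+K)K^{K+1}\log K\bigr)$.
\item Check that comparisons against the dummy also coincide across instances; they do under the same tuning condition.
\end{itemize}
The baseline utility of $\pi_{\mathrm{ref}}$ then vanishes. A symmetric output with mass $q$ per informative alternative satisfies $(1+K)q\log\frac{q}{\delta}\approx\tau$. Because $\log\frac{q}{\delta}\ge K\log K$ dwarfs the extra $\log K$ paid for concentrating, the optimum can place a $(1-\frac{1}{K\log K})$ fraction of $Kq$ on the hidden alternative within the same budget. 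This near-linearity of KL in the moved mass is what makes the ratio independent of $\tau$, giving distortion tending to $R$ as $K\to\infty$ and $\varepsilon\to0$.
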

See Appendix~\ref{subsection:lower-linear} for the proof and a discussion of how this result relates to identifiability results.
\section{Experiments}\label{section:experiments}
To aid in interpreting the above theoretical results, we conduct several small-scale experiments. Section~\ref{subsection:experiment-openweight} examines the reward scale of open-weight reward models, and Section~\ref{subsection:experiment-synthetic} presents a synthetic experiment that evaluates distortion. In Appendix~\ref{subsection:Appendix-estimation-B}, we also estimate $B$ as the per-completion log-likelihood ratio.
\subsection{Reward Scale in Practical  Reward Models}\label{subsection:experiment-openweight}
In the theoretical analysis, larger values of the temperature parameter $\beta$ induce stronger nonlinearity in the BT model, leading to larger distortion.
While the effect of this nonlinearity cannot be observed directly, 
since $\beta$ upper bounds the maximum difference between rewards
$\max_{x\in A}\bar{r}(x) - \min_{x\in A}\bar{r}(x)$
by Lemma~\ref{lemma:nonexpansiveness-main}, 
we visualize the reward scale of open-weight reward models to obtain rough estimates of $\beta$ and to see how much the nonlinear regime of the BT model matters in practice.

\begin{figure}[ht]
  \begin{center}
    \hspace{-0.95cm}\centerline{\includegraphics[width=0.8\columnwidth]{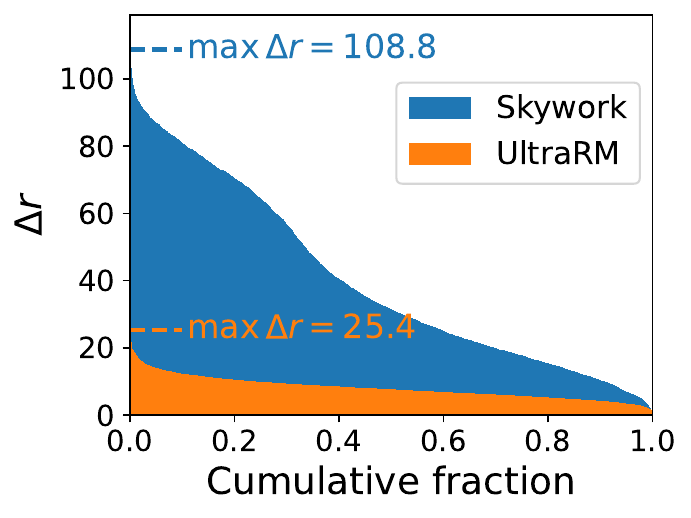}}
    \caption{
      Empirical distribution of pairwise reward differences.
    }
    \label{figure:reward-scale}
  \end{center}
  \vspace{-3mm}
\end{figure}

From RewardBench \citep{lambert2025rewardbench}, we selected two open-weight reward models: Skywork-Reward-V2-Llama-3.1-8B \citep{liu2024skywork} and UltraRM-13B \citep{cui2023ultrafeedback}.
The former model was trained with the original MLE loss \eqref{eq:Setting-LossFunc}, while the latter added a regularization $m$ with $|m|\leq 1$ to $r(x)-r(y)$.
Using their training datasets, Skywork-Reward-Preference-80K-v0.1 \citep{liu2024skywork} and UltraFeedback \citep{cui2023ultrafeedback}, we sampled $5000$ preference instances and plotted the values of $\Delta r \coloneqq |r(x\mid z) - r(y\mid z)|$, where $z$ denotes the prompt and $x,y$ denote completions.
See Appendix~\ref{subsection:appendix-reward-scale} for details.

The results are shown in Figure~\ref{figure:reward-scale}. The maximum $\Delta r$ is $108.8$ for Skywork and $25.4$ for UltraRM.
While they do not exactly correspond to the effective value of $\beta$ due to optimization errors, they nevertheless mean that reward models in practice operate in a regime where the MLE loss \eqref{eq:Setting-LossFunc} exhibits substantial nonlinearity. 
This implies that practical reward models can have non-negligible distortion. 
\subsection{Synthetic Experiment}\label{subsection:experiment-synthetic}
Next, we present a toy example in which distortion arises from distribution mismatch between $\mu$ and $\pi_{\mathrm{ref}}$. 
The setup is detailed in Appendix~\ref{subsection:appendix-toy-exp}, with $\beta=10$ and $B=11.5$ 
in particular.
For reward computation, we consider two cases for generating preference data: one using $\mu$, and another using the reference policy $\pi_{\mathrm{ref}}$ in place of $\mu$, which results in $B=1$.
In both cases, we optimized a distribution initialized at $\pi_{\mathrm{ref}}$, using mirror descent \citep{beck2003mirror} with step size $\eta=10^{-3}$.

\begin{figure}[ht]
  \begin{center}
    \centerline{\includegraphics[width=0.85\columnwidth]{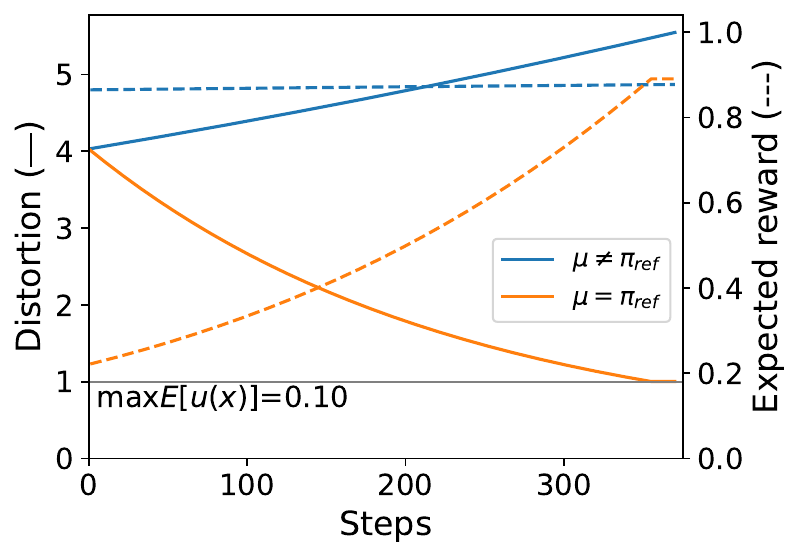}}
    \caption{
      Distortion with different reference policies.
    }
    \label{figure:toy-example}
  \end{center}
  \vspace{-5mm}
\end{figure}
Figure~\ref{figure:toy-example} shows the evolution of the distortion and the average reward (the latter differs between the two cases).
When $\pi_{\mathrm{ref}}$ is used for preference data generation (denoted by $\mu=\pi_{\mathrm{ref}}$), the distortion converges to $1$, whereas when $\mu$ is used ($\mu\ne\pi_{\mathrm{ref}}$), the distortion grows as optimization proceeds, corresponding to a decrease in average utility. 
This succinctly illustrates that, in RLHF, distribution mismatch between $\mu$ and $\pi_{\mathrm{ref}}$ is a key driver of increased distortion.
\section{Discussion and Practical Implications}\label{section:conclusion}
In this work, we revisited the question of whether RLHF is well-suited to aggregating diverse human preferences. By isolating the effect of distribution mismatch $B$, we proved that the RLHF distortion varies from $\Theta(\beta)$ to $\tilde\Theta(B \beta)$ depending on the KL budget $\tau$, with the matching upper and lower bounds. 
The optimality under $\mu=\pi_{\mathrm{ref}}$  implies that misspecification in reward estimation, namely the use of a single BT model, does not by itself worsen distortion as a fundamental property of RLHF.
While the effect of distribution mismatch can persist even under a small KL budget, its dependence on the mismatch is upper bounded by the logarithm of the density ratio, making it comparatively moderate and ruling out the pessimistic exponential bound from prior work  \citet{golz2025distortion} unless the mismatch is extreme.

Our results suggest several practical implications. First, on-policy data collection \citep{ouyang2022training} or online variants of RLHF \citep{xiong2024iterative,zhang2024self,guo2024direct,xie2025exploratory} may offer advantages for reducing distortion over offline methods.
However, in practice, we are often faced with heterogeneous preference data that is not collected on-policy---for example, public datasets \citep{cui2023ultrafeedback} or data generated by earlier-generation models when training newer ones \citep{olmo2025olmo}.
In such cases, if sufficiently many samples from $\mu$ are available, Corollary~\ref{corollary:finetuning} suggests that fine-tuning on samples from the same distribution prior to RLHF can mitigate distribution mismatch and improve the average utility.
Furthermore, our results naturally point to two research directions: 
\vspace{-2mm}
\begin{itemize}[leftmargin=*]
    \item {\bf How should off-policy preference data be preconditioned?}\quad 
    An alternative to fine-tuning for bringing $\mu$ and $\pi_{\mathrm{ref}}$ closer is to filter, reweight, or otherwise precondition the preference data, which adjusts $\mu$ instead of $\pi_{\mathrm{ref}}$. However, many challenges remain in performing this efficiently at modern scale.
    For example, because per-completion log-likelihoods exhibit substantial variance, directly using them for reweighting would be unstable.

    \vspace{-.5mm}
    \item {\bf When should RLHF be used versus more explicit pluralistic alignment methods?}\quad 
    RLHF may perform well in some regimes, while in others it may be preferable to resort to more computationally expensive but heterogeneity-robust methods such as NLHF. An important practical question is to understand when the extra robustness is worth the additional optimization cost, in which distribution mismatch emerges as one of the key variables.
\end{itemize}

\vspace{-2mm}
More broadly, amid the community's emerging attention to user heterogeneity \citep{sorensen2024roadmap,zhangcultivating2026}, our results suggest that the impact of heterogeneity depends on the relationship between $\pi_{\mathrm{ref}}$ and $\mu$, i.e., how base models are trained and preference data is collected. This insight encourages moving beyond evaluating whether a particular algorithm is robust to heterogeneity, and calls for a more holistic perspective on the entire training pipeline to account for the effects of user heterogeneity.
\newpage
\section*{Acknowledgments}
We thank Song Mei and Kunhe Yang for helpful feedback.
KO was partially supported by JST, ACT-X Grant No. JPMJAX23C4, ONR Grant No. N00014-24-S-B001, and DARPA AIQ Grant No. HR001124S0029-AIQ-FP-003. AU is supported by the National Science Foundation Graduate Research Fellowship Program under Grant No. DGE 2146752. Views and opinions expressed are however those of the author(s) only and do not necessarily reflect those of the National Science Foundation. HB is supported by JST, BOOST Grant No. JPMJBY24E8. NH was partially supported by the National Science Foundation under Grant No. CCF-2145898, the Office of Naval Research under Grant No. N00014-24-1-2159, a Google Research Scholar Award, an Alfred P. Sloan fellowship, and a Schmidt Science AI2050 fellowship.
\section*{Impact Statement}
This paper provides theoretical results about biases in RLHF, contributing to a deeper understanding of a representative approach to the alignment problem. 
In particular, we show that distribution mismatch can bias reward models and the resulting distribution $\pi_{\mathrm{RLHF}}$.
While this phenomenon could be exploited to degrade alignment, related risks of reward model manipulation have already been demonstrated empirically in prior work   (e.g., \citet{zhang2025lists} showed that manipulating $1\%$ of the reward training data can inject a significant bias into the reward, leading to the frequent use of certain formats, such as lists). Moreover, we  focus on theoretical results and
do not propose concrete attack methods.
We therefore believe that the risk of misuse of our findings is limited.

\bibliography{example_paper}

@article{schulman2017proximal,
  title={Proximal policy optimization algorithms},
  author={Schulman, John and Wolski, Filip and Dhariwal, Prafulla and Radford, Alec and Klimov, Oleg},
  journal={arXiv preprint arXiv:1707.06347},
  year={2017}
}

@article{ouyang2022training,
  title={Training language models to follow instructions with human feedback},
  author={Ouyang, Long and Wu, Jeffrey and Jiang, Xu and Almeida, Diogo and Wainwright, Carroll and Mishkin, Pamela and Zhang, Chong and Agarwal, Sandhini and Slama, Katarina and Ray, Alex and others},
  journal={Advances in Neural Information Processing Systems},
  volume={35},
  pages={27730--27744},
  year={2022}
}

@inproceedings{munos2024nash,
  title={Nash learning from human feedback},
  author={Munos, R{\'e}mi and Valko, Michal and Calandriello, Daniele and Azar, Mohammad Gheshlaghi and Rowland, Mark and Guo, Zhaohan Daniel and Tang, Yunhao and Geist, Matthieu and Mesnard, Thomas and Fiegel, C{\^o}me and others},
  booktitle={Forty-first International Conference on Machine Learning},
  year={2024}
}

@article{rafailov2023direct,
  title={Direct preference optimization: Your language model is secretly a reward model},
  author={Rafailov, Rafael and Sharma, Archit and Mitchell, Eric and Manning, Christopher D and Ermon, Stefano and Finn, Chelsea},
  journal={Advances in Neural Information Processing Systems},
  volume={36},
  pages={53728--53741},
  year={2023}
}

@article{siththaranjan2023distributional,
  title={Distributional preference learning: Understanding and accounting for hidden context in {RLHF}},
  author={Siththaranjan, Anand and Laidlaw, Cassidy and Hadfield-Menell, Dylan},
  journal={arXiv preprint arXiv:2312.08358},
  year={2023}
}

@article{golz2025distortion,
  title={Distortion of {AI} Alignment: Does preference optimization optimize for preferences?},
  author={G{\"o}lz, Paul and Haghtalab, Nika and Yang, Kunhe},
  journal={arXiv preprint arXiv:2505.23749},
  year={2025}
}

@inproceedings{chiang2024chatbot,
  title={Chatbot arena: An open platform for evaluating {LLMs} by human preference},
  author={Chiang, Wei-Lin and Zheng, Lianmin and Sheng, Ying and Angelopoulos, Anastasios Nikolas and Li, Tianle and Li, Dacheng and Zhu, Banghua and Zhang, Hao and Jordan, Michael and Gonzalez, Joseph E and others},
  booktitle={Forty-first International Conference on Machine Learning},
  year={2024}
}

@article{ethayarajh2024kto,
  title={{KTO}: Model alignment as prospect theoretic optimization},
  author={Ethayarajh, Kawin and Xu, Winnie and Muennighoff, Niklas and Jurafsky, Dan and Kiela, Douwe},
  journal={arXiv preprint arXiv:2402.01306},
  year={2024}
}

@article{meng2024simpo,
  title={{SimPO}: Simple preference optimization with a reference-free reward},
  author={Meng, Yu and Xia, Mengzhou and Chen, Danqi},
  journal={Advances in Neural Information Processing Systems},
  volume={37},
  pages={124198--124235},
  year={2024}
}

@article{ziegler2019fine,
  title={Fine-tuning language models from human preferences},
  author={Ziegler, Daniel M and Stiennon, Nisan and Wu, Jeffrey and Brown, Tom B and Radford, Alec and Amodei, Dario and Christiano, Paul and Irving, Geoffrey},
  journal={arXiv preprint arXiv:1909.08593},
  year={2019}
}

@inproceedings{huangcorrecting,
  title={Correcting the mythos of {KL}-regularization: Direct alignment without overoptimization via chi-squared preference optimization},
  author={Huang, Audrey and Zhan, Wenhao and Xie, Tengyang and Lee, Jason D and Sun, Wen and Krishnamurthy, Akshay and Foster, Dylan J},
  booktitle={The Thirteenth International Conference on Learning Representations},
  year={2025}
}

@article{christiano2017deep,
  title={Deep reinforcement learning from human preferences},
  author={Christiano, Paul F and Leike, Jan and Brown, Tom and Martic, Miljan and Legg, Shane and Amodei, Dario},
  journal={Advances in Neural Information Processing Systems},
  volume={30},
  year={2017}
}

@article{wang2023aligning,
  title={Aligning large language models with human: A survey},
  author={Wang, Yufei and Zhong, Wanjun and Li, Liangyou and Mi, Fei and Zeng, Xingshan and Huang, Wenyong and Shang, Lifeng and Jiang, Xin and Liu, Qun},
  journal={arXiv preprint arXiv:2307.12966},
  year={2023}
}

@article{bai2022training,
  title={Training a helpful and harmless assistant with reinforcement learning from human feedback},
  author={Bai, Yuntao and Jones, Andy and Ndousse, Kamal and Askell, Amanda and Chen, Anna and DasSarma, Nova and Drain, Dawn and Fort, Stanislav and Ganguli, Deep and Henighan, Tom and others},
  journal={arXiv preprint arXiv:2204.05862},
  year={2022}
}

@inproceedings{chakraborty2024maxmin,
  title={{MaxMin-RLHF}: Alignment with diverse human preferences},
  author={Chakraborty, Souradip and Qiu, Jiahao and Yuan, Hui and Koppel, Alec and Manocha, Dinesh and Huang, Furong and Bedi, Amrit and Wang, Mengdi},
  booktitle={International Conference on Machine Learning},
  pages={6116--6135},
  year={2024},
  organization={PMLR}
}

@article{sorensen2024roadmap,
  title={A roadmap to pluralistic alignment},
  author={Sorensen, Taylor and Moore, Jared and Fisher, Jillian and Gordon, Mitchell and Mireshghallah, Niloofar and Rytting, Christopher Michael and Ye, Andre and Jiang, Liwei and Lu, Ximing and Dziri, Nouha and others},
  journal={arXiv preprint arXiv:2402.05070},
  year={2024}
}

@article{achiam2023gpt,
  title={GPT-4 technical report},
  author={Achiam, Josh and Adler, Steven and Agarwal, Sandhini and Ahmad, Lama and Akkaya, Ilge and Aleman, Florencia Leoni and Almeida, Diogo and Altenschmidt, Janko and Altman, Sam and Anadkat, Shyamal and others},
  journal={arXiv preprint arXiv:2303.08774},
  year={2023}
}

@inproceedings{procaccia2006distortion,
  title={The distortion of cardinal preferences in voting},
  author={Procaccia, Ariel D and Rosenschein, Jeffrey S},
  booktitle={International Workshop on Cooperative Information Agents},
  pages={317--331},
  year={2006},
  organization={Springer}
}

@inproceedings{boutilier2012optimal,
  title={Optimal social choice functions: A utilitarian view},
  author={Boutilier, Craig and Caragiannis, Ioannis and Haber, Simi and Lu, Tyler and Procaccia, Ariel D and Sheffet, Or},
  booktitle={Proceedings of the 13th ACM Conference on Electronic Commerce},
  pages={197--214},
  year={2012}
}

@inproceedings{anshelevich2021distortion,
  title={Distortion in social choice problems: The first 15 years and beyond},
  author={Anshelevich, Elliot and Filos-Ratsikas, Aris and Shah, Nisarg and Voudouris, Alexandros A},
  booktitle={30th International Joint Conference on Artificial Intelligence},
  pages={4294--4301},
  year={2021},
}

@inproceedings{ebadian2024optimized,
author = {Ebadian, Soroush and Kahng, Anson and Peters, Dominik and Shah, Nisarg},
title = {Optimized Distortion and Proportional Fairness in Voting},
year = {2022},
publisher = {Association for Computing Machinery},
address = {New York, NY, USA},
booktitle = {Proceedings of the 23rd ACM Conference on Economics and Computation},
pages = {563–600},
numpages = {38},
location = {Boulder, CO, USA},
series = {EC '22}
}

@article{caragiannis2011voting,
  title={Voting almost maximizes social welfare despite limited communication},
  author={Caragiannis, Ioannis and Procaccia, Ariel D},
  journal={Artificial Intelligence},
  volume={175},
  number={9-10},
  pages={1655--1671},
  year={2011},
  publisher={Elsevier}
}

@article{anshelevich2018approximating,
  title={Approximating optimal social choice under metric preferences},
  author={Anshelevich, Elliot and Bhardwaj, Onkar and Elkind, Edith and Postl, John and Skowron, Piotr},
  journal={Artificial Intelligence},
  volume={264},
  pages={27--51},
  year={2018},
  publisher={Elsevier}
}

@article{goyal2024metric,
  title={Metric distortion under probabilistic voting},
  author={Goyal, Mohak and Sarmasarkar, Sahasrajit},
  journal={arXiv preprint arXiv:2405.14223},
  year={2024}
}

@article{ge2024axioms,
  title={Axioms for {AI} alignment from human feedback},
  author={Ge, Luise and Halpern, Daniel and Micha, Evi and Procaccia, Ariel D and Shapira, Itai and Vorobeychik, Yevgeniy and Wu, Junlin},
  journal={Advances in Neural Information Processing Systems},
  volume={37},
  pages={80439--80465},
  year={2024}
}

@inproceedings{rajkumar2014statistical,
  title={A statistical convergence perspective of algorithms for rank aggregation from pairwise data},
  author={Rajkumar, Arun and Agarwal, Shivani},
  booktitle={International Conference on Machine Learning},
  pages={118--126},
  year={2014},
  organization={PMLR}
}

@inproceedings{conitzer2024position,
  title={Position: social choice should guide {AI} alignment in dealing with diverse human feedback},
  author={Conitzer, Vincent and Freedman, Rachel and Heitzig, Jobst and Holliday, Wesley H and Jacobs, Bob M and Lambert, Nathan and Moss{\'e}, Milan and Pacuit, Eric and Russell, Stuart and Schoelkopf, Hailey and others},
  booktitle={Proceedings of the 41st International Conference on Machine Learning},
  pages={9346--9360},
  year={2024}
}

@inproceedings{zhang2025lists,
  title={From lists to emojis: How format bias affects model alignment},
  author={Zhang, Xuanchang and Xiong, Wei and Chen, Lichang and Zhou, Tianyi and Huang, Heng and Zhang, Tong},
  booktitle={Proceedings of the 63rd Annual Meeting of the Association for Computational Linguistics (Volume 1: Long Papers)},
  pages={26940--26961},
  year={2025}
}

@article{touvron2023llama,
  title={{Llama 2}: Open foundation and fine-tuned chat models},
  author={Touvron, Hugo and Martin, Louis and Stone, Kevin and Albert, Peter and Almahairi, Amjad and Babaei, Yasmine and Bashlykov, Nikolay and Batra, Soumya and Bhargava, Prajjwal and Bhosale, Shruti and others},
  journal={arXiv preprint arXiv:2307.09288},
  year={2023}
}

@article{team2024gemini,
  title={{Gemini 1.5}: Unlocking multimodal understanding across millions of tokens of context},
  author={Georgiev, Petko and Lei, Ving Ian and Burnell, Ryan and Bai, Libin and Gulati, Anmol and Tanzer, Garrett and Vincent, Damien and Pan, Zhufeng and Wang, Shibo and others},
  journal={arXiv preprint arXiv:2403.05530},
  year={2024}
}

@article{zhang2022identifiability,
  title={On the identifiability of mixtures of ranking models},
  author={Zhang, Xiaomin and Zhang, Xucheng and Loh, Po-Ling and Liang, Yingyu},
  journal={arXiv preprint arXiv:2201.13132},
  year={2022}
}

@inproceedings{wu2015clustering,
  title={Clustering and inference from pairwise comparisons},
  author={Wu, Rui and Xu, Jiaming and Srikant, Rayadurgam and Massouli{\'e}, Laurent and Lelarge, Marc and Hajek, Bruce},
  booktitle={Proceedings of the 2015 ACM SIGMETRICS International Conference on Measurement and Modeling of Computer Systems},
  pages={449--450},
  year={2015}
}

@article{oh2014learning,
  title={Learning mixed multinomial logit model from ordinal data},
  author={Oh, Sewoong and Shah, Devavrat},
  journal={Advances in Neural Information Processing Systems},
  volume={27},
  year={2014}
}

@inproceedings{chierichetti2018learning,
  title={Learning a mixture of two multinomial logits},
  author={Chierichetti, Flavio and Kumar, Ravi and Tomkins, Andrew},
  booktitle={International Conference on Machine Learning},
  pages={961--969},
  year={2018},
  organization={PMLR}
}

@article{liu2024skywork,
  title={{Skywork-reward}: Bag of tricks for reward modeling in {LLMs}},
  author={Liu, Chris Yuhao and Zeng, Liang and Liu, Jiacai and Yan, Rui and He, Jujie and Wang, Chaojie and Yan, Shuicheng and Liu, Yang and Zhou, Yahui},
  journal={arXiv preprint arXiv:2410.18451},
  year={2024}
}

@article{cui2023ultrafeedback,
  title={Ultrafeedback: Boosting language models with scaled {AI} feedback},
  author={Cui, Ganqu and Yuan, Lifan and Ding, Ning and Yao, Guanming and He, Bingxiang and Zhu, Wei and Ni, Yuan and Xie, Guotong and Xie, Ruobing and Lin, Yankai and others},
  journal={arXiv preprint arXiv:2310.01377},
  year={2023}
}

@inproceedings{lambert2025rewardbench,
  title={{RewardBench}: Evaluating reward models for language modeling},
  author={Lambert, Nathan and Pyatkin, Valentina and Morrison, Jacob and Miranda, Lester James Validad and Lin, Bill Yuchen and Chandu, Khyathi and Dziri, Nouha and Kumar, Sachin and Zick, Tom and Choi, Yejin and others},
  booktitle={Findings of the Association for Computational Linguistics: NAACL 2025},
  pages={1755--1797},
  year={2025}
}

@article{beck2003mirror,
  title={Mirror descent and nonlinear projected subgradient methods for convex optimization},
  author={Beck, Amir and Teboulle, Marc},
  journal={Operations Research Letters},
  volume={31},
  number={3},
  pages={167--175},
  year={2003},
  publisher={Elsevier}
}

@inproceedings{xiong2024iterative,
  title={Iterative preference learning from human feedback: bridging theory and practice for {RLHF} under {KL}-constraint},
  author={Xiong, Wei and Dong, Hanze and Ye, Chenlu and Wang, Ziqi and Zhong, Han and Ji, Heng and Jiang, Nan and Zhang, Tong},
  booktitle={Proceedings of the 41st International Conference on Machine Learning},
  pages={54715--54754},
  year={2024}
}

@article{guo2024direct,
  title={Direct language model alignment from online {AI} feedback},
  author={Guo, Shangmin and Zhang, Biao and Liu, Tianlin and Liu, Tianqi and Khalman, Misha and Llinares, Felipe and Rame, Alexandre and Mesnard, Thomas and Zhao, Yao and Piot, Bilal and others},
  journal={arXiv preprint arXiv:2402.04792},
  year={2024}
}

@inproceedings{xie2025exploratory,
  title={Exploratory preference optimization: Harnessing implicit {Q*}-approximation for sample-efficient {RLHF}},
  author={Xie, Tengyang and Foster, Dylan J and Krishnamurthy, Akshay and Rosset, Corby and Awadallah, Ahmed Hassan and Rakhlin, Alexander},
  booktitle={The Thirteenth International Conference on Learning Representations},
  year={2025},
}

@article{zhang2024self,
  title={Self-exploring language models: Active preference elicitation for online alignment},
  author={Zhang, Shenao and Yu, Donghan and Sharma, Hiteshi and Zhong, Han and Liu, Zhihan and Yang, Ziyi and Wang, Shuohang and Hassan, Hany and Wang, Zhaoran},
  journal={arXiv preprint arXiv:2405.19332},
  year={2024}
}

@article{olmo2025olmo,
  title={Olmo 3},
  author={Ettinger, Allyson and Bertsch, Amanda and Kuehl, Bailey and Graham, David and Heineman, David and Groeneveld, Dirk and Brahman, Faeze and Timbers, Finbarr and Ivison, Hamish and others},
  journal={arXiv preprint arXiv:2512.13961},
  year={2025}
}

@inproceedings{olmo20242,
  title={2 OLMo 2 Furious},
  author={Walsh, Evan Pete and Soldaini, Luca and Groeneveld, Dirk and Lo, Kyle and Arora, Shane and Bhagia, Akshita and Gu, Yuling and Huang, Shengyi and Jordan, Matt and Lambert, Nathan and others},
  booktitle={Second Conference on Language Modeling},
  year={2025}
}

@inproceedings{zhangcultivating2026,
  title={Cultivating pluralism in algorithmic monoculture: The community alignment dataset},
  author={Zhang, Lily H and Milli, Smitha and Jusko, Karen Long and Smith, Jonathan and Amos, Brandon and Bouaziz, Wassim and Kussman, Jack and Revel, Manon and Titus, Lisa and Radharapu, Bhaktipriya and others},
  booktitle={The Fourteenth International Conference on Learning Representations},
  year={2026}
}
\bibliographystyle{icml2026}

\newpage
\appendix
\onecolumn
\section{Proof of Upper Bounds}\label{subsubsection:RLHF-proof}
In this section, we prove  Theorem~\ref{theorem:social-choice-main}~and~Theorem~\ref{theorem:upper-linear-main}. 
Throughout this section, we use reward clipping~\eqref{eq:setting-reward-clipping} in the definition of the AI alignment setting, and denote the unclipped rewards by $\bar{r}$.
Note that, as $n\to\infty$, \eqref{eq:Setting-LossFunc} reduces to
\begin{align}
    \bar{r} &= \arg\max_{\tilde{r} \in \mathbb{R}^m} \mathbb{E}_{u\sim\mathcal{D},x,y\sim \mu}\Big[\mathbbm{1}[x \succ y]\log (\sigma(\tilde{r}(x) - \tilde{r}(y))) + \mathbbm{1}[x \prec y]\log (\sigma(\tilde{r}(y) - \tilde{r}(x))) \Big]
    \\ &= \arg\max_{\tilde{r} \in \mathbb{R}^m} \mathbb{E}_{u\sim\mathcal{D},x,y\sim \mu}\big[\sigma (\beta(u(x)-u(y)))\log (\sigma(\tilde{r}(x) - \tilde{r}(y))) \big].
    \label{eq:Setting-LossFunc-pop}
\end{align}
Therefore, the unclipped reward $\bar{r}$ is formally defined by \eqref{eq:reward-zero} and \eqref{eq:Setting-LossFunc-pop}.

Both Theorem~\ref{theorem:social-choice-main} and Theorem~\ref{theorem:upper-linear-main} require a case analysis depending on the value of $\mathbb{P}_{u\sim \mathcal{D}, x\sim \mu}[\beta u(x) > c]$, and, only in the case of Theorem~\ref{theorem:upper-linear-main}, on $r_{\min}$.
Among these cases, the technically most involved ones arise when 
$\mathbb{P}_{u\sim \mathcal{D},\, x\sim \mu}[\beta u(x) > c] \le c^2$ in Theorem~\ref{theorem:social-choice-main}, 
and when $\mathbb{P}_{u\sim \mathcal{D},\, x\sim \mu}[\beta u(x) > c] \le c^2$ and $r_{\min}\le 0$ 
in Theorem~\ref{theorem:upper-linear-main}. 
We therefore state the theorems restricted to these cases as 
Theorems~\ref{theorem:social-choice-appendix} and~\ref{theorem:RLHF-upper-representative}, respectively, and
their proofs are found in Appendices~\ref{subsubsection:RLHF-proof-1}--\ref{subsubsection:RLHF-proof-5}.

Specifically, the proof of Theorem~\ref{theorem:social-choice-appendix} proceeds in the following steps, as sketched in Section~\ref{subsection:social-choice-proof}.
First, in Appendix~\ref{subsubsection:RLHF-proof-1}, we prove Lemma~\ref{lemma:u-by-uhat-main}, 
which introduces a hypothetical effective utility $\hat{u}(x)$, and gives the upper bound of the true utility $u(x)$ by $\hat{u}(x)$. 
Next, in Appendix~\ref{subsubsection:RLHF-proof-2}, we prove Lemma~\ref{lemma:uhat-by-r-main}, showing that the expectation of the effective utility, 
$\mathbb{E}_{u\sim \mathcal{D}}[\hat{u}(x)]$, lower bounds the clipped reward. 
We then prove Lemma~\ref{lemma:r-by-u-main} in Appendix~\ref{subsubsection:RLHF-proof-3}, which establishes that the clipped reward in turn lower bounds the average true utility $\mathbb{E}_{u\sim \mathcal{D}}[u(x)]$. 
Finally, these components are combined in Appendix~\ref{subsubsection:RLHF-proof-5} to obtain Theorem~\ref{theorem:social-choice-appendix}. 
The proof of Theorem~\ref{theorem:RLHF-upper-representative} mainly reduces to a generalization 
of Lemma~\ref{lemma:u-by-uhat-main} to Lemma~\ref{lemma:u-by-uhat-main-alignment}, 
which is discussed in Appendix~\ref{subsubsection:RLHF-proof-1}.

In the assumptions for Theorems~\ref{theorem:social-choice-appendix} and~\ref{theorem:RLHF-upper-representative}, the condition $\mathbb{P}_{u\sim \mathcal{D},\, x\sim \mu}[\beta u(x) > c] \le c^2$ informally ensures that not too many alternatives have large utility. 
Otherwise, the presence of many high-utility alternatives may lead to inaccurate reward estimation for alternatives with small average utility.
Also, if $r_{\min}>0$, the clipping may ignore signals from small utilities.
Therefore, when these assumptions do not hold, separate arguments are required. 
For Theorem~\ref{theorem:social-choice-main}, the case 
$\mathbb{P}_{u\sim \mathcal{D},\, x\sim \mu}[\beta u(x) > c] > c^2$ 
is handled in Theorem~\ref{lemma:RLHF-proof-4-noclip}. 
For Theorem~\ref{theorem:upper-linear-main}, 
Theorem~\ref{lemma:RLHF-proof-4} covers the case 
$\mathbb{P}_{u\sim \mathcal{D},\, x\sim \mu}[\beta u(x) > c] > c^2$, 
while Theorem~\ref{lemma:RLHF-proof-4-2} treats the case 
$\mathbb{P}_{u\sim \mathcal{D},\, x\sim \mu}[\beta u(x) > c] \le c^2$ with $r_{\min}>0$. 
The proofs of these results are given in Appendix~\ref{subsubsection:RLHF-proof-4}.

\subsection{Analysis of the Reduction Effect on $u$}\label{subsubsection:RLHF-proof-1}
We begin by comparing the maximum expected value of the true utility with that of the effective utility.
This quantifies how much the signal to identify the true utility is lost due to pairwise comparisons and the nonlinearity of the Bradley--Terry model.

The following lemma is a formal version of Lemma~\ref{lemma:u-by-uhat-main-alignment}. 
Lemma~\ref{lemma:u-by-uhat-main} can also be obtained by taking the limit $\tau\to\infty$.
\begin{lemma}\label{lemma:u-to-uhat}
    For $u\in \mathbb{R}^m$ drawn from $\mathcal{D}$, we define $\hat{u}\in \mathbb{R}^m$ by the following mapping, where $c>0$ is an arbitrary constant.
    \begin{align}
        \hat{u}(x) = 
        \begin{cases}
            0 & \textnormal{(if $\mathbb{P}_{y\sim \mu}[u(y)-u(x)\geq c \beta^{-1}]\geq \frac12$)}\\
            c \beta^{-1} & \textnormal{(else if $u(x)\geq  c\beta^{-1}$)}\\
            u(x) & \textnormal{(otherwise)}
        \end{cases}.
    \end{align}
    Assume $\pi_\mathrm{ref} \ll \mu$ and define $B = \max_{x\in A}\log\frac{\mu(x)}{\pi_{\mathrm{ref}}(x)}.$
    Then we have that
    \begin{align}
   & \max_{\pi\colon \mathrm{KL}(\pi\|\pi_\mathrm{ref})\leq \tau}\mathbb{E}_{u\sim \mathcal{D},x\sim \pi}[u(x)] 
   \\ &\leq c^{-1}\beta \big(40\min\big\{e^B \tau, B, B\tau^{-1}\big\} + 3\big) \max_{\pi\colon \mathrm{KL}(\pi\|\pi_\mathrm{ref})\leq \tau}
      \mathbb{E}_{\hat{u},x\sim \pi}[\hat{u}(x)]
        + 4\mathbb{E}_{u\sim \mathcal{D},x\sim \pi_{\mathrm{RLHF}}}[u(x)].
      \label{eq:u-to-uhat-main}
\end{align}
\end{lemma}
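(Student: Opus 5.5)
The plan is to split the true utility according to the three cases defining $\hat u$. Write $u(x)=I_1(u,x)u(x)+I_2(u,x)u(x)+I_3(u,x)u(x)$, where $I_1$ is the indicator of case (i), $I_2$ of case (ii), and $I_3$ of case (iii). On case (iii) we have $u=\hat u$. On case (ii) we have $u(x)\le 1= c^{-1}\beta\,\hat u(x)$. Hence for every $\pi$,
\begin{align}
\mathbb{E}_{u,x\sim\pi}[u(x)]\le c^{-1}\beta\,\mathbb{E}_{u,x\sim\pi}[\hat u(x)]+\mathbb{E}_{u,x\sim\pi}[I_1(u,x)u(x)].
\end{align}
This accounts for the "$+3$" part of the constant after maximizing over the KL ball. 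Everything therefore reduces to controlling $\max_{\pi}\mathbb{E}_{u,x\sim\pi}[I_1u]$.

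\textbf{Step 1: the data distribution $\mu$ absorbs the case-(i) loss.} Fix $u$ and let $E(u)$ be the event that some $x$ falls in case (i). Let $t$ be the upper $\mu$-median of $u$, i.e.\ the largest value with $\mathbb{P}_{y\sim\mu}[u(y)\ge t]\ge\frac12$.
\begin{itemize}
\item Under $E(u)$, the definition of case (i) gives $t\ge u(x)+c\beta^{-1}\ge c\beta^{-1}$.
\item Any $y$ with $u(y)\ge t$ is not in case (i), because $\mathbb{P}_{z\sim\mu}[u(z)\ge u(y)+c\beta^{-1}]\le\mathbb{P}_{z\sim\mu}[u(z)>t]<\frac12$.
\item Such $y$ therefore have $\hat u(y)=c\beta^{-1}$.
\end{itemize}
Consequently $\mathbb{E}_{y\sim\mu}[\hat u(y)]\ge \frac{c}{2\beta}\mathbbm{1}[E(u)]$. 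Since $I_1u\le\mathbbm{1}[E(u)]$ for every policy, this yields
\begin{align}
\max_\pi\mathbb{E}_{u,x\sim\pi}[I_1u]\le 2c^{-1}\beta\,\mathbb{E}_{u,y\sim\mu}[\hat u(y)],
\end{align}
which is \eqref{eq:expectadtionwithmu}.

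\textbf{Step 2: move from $\mu$ back into the KL ball, with three regimes.}
\begin{itemize}
\item \emph{The $B\tau^{-1}$ branch.} Take $\pi'=\lambda\mu+(1-\lambda)\pi_{\mathrm{ref}}$ with $\lambda=\min\{1,\tau/B\}$. Convexity of KL together with $\mathrm{KL}(\mu\|\pi_{\mathrm{ref}})\le B$ puts $\pi'$ in the ball (the interpolation lemma). Since $\pi'\ge\lambda\mu$, we get $\mathbb{E}_\mu[\hat u]\le\lambda^{-1}\max_{\pi}\mathbb{E}_\pi[\hat u]$, giving the factor $\max\{1,B/\tau\}$.
\item \emph{The $e^B\tau$ and $B$ branches (small $\tau$).} Here I would not bound the case-(i) mass of an arbitrary feasible $\pi$ through $\mu$ alone. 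Instead, split $A$ into a region where $\pi/\pi_{\mathrm{ref}}\le 2$ and a region where the ratio is large.
\begin{itemize}
\item By the linear Pinsker-type lemma (Lemma~\ref{lemma:Pinsker-to-Linear}), the $\pi$-mass of the large-ratio region is $O(\tau)$. More precisely, it is $O(\tau/\log(\text{ratio}))$, and ratios beyond $e^{B}$ cost about $B$ per unit mass. The excess mass on case-(i) pairs is then at most $O(\min\{\tau,\ \tau\cdot B/\log\})\cdot\mathbb{P}_u[E(u)]$.
\item By Step 1, $\mathbb{P}_u[E(u)]\le 2c^{-1}\beta\,\mathbb{E}_\mu[\hat u]\le 2c^{-1}\beta e^{B}\,\mathbb{E}_{\pi_{\mathrm{ref}}}[\hat u]$, using $\mu\le e^B\pi_{\mathrm{ref}}$. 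This produces the $e^B\tau$ factor.
\item The $B$ factor comes from combining this with the interpolation at level $\lambda\approx 1/B$ rather than $\tau/B$, once the $O(\tau)$ excess mass is accounted for.
\end{itemize}
\end{itemize}
Taking the minimum over the three bounds gives $40\min\{e^B\tau,B,B\tau^{-1}\}$.

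\textbf{Main obstacle.} The hardest step is the low-ratio region, where $\pi\le 2\pi_{\mathrm{ref}}$. There the case-(i) contribution is at most $2\mathbb{E}_{\pi_{\mathrm{ref}}}[u]$, and this must be charged to the additive term $4\mathbb{E}_{x\sim\pi_{\mathrm{RLHF}}}[u(x)]$. My first attempt is to show directly that $\mathbb{E}_{\pi_{\mathrm{RLHF}}}[u]\ge\frac12\mathbb{E}_{\pi_{\mathrm{ref}}}[u]$. The idea is that $\pi_{\mathrm{RLHF}}$ is an exponential tilt of $\pi_{\mathrm{ref}}$ by the clipped reward, whose range is only $r_{\max}-r_{\min}=2c$. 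Its density ratio to $\pi_{\mathrm{ref}}$ is therefore bounded below by roughly $e^{-2c\eta}$ for the tilting parameter $\eta$. If $\eta$ is large, the KL budget forces $\pi_{\mathrm{RLHF}}$ to retain a constant fraction of $\pi_{\mathrm{ref}}$'s mass anyway. If that fails, the fallback is to compare both quantities with $\mathbb{E}_{\pi_{\mathrm{ref}}}[\hat u]$ and absorb the difference into the first term.
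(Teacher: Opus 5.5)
Your Step~1 (the upper-median argument, which is more careful than the paper's own justification of $\mathbb{E}_{y\sim\mu}[\hat u(y)]\ge\frac{c}{2\beta}$ under $E(u)$) is sound. So is the $B\tau^{-1}$ branch via $\pi'=\lambda\mu+(1-\lambda)\pi_{\mathrm{ref}}$, and so is your treatment of $\{\pi>2\pi_{\mathrm{ref}}\}$ with the $A_+$ half of Lemma~\ref{lemma:Pinsker-to-Linear}. The gap is the step you flag as the main obstacle, and neither suggested resolution closes it.

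The tilt argument cannot deliver $\mathbb{E}_{\pi_{\mathrm{RLHF}}}[u]\ge\frac12\mathbb{E}_{\pi_{\mathrm{ref}}}[u]$, because the tilting parameter is uncontrolled. If the top-reward set $S$ has $\pi_{\mathrm{ref}}(S)\ge e^{-\tau}$, every maximizer is supported on $S$. Then $\pi_{\mathrm{RLHF}}/\pi_{\mathrm{ref}}=0$ on a set of $\pi_{\mathrm{ref}}$-mass up to $1-e^{-\tau}$. Retaining a constant fraction of $\pi_{\mathrm{ref}}$'s \emph{total} mass says nothing about that discarded sliver, which may be exactly where the case-(i) utility sits. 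The fallback of charging $2\mathbb{E}_{\pi_{\mathrm{ref}}}[I_1u]$ to $\hat u$ via Step~1 only yields the factor $\min\{e^B,B/\tau\}$. That is $\tau^{-1}$ worse than claimed, and it is exactly why the additive term is in the statement.

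The missing idea is that you only need to compare $\pi_{\mathrm{ref}}$ with $\pi_{\mathrm{RLHF}}$ up to $O(\tau)$ mass. That comparison is the \emph{other} half of the same lemma, applied to $\pi_{\mathrm{RLHF}}$ using nothing but $\mathrm{KL}(\pi_{\mathrm{RLHF}}\|\pi_{\mathrm{ref}})\le\tau$; no tilt structure or reward range is needed. For the utility maximizer $\pi^*$ in the ball, write
\[
\pi^*-4\pi_{\mathrm{RLHF}}=(\pi^*-2\pi_{\mathrm{ref}})+2(\pi_{\mathrm{ref}}-2\pi_{\mathrm{RLHF}}).
\]
The positive part of the first term lives on $\{\pi^*>2\pi_{\mathrm{ref}}\}$ and is controlled by \eqref{eq:Pinsker-to-Linear-4}. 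The positive part of the second lives on $\{\pi_{\mathrm{RLHF}}<\frac12\pi_{\mathrm{ref}}\}$, where $\pi_{\mathrm{ref}}-2\pi_{\mathrm{RLHF}}\le|\pi_{\mathrm{ref}}-\pi_{\mathrm{RLHF}}|$, and is controlled by \eqref{eq:Pinsker-to-Linear-1}. Using $I_1u\le\mathbbm{1}[E(u)]$, this gives for each fixed $u$
\[
\mathbb{E}_{x\sim\pi^*}[I_1(u,x)u(x)]-4\,\mathbb{E}_{x\sim\pi_{\mathrm{RLHF}}}[I_1(u,x)u(x)]\le 20\tau\,\mathbbm{1}[E(u)].
\]
Your Step~1 then turns $20\tau\,\mathbbm{1}[E(u)]$ into $40\tau c^{-1}\beta\lambda^{-1}\mathbb{E}_{\pi'}[\hat u]$. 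Take $\lambda=\tau/B$ for the $B$ branch. For the $e^B\tau$ branch, replace $\lambda$ by $e^{-B}$ using $\pi_{\mathrm{ref}}\ge e^{-B}\mu$.

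Note that the $B$ branch uses the interpolation at level $\tau/B$, not $1/B$ as you wrote. The mixture at level $1/B$ can have KL divergence up to $1>\tau$ and so leaves the ball. It is the factor $\tau$ from the Pinsker error that cancels $\lambda^{-1}=B/\tau$.
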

As explained in Section~\ref{subsection:ai-alignment-proof-overview}, the proof of this lemma uses the following two auxiliary lemmas. 
The proofs of these lemmas are deferred to Section~\ref{subsubsection:auxiliary}.

\begin{restatable}{lemma}{InterpolationMuAndBase}\label{lemma:interpolation-mu-and-base}
    Let $B, \tau > 0$ be constants. Let $\pi_1,\pi_2$ be probability measures with $\pi_2 \ll \pi_1$ and 
    $\log \frac{\mathrm{d}\pi_2}{\mathrm{d}\pi_1}\leq B$. 
    Then, the probability measure $\pi_\lambda$ defined by
    $\pi_\lambda \coloneqq (1-\lambda)\pi_1 + \lambda \pi_2$ with $\lambda \coloneqq \min\{1, B^{-1}\tau\}$
    satisfies $\mathrm{KL}(\pi_\lambda\|\pi_1)\leq \tau$.
\end{restatable}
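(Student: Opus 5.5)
The plan is to use convexity of the KL divergence in its first argument, together with an explicit bound on the density ratio of $\pi_\lambda$ with respect to $\pi_1$. Write $f = \frac{\mathrm{d}\pi_2}{\mathrm{d}\pi_1}$, so that $f \le e^B$ and $\int f\,\mathrm{d}\pi_1 = 1$. Then
\[
\frac{\mathrm{d}\pi_\lambda}{\mathrm{d}\pi_1} = 1 + \lambda(f-1),
\qquad
\mathrm{KL}(\pi_\lambda\|\pi_1) = \int \phi\bigl(1+\lambda(f-1)\bigr)\,\mathrm{d}\pi_1,
\]
where $\phi(t) = t\log t$. If $B^{-1}\tau \ge 1$, then $\lambda = 1$ and $\mathrm{KL}(\pi_2\|\pi_1) = \int f\log f\,\mathrm{d}\pi_2 \le B \le \tau$, which settles that case. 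The main case is therefore $\lambda = B^{-1}\tau < 1$.

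In that case I would use convexity of $\lambda \mapsto \mathrm{KL}(\pi_\lambda\|\pi_1)$. This function vanishes at $\lambda = 0$ and equals $\mathrm{KL}(\pi_2\|\pi_1)$ at $\lambda = 1$. Convexity gives
\[
\mathrm{KL}(\pi_\lambda\|\pi_1) \le (1-\lambda)\cdot 0 + \lambda\,\mathrm{KL}(\pi_2\|\pi_1).
\]
As in the first case, $\mathrm{KL}(\pi_2\|\pi_1) = \mathbb{E}_{\pi_2}[\log f] \le B$. Combining these,
\[
\mathrm{KL}(\pi_\lambda\|\pi_1) \le \lambda B = \tau.
\]
The convexity used here is the standard joint convexity of KL with the second argument held fixed. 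Equivalently, it is the pointwise convexity of $\phi$ applied to $1+\lambda(f-1) = (1-\lambda)\cdot 1 + \lambda f$, giving $\phi(1+\lambda(f-1)) \le \lambda\phi(f)$ because $\phi(1) = 0$.

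I expect essentially no obstacle. The only point needing care is measure-theoretic: we need $\pi_\lambda \ll \pi_1$ (which holds since $\pi_2 \ll \pi_1$) and integrability of $\phi(f)$, which follows from the bound $f \le e^B$. If a sharper bound were needed later, one could instead use $\log(1+\lambda(f-1)) \le \lambda(f-1)$ to obtain a chi-square type bound, but the statement only requires the linear-in-$\lambda$ bound above.
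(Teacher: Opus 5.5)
Your proposal is correct and is essentially the paper's proof: write $\frac{\mathrm{d}\pi_\lambda}{\mathrm{d}\pi_1} = (1-\lambda)\cdot 1 + \lambda f$, use convexity of $\phi(t)=t\log t$ with $\phi(1)=0$ to get $\mathrm{KL}(\pi_\lambda\|\pi_1)\le \lambda\,\mathrm{KL}(\pi_2\|\pi_1)$, and then use $\mathrm{KL}(\pi_2\|\pi_1)=\int \log f\,\mathrm{d}\pi_2\le B$ to conclude $\mathrm{KL}(\pi_\lambda\|\pi_1)\le \lambda B = \min\{B,\tau\}\le\tau$. Your separate treatment of $\lambda=1$ is unnecessary, since that case is the same computation. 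Also fix one small slip: $\mathrm{KL}(\pi_2\|\pi_1)=\int f\log f\,\mathrm{d}\pi_1$, not $\int f\log f\,\mathrm{d}\pi_2$.
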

\begin{restatable}{lemma}{PinskerToLinear}\label{lemma:Pinsker-to-Linear}
    Let $\pi$ and $\pi'$ be probability measures such that $\pi'\ll \pi$ and define the tail regions $A_- \coloneqq \{x\colon \frac{\mathrm{d}\pi'}{\mathrm{d}\pi}(x)\leq \frac12\}$ and $A_+ \coloneqq \{x\colon\frac{\mathrm{d}\pi'}{\mathrm{d}\pi}(x)\geq 2\}$.
    Then, the total variation restricted to $A_-$ is bounded by
    \begin{align}
        \frac12\int_{A_-}|\mathrm{d}\pi' - \mathrm{d}\pi| \leq (\log(e/2))^{-1} \mathrm{KL}(\pi'\|\pi).
        \label{eq:Pinsker-to-Linear-1}
    \end{align}
    Also, the total variation restricted to $A_+$ is bounded by
    \begin{align}
        \frac12\int_{A_+}|\mathrm{d}\pi' - \mathrm{d}\pi| \leq (\log(4/e))^{-1} \mathrm{KL}(\pi'\|\pi).
        \label{eq:Pinsker-to-Linear-4}
    \end{align}
\end{restatable}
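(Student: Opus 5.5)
The plan is to write both restricted total variations and the KL divergence as integrals against $\pi$ of functions of the density ratio $f \coloneqq \frac{\mathrm{d}\pi'}{\mathrm{d}\pi}$. We then compare the two integrands pointwise on $A_-$ and on $A_+$, using convexity.

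\textbf{Step 1 (rewrite KL with a nonnegative integrand).} Since $\pi'\ll\pi$, we have $\mathrm{KL}(\pi'\|\pi)=\int f\log f\,\mathrm{d}\pi$ and $\int f\,\mathrm{d}\pi=1$. Subtracting $\int (f-1)\,\mathrm{d}\pi = 0$ gives
\begin{align}
\mathrm{KL}(\pi'\|\pi)=\int \phi(f)\,\mathrm{d}\pi,\qquad \phi(t)\coloneqq t\log t - t + 1 .
\end{align}
The function $\phi$ is convex on $[0,\infty)$ (with $0\log 0=0$), satisfies $\phi(1)=\phi'(1)=0$, and hence $\phi\ge 0$. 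This nonnegativity lets us discard the contribution of $A\setminus(A_-\cup A_+)$ and keep only $\mathrm{KL}(\pi'\|\pi)\ge\int_{A_\pm}\phi(f)\,\mathrm{d}\pi$.

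\textbf{Step 2 (pointwise secant bounds).} On each tail region the total variation equals $\frac12\int_{A_-}(1-f)\,\mathrm{d}\pi$, respectively $\frac12\int_{A_+}(f-1)\,\mathrm{d}\pi$. By convexity of $\phi$ and $\phi(1)=0$, the secant slope $t\mapsto \frac{\phi(t)-\phi(1)}{t-1}$ is nondecreasing in $t$.

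For $t\le \frac12$ this means $\frac{\phi(t)}{1-t}\ge\frac{\phi(1/2)}{1/2}$. Since $\phi(\frac12)=\frac12(1-\log 2)=\frac12\log(e/2)$, we get $\phi(t)\ge \log(e/2)(1-t)$ on $A_-$.

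For $t\ge 2$ it means $\frac{\phi(t)}{t-1}\ge \phi(2)=2\log 2-1=\log(4/e)$, so $\phi(t)\ge\log(4/e)(t-1)$ on $A_+$.

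\textbf{Step 3 (integrate).} Integrating these pointwise inequalities over $A_-$ and $A_+$ against $\pi$ and combining with Step 1 gives
\begin{align}
\frac12\int_{A_-}|\mathrm{d}\pi'-\mathrm{d}\pi|\le \frac{1}{2\log(e/2)}\mathrm{KL}(\pi'\|\pi),\qquad \frac12\int_{A_+}|\mathrm{d}\pi'-\mathrm{d}\pi|\le \frac{1}{2\log(4/e)}\mathrm{KL}(\pi'\|\pi).
\end{align}
These are stronger than \eqref{eq:Pinsker-to-Linear-1} and \eqref{eq:Pinsker-to-Linear-4} by a factor of $2$.

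The only step requiring care is Step 1: switching from $f\log f$, which can be negative on $A_-$, to $\phi(f)\ge0$ via the normalization $\int f\,\mathrm{d}\pi=1$. Without this, one cannot drop the middle region. The rest is the elementary monotone-secant property of a convex function.
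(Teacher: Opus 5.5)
Your proof is correct and follows the same route as the paper: rewrite $\mathrm{KL}(\pi'\|\pi)=\int\phi(f)\,\mathrm{d}\pi$ with $\phi(t)=t\log t-t+1\ge 0$, discard the middle region, and compare integrands pointwise on $A_\pm$. The only difference is the pointwise bound. The paper uses $\phi(f)\ge\phi(\tfrac12)$ together with $1-f\le 1$ on $A_-$, and $\phi(f)\ge\tfrac{\phi(2)}{2}f\ge\tfrac{\phi(2)}{2}(f-1)$ on $A_+$, whereas your secant bounds through $(1,0)$ are tight at the thresholds $\tfrac12$ and $2$. That is why you gain the factor of $2$, and your bounds still imply the stated inequalities.
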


\begin{proof}[Proof of Lemma~\ref{lemma:u-to-uhat}]
{\bf (i) When $\tau\geq 1$.}\quad
    Let $I_1(u,x)$ be the indicator for the event $\mathbb{P}_{y \sim \mu}\big[u(y) - u(x) \geq c \beta^{-1}\big] \geq \frac{1}{2}$.
    For a fixed $u$, the set of $x$ with $I_1(u,x)=1$ has $\mu$-measure at most $1/2$, and for such $x$ we have $u(x) < u(y)$ for every $y$ with $I_1(u,y)=0$.
    This implies that
    \begin{align}
        &u(x)I_1(u,x) \leq
        2\mathbb{E}_{y \sim \mu}[u(y)(1-I_1(u,y))]
        \\ & \Rightarrow
        \max_{\pi\colon \mathrm{KL}(\pi\|\pi_\mathrm{ref})\leq \tau}\mathbb{E}_{u\sim \mathcal{D},x\sim \pi}[u(x)I_1(u,x)]
        \leq 2\mathbb{E}_{u\sim \mathcal{D},y \sim \mu}[u(y)(1-I_1(u,y))]
        .\label{eq:u-to-uhat-14}
    \end{align}
    When $I_1(u,x)=0$, we have $u(x)\leq c^{-1}\beta \hat{u}(x)$ by the definition of $\hat u$.
    Thus, we can further bound \eqref{eq:u-to-uhat-14} by 
    \begin{align}
        \mathbb{E}_{u\sim \mathcal{D},y \sim \mu}[u(y)(1-I_1(u,y))] \leq c^{-1}\beta \mathbb{E}_{\hat{u},y \sim \mu}[\hat{u}(y)(1-I_1(u,y))] \leq  c^{-1}\beta  \mathbb{E}_{\hat{u},y \sim \mu}[\hat{u}(y)].
        \label{eq:u-to-uhat-15}
    \end{align}
    Now define the mixture $\pi'\coloneqq(1-\lambda)\pi_{\mathrm{ref}}+\lambda \mu$ with $\lambda \coloneqq \min\{B^{-1}\tau,1\}$.
    Since $\mathrm{d}\pi' \geq \lambda \mathrm{d}\mu$, 
    \begin{align}
        \mathbb{E}_{\hat{u},y \sim \mu}[\hat{u}(y)] \leq
        \lambda^{-1}\mathbb{E}_{\hat{u},y \sim \pi'}[\hat{u}(y)].
        \label{eq:u-to-uhat-5}
    \end{align}
    By combining \eqref{eq:u-to-uhat-14}, \eqref{eq:u-to-uhat-15}, and \eqref{eq:u-to-uhat-5}, we have that
    \begin{align}
        \max_{\pi\colon \mathrm{KL}(\pi\|\pi_\mathrm{ref})\leq \tau}\mathbb{E}_{u\sim \mathcal{D},x\sim \pi}[u(x)I_1(u,x)]
        & \leq 2c^{-1}\beta \lambda^{-1} \mathbb{E}_{\hat{u},y \sim \pi'}[\hat{u}(y)]
        \\ & \leq 2c^{-1}\beta \lambda^{-1} \max_{\pi\colon \mathrm{KL}(\pi\|\pi_\mathrm{ref})
        \leq \tau}\mathbb{E}_{\hat{u}, x \sim \pi}[\hat{u}(x)],\label{eq:u-to-uhat-2}
    \end{align}
    In the final inequality, we used the fact that $\mathrm{KL}(\pi'\|\pi_{\mathrm{ref}})\leq \tau$ from Lemma~\ref{lemma:interpolation-mu-and-base} (with $\pi_1=\pi_{\mathrm{ref}}$, $\pi_2=\mu$).
    
    For the case where $I_1(u,x)=0$, we have $u(x)\leq c^{-1}\beta \hat{u}(x)$, thus
    \begin{align}
         \max_{\pi\colon \mathrm{KL}(\pi\|\pi_\mathrm{ref})\leq \tau}\mathbb{E}_{u\sim \mathcal{D},x\sim \pi}[u(x)(1-I_1(u,x))] & \leq c^{-1}\beta \max_{\pi\colon \mathrm{KL}(\pi\|\pi_\mathrm{ref})\leq \tau}\mathbb{E}_{\hat{u},x\sim \pi}[\hat{u}(x)(1-I_1(u,x))] 
         \\ &\leq c^{-1}\beta\max_{\pi\colon \mathrm{KL}(\pi\|\pi_\mathrm{ref})
        \leq \tau}\mathbb{E}_{\hat{u}, x \sim \pi}[\hat{u}(x)].\label{eq:u-to-uhat-3}
    \end{align}

    Combining \eqref{eq:u-to-uhat-2} and \eqref{eq:u-to-uhat-3} and using $\lambda = \min\{B^{-1}\tau, 1\}$, we obtain 
    \begin{align}
    \max_{\pi\colon \mathrm{KL}(\pi\|\pi_\mathrm{ref})\leq \tau}\mathbb{E}_{u\sim \mathcal{D},x\sim \pi}[u(x)]
    \leq c^{-1}\beta(2B\tau^{-1} + 3)\max_{\pi\colon \mathrm{KL}(\pi\|\pi_\mathrm{ref})
        \leq \tau}\mathbb{E}_{\hat{u}, x \sim \pi}[\hat{u}(x)].
    \end{align}

\noindent {\bf (ii) When $Be^{-B}\leq \tau<1$.}\quad
Let $\pi^*$ be a maximizer of $\mathbb{E}_{u\sim \mathcal{D},x\sim \pi}[u(x)]$, and let 
$\pi_{\mathrm{RLHF}}$ be a maximizer of $\mathbb{E}_{u\sim \mathcal{D},x\sim \pi}[r(x)]$, 
both within the KL ball $\mathrm{KL}(\pi\|\pi_{\mathrm{ref}})\le \tau$.
    For a fixed $u$, 
    \begin{align}
        & \mathbb{E}_{x\sim \pi^*}[u(x)I_1(u,x)] - 4\mathbb{E}_{x\sim \pi_{\mathrm{RLHF}}}[u(x)I_1(u,x)]
        \\ &= \int u(x)I_1(u,x) (\mathrm{d}\pi^* - 4\mathrm{d}\pi_{\mathrm{RLHF}})
        \\ &= \int u(x)I_1(u,x)(\mathrm{d}\pi^* - 2\mathrm{d}\pi_{\mathrm{ref}})+2\int u(x)I_1(u,x)(\mathrm{d}\pi_{\mathrm{ref}}-2\mathrm{d}\pi_{\mathrm{RLHF}})
        \\ &\leq \int u(x)I_1(u,x)\mathbbm{1}\Big[\mathrm{d}\pi^*> 2\mathrm{d}\pi_{\mathrm{ref}}\Big](\mathrm{d}\pi^* - 2\mathrm{d}\pi_{\mathrm{ref}})
        \\ & \quad +2\int u(x)I_1(u,x)\mathbbm{1}\bigg[\mathrm{d}\pi_{\mathrm{RLHF}}< \frac12 \mathrm{d}\pi_{\mathrm{ref}}\bigg](\mathrm{d}\pi_{\mathrm{ref}}-2\mathrm{d}\pi_{\mathrm{RLHF}})
        \\ & \leq \int \mathbbm{1}\Big[\mathrm{d}\pi^*> 2\mathrm{d}\pi_{\mathrm{ref}}\Big]|\mathrm{d}\pi^* - \mathrm{d}\pi_{\mathrm{ref}}|
        +2\int \mathbbm{1}\bigg[\mathrm{d}\pi_{\mathrm{RLHF}}< \frac12 \mathrm{d}\pi_{\mathrm{ref}}\bigg]|\mathrm{d}\pi_{\mathrm{ref}}-\mathrm{d}\pi_{\mathrm{RLHF}}|
        \label{u-to-uhat-16}
        \\ & \leq 2(\log(4/e))^{-1}  \mathrm{KL}(\pi^*\|\pi_\mathrm{ref})+4(\log(e/2))^{-1}\mathrm{KL}(\pi_\mathrm{RLHF}\|\pi_\mathrm{ref})
        \label{u-to-uhat-17}
        \\ & \leq \Big(2(\log(4/e))^{-1} + 4(\log(e/2))^{-1}\Big)\tau 
        \label{u-to-uhat-18}
         \\ & \leq 20 \tau,
        \label{eq:u-to-uhat-6}
    \end{align}
    where we applied Lemma~\ref{lemma:Pinsker-to-Linear} in passing from 
\eqref{u-to-uhat-16} to \eqref{u-to-uhat-17} 
(to the first integral on $A_+ = \{\mathrm{d}\pi^* > 2\mathrm{d}\pi_{\mathrm{ref}}\}$ 
and to the second integral on $A_- = \{\mathrm{d}\pi_{\mathrm{RLHF}} < \tfrac12 \mathrm{d}\pi_{\mathrm{ref}}\}$), 
and in passing from \eqref{u-to-uhat-17} to \eqref{u-to-uhat-18} we used that 
$\mathrm{KL}(\pi^*\|\pi_{\mathrm{ref}}), \mathrm{KL}(\pi_{\mathrm{RLHF}}\|\pi_{\mathrm{ref}}) \le \tau$.

    Suppose that there exists some $x$ such that $\mathbb{P}_{y \sim \mu}\big[u(y) - u(x) \geq c \beta^{-1}\big] \geq \frac{1}{2}$.
    Then $u(y) \geq u(x) + c\beta^{-1}$ holds on a $\mu$-measure of at least $\frac12$.
    For such $y$, we have $\mathbb{P}_{z\sim \mu}\big[u(z)-u(y)\ge c\beta^{-1}\big] < \frac12$, 
    which implies that $y$ is not reduced to $0$ by condition (i), but instead to $c\beta^{-1}$ by condition (ii).
    Therefore,
    \begin{align}
   \mathbb{E}_{y\sim \mu}[\hat{u}(y)] \geq  \frac12 c \beta^{-1}.
   \label{eq:u-to-uhat-13}
    \end{align}
    Consider $\pi'=(1-\lambda)\pi_{\mathrm{ref}}+\lambda \mu$ with $\lambda = B^{-1}\tau\leq 1$. Then,  $\mathrm{d}\pi'\geq \lambda \mathrm{d}\mu$ holds, so \eqref{eq:u-to-uhat-13} implies that 
    \begin{align}
        \mathbb{E}_{y\sim \pi'}[\hat{u}(y)]
        \geq \lambda\mathbb{E}_{y\sim \mu}[\hat{u}(y)] \geq \frac12 \lambda c \beta^{-1}.
        \label{eq:u-to-uhat-8}
    \end{align}
    
    By combining \eqref{eq:u-to-uhat-6} and \eqref{eq:u-to-uhat-8}, we have that
    \begin{align}
       \mathbb{E}_{x\sim \pi^*}[u(x)I_1(u,x)] - 4\mathbb{E}_{x\sim \pi_{\mathrm{RLHF}}}[u(x)I_1(u,x)]
      \leq 40 \lambda^{-1}c^{-1}\beta \tau 
      \mathbb{E}_{x\sim \pi'}[\hat{u}(x)]
       \label{eq:u-to-uhat-9}.
    \end{align}
    Note that, if no such $x$ exists, the LHS of \eqref{eq:u-to-uhat-9} is $0$ and the bound holds trivially.
    
    By Lemma~\ref{lemma:interpolation-mu-and-base},  $\pi'$ lies in the KL ball $\mathrm{KL}(\pi\|\pi_\mathrm{ref})\leq \tau$. 
    Taking the expectation of $u\sim\mathcal{D}$ in \eqref{eq:u-to-uhat-9}, we have that
    \begin{align}
      & \mathbb{E}_{u\sim \mathcal{D},x\sim \pi^*}[u(x)I_1(u,x)] - 4\mathbb{E}_{u\sim \mathcal{D},x\sim \pi_{\mathrm{RLHF}}}[u(x)I_1(u,x)]
      \\ & \leq 40 \lambda^{-1}c^{-1}\beta \tau 
      \mathbb{E}_{\hat{u},x\sim \pi'}[\hat{u}(x)]
     \\& \leq 40 \lambda^{-1}c^{-1}\beta \tau \max_{\pi\colon \mathrm{KL}(\pi\|\pi_\mathrm{ref})\leq \tau}
      \mathbb{E}_{\hat{u},x\sim \pi}[\hat{u}(x)].
      \label{eq:u-to-uhat-10}
    \end{align}

    For the case where $I_1(u,x) = 0$, we again use $u(x)\leq c^{-1}\beta \hat{u}(x)$ to obtain
    \begin{align}
         \mathbb{E}_{u\sim \mathcal{D},x\sim \pi^*}[u(x)(1-I_1(u,x))] \leq c^{-1}\beta\max_{\pi\colon \mathrm{KL}(\pi\|\pi_\mathrm{ref})
        \leq \tau}\mathbb{E}_{u\sim \mathcal{D}, x \sim \pi}[\hat{u}(x)].\label{eq:u-to-uhat-11}
    \end{align}

    Summing \eqref{eq:u-to-uhat-10} and \eqref{eq:u-to-uhat-11} and substituting $\lambda = B^{-1} \tau$, 
    \begin{align}
       & \max_{\pi\colon \mathrm{KL}(\pi\|\pi_\mathrm{ref})\leq \tau}\mathbb{E}_{u\sim \mathcal{D},x\sim \pi}[u(x)] - 4\mathbb{E}_{u\sim \mathcal{D},x\sim \pi_{\mathrm{RLHF}}}[u(x)]
      \\ & \leq 
        c^{-1}\beta \big(40\lambda^{-1} \tau + 1\big) \max_{\pi\colon \mathrm{KL}(\pi\|\pi_\mathrm{ref})\leq \tau}
      \mathbb{E}_{\hat{u},x\sim \pi}[\hat{u}(x)]
       \label{eq:u-to-uhat-12}
      \\ & =
        c^{-1}\beta \big(40 B + 1\big) \max_{\pi\colon \mathrm{KL}(\pi\|\pi_\mathrm{ref})\leq \tau}
      \mathbb{E}_{\hat{u},x\sim \pi}[\hat{u}(x)].
    \end{align}

\noindent {\bf (iii) When $\tau<Be^{-B}$.}\quad
For the same mixture $\pi'=(1-\lambda)\pi_{\mathrm{ref}}+\lambda \mu$ as in (ii), we have $\mathrm{d}\pi'\geq e^{-B}\mathrm{d}\mu$.
Therefore, 
\begin{align}
        \mathbb{E}_{y\sim \pi'}[\hat{u}(y)]
        \geq e^{-B}\mathbb{E}_{y\sim \pi_{\mathrm{ref}}}[\hat{u}(y)] \geq \frac12 e^{-B} c \beta^{-1},
\end{align}
and 
$\lambda$ in \eqref{eq:u-to-uhat-8} can be replaced by $e^{-B}$.
The rest of the proof is identical to (ii) until \eqref{eq:u-to-uhat-12}, and we have that
\begin{align}
   & \max_{\pi\colon \mathrm{KL}(\pi\|\pi_\mathrm{ref})\leq \tau}\mathbb{E}_{u\sim \mathcal{D},x\sim \pi}[u(x)] - 4\mathbb{E}_{u\sim \mathcal{D},x\sim \pi_{\mathrm{RLHF}}}[u(x)]
     \\ &\leq \eqref{eq:u-to-uhat-12}\leq c^{-1}\beta \big(40 e^B \tau + 1\big) \max_{\pi\colon \mathrm{KL}(\pi\|\pi_\mathrm{ref})\leq \tau}
      \mathbb{E}_{\hat{u},x\sim \pi}[\hat{u}(x)].
\end{align}

Combining (i), (ii), and (iii), we obtain \eqref{eq:u-to-uhat-main}.
\end{proof}

\subsection{Upper Bounding the Effective Utility by the Reward}\label{subsubsection:RLHF-proof-2}
Next, we show that the upper-clipped reward is lower bounded by the expectation of the effective utility.
\begin{lemma}[Lemma~\ref{lemma:uhat-by-r-main}, restated]\label{lemma:utility_by_reward}
    Let $\bar{r}(x)$ be the maximizer of the population log-likelihood \eqref{eq:Setting-LossFunc-pop} satisfying the constraint \eqref{eq:setting-reward-clipping}, and assume that $c\leq 1$. 

    Then, for any $x\in A$, the expectation of the effective utility can lower bound the upper-clipped reward as
    \begin{align}
        \mathbb{E}_{u\sim \mathcal{D}}[\hat{u}(x)]\leq \big(2\beta \sigma'(2c)\big)^{-1}\min\{\bar{r}(x),r_{\max}\}.
    \end{align}
\end{lemma}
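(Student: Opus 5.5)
The plan is to read off a lower bound on $\bar r(x)$ from the first-order optimality condition of the population log-likelihood \eqref{eq:Setting-LossFunc-pop}, and then compare it with the normalization \eqref{eq:reward-zero}. Differentiating in $\tilde r(x)$ at the maximizer gives, for every $x$,
\begin{align}
\mathbb{E}_{u\sim\mathcal{D},y\sim\mu}\big[\sigma(\beta(u(x)-u(y)))\big]=\mathbb{E}_{y\sim\mu}\big[\sigma(\bar r(x)-\bar r(y))\big],
\end{align}
which is the Borda-score identity underlying Lemma~\ref{lemma:Borda_order_match}. Define $F(t):=\mathbb{E}_{y\sim\mu}[\sigma(t-\bar r(y))]$. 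This function is strictly increasing in $t$. By \eqref{eq:reward-zero} and the symmetry $\sigma(-s)=1-\sigma(s)$, the virtual zero alternative satisfies $F(0)=\mathbb{E}_{u,y\sim\mu}[\sigma(-\beta u(y))]$. Hence
\begin{align}
F(\bar r(x))-F(0)=\mathbb{E}_{u,y\sim\mu}\big[\sigma(\beta(u(x)-u(y)))-\sigma(-\beta u(y))\big].
\end{align}
Since $u(x)\ge 0$, every summand on the right is nonnegative.

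The first step bounds the right-hand side from below by a multiple of $\beta\,\mathbb{E}_u[\hat u(x)]$, working pointwise in $u$. If $\hat u(x)=0$ there is nothing to show. Otherwise $I_1(u,x)=0$, so $\mathbb{P}_{y\sim\mu}[u(y)-u(x)<c\beta^{-1}]>\tfrac12$. On that event we also have $u(x)\ge \hat u(x)$, and $\hat u(x)\le c\beta^{-1}$. By monotonicity of $\sigma$ we can therefore replace $u(x)$ by $\hat u(x)$ and lower bound the increment $\sigma(\beta(\hat u(x)-u(y)))-\sigma(-\beta u(y))$ by the mean value theorem. The relevant arguments lie in $[-c-\beta u(y),\,c-\beta u(y)]$. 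I would like the derivative $\sigma'$ to be at least $\sigma'(2c)$ there, which requires $\beta u(y)\le c$. This is exactly where I expect the main obstacle: $I_1=0$ only gives $u(y)<u(x)+c\beta^{-1}$, not that $\beta u(y)$ is small.

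My first attempt at the obstacle is to use the clipping of $\hat u$ more carefully. When $u(x)\ge c\beta^{-1}$ we have $\hat u(x)=c\beta^{-1}$, and we may lower $u(x)$ to $c\beta^{-1}$ only on the positive part of the increment. Then on the event $\{y: u(y)<u(x)+c\beta^{-1}\}$ the gap $\beta(u(x)-u(y))$ is at least $-c$ and the relevant interval is short. To obtain a derivative bound $\sigma'(2c)$, I would instead compare the two sigmoids through the intermediate point $\sigma(\beta(u(x)-u(y)))\ge\sigma(\beta(\hat u(x)-u(y)))$. I would then use concavity or convexity of $\sigma$ on each side of $0$, restricted to those $y$ with $|\beta(\hat u(x)-u(y))|\le 2c$, which has $\mu$-probability bounded below by the $I_1=0$ condition. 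The factor $\tfrac12$ from that probability, combined with $\sigma'(2c)$, should produce the constant in the statement, giving
\begin{align}
F(\bar r(x))-F(0)\ge 2\beta\sigma'(2c)\,\mathbb{E}_u[\hat u(x)]\cdot(\text{const}).
\end{align}
If that route does not close, the fallback is to absorb the bad $y$'s with large $u(y)$ into the clipping threshold $r_{\max}$, since $\bar r$ is controlled near $r_{\min}$.

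The second step converts this into a bound on $\bar r(x)$. The slope of $F$ is $F'(t)=\mathbb{E}_{y\sim\mu}[\sigma'(t-\bar r(y))]\le \tfrac14$, which gives $F(\bar r(x))-F(0)\le \tfrac14\bar r(x)$ whenever $\bar r(x)\ge0$. For the upper clipping, if $\bar r(x)>r_{\max}$ I would show that $\mathbb{E}_u[\hat u(x)]\le c\beta^{-1}$ is already dominated by $(2\beta\sigma'(2c))^{-1}r_{\max}$. This follows from a lower bound $r_{\max}=r_{\min}+2c\gtrsim c$, obtained from \eqref{eq:definition-clip} and the relation $F(r_{\min})=\tfrac12-c^3/16$. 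Combining the two cases yields $\mathbb{E}_u[\hat u(x)]\le(2\beta\sigma'(2c))^{-1}\min\{\bar r(x),r_{\max}\}$.
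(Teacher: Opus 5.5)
Your second step is fine and is what the paper does. It uses $F(\bar r(x))-F(0)\le\tfrac14\bar r(x)$ together with $\bar r\ge 0$. For the upper clip, you need $r_{\max}\ge c/2$; the paper gets $r_{\min}\ge -c$ from $\bar r\ge 0$ and the definition of $r_{\min}$, hence $r_{\max}\ge c$. Then $c\beta^{-1}\le 2\beta^{-1}c\le(2\beta\sigma'(2c))^{-1}c\le(2\beta\sigma'(2c))^{-1}r_{\max}$.

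\textbf{The gap is in the first step.} Replacing $u(x)$ by $\hat u(x)$ inside the sigmoid is a valid inequality, but it throws away exactly the signal you need. The repair you propose also rests on a false claim. $I_1(u,x)=0$ only gives $\mathbb{P}_{y\sim\mu}[u(y)<u(x)+c\beta^{-1}]>\tfrac12$. It says nothing about $\mathbb{P}_{y\sim\mu}[|\beta(\hat u(x)-u(y))|\le 2c]$.

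Concretely, take $u(x)=1$ and $u(y)=0.9$ for all $y\ne x$, with $\mu(x)$ tiny. Then $I_1=0$ and $\hat u(x)=c\beta^{-1}$. The set $\{y:|c-\beta u(y)|\le 2c\}$ has zero $\mu$-mass. Your reduced increment $\sigma(c-0.9\beta)-\sigma(-0.9\beta)$ is of order $c\,e^{-0.9\beta}$, exponentially smaller than the required $\tfrac12\sigma'(2c)c$.

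The fallback via $r_{\max}$ is not available either. This lemma carries no tail assumption such as $\mathbb{P}[\beta u>c]\le c^2$, so nothing controls $\bar r$ near $r_{\min}$. In any case the loss occurs in lower bounding the Borda increment, which involves utilities, not rewards.

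\textbf{The missing idea.} Keep $t=\beta u(x)$ unreduced, set $s=\beta u(y)$, and look at where the whole interval $[-s,t-s]$ sits.
\begin{itemize}
\item Its left endpoint is $-s\le 0$.
\item On the good event $u(y)-u(x)<c\beta^{-1}$, its right endpoint satisfies $t-s>-c$.
\item If $t\le c$, the whole interval lies in $[-2c,2c]$. This is where $\beta u(y)\le 2c$ comes for free.
\item If $t>c$, a short case check shows $[-s,t-s]\cap[-2c,2c]$ still has length at least $c$.
\end{itemize}
Since $\sigma$ is increasing and $\sigma'\ge\sigma'(2c)$ on $[-2c,2c]$, this gives $\sigma(t-s)-\sigma(-s)\ge\sigma'(2c)\min\{\beta u(x),c\}=\beta\sigma'(2c)\hat u(x)$ whenever $I_1(u,x)=0$. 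This is Lemma~\ref{lemma:LowerBoundDiffSigmoid} with $a=b=c$, which is what the paper uses.

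In the example above, $[-0.9\beta,0.1\beta]$ contains all of $[-2c,2c]$, so the increment is of order $c$ rather than $c\,e^{-0.9\beta}$. Averaging over the good $y$, which have $\mu$-mass greater than $\tfrac12$, and dropping the remaining nonnegative terms yields the lower bound $\tfrac12\beta\sigma'(2c)\mathbb{E}_u[\hat u(x)]$. Your second step then finishes the proof.
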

Lemma~\ref{lemma:uhat-by-r-main} is obtained by simply setting $R=r_{\max}$.
In the proof, we use the following auxiliary lemmas.
\begin{lemma}\label{lemma:lower-bound-rmin}
    Recall that $r_{\min}$ is defined as the solution to
    \begin{align}\label{eq:definition-clip-Appendix}
    \mathbb{E}_{y\sim \mu}\big[\sigma(r_{\min}-\bar{r}(y))\big] = \frac12 - \frac{c^3}{16},
    \end{align}
    and $r_{\max}=r_{\min}+2c$.
    If $c\leq 1$, then $r_{\min}$ and $r_{\max}$ satisfy
    $r_{\min} \geq -c$ and $r_{\max}\geq c$, respectively.
\end{lemma}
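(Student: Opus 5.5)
The plan is to show that the normalization \eqref{eq:reward-zero} forces $\bar r$ to have a substantial mass of values at or above roughly zero on the $\mu$-side. We then use the monotonicity of $t\mapsto \mathbb{E}_{y\sim\mu}[\sigma(t-\bar r(y))]$, which is strictly increasing and continuous, so $r_{\min}$ is well defined. Proving $r_{\min}\ge -c$ is equivalent to proving
\begin{align}
\mathbb{E}_{y\sim\mu}\big[\sigma(-c-\bar r(y))\big]\le \frac12-\frac{c^3}{16}.
\end{align}
The bound $r_{\max}=r_{\min}+2c\ge c$ then follows immediately.

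The first step is to rewrite the left-hand side using \eqref{eq:reward-zero}. That identity gives $\mathbb{E}_{y\sim\mu}[\sigma(-\bar r(y))]=\mathbb{E}_{u,y\sim\mu}[\sigma(-\beta u(y))]\le \frac12$, since $u\ge 0$. For the shift by $-c$, we use $\sigma(-c-s)\le \sigma(-s)-\kappa(s)$, where $\kappa(s)=\sigma(-s)-\sigma(-s-c)=\int_0^c\sigma'(s+t)\,dt$. It therefore suffices to show
\begin{align}
\mathbb{E}_{y\sim\mu}[\kappa(\bar r(y))]\ge \frac{c^3}{16}.
\end{align}
Since $\kappa(s)\ge c\,\min_{t\in[s,s+c]}\sigma'(t)$, and $\sigma'$ is bounded below by a constant (e.g.\ $\sigma'(2)\ge 1/10$) on $[-2,2]$, it is enough to show that a $\mu$-mass of order $c^2$ of alternatives has $\bar r(y)\in[-1,1]$.

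Alternatively, we exploit the slack $\frac12-\mathbb{E}[\sigma(-\bar r)]$ directly. Split $\mathbb{E}_\mu[\sigma(-c-\bar r)]$ according to whether $\bar r(y)\le 1$ or $\bar r(y)>1$. On $\{\bar r>1\}$ the values $\sigma(-\bar r)$ are already at most $\sigma(-1)<\frac12$, which creates slack. On $\{\bar r\le 1\}$ the shift by $c$ decreases each term by at least $c\,\sigma'(2)$. So if $\mu(\bar r\le 1)\ge c^2$ we gain at least $c^3/10$. Otherwise most of the mass has $\bar r>1$, and then $\mathbb{E}_\mu[\sigma(-c-\bar r)]\le \sigma(-1)+c^2\cdot 1 \le \frac12-c^3/16$ for $c\le1$. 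This needs a little care, since $\sigma(-1)\approx0.27$ gives plenty of room. This case split avoids any need for a lower bound on $\bar r$.

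The main obstacle is the first case: ensuring that the decrease from shifting by $c$ is not wasted on alternatives with very negative $\bar r$, where $\sigma'$ is tiny. The split handles this, because very negative rewards are harmless. Where $\sigma(-\bar r)\approx 1$, the normalization $\mathbb{E}[\sigma(-\bar r)]\le\frac12$ forces compensating mass to sit either in the bounded region, where we gain from $\sigma'$, or above $1$, where the sum is already well below $\frac12$. To make this rigorous, I would write $\mathbb{E}_\mu[\sigma(-c-\bar r)]\le \frac12-\mathbb{E}_\mu[\kappa(\bar r)\mathbbm{1}[-1\le\bar r\le 1]]$ and bound the mass $\mu(-1\le \bar r\le1)$ from below. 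If that mass is small, Markov-type reasoning on $\sigma(-\bar r)$ shows that the set $\{\bar r>1\}$ must carry mass close to $\frac12$ or more. Its contribution to $\frac12-\mathbb{E}[\sigma(-\bar r)]$ is then at least a constant, which already exceeds $c^3/16$.
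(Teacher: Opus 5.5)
Your reduction to proving $\mathbb{E}_{y\sim\mu}[\sigma(-c-\bar r(y))]\le\frac12-\frac{c^3}{16}$ is correct. The argument for that inequality, however, has a genuine gap, and it lies exactly where you claim the case split ``avoids any need for a lower bound on $\bar r$.'' The only property of $\bar r$ you actually use is $\mathbb{E}_{\mu}[\sigma(-\bar r)]\le\frac12$, and from that alone the conclusion is false. Take $\mu$ uniform on two alternatives with $\bar r=(M,-M)$. Then $\mathbb{E}_\mu[\sigma(-\bar r)]=\frac12$, yet $\mathbb{E}_\mu[\sigma(-c-\bar r)]=\frac12\bigl(\sigma(-M-c)+\sigma(M-c)\bigr)\to\frac12$ as $M\to\infty$, which would force $r_{\min}<-c$.

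Concretely, two of your steps break:
\begin{itemize}
\item The claim that on $\{\bar r\le 1\}$ the shift lowers each term by at least $c\,\sigma'(2)$ is false when $\bar r\ll -1$, where $\sigma'$ is tiny.
\item When $\mu(-1\le\bar r\le 1)$ is small, the slack $\frac12-\sigma(-\bar r)$ created on $\{\bar r>1\}$ can be entirely used up compensating the excess $\sigma(-\bar r)-\frac12$ on $\{\bar r<-1\}$. So $\frac12-\mathbb{E}_\mu[\sigma(-\bar r)]$ is not bounded below by any constant, and no Markov-type argument can rescue this.
\end{itemize}
Separately, your estimate $\sigma(-1)+c^2\le\frac12-\frac{c^3}{16}$ fails once $c\gtrsim 0.47$. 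The threshold $c^2$ in your case split therefore does not cover the full range $c\le 1$.

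The missing ingredient, on which the paper's proof rests, is that $\bar r\ge 0$ pointwise. This follows from the first-order condition \eqref{eq:FOC-1} and the normalization \eqref{eq:reward-zero}: since $u(x)\ge 0$,
\[
\mathbb{E}_{y\sim\mu}\bigl[\sigma(\bar r(x)-\bar r(y))\bigr]=\mathbb{E}_{u,y}\bigl[\sigma(\beta(u(x)-u(y)))\bigr]\ge\mathbb{E}_{u,y}\bigl[\sigma(-\beta u(y))\bigr]=\mathbb{E}_{y\sim\mu}\bigl[\sigma(-\bar r(y))\bigr].
\]
Strict monotonicity of $t\mapsto\mathbb{E}_{y\sim\mu}[\sigma(t-\bar r(y))]$ then gives $\bar r(x)\ge 0$. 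This is the content of Remark~\ref{remark:virtualalternative}: the zero-utility virtual alternative has the smallest Borda score and reward $0$.

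With this fact, no case analysis is needed. We have $\frac12-\frac{c^3}{16}=\mathbb{E}_{\mu}[\sigma(r_{\min}-\bar r)]\le\sigma(r_{\min})$. On the other hand, for $0<c\le 1$,
\[
\sigma(-c)\le\tfrac12-c\,\sigma'(c)\le\tfrac12-c\,\sigma'(1)<\tfrac12-\tfrac{c^3}{16}.
\]
Hence $r_{\min}>-c$, and $r_{\max}=r_{\min}+2c\ge c$. The paper reaches the same point and then applies Lemma~\ref{lemma:LinearApproxSigmoid} to get the sharper bound $r_{\min}\ge -c^3/2$.
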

\begin{proof}
    Since $\bar{r}(y)\geq 0$ and $\sigma(\cdot)$ is increasing, we have that
    \begin{align}
    \frac12 -\frac{c^3}{16} = \mathbb{E}_{y\sim \mu}\big[\sigma(r_{\min}-\bar{r}(y))\big]
    \leq \sigma(r_{\min}).
    \label{eq:lower-bound-rmin-1}
    \end{align}
    We consider the case where $r_{\min} < 0$ (otherwise the assertion follows immediately). We apply Lemma~\ref{lemma:LinearApproxSigmoid} in the next subsection with $s=-r_{\min}$, $t=0$ and $a=\frac12$ to obtain the bound
    \begin{align}
        \frac12 -\frac{c^3}{16} \leq \frac12 - \frac18\min\Big\{-r_{\min},\frac12\Big\}.
    \end{align}
    Since $c< 1$, we have $\frac12 -\frac{c^3}{16} > \frac12 - \frac18 \cdot \frac12$. 
    Therefore, $\frac12 -\frac{c^3}{16} \leq \frac12 + \frac18 r_{\min}$ which implies $r_{\min}\geq -\frac{c^3}{2} \geq -c$, completing the proof.
    Also, $r_{\max}\geq c$ follows from $r_{\max}=r_{\min}+2c \geq -c + 2c = c$. 
\end{proof}
\begin{restatable}{lemma}{LowerBoundDiffSigmoid}\label{lemma:LowerBoundDiffSigmoid}
    Let $a,b>0$ be arbitrary constants, and assume that $s,t>0$ satisfy $t-s\ge -a$.
    Then the sigmoid function $\sigma(t)=1/(1+e^{-t})$ satisfies
    \begin{align}
       \sigma(t-s)-\sigma(-s)
       \geq \sigma'(a+b)\min\{t,b\}.
    \end{align}
\end{restatable}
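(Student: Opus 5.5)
The plan is to write the difference of sigmoids as an integral of $\sigma'$ and restrict the integral to a subinterval on which $\sigma'$ is uniformly bounded below. Specifically,
\begin{align}
\sigma(t-s)-\sigma(-s) = \int_{-s}^{t-s}\sigma'(z)\,\mathrm{d}z ,
\end{align}
and since $\sigma'>0$ everywhere, this is at least the integral over any subinterval of $[-s,t-s]$. We will use two standard facts: $\sigma'(z)=\sigma(z)(1-\sigma(z))$ is even, and it is decreasing in $|z|$. Hence $\sigma'(z)\ge \sigma'(a+b)$ for all $z\in[-(a+b),a+b]$.

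We therefore take the subinterval
\begin{align}
J=[\max\{-s,-(a+b)\},\ \min\{t-s,\ a+b\}] ,
\end{align}
which gives
\begin{align}
\sigma(t-s)-\sigma(-s)\ge \sigma'(a+b)\,|J| .
\end{align}
It then remains to show $|J|\ge \min\{t,b\}$.

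This is the only step that needs care: a fixed-length window anchored at either endpoint $-s$ or $t-s$ can leave $[-(a+b),a+b]$, because $s$ and $t-s$ may both be large. We handle it by a four-way case split on which endpoints of $J$ are clipped.
\begin{itemize}
\item \emph{Neither endpoint clipped.} Then $J=[-s,t-s]$ and $|J|=t$.
\item \emph{Only the lower endpoint clipped.} Then $|J|=t-s+a+b$. Using the hypothesis $t-s\ge -a$, this is at least $b$.
\item \emph{Only the upper endpoint clipped.} Then $|J|=a+b+s$, which is at least $b$ since $s>0$ and $a>0$.
\item \emph{Both endpoints clipped.} Then $|J|=2(a+b)\ge b$.
\end{itemize}
In every case $|J|\ge\min\{t,b\}$. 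We should also confirm that $J$ is a genuine (nonempty) interval. This holds because $-s<0<a+b$ and $t-s\ge -a>-(a+b)$, together with $-s<t-s$ (as $t>0$) and $-(a+b)<a+b$.

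Combining the integral representation, the lower bound $\sigma'\ge\sigma'(a+b)$ on $J$, and $|J|\ge\min\{t,b\}$ yields the claim.
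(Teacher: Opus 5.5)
Your proposal is correct and is essentially the paper's argument: restrict to the intersection of $[-s,t-s]$ with $[-(a+b),a+b]$, where $\sigma'\ge\sigma'(a+b)$, and check via the same endpoint cases that this intersection has length at least $\min\{t,b\}$. The only difference is that the paper splits into $t\le b$ (mean value theorem on all of $[-s,t-s]$) and $t>b$ (the intersection argument), whereas your integral form handles both cases at once. Your version also states the containment $-s\ge -(a+b)$ explicitly, which the paper's $t\le b$ case uses without spelling it out.
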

The proof of this lemma is deferred to Section~\ref{subsubsection:auxiliary}.
\begin{proof}[Proof of Lemma~\ref{lemma:utility_by_reward}]
Fix $x\in A$.
We begin with the optimality condition of $\bar r$.
By subtracting \eqref{eq:reward-zero} from the LHS of \eqref{eq:FOC-1} in Lemma~\ref{lemma:FOC}, 
\begin{align}
\label{eq:RoundedUtilitybyClippedReward-3}
     \mathbb{E}_{y\sim \mu}\big[\sigma(\bar{r}(x)-\bar{r}(y))\big] -  \mathbb{E}_{y\sim \mu}\big[\sigma(-\bar{r}(y))\big] = \mathbb{E}_{y\sim \mu}\big[\sigma(\bar{r}(x)-\bar{r}(y))-\sigma(-\bar{r}(y))\big] \leq \frac{1}{4}\bar{r}(x),
\end{align}
where we used $\sigma'(t)\leq \frac14$.

On the other hand, subtracting \eqref{eq:reward-zero} from the RHS of \eqref{eq:FOC-1}, 
\begin{align}
(\text{LHS of \eqref{eq:RoundedUtilitybyClippedReward-3}}) &= \mathbb{E}_{u\sim \mathcal{D},y\sim \mu}\big[\sigma(\beta(u(x)-u(y)))\big] -  \mathbb{E}_{u\sim \mathcal{D},y\sim \mu}\big[\sigma(-\beta u(y))\big]
\\ & = \mathbb{E}_{u\sim \mathcal{D},y\sim \mu}\big[\sigma(\beta(u(x)-u(y)))-\sigma(-\beta u(y))\big].
 \!\label{eq:RoundedUtilitybyClippedReward-4}
\end{align}
To evaluate \eqref{eq:RoundedUtilitybyClippedReward-4}, 
recall the indicator function $I_1(u,x)$, introduced in Lemma~\ref{lemma:u-to-uhat}, which indicates the event $
\mathbb{P}_{y \sim \mu}\big[u(y) - u(x) \geq c \beta^{-1}\big] \geq \frac{1}{2}$.
Further, let $I_2(u,x,y)$ denote the indicator function of the event $u(y) - u(x) \geq c \beta^{-1}$.  
Then we have that
\begin{align}
\mathbb{E}_{y \sim \mu}\big[\sigma(\beta(u(x)-u(y))) - \sigma(-\beta u(y))\big]
\geq
\mathbb{E}_{y \sim \mu}\big[(1 - I_2(u,x,y))(\sigma(\beta(u(x)-u(y))) - \sigma(-\beta u(y)))\big].
\end{align}
Applying Lemma~\ref{lemma:LowerBoundDiffSigmoid} with $a = b = c$, $t = \beta u(x)$, and $s = \beta u(y)$, on the event $u(y) - u(x) \leq c \beta^{-1}$, it holds that
\begin{align}
\sigma(\beta(u(x)-u(y))) - \sigma(-\beta u(y))
\geq
\sigma'(2c)\min\{\beta u(x), c\}= \beta\sigma'(2c)\hat{u}(x).
\end{align}
Therefore, when $I_1(u,x)=0$,
\begin{align}
  \!\!\!\!\!\!\!\!\!\!  \mathbb{E}_{y \sim \mu}\big[\sigma(\beta(u(x)-u(y)))\! -\! \sigma(-\beta u(y))\big]
\geq \mathbb{E}_{y \sim \mu}\big[(1 \!- \!I_2(u,x,y))\beta\sigma'(2c)\hat{u}(x)\big]
  \geq \frac{\beta \sigma'(2c)}{2}\hat{u}(x)\label{eq:RoundedUtilitybyClippedReward-2},
\end{align}
where we used $\mathbb{E}_{y \sim \mu}[1 - I_2(u,x,y)]\geq \frac12$ for the second inequality.

If instead $I_1(u,x)=1$, \eqref{eq:RoundedUtilitybyClippedReward-2} also holds because, by definition, $\hat{u}(x)=0$.
Therefore, taking expectation of both sides of \eqref{eq:RoundedUtilitybyClippedReward-2} over $u \sim \mathcal{D}$, eq.~\eqref{eq:RoundedUtilitybyClippedReward-4} is lower bounded by $\frac{\beta \sigma'(2c)}{2}\mathbb{E}_{u\sim \mathcal{D}}[\hat{u}(x)]$.
Combining this result with \eqref{eq:RoundedUtilitybyClippedReward-3} and rearranging, 
we have that
\begin{align}
 \mathbb{E}_{u\sim \mathcal{D}}[\hat{u}(x)]\leq \big(2\beta \sigma'(2c)\big)^{-1}\bar{r}(x).
 \label{eq:RoundedUtilitybyClippedReward-5}
\end{align}

Finally, we consider the reward clipping.
Because $r_{\min}\geq -c$ by Lemma~\ref{lemma:lower-bound-rmin}, we have $r_{\max} = r_{\min} + 2c \geq c$.
Also, $\big(2\beta \sigma'(2c)\big)^{-1} \geq 2\beta^{-1}$ holds from $\sigma'(2c)\leq \frac14$, and $\hat{u}(x)\leq c \beta^{-1}$ from the definition of $\hat{u}$.
Combining them, we have that
\begin{align}
    \mathbb{E}_{u\sim \mathcal{D}}[\hat{u}(x)] \leq c \beta^{-1} \leq \big(2\beta \sigma'(2c)\big)^{-1}c \leq \big(2\beta \sigma'(2c)\big)^{-1} r_{\max}. 
\end{align}
Therefore, $\bar{r}(x)$ in the RHS \eqref{eq:RoundedUtilitybyClippedReward-5} can be replaced by $\min\{\bar{r}(x),r_{\max}\}$, and we obtain the desired bound.
\end{proof}

\subsection{Upper Bounding the Reward by the True Utility}\label{subsubsection:RLHF-proof-3}
Finally, we relate the learned reward to the welfare objective by upper bounding the reward in terms of the average true utility, under the assumption that $\mathbb{P}_{u\sim \mathcal{D}, x\sim \mu}[\beta u(x)>c]\leq c^2$.
The remaining case is discussed in Section~\ref{subsubsection:RLHF-proof-4}.

Our proof depends on a refined linearization of the sigmoid function.
This result draws on \citet[Lemma~1]{golz2025distortion}, but differs in that we linearize only in a neighborhood of the origin. This localization allows us to obtain a more accurate evaluation in the regime of small utilities.
As a result, the error introduced by this linearization contributes with only a constant factor to the final distortion bound.
This contrasts with \citet{golz2025distortion}, where each application of linearization incurs a $O(\beta)$ loss.
\begin{restatable}{lemma}{LinearApproxSigmoid}\label{lemma:LinearApproxSigmoid}
    For any $a \geq 0$ and $s,t\geq 0$, the sigmoid function $\sigma(t) = 1/(1+e^{-t})$ satisfies
    \begin{align}
       \frac12 + \frac{1-a}{4}\min\{t, a\} - \frac14 \min\{s, 4\} \leq \sigma(t-s) \leq \frac12 +\frac14 \min\{t, 4\} -\frac{1-a}{4}\min\{s, a\}.
       \label{eq:LinearApproxSigmoid-2}
    \end{align}
\end{restatable}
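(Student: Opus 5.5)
The plan is to prove only the lower bound in \eqref{eq:LinearApproxSigmoid-2} directly and to obtain the upper bound by symmetry. Since $\sigma(-z)=1-\sigma(z)$, we have $\sigma(t-s)=1-\sigma(s-t)$. Applying the lower bound with the roles of $s$ and $t$ swapped gives
\begin{align}
\sigma(s-t)\geq \tfrac12+\tfrac{1-a}{4}\min\{s,a\}-\tfrac14\min\{t,4\}.
\end{align}
Subtracting this from $1$ is exactly the claimed upper bound. So everything reduces to the left inequality.

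For the lower bound I first dispose of trivial cases. If $s\geq 4$ and $a\le 1$, the right-hand side is at most $\frac12+\frac14-1<0<\sigma(t-s)$. If $a>1$, the middle term $\frac{1-a}{4}\min\{t,a\}$ is nonpositive and can be dropped. In that case I use monotonicity, $\sigma(t-s)\geq\sigma(-s)$, together with convexity of $\sigma$ on $(-\infty,0]$: the function lies above its tangent at $0$, so $\sigma(-s)\geq\frac12-\frac s4$. This tangent bound also covers $s\geq 4$ trivially, giving $\sigma(-s)\ge \frac12-\frac14\min\{s,4\}$.

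The main case is $0\le a\le 1$ and $s<4$. Set $t'=\min\{t,a\}\in[0,a]$. By monotonicity, $\sigma(t-s)\geq\sigma(t'-s)$. I decompose
\begin{align}
\sigma(t'-s)=\tfrac12+\big(\sigma(t')-\tfrac12\big)-\big(\sigma(t')-\sigma(t'-s)\big).
\end{align}
The last bracket is at most $s\cdot\sup\sigma'\le s/4$, by the mean value theorem and $\sigma'\le\frac14$.

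For the middle bracket I need $\sigma'(z)\geq\frac{1-a}{4}$ for $z\in[0,a]$. I will use the identity $4\sigma'(z)=1-\tanh^2(z/2)$ together with $\tanh^2(z/2)\le z^2/4\le z\le a$ for $0\le z\le a\le1$. Integrating over $[0,t']$ then gives $\sigma(t')-\frac12\geq\frac{1-a}{4}t'=\frac{1-a}{4}\min\{t,a\}$, and combining the pieces yields the lower bound.

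The only step needing care is this derivative lower bound on $[0,a]$. It is where the factor $(1-a)$ comes from, and the hyperbolic-tangent identity handles it cleanly. The rest is bookkeeping of the $\min$'s.
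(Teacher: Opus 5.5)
Your proof is correct and uses the same ingredients as the paper's: the trivial case $s\ge 4$, the derivative bound $4\sigma'(z)=1-\tanh^2(z/2)\ge 1-a$ on $[0,a]$, and the symmetry $\sigma(t-s)=1-\sigma(s-t)$ for the upper bound. The differences are only organizational. You clamp $t$ to $\min\{t,a\}$ and bound the drop over $[t'-s,t']$ by $s/4$, where the paper instead splits on the sign of $t-s$. Your separate treatment of $a>1$ is slightly more careful than the paper: its step $\frac{1-a}{4}\min\{t-s,a\}\ge\frac{1-a}{4}(\min\{t,a\}-s)$ tacitly requires $a\le 1$, which is harmless there because the lemma is only applied with $a=\frac12$.
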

The proof of this lemma is deferred to Section~\ref{subsubsection:auxiliary}.

Based on this lemma, we show that if only a small fraction of alternatives have large utility, then $\bar{r}(x)$ is small for the rest of alternatives $x$.
\begin{lemma}\label{lemma:Num_clipped_r}
    Suppose that 
    $\mathbb{P}_{u\sim \mathcal{D}, x\sim \mu}[\beta u(x)>c]\leq c^2$ with $0<c\leq \frac{1}{148}$. 
    Then, for all $x$ with $\mathbb{P}_{u\sim \mathcal{D}}[\beta u(x)>c]\leq c$, we have that
    \begin{align}
        \bar{r}(x) \leq 74c.
    \end{align}
    Moreover, such $x$ exist with $\mu$-measure at least $1-c$.
\end{lemma}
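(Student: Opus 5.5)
The plan is to compare the first-order optimality condition of $\bar r$ at $x$ with the normalization \eqref{eq:reward-zero}, linearizing the sigmoid on both sides with Lemma~\ref{lemma:LinearApproxSigmoid}. The measure statement is immediate. By assumption, $\mathbb{E}_{x\sim\mu}\big[\mathbb{P}_{u}[\beta u(x)>c]\big]\le c^2$, so Markov's inequality gives $\mu\{x:\mathbb{P}_u[\beta u(x)>c]>c\}\le c$. The remaining alternatives therefore have $\mu$-measure at least $1-c$.

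For the reward bound, I would use the stationarity condition of Lemma~\ref{lemma:FOC}, namely
\begin{align}
\mathbb{E}_{y\sim\mu}[\sigma(\bar r(x)-\bar r(y))]=\mathbb{E}_{u,y\sim\mu}[\sigma(\beta(u(x)-u(y)))],
\end{align}
together with $\bar r\ge 0$ (already used in Lemma~\ref{lemma:lower-bound-rmin}).

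\textbf{Upper bound on the right-hand side.} Apply the upper inequality of \eqref{eq:LinearApproxSigmoid-2} with $t=\beta u(x)$ and $s=\beta u(y)$, and drop the nonpositive $s$-term. This gives a bound of $\frac12+\frac14\mathbb{E}_u\min\{\beta u(x),4\}$. Since $\mathbb{P}_u[\beta u(x)>c]\le c$, we have $\mathbb{E}_u\min\{\beta u(x),4\}\le c+4c=5c$. So the right-hand side is at most $\frac12+\frac54 c$.

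\textbf{Lower bound on the left-hand side.} Apply the lower inequality of \eqref{eq:LinearApproxSigmoid-2} with $a=\frac12$, $t=\bar r(x)$ and $s=\bar r(y)$. This gives at least $\frac12+\frac18\min\{\bar r(x),\frac12\}-\frac14\mathbb{E}_{y\sim\mu}\min\{\bar r(y),4\}$.

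\textbf{Main obstacle: showing $\mathbb{E}_{y\sim\mu}\min\{\bar r(y),4\}=O(c)$.} I would get this from \eqref{eq:reward-zero}. Its left-hand side satisfies $\mathbb{E}_\mu\sigma(-\bar r(y))\le \frac12-\frac18\mathbb{E}_\mu\min\{\bar r(y),\frac12\}$, which is Lemma~\ref{lemma:LinearApproxSigmoid} with $t=0$ and $a=\frac12$. Its right-hand side satisfies $\mathbb{E}\sigma(-\beta u(y))\ge \frac12-\frac14\mathbb{E}_{u,y}\min\{\beta u(y),4\}\ge\frac12-\frac14(c+4c^2)$, using the global hypothesis $\mathbb{P}_{u,x\sim\mu}[\beta u>c]\le c^2$. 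Combining the two gives $\mathbb{E}_\mu\min\{\bar r(y),\frac12\}\le 2(c+4c^2)$. Since $\min\{v,4\}\le 8\min\{v,\frac12\}$ for $v\ge0$, it follows that $\mathbb{E}_\mu\min\{\bar r(y),4\}\le 16(c+4c^2)$.

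\textbf{Conclusion.} Plugging this into the stationarity comparison gives
\begin{align}
\tfrac18\min\{\bar r(x),\tfrac12\}\le \tfrac54c+4(c+4c^2),
\end{align}
that is, $\min\{\bar r(x),\frac12\}\le 42c+128c^2\le 74c$ for $c\le\frac1{148}$. Since $74c\le\frac12$, the minimum cannot equal $\frac12$ unless the inequality is tight at the boundary, so $\bar r(x)\le 74c$. The only delicate point is keeping the constants within $74$. If the bookkeeping above is slightly off, I would re-tune $a$ in Lemma~\ref{lemma:LinearApproxSigmoid}, or use a sharper truncation than $\min\{\cdot,4\}\le 8\min\{\cdot,\frac12\}$.
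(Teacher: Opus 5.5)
Your proposal is correct and follows essentially the paper's proof. The measure claim is Markov's inequality. You then apply Lemma~\ref{lemma:LinearApproxSigmoid} with $a=\frac12$ to both sides of \eqref{eq:FOC-1}. Your direct use of the normalization \eqref{eq:reward-zero} is exactly the paper's ``virtual alternative with $u\equiv 0$'' step, which gives $\mathbb{E}_{y\sim\mu}[\min\{\bar r(y),\frac12\}]\lesssim c$, and you finish with $\min\{v,4\}\le 8\min\{v,\frac12\}$. Because you keep $2(c+4c^2)$ instead of rounding it to $4c$, your bookkeeping gives $\min\{\bar r(x),\frac12\}\le 42c+128c^2<\frac12$, so the final removal of the minimum is strict rather than relying on $74c\le\frac12$.
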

\begin{proof}
Under the assumption, it is immediate that there exists a $c$-fraction of $x$ satisfying
$\mathbb{P}_{u \sim \mathcal{D}}\big[\beta u(x) > c\big] \leq c$ with respect to $\mu$.
Therefore, it remains to prove the first part of the lemma, for $x$ such that $\mathbb{P}_{u \sim \mathcal{D}}\big[\beta u(x) > c\big]\leq c$.

Applying Lemma~\ref{lemma:LinearApproxSigmoid} with $a=\frac12$ to each side of \eqref{eq:FOC-1} of Lemma~\ref{lemma:FOC}, we have that
\begin{align}
    \mathbb{E}_{y\sim \mu}\Big[\frac{1}{2} + \frac{1}{4}\min\{\bar{r}(x),4\}-\frac{1}{8}\min\Big\{\bar{r}(y),\frac12\Big\}\Big]
    \geq 
   \mathbb{E}_{u\sim \mathcal{D},y\sim \mu}\Big[\frac{1}{2} + \frac{1}{8}\min\Big\{\beta u(x),\frac12\Big\}-\frac{1}{4}\min\{\beta u(y),4\}\Big],
\end{align}
for $x=1,2,\dots,m$.
By considering an alternative $x$ with infinitesimal sampling probability $\mu(x)$ such that $u(x)\equiv 0$ and $r(x)=0$, and rearranging the terms, we obtain
\begin{align}\label{eq:term1_Num_clipped_r}
    \mathbb{E}_{y\sim \mu}\Big[\min\Big\{\bar{r}(y),\frac12\Big\}\big] \leq 2\mathbb{E}_{u\sim \mathcal{D},y\sim \mu}[\min\{\beta u(y),4\}]\leq 4c,
\end{align}
where we used the assumption that $\mathbb{P}_{u\sim \mathcal{D}, x\sim \mu}[\beta u(x)>c]\leq c^2$ and $c\leq \frac{1}{148}$ to obtain $\mathbb{E}_{u\sim \mathcal{D}}[\min\{\beta u(y),4\}]\leq c+4c^2\leq 2c$.

We then apply Lemma~\ref{lemma:LinearApproxSigmoid} with $a=\frac12$ to \eqref{eq:FOC-1} of Lemma~\ref{lemma:FOC} in the reverse direction to obtain that
\begin{align}
    \mathbb{E}_{y\sim \mu}\Big[\frac{1}{2} + \frac{1}{8}\min\Big\{\bar{r}(x),\frac12\Big\} - \frac{1}{4}\min\{\bar{r}(y),4\}\Big]
    & \leq \mathbb{E}_{u\sim \mathcal{D},y\sim \mu}\Big[\frac{1}{2} + \frac{1}{4}\min\{\beta u(x),4\} - \frac{1}{8}\min\Big\{\beta u(x),\frac12\Big\}\Big].
\end{align}
Rearranging the terms yields that
\begin{align}
    \min\Big\{\bar{r}(x),\frac12\Big\} &\leq 
    2\mathbb{E}_{y\sim \mu}[\min\{\bar{r}(y),4\}]
    +2\mathbb{E}_{u\sim \mathcal{D}}[\min\{\beta u(x),4\}] 
    \\ & \leq 16 \mathbb{E}_{y\sim \mu}\Big[\min\Big\{\bar{r}(y),\frac12\Big\}\Big] + 2 (c + 4 \cdot \mathbb{P}_{u \sim \mathcal{D}}\big[\beta u(x) > c\big])
    \\ &
    \leq 16 \cdot 4c+2(c+4c)=74c
\end{align}
where we used $\min\{t,4\} \leq 8 \min\{t, \frac12\}$ and $\mathbb{E}_{u\sim \mathcal{D}}[\min\{\beta u(x),4\}]\leq c + 4 \cdot \mathbb{P}_{u \sim \mathcal{D}}\big[\beta u(x) > c\big]$ for the second inequality, and we used \eqref{eq:term1_Num_clipped_r} and $\mathbb{P}_{u \sim \mathcal{D}}\big[\beta u(x) > c\big] \leq c$ for the third inequality.
Finally, since $c\leq \frac{1}{148}$ implies $74c \leq \frac12$, $\bar{r}(x) \leq 74c$ as claimed.
\end{proof}

The preceding lemma shows that, under our tail assumption $\mathbb{P}_{u\sim \mathcal{D}, x\sim \mu}[\beta u(x)>c]\leq c^2$, $\bar{r}(y)$ is small for a $1-c$ fraction under $y \sim \mu$. The following lemma uses this to upper bound $r_{\max}$. 
\begin{lemma}\label{lemma:upper-bound-rmax}
    Suppose that $\mathbb{P}_{u\sim \mathcal{D}, x\sim \mu}[\beta u(x)>c]\leq c^2$ and $0<c\leq \frac{1}{316}$. Then, $r_{\max}\leq 159c$.
\end{lemma}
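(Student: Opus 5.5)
Since $r_{\max}=r_{\min}+2c$, it suffices to show $r_{\min}\le 157c$. The idea is to compare the defining equation \eqref{eq:definition-clip-Appendix} for $r_{\min}$ with the information from Lemma~\ref{lemma:Num_clipped_r}. That lemma says a $\mu$-mass of at least $1-c$ of alternatives $y$ has $\bar r(y)\le 74c$, and it applies because $c\le\frac1{316}\le\frac1{148}$. On that set, $\sigma(r_{\min}-\bar r(y))\ge\sigma(r_{\min}-74c)$. On the complementary set, of $\mu$-mass at most $c$, we only use $\sigma\ge 0$. This gives
\begin{align}
\frac12-\frac{c^3}{16}=\mathbb{E}_{y\sim\mu}\big[\sigma(r_{\min}-\bar r(y))\big]\ge (1-c)\,\sigma(r_{\min}-74c).
\end{align}

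Suppose for contradiction that $r_{\min}>157c$. Then $r_{\min}-74c>83c$. Lemma~\ref{lemma:LinearApproxSigmoid} with $s=0$, $t=83c$, and $a=\frac12$ (note $83c<\frac12$) gives
\begin{align}
\sigma(83c)\ge \frac12+\frac18\cdot 83c .
\end{align}
Alternatively, one can use directly that $\sigma(t)\ge\frac12+\frac t4-\frac{t^3}{48}$ for small $t$. Either way,
\begin{align}
(1-c)\Big(\frac12+\frac{83c}{8}\Big)\ge \frac12-\frac c2+\frac{83c}{8}-\frac{83c^2}{8}>\frac12 ,
\end{align}
where the last step holds because $c\le 1/316$ makes the $c^2$ term negligible. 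This contradicts the fact that the left-hand side of the defining equation is at most $\frac12$. Hence $r_{\min}\le 157c$, and therefore $r_{\max}\le 159c$.

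The main obstacle is only bookkeeping of constants. The linear lower bound on $\sigma$ near $0$ has slope $\frac18$ in the form of Lemma~\ref{lemma:LinearApproxSigmoid}, and it must beat the loss $\frac c2$ from the $\mu$-mass $c$ that escapes Lemma~\ref{lemma:Num_clipped_r}. The slope $\frac18$ times the margin $83c$ gives about $10c$, which comfortably exceeds $\frac c2$. So the specific threshold $157c$ (equivalently $159c$) has ample slack. If the intended constants are tighter, I would instead use the sharper bound $\sigma(t)\ge \frac12+\frac t4(1-t)$ on $[0,\frac12]$.
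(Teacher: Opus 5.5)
Your proof is correct and follows the paper's route. You obtain the same inequality $\frac12-\frac{c^3}{16}\ge(1-c)\,\sigma(r_{\min}-74c)$ from Lemma~\ref{lemma:Num_clipped_r} and then apply the linearization of Lemma~\ref{lemma:LinearApproxSigmoid} with $a=\frac12$; the only difference is cosmetic, in that you argue by contradiction with $s=0$, $t=83c$ (which conveniently avoids needing $r_{\min}\ge -c$), whereas the paper applies the lemma directly with $s=75c$, $t=r_{\min}+c$ (using Lemma~\ref{lemma:lower-bound-rmin} for $t\ge 0$) and rearranges to $\min\{r_{\min}+c,\frac12\}\le 158c$.
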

\begin{proof}
    According to Lemma~\ref{lemma:Num_clipped_r}, $\bar{r}(y)\leq 74c$ with probability $1-c$ with respect to $\mu$, which implies that
    \begin{align}
        \frac12 -\frac{c^3}{16} = \mathbb{E}_{y\sim \mu}\big[\sigma(r_{\min}-\bar{r}(y))\big]  \geq (1-c)\sigma(r_{\min}-74c).
    \end{align}
    Applying Lemma~\ref{lemma:LinearApproxSigmoid} with $s=75c$, $t=r_{\min}+c$ and $a=\frac12$, this is further bounded by
    \begin{align}
        \frac12 -\frac{c^3}{16} \geq (1-c)\Big(\frac12 
        + \frac18 \min\Big\{r_{\min}+c,\frac12\Big\}
        - \frac14\min\{75c, 4\}\Big).
        \label{eq:upper-bound-rmax-1}
    \end{align}
    When $c\leq \frac{1}{148}$, we have that $\frac{1}{1-c}\leq 1+2c$ and $75c\leq 4$. 
    By using this, rearranging \eqref{eq:upper-bound-rmax-1} yields that
    \begin{align}
        \min\Big\{r_{\min}+c,\frac12\Big\} \leq 158c.
    \end{align}
    When $c \leq \frac{1}{316}$, $158c\leq \frac12$, which implies that $r_{\min}+c \leq 158c \Leftrightarrow r_{\min}\leq 157c \Leftrightarrow r_{\max}\leq 159c$.
\end{proof}

Using the above results, we show that $r(x)$ is upper bounded by the average utility $\mathbb{E}_{u\sim \mathcal{D}}[u(x)]$.
We can obtain Lemma~\ref{lemma:r-by-u-main} by taking $R=r_{\max}$.
\begin{lemma}[Lemma~\ref{lemma:r-by-u-main}, restated]\label{lemma:reward_by_utility}
    Suppose that 
    $\mathbb{P}_{u\sim \mathcal{D}, x\sim \mu}[\beta u(x)>c]\leq c^2$ with $0<c\leq \frac{1}{316}$. 
    Then, for all $x$, we have that
    \begin{align}
   \min\{\bar{r}(x),r_{\max}\}\leq 
   \frac12\beta(\sigma'(233 c))^{-1} 
   \mathbb{E}_{u\sim \mathcal{D}}[u(x)], 
    \end{align}
\end{lemma}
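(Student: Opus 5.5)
We will bound the clipped reward by linearizing the first-order condition of $\bar r$ near the origin. Fix $x$ and write $\rho := \min\{\bar r(x), r_{\max}\}$. If $\rho\le 0$ there is nothing to prove, so assume $\rho>0$.

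The starting point is the optimality condition \eqref{eq:FOC-1} of Lemma~\ref{lemma:FOC}, with \eqref{eq:reward-zero} subtracted, exactly as in the proof of Lemma~\ref{lemma:utility_by_reward}:
\begin{align}
\mathbb{E}_{y\sim\mu}\big[\sigma(\bar r(x)-\bar r(y))-\sigma(-\bar r(y))\big]
=\mathbb{E}_{u,y\sim\mu}\big[\sigma(\beta(u(x)-u(y)))-\sigma(-\beta u(y))\big].
\end{align}

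\textbf{Right-hand side.} This side is easy to bound from above. Since $\sigma'\le\frac14$, each difference satisfies $\sigma(\beta u(x)-\beta u(y))-\sigma(-\beta u(y))\le \frac14\beta u(x)$, so the right-hand side is at most $\frac{\beta}{4}\mathbb{E}_u[u(x)]$.

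\textbf{Left-hand side.} We need a lower bound of order $\sigma'(O(c))\,\rho$. Since $\sigma$ is increasing and $\bar r(x)\ge\rho$, the left-hand side is at least $\mathbb{E}_{y\sim\mu}[\sigma(\rho-\bar r(y))-\sigma(-\bar r(y))]$, and every term is nonnegative. We restrict the expectation to the set $G$ of $y$ with $\bar r(y)\le 74c$; by Lemma~\ref{lemma:Num_clipped_r}, $\mu(G)\ge 1-c\ge\frac12$. For $y\in G$ we have $\rho\le r_{\max}\le 159c$ by Lemma~\ref{lemma:upper-bound-rmax}. Hence both arguments $\rho-\bar r(y)$ and $-\bar r(y)$ lie in $[-74c,159c]$, an interval of absolute values at most $233c$. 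By the mean value theorem and the monotonicity of $\sigma'$ in $|t|$,
\begin{align}
\sigma(\rho-\bar r(y))-\sigma(-\bar r(y))\ge \sigma'(233c)\,\rho .
\end{align}
Equivalently, Lemma~\ref{lemma:LowerBoundDiffSigmoid} with $s=\bar r(y)$, $t=\rho$, $a=74c$, $b=159c$ gives the same bound. Note that $s>0$ may fail when $\bar r(y)=0$; the mean value argument covers that case directly. Integrating over $G$ then gives a left-hand side of at least $\frac12\sigma'(233c)\,\rho$.

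\textbf{Combining.} Putting the two bounds together,
\begin{align}
\tfrac12\sigma'(233c)\,\rho\le \tfrac{\beta}{4}\mathbb{E}_u[u(x)],
\end{align}
that is, $\rho\le \frac12\beta(\sigma'(233c))^{-1}\mathbb{E}_u[u(x)]$, which is the claim.

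\textbf{Main obstacle.} The delicate step is the localization of the left-hand side. It only works because Lemmas~\ref{lemma:Num_clipped_r} and \ref{lemma:upper-bound-rmax} keep both $\rho$ and most of the $\bar r(y)$ in an $O(c)$ window. This is precisely why the clipping at $r_{\max}$ appears in the statement instead of $\bar r(x)$ itself: without it, a large $\bar r(x)$ would push the sigmoid into its saturated region and the linear lower bound would fail. I would also double-check that $\bar r(y)\ge0$ is used consistently, as in Lemma~\ref{lemma:lower-bound-rmin}, so that the argument range really is $[-74c,159c]$.
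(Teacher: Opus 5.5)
Your proof is correct and follows the paper's argument essentially step by step. You restrict the first-order-condition difference to $\{y:\bar r(y)\le 74c\}$ via Lemma~\ref{lemma:Num_clipped_r}, bound each term below by $\sigma'(233c)$ times the clipped reward using $r_{\max}\le 159c$ from Lemma~\ref{lemma:upper-bound-rmax}, and bound the utility side above by $\frac{\beta}{4}\mathbb{E}_u[u(x)]$. The nonnegativity $\bar r\ge 0$ you flag is the same fact the paper relies on; it follows from \eqref{eq:FOC-1} and the normalization \eqref{eq:reward-zero} because $u\ge 0$. Your treatment of the $s=0$ edge case is fine, and dropping the paper's superfluous initial restriction to $x$ with $\mathbb{P}_u[\beta u(x)>c]\le c$ is also fine.
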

\begin{proof}
First, consider $x$ such that $\mathbb{P}_{u\sim \mathcal{D}}[\beta u(x)>c]\leq c$.
By subtracting \eqref{eq:reward-zero} from the LHS of \eqref{eq:FOC-1} in Lemma~\ref{lemma:FOC}, 
    \begin{align}
    & \mathbb{E}_{y\sim \mu}\big[\sigma(\bar{r}(x)-\bar{r}(y))\big] -  \mathbb{E}_{y\sim \mu}\big[\sigma(-\bar{r}(y))\big] \label{eq:ClippedRewardbyUtility-5}
    \\ &\geq \mathbb{E}_{y\sim \mu}\big[ \mathbbm{1}[\bar{r}(y)\leq 74c](\sigma(\bar{r}(x)-\bar{r}(y))-\sigma(-\bar{r}(y)))\big]
     \\ & \geq \mathbb{E}_{y\sim \mu}\big[ \mathbbm{1}[\bar{r}(y)\leq 74c]\sigma'(233 c)\min\{\bar{r}(x),159c\}\big]
       \label{eq:ClippedRewardbyUtility-1}
     \\ & \geq \frac{1}{2}\sigma'(233 c)\min\{\bar{r}(x),159c\}
     \label{eq:ClippedRewardbyUtility-4}
     \\ & \geq \frac{1}{2}\sigma'(233 c)\min\{\bar{r}(x),r_{\max}\}
     ,\label{eq:ClippedRewardbyUtility-2}
\end{align}
where we have used Lemma~\ref{lemma:LowerBoundDiffSigmoid} with $a=74c$ and $b=159c$ for \eqref{eq:ClippedRewardbyUtility-1}, the fact that at least $\frac12\leq 1-c$ fraction of $y$ satisfies $\bar{r}(y)\leq 74 c$ according to Lemma~\ref{lemma:Num_clipped_r} for \eqref{eq:ClippedRewardbyUtility-4}, and Lemma~\ref{lemma:upper-bound-rmax} for \eqref{eq:ClippedRewardbyUtility-2}.

On the other hand, by subtracting \eqref{eq:reward-zero} from the RHS of \eqref{eq:FOC-1} in Lemma~\ref{lemma:FOC}, 
\begin{align}\nonumber
 \eqref{eq:ClippedRewardbyUtility-5} &= \mathbb{E}_{u\sim \mathcal{D},y\sim \mu}\big[\sigma(\beta(u(x)-u(y)))\big] -  \mathbb{E}_{u\sim \mathcal{D},y\sim \mu}\big[\sigma(-\beta u(y))\big]
 \\&= \mathbb{E}_{u\sim \mathcal{D},y\sim \mu}\big[\sigma(\beta(u(x)-u(y)))-\sigma(-\beta u(y))\big]
\\ & \leq \frac{\beta}{4} \mathbb{E}_{u\sim \mathcal{D}}[u(x)]
 ,\label{eq:ClippedRewardbyUtility-3}
\end{align}
because $\sigma'(t)\leq \frac14$.

Comparing \eqref{eq:ClippedRewardbyUtility-2} and \eqref{eq:ClippedRewardbyUtility-3}, we have that
\begin{align}
   \min\{\bar{r}(x),r_{\max}\} \leq \frac12\beta(\sigma'(233 c))^{-1}\mathbb{E}_{u\sim \mathcal{D}}[u(x)]
\end{align}
as desired.
\end{proof}

\subsection{Putting it all together}\label{subsubsection:RLHF-proof-5}
Putting the above arguments together, we obtain an upper bound on the distortion in the case where
$\mathbb{P}_{u\sim \mathcal{D}, x\sim \mu}[\beta u(x)>c]\leq c^2$ and $r_{\min}\leq 0$ (required only in the AI alignment setting).

We first present a linear distortion bound for the social choice setting.
\begin{theorem}\label{theorem:social-choice-appendix}
    Suppose that $\mathbb{P}_{u\sim \mathcal{D}, x\sim \mu}[\beta u(x)>c]\leq c^2$ with $0<c\leq \frac{1}{316}$.  
    In the social choice setting the distortion of policy $\pi_{\mathrm{Borda}}$ is bounded by
    \begin{align}
        {\sf Dist}(\pi_{\mathrm{Borda}}) \leq \frac{3\beta}{4c(\sigma'(233 c))^2} 
        +4.
    \end{align}
\end{theorem}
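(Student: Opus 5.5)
Taking $\tau\to\infty$, the plan is to chain Lemma~\ref{lemma:u-to-uhat}, Lemma~\ref{lemma:utility_by_reward}, and Lemma~\ref{lemma:reward_by_utility}, the last two with a common cap $R$. In the social choice setting there is no clipping, so there is no genuine $r_{\max}$. I would still define the auxiliary quantity $r_{\max}=r_{\min}+2c$ through \eqref{eq:definition-clip-Appendix}, with $r_{\min}$ computed from the unclipped $\bar r$. This quantity is used only as the cap $R$ in the analysis; the algorithm never uses it. Lemmas~\ref{lemma:lower-bound-rmin}, \ref{lemma:upper-bound-rmax}, \ref{lemma:utility_by_reward} and \ref{lemma:reward_by_utility} depend only on $\bar r$ and the tail assumption, so they apply unchanged.

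\textbf{Step 1.} Letting $\tau\to\infty$ in Lemma~\ref{lemma:u-to-uhat}, case (i) applies with $\lambda=\min\{B^{-1}\tau,1\}=1$. Concretely, combining \eqref{eq:u-to-uhat-2} and \eqref{eq:u-to-uhat-3} with $\pi'=\mu$ gives
\[
\max_{x}\mathbb{E}_u[u(x)]\le 3c^{-1}\beta\max_{x}\mathbb{E}_u[\hat u(x)].
\]
Only the $I_1$ part needs $\mu$, and $\mu$ is feasible here. No additive $\pi_{\mathrm{RLHF}}$ term appears.

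\textbf{Step 2.} Fix $x^*$ maximizing $\mathbb{E}_u[\hat u(x)]$. By Lemma~\ref{lemma:utility_by_reward},
\[
\mathbb{E}_u[\hat u(x^*)]\le (2\beta\sigma'(2c))^{-1}\min\{\bar r(x^*),r_{\max}\}.
\]
Since $\pi_{\mathrm{Borda}}$ is supported on maximizers of $\bar r$, and $t\mapsto\min\{t,r_{\max}\}$ is monotone, every $x$ in the support satisfies $\min\{\bar r(x^*),r_{\max}\}\le\min\{\bar r(x),r_{\max}\}$. Lemma~\ref{lemma:reward_by_utility} then bounds this by $\tfrac12\beta(\sigma'(233c))^{-1}\mathbb{E}_u[u(x)]$. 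Averaging over $x\sim\pi_{\mathrm{Borda}}$ and using $\sigma'(2c)\ge\sigma'(233c)$ yields
\[
\max_x\mathbb{E}[u(x)]\le 3c^{-1}\beta\cdot\frac{1}{2\beta\sigma'(233c)}\cdot\frac{\beta}{2\sigma'(233c)}\,\mathbb{E}_{x\sim\pi_{\mathrm{Borda}}}[u(x)]
=\frac{3\beta}{4c\,\sigma'(233c)^2}\,\mathbb{E}_{x\sim\pi_{\mathrm{Borda}}}[u(x)].
\]
This is exactly the stated constant; the ``$+4$'' is slack, kept for uniformity with the general theorem.

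\textbf{Main obstacle.} The delicate point is justifying that the cap $r_{\max}$ is legitimate without clipping. I need a solution $r_{\min}$ of \eqref{eq:definition-clip-Appendix} to exist. It does, because $r\mapsto\mathbb{E}_\mu[\sigma(r-\bar r(y))]$ is continuous and increasing from $0$ to $1$. Beyond that, Lemmas~\ref{lemma:lower-bound-rmin} and \ref{lemma:upper-bound-rmax} must not use clipping. Lemma~\ref{lemma:lower-bound-rmin} relies on $\bar r\ge 0$, which I would take from the normalization \eqref{eq:reward-zero}, as the paper does. I would also check that the sign condition $r_{\min}\le0$ is needed only in the alignment setting, so it is not required here.
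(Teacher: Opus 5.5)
Your proposal is correct and follows the paper's proof essentially step for step. The paper likewise chains Lemma~\ref{lemma:u-to-uhat} in the $\tau\to\infty$ limit, then Lemma~\ref{lemma:utility_by_reward}, then the fact that $\pi_{\mathrm{Borda}}$ also maximizes $\mathbb{E}_{x\sim\pi}[\min\{\bar r(x),r_{\max}\}]$, and finally Lemma~\ref{lemma:reward_by_utility}, with $r_{\max}$ from \eqref{eq:definition-clip-Appendix} serving only as an analytical cap. Your observation is also right: case (i) with $\lambda=1$ produces no additive $4\,\mathbb{E}_{x\sim\pi_{\mathrm{Borda}}}[u(x)]$ term, so the ``$+4$'' in the statement is slack.
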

\begin{proof}
    Since the social choice setting can be viewed as the limit $\tau \to \infty$, Lemma~\ref{lemma:u-to-uhat} implies that
    \begin{align}
        \max_{x\in A}\mathbb{E}_{u\sim\mathcal{D}}[u(x)]\leq 3c^{-1}\beta \max_{\pi}\mathbb{E}_{u\sim\mathcal{D},x\sim \pi}[\hat{u}(x)] + 4\mathbb{E}_{u\sim\mathcal{D},x\sim \pi_{\mathrm{Borda}}}[u(x)].
    \end{align}
    From Lemma~\ref{lemma:utility_by_reward}, we have that
    \begin{align}
    \max_{\pi}\mathbb{E}_{u\sim\mathcal{D},x\sim \pi}[\hat{u}(x)]
        &\leq 
        \big(2\beta \sigma'(2c)\big)^{-1}\max_{\pi}\mathbb{E}_{x\sim \pi}[\min\{\bar{r}(x),r_{\max}\}].
    \end{align}
    Because $\pi_{\mathrm{RLHF}}$ is the maximizer of the average reward without reward clipping, and it is also the maximizer of the expectation of $\min\{\bar{r}(x),r_{\max}\}$:
    \begin{align}
        \max_{\pi}\mathbb{E}_{x\sim \pi}\big[\min\{\bar{r}(x),r_{\max}\}\big]
        = 
        \mathbb{E}_{x\sim \pi_{\mathrm{Borda}}}\big[\min\{\bar{r}(x),r_{\max}\}\big]
    \end{align}
    Finally, according to Lemma~\ref{lemma:reward_by_utility}
   \begin{align}
       \mathbb{E}_{x\sim \pi_{\mathrm{Borda}}}\big[\min\{\bar{r}(x),r_{\max}\}\big]\leq 
   \frac12\beta(\sigma'(233 c))^{-1} 
   \mathbb{E}_{u\sim \mathcal{D},x\sim \pi_{\mathrm{Borda}}}[u(x)].
   \end{align}

    Putting it all together, we have that
    \begin{align}
        \max_{x\in A}\mathbb{E}_{u\sim\mathcal{D}}[u(x)]\leq \Big(\frac{3\beta}{4c(\sigma'(233 c))^2} 
        +4\Big)\mathbb{E}_{u\sim\mathcal{D},x\sim \pi_{\mathrm{Borda}}}[u(x)].
    \end{align}
    Therefore, the distortion is bounded as desired.
\end{proof}
Next, we present the result for the AI alignment setting.
\begin{theorem}\label{theorem:RLHF-upper-representative}
    Suppose that 
    $\mathbb{P}_{u\sim \mathcal{D}, x\sim \mu}[\beta u(x)>c]\leq c^2$ with $0<c\leq \frac{1}{316}$ and Assumption~\ref{assumption:B_boundedness} holds.
    Let $\bar{r}(x)$ be the maximizer of the population log-likelihood 
    \begin{align}
    \mathcal{L}(\tilde{r}) =\mathbb{E}_{u\sim\mathcal{D},x,y\sim \mu}\big[\sigma (\beta(u(x)-u(y)))\log (\sigma(\tilde{r}(x) - \tilde{r}(y))) \big],
    \label{eq:upper-theorem-4}
    \end{align}
    and the clipped reward $r(x) = \max\{\min\{\bar{r}(x),r_{\max}\},r_{\min}\}$ be as defined in Theorem~\ref{theorem:upper-linear-main}.
    Based on this reward $r(x)$, the RLHF distribution is obtained as the maximizer of
    \begin{align}
        \max_{\pi\colon \mathrm{KL}(\pi\|\pi_\mathrm{ref})\leq \tau}\mathbb{E}_{x\sim \pi}[r(x)].
    \end{align}

    Then, the distortion of $\pi_{\mathrm{RLHF}}$ is bounded by
    \begin{align}
        {\sf Dist}(\pi_{\mathrm{RLHF}}) \leq c^{-1}(2\sigma'(233 c))^{-2}\times  \big(40\min\big\{e^B \tau, B, B\tau^{-1}\big\} + 3\big)\beta + 4.
    \end{align}
\end{theorem}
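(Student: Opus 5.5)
The plan is to chain the three comparison lemmas exactly as in the proof of Theorem~\ref{theorem:social-choice-appendix}. The only change is that every maximum is now taken over the KL ball $\{\pi\colon \mathrm{KL}(\pi\|\pi_{\mathrm{ref}})\le\tau\}$ instead of the whole simplex. Following the case split announced at the start of this appendix, I work in the representative regime $r_{\min}\le 0$; the case $r_{\min}>0$ is handled separately by Theorem~\ref{lemma:RLHF-proof-4-2}. Write $M\coloneqq\min\{e^B\tau,B,B\tau^{-1}\}$.

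\textbf{Step 1.} Apply Lemma~\ref{lemma:u-to-uhat} directly. It gives
\begin{align}
\max_{\pi\colon \mathrm{KL}(\pi\|\pi_{\mathrm{ref}})\le\tau}\mathbb{E}_{u,x\sim\pi}[u(x)]\le c^{-1}\beta(40M+3)\max_{\pi\colon \mathrm{KL}(\pi\|\pi_{\mathrm{ref}})\le\tau}\mathbb{E}_{\hat u,x\sim\pi}[\hat u(x)]+4\,\mathbb{E}_{u,x\sim\pi_{\mathrm{RLHF}}}[u(x)].
\end{align}

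\textbf{Step 2.} Lemma~\ref{lemma:utility_by_reward} holds pointwise in $x$, so it can be integrated against any $\pi$ in the ball. This yields
\begin{align}
\max_{\pi}\mathbb{E}_{\hat u,x\sim\pi}[\hat u(x)]\le(2\beta\sigma'(2c))^{-1}\max_{\pi}\mathbb{E}_{x\sim\pi}[\min\{\bar r(x),r_{\max}\}],
\end{align}
with both maxima over the KL ball.

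\textbf{Step 3 (main obstacle).} I must show that $\pi_{\mathrm{RLHF}}$ attains the last maximum, i.e.\ that the clipped reward satisfies $r=\min\{\bar r,r_{\max}\}$. This amounts to checking that lower clipping at $r_{\min}$ is inactive. It is, provided $\bar r(y)\ge 0$ for all $y$, because then $\bar r\ge 0\ge r_{\min}$.
\begin{itemize}[leftmargin=*]
\item Nonnegativity of $\bar r$ is already used in the proof of Lemma~\ref{lemma:lower-bound-rmin}. I would justify it from the normalization \eqref{eq:reward-zero} together with the first-order condition of Lemma~\ref{lemma:FOC}, or alternatively by the non-expansiveness result Lemma~\ref{lemma:nonexpansiveness-main}.
\item The intuition for nonnegativity is the virtual alternative with $u\equiv 0$ and $\bar r=0$: it loses weakly to every real alternative, so no real alternative's reward can fall below $0$.
\end{itemize}
Given this, $\mathbb{E}_{x\sim\pi_{\mathrm{RLHF}}}[\min\{\bar r,r_{\max}\}]$ equals the maximum in Step~2.

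\textbf{Step 4.} Integrate Lemma~\ref{lemma:reward_by_utility} against $\pi_{\mathrm{RLHF}}$. Its hypotheses $\mathbb{P}_{u\sim\mathcal{D},x\sim\mu}[\beta u(x)>c]\le c^2$ and $c\le\frac{1}{316}$ are exactly those of the theorem. This gives
\begin{align}
\mathbb{E}_{x\sim\pi_{\mathrm{RLHF}}}[\min\{\bar r,r_{\max}\}]\le\tfrac12\beta\,\sigma'(233c)^{-1}\,\mathbb{E}_{u,x\sim\pi_{\mathrm{RLHF}}}[u(x)].
\end{align}

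\textbf{Step 5.} Combine Steps 1--4. The coefficient in front of $\mathbb{E}_{u,x\sim\pi_{\mathrm{RLHF}}}[u(x)]$ becomes
\begin{align}
c^{-1}\beta(40M+3)\cdot(2\beta\sigma'(2c))^{-1}\cdot\tfrac12\beta\,\sigma'(233c)^{-1}+4=\frac{(40M+3)\beta}{4c\,\sigma'(2c)\,\sigma'(233c)}+4.
\end{align}
Since $\sigma'$ is decreasing on $[0,\infty)$, we have $\sigma'(2c)\ge\sigma'(233c)$. Hence the coefficient is at most $c^{-1}(2\sigma'(233c))^{-2}(40M+3)\beta+4$. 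Dividing by $\mathbb{E}_{u,x\sim\pi_{\mathrm{RLHF}}}[u(x)]$ gives the stated distortion bound.

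If that expectation is $0$, the chain of inequalities forces the optimal average utility to be $0$ as well, so this degenerate case needs only a brief remark.
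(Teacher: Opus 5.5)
Your proposal is correct and follows the paper's proof step for step: Lemma~\ref{lemma:u-to-uhat}, then Lemma~\ref{lemma:utility_by_reward} integrated over the KL ball, the optimality of $\pi_{\mathrm{RLHF}}$ for $r=\min\{\bar r,r_{\max}\}$, and finally Lemma~\ref{lemma:reward_by_utility}. Your extra details are ones the paper leaves implicit under its standing case $r_{\min}\le 0$: that $\bar r\ge 0$, which follows from the first-order condition, the normalization \eqref{eq:reward-zero} and $u\ge 0$ (not from non-expansiveness, which only controls $\max\bar r-\min\bar r$), and that $\sigma'(2c)\ge\sigma'(233c)$ merges the two sigmoid constants.
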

\begin{proof}
    According to Lemma~\ref{lemma:u-to-uhat}, \begin{align}
    &\max_{\pi\colon \mathrm{KL}(\pi\|\pi_\mathrm{ref})\leq \tau}\mathbb{E}_{u\sim \mathcal{D},x\sim \pi}[u(x)] 
   \\ & \leq c^{-1}\beta \big(40\min\big\{e^B \tau, B, B\tau^{-1}\big\} + 3\big) \max_{\pi\colon \mathrm{KL}(\pi\|\pi_\mathrm{ref})\leq \tau}
      \mathbb{E}_{u\sim \mathcal{D},x\sim \pi}[\hat{u}(x)]+ 4\mathbb{E}_{u\sim \mathcal{D},x\sim \pi_{\mathrm{RLHF}}}[u(x)].
      \label{eq:upper-theorem-1}
    \end{align}
    By using Lemma~\ref{lemma:utility_by_reward}, $
        \mathbb{E}_{u\sim \mathcal{D}}[\hat{u}(x)]\leq \big(2\beta \sigma'(2c)\big)^{-1}\min\{\bar{r}(x),r_{\max}\}=\big(2\beta \sigma'(2c)\big)^{-1}r(x)$ holds for each $x$, and thus 
        \begin{align}
            \max_{\pi\colon \mathrm{KL}(\pi\|\pi_\mathrm{ref})\leq \tau}
      \mathbb{E}_{u\sim \mathcal{D},x\sim \pi}[\hat{u}(x)] \leq \big(2\beta \sigma'(2c)\big)^{-1}\max_{\pi\colon \mathrm{KL}(\pi\|\pi_\mathrm{ref})\leq \tau}
      \mathbb{E}_{x\sim \pi}[r(x)].
      \label{eq:upper-theorem-2}
        \end{align}
    Furthermore, Lemma~\ref{lemma:reward_by_utility} implies that, for each $x$, $
   r(x) =\min\{\bar{r}(x),r_{\max}\}\leq 
   \beta(2\sigma'(233 c))^{-1} 
   \mathbb{E}_{u\sim \mathcal{D}}[u(x)]$, which yields
   \begin{align}
       \max_{\pi\colon \mathrm{KL}(\pi\|\pi_\mathrm{ref})\leq \tau}
      \mathbb{E}_{x\sim \pi}[r(x)]
      =
      \mathbb{E}_{x\sim \pi_{\mathrm{RLHF}}}[r(x)]
      \leq 
      \beta(2\sigma'(233 c))^{-1}  
   \mathbb{E}_{u\sim \mathcal{D},x\sim \pi_{\mathrm{RLHF}}}[u(x)].
   \label{eq:upper-theorem-3}
   \end{align}
   By combining \eqref{eq:upper-theorem-1}, \eqref{eq:upper-theorem-2}, and \eqref{eq:upper-theorem-3}, we obtain the desired bound. 
\end{proof}

\subsection{When the Sampling Distribution Has High Average Utility}\label{subsubsection:RLHF-proof-4}
In this subsection, we derive the distortion in the case where
$\mathbb{P}_{u\sim \mathcal{D}, x\sim \mu}[\beta u(x)>c]\geq c^2$ or $r_{\min}>0$.
Theorem~\ref{lemma:RLHF-proof-4-noclip} establishes a linear bound using the unclipped reward in the case of $\pi_{\mathrm{ref}}=\mu$.
This Theorem~\ref{lemma:RLHF-proof-4-noclip} also serves as the upper bound for the social choice setting, as the KL constraint is effective in the social choice setting. 
For the AI alignment setting, we prove Theorem~\ref{lemma:RLHF-proof-4} for the case $\mathbb{P}_{u\sim \mathcal{D}, x\sim \mu}[\beta u(x)>c]\geq c^2$, and Theorem~\ref{lemma:RLHF-proof-4-2} for the case $r_{\min}>0$.

We begin by presenting two technical lemmas.
\begin{lemma}\label{lemma:set-B-1}
     Define the Borda score of $x\in A$ by ${\sf Borda}(x) = \mathbb{E}_{u\sim \mathcal{D},y\sim \mu}[\sigma(\beta(u(x)-u(y)))]\quad (x=1,\dots,m)$, and let $\mathcal{B}$ be a set of $x$ such that ${\sf Borda}(x) \geq \frac12 - \frac{c^3}{32}$.
     Assume that $c\leq 1$.
    Then, we have that
    \begin{align}
        r(x) \geq r_{\min} + \frac{c^3}{8},
         \label{eq:high-avg-util-10}
    \end{align}
    and
    \begin{align}
        \mu(\mathcal{B}) = \mathbb{P}_{x\sim \mu}[\mathbbm{1}[x\in \mathcal{B}]] \geq \frac{c^3}{c^3+16}.
         \label{eq:high-avg-util-11}
    \end{align}
\end{lemma}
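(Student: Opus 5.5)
The plan is to read both claims off two identities: the first-order optimality condition of the population MLE (Lemma~\ref{lemma:FOC}, eq.~\eqref{eq:FOC-1}), and the defining equation~\eqref{eq:definition-clip-Appendix} of $r_{\min}$. As used in the proofs of Lemmas~\ref{lemma:utility_by_reward} and~\ref{lemma:reward_by_utility}, \eqref{eq:FOC-1} states that for every $x\in A$,
\begin{align}
\mathbb{E}_{y\sim\mu}\big[\sigma(\bar r(x)-\bar r(y))\big] = \mathbb{E}_{u\sim\mathcal{D},y\sim\mu}\big[\sigma(\beta(u(x)-u(y)))\big] = {\sf Borda}(x).
\end{align}

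\textbf{First claim \eqref{eq:high-avg-util-10}.} Fix $x\in\mathcal{B}$. Subtract \eqref{eq:definition-clip-Appendix} from the identity above. This gives
\begin{align}
\mathbb{E}_{y\sim\mu}\big[\sigma(\bar r(x)-\bar r(y))-\sigma(r_{\min}-\bar r(y))\big] \ge \Big(\tfrac12-\tfrac{c^3}{32}\Big)-\Big(\tfrac12-\tfrac{c^3}{16}\Big)=\tfrac{c^3}{32}.
\end{align}
The left side is positive, so $\bar r(x)>r_{\min}$ because $\sigma$ is increasing. Since $\sigma'\le\frac14$, the left side is also at most $\frac14(\bar r(x)-r_{\min})$. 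Hence $\bar r(x)\ge r_{\min}+\frac{c^3}{8}$.

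Now pass to the clipped reward $r(x)=\max\{\min\{\bar r(x),r_{\max}\},r_{\min}\}$. If $\bar r(x)\le r_{\max}$, then $r(x)=\bar r(x)$ and we are done. Otherwise $r(x)=r_{\max}=r_{\min}+2c$. Because $c\le1$ we have $2c\ge c^3/8$, so the bound holds in this case as well.

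\textbf{Second claim \eqref{eq:high-avg-util-11}.} The key is a symmetry identity. Since $x,y\sim\mu$ are i.i.d.\ and $\sigma(t)+\sigma(-t)=1$, we get $\mathbb{E}_{x\sim\mu}[{\sf Borda}(x)]=\frac12$. Write $p=\mu(\mathcal{B})$. Then split the average over $\mathcal{B}$ and its complement. On $\mathcal{B}$ use ${\sf Borda}\le1$. Off $\mathcal{B}$ use ${\sf Borda}<\frac12-\frac{c^3}{32}$. This yields
\begin{align}
\tfrac12 \le p + (1-p)\Big(\tfrac12-\tfrac{c^3}{32}\Big),
\end{align}
that is, $p\big(\frac12+\frac{c^3}{32}\big)\ge\frac{c^3}{32}$. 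Rearranging gives $p\ge \frac{c^3}{c^3+16}$.

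\textbf{Main obstacle.} There is no genuine obstacle; the whole argument is elementary. The one step needing care is the first claim. It relies on \eqref{eq:FOC-1} holding for every $x$, including those with small $\mu(x)$, and this is exactly how the paper uses it in the earlier lemmas. It also needs the clipping case check, which is where the hypothesis $c\le1$ enters.
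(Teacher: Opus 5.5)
Your proposal is correct and follows essentially the same route as the paper: for the first claim, subtract the defining equation of $r_{\min}$ from the first-order condition, use $\sigma'\le\frac14$, and then handle clipping via $r_{\max}=r_{\min}+2c\ge r_{\min}+\frac{c^3}{8}$; for the second, combine the symmetry identity $\mathbb{E}_{x\sim\mu}[{\sf Borda}(x)]=\frac12$ with ${\sf Borda}\le 1$. Your version is slightly more careful than the paper's, since you first establish $\bar r(x)>r_{\min}$ before applying the Lipschitz bound, and you write the averaging inequality explicitly instead of the paper's extremal-case remark.
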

\begin{proof}
For $x\in \mathcal{B}$, we have that
\begin{align}
    \frac14 (\bar{r}(x)-r_{\min})&\geq \mathbb{E}_{y\sim \mu}\big[\sigma(\bar{r}(x)-\bar{r}(y))\big]
    -
    \mathbb{E}_{y\sim \mu}\big[\sigma(r_{\min}-\bar{r}(y))\big]\quad \Big(\because \sigma'(t)\leq \frac14\Big)
   \\ &={\sf Borda}(x)-\Big(\frac12 - \frac{c^3}{16}\Big)\quad (\because \text{ Lemma~\ref{lemma:FOC} and the definition of $r_{\min}$ \eqref{eq:definition-clip}})\\ &\geq \frac12-\frac{c^3}{32}-\Big(\frac12-\frac{c^3}{16}\Big)= \frac{c^3}{32},
     \label{eq:high-avg-util-8}
\end{align}
which implies that $\bar{r}(x)\geq r_{\min} + \frac{c^3}{8}$.
When $c\leq 1$, it holds that $r_{\max} = r_{\min}+2c \geq r_{\min} + \frac{c^3}{8}$.
Therefore, we obtain the first claim
\begin{align}
    r(x)\geq r_{\min} + \frac{c^3}{8}
\end{align}
for $x \in \mathcal{B}$.

    We then prove the second claim. 
    Note that $\mathbb{E}_{x\sim \mu}[{\sf Borda}(x)] =\frac12$ from the symmetry, and ${\sf Borda}(x) \leq 1$.
    Given this, to lower bound $\mu(\mathcal{B})$, it suffices to consider the case where 
$\bar{r}(x)=1$ for all $x\in \mathcal{B}$ and 
$\bar{r}(x)=\frac12-\frac{c^3}{32}$ for all $x\notin \mathcal{B}$.
Therefore, we have that
    \begin{align}
        \mu(\mathcal{B}) \geq \frac{\frac{c^3}{32}}{\frac{1}{2}+\frac{c^3}{32}} = \frac{c^3}{c^3+16}.
    \end{align}
\end{proof}

\begin{lemma}\label{lemma:set-B-2}
    Suppose that 
    $\mathbb{P}_{u\sim \mathcal{D}, x\sim \mu}[\beta u(x)>c]\geq c^2$ with $c\leq 1$.
    Let $\mathcal{B}$ the set defined in Lemma~\ref{lemma:set-B-1}, and $\mathcal{B}'$ be a set of $x$ such that $\mathbb{E}_{u\sim \mathcal{D}}[\min\{\beta u(x), c\}]\geq \frac{c^4}{16}$.
    Then $\mathcal{B}\subseteq \mathcal{B}'$, and $r(x)=r_{\min}$ for all $x\notin \mathcal{B}'$.
\end{lemma}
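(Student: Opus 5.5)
The plan is to prove a single estimate and read off both claims from it. The estimate is
\begin{align}
x\notin\mathcal{B}' \;\Longrightarrow\; {\sf Borda}(x) \le \frac12 - \frac{c^3}{16}. \label{eq:plan-borda}
\end{align}

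\emph{Deducing the two claims from \eqref{eq:plan-borda}.} For the inclusion $\mathcal{B}\subseteq\mathcal{B}'$, take the contrapositive: if $x\notin\mathcal{B}'$, then ${\sf Borda}(x)\le \frac12-\frac{c^3}{16}<\frac12-\frac{c^3}{32}$, so $x\notin\mathcal{B}$. For the clipping claim, Lemma~\ref{lemma:FOC} gives $\mathbb{E}_{y\sim\mu}[\sigma(\bar r(x)-\bar r(y))]={\sf Borda}(x)$. The definition \eqref{eq:definition-clip} gives $\mathbb{E}_{y\sim\mu}[\sigma(r_{\min}-\bar r(y))]=\frac12-\frac{c^3}{16}$. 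The map $z\mapsto \mathbb{E}_{y\sim\mu}[\sigma(z-\bar r(y))]$ is strictly increasing. So \eqref{eq:plan-borda} yields $\bar r(x)\le r_{\min}$, and hence $r(x)=r_{\min}$ after clipping.

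\emph{Proving \eqref{eq:plan-borda}.} Fix $x\notin\mathcal{B}'$ and write $t=\beta u(x)$, $s=\beta u(y)$. Both depend on the same user $u$, so they are correlated; I keep the joint law of $(u,y)$ throughout. By Markov's inequality, the hypothesis $\mathbb{E}[\min\{t,c\}]<c^4/16$ gives $\mathbb{P}_u[t>c]<c^3/16$. I then split on the event $\{t\le c\}$.
\begin{itemize}
\item On $\{t>c\}$, I use the trivial bound $\sigma\le 1$. This contributes at most $\mathbb{P}[t>c]<c^3/16$, and I compare it against $\frac12\mathbb{P}[t>c]$ from the main term.
\item On $\{t\le c\}$, I apply Lemma~\ref{lemma:LinearApproxSigmoid} with $a=c$. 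This gives $\sigma(t-s)\le \frac12+\frac t4-\frac{(1-c)c}{4}\mathbbm{1}[s>c]$, where I lower bound $\min\{s,c\}$ by $c\,\mathbbm{1}[s>c]$.
\end{itemize}
Taking expectations over $(u,y)$ gives
\begin{align}
{\sf Borda}(x)\le \frac12+\frac12\mathbb{P}[t>c]+\frac14\mathbb{E}[\min\{t,c\}]-\frac{(1-c)c}{4}\,\mathbb{P}[t\le c,\ s>c].
\end{align}
The joint probability is bounded by a union bound: $\mathbb{P}[t\le c,\ s>c]\ge \mathbb{P}_{u,y}[s>c]-\mathbb{P}[t>c]\ge c^2-c^3/16$, using the hypothesis $\mathbb{P}_{u\sim\mathcal{D},y\sim\mu}[\beta u(y)>c]\ge c^2$. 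Substituting, the negative term is roughly $-(1-c)c^3/4$. The positive error terms are at most about $c^3/32+c^4/64$. The net bound is then about $\frac12 - \frac{c^3}{4}(1-c) + \frac{c^3}{32}+O(c^4)$, which is below $\frac12-\frac{c^3}{16}$.

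\emph{Main obstacle.} The delicate step is the constant bookkeeping in this final inequality. The margin is comfortable for small $c$, and in the regime $c\le 1/316$ used in Theorem~\ref{theorem:upper-linear-main} there is ample room. The lemma, however, is stated for all $c\le 1$, where the factor $(1-c)$ erodes the margin. If the crude bounds do not close near $c=1$, I would first sharpen the on-event estimate, using $\sigma(t-s)\le\sigma(c-s)$ for $t\le c$ and the exact value of $\sigma(-\cdot)$, rather than the linearization. Failing that, I would restrict the statement to the range of $c$ actually used.
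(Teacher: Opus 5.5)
Your skeleton matches the paper's. Both claims follow from ${\sf Borda}(x)\le\frac12-\frac{c^3}{16}$ for $x\notin\mathcal{B}'$ via Lemma~\ref{lemma:FOC}, the defining equation of $r_{\min}$, and strict monotonicity, exactly as you argue.

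Only the proof of that estimate differs. The paper applies Lemma~\ref{lemma:LinearApproxSigmoid} with $a=\frac12$ to every pair $(t,s)$, with no case split. It then uses $\frac14\min\{t,4\}\le c^{-1}\min\{t,c\}$ and $\min\{s,\frac12\}\ge\min\{s,c\}\ge c\,\mathbbm{1}[s>c]$, which gives $c^{-1}\cdot\frac{c^4}{16}-\frac{c^3}{8}=-\frac{c^3}{16}$ in one line. That bound is a function of $t$ plus a function of $s$, so the coupling through the shared $u$ never enters, and no Markov step or union bound is needed. Your split on $\{t\le c\}$ is what creates the coupling. Your union bound handles it correctly, and your constants come out somewhat sharper.

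On your stated obstacle, neither argument literally covers all $c\le1$. Your displayed bound closes exactly when $c^2-18c+10\ge0$, i.e.\ $c\le9-\sqrt{71}\approx0.57$. The paper's step $\min\{s,\frac12\}\ge\min\{s,c\}$ tacitly needs $c\le\frac12$. Both do cover the regime $c\le\frac1{316}$ where the lemma is used.

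To get the full range, do not use $\sigma(t-s)\le\sigma(c-s)$. It gains nothing when $s$ is just above $c$, and it costs order $c$ where $s\approx0$. Instead, bound $\sigma(t-s)\le\sigma(-s)+\frac t4$ on $\{t\le c\}$, and use $\sigma(-s)\le\sigma(-c)$ on $\{s>c\}$. This turns your negative term into $\bigl(\sigma(c)-\frac12\bigr)\bigl(c^2-\frac{c^3}{16}\bigr)$. Since $\sigma(c)-\frac12=\frac12\tanh(c/2)\ge0.23\,c$ on $(0,1]$, this exceeds the required $\frac{3c^3}{32}+\frac{c^4}{64}$ for every $c\le1$.
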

\begin{proof}
    For $x\notin \mathcal{B}'$, we have
    \begin{align}
       {\sf Borda}(x)-\frac12 &= \mathbb{E}_{u\sim \mathcal{D},y\sim \mu}\Big[\sigma(\beta(u(x)-u(y)))\Big]-\frac12 \quad (\because \text{ Lemma~\ref{lemma:FOC}})
  \\  & \leq  \mathbb{E}_{u\sim \mathcal{D},y\sim \mu}\Big[\frac14 \min\{\beta u(x), 4\} -\frac{1}{8}\min\Big\{\beta u(y), \frac12\Big\}\Big]
  \quad \Big(\because \text{ Lemma~\ref{lemma:LinearApproxSigmoid} with $a=\frac12$}\Big)
    \\ &  \leq c^{-1}\mathbb{E}_{u\sim \mathcal{D}}[\min\{\beta u(x), c\}]
      - \frac18 \mathbb{E}_{u\sim \mathcal{D},y\sim \mu}[\min\{\beta u(y), c\}]
      \\ & \leq c^{-1} \cdot \frac{c^4}{16} - \frac{1}{8}\cdot c^3 = - \frac{c^3}{16},
      \label{eq:high-avg-util-1}
    \end{align}
    For the final inequality, we used $\mathbb{E}_{u\sim \mathcal{D}}[\min\{\beta u(x), c\}]< \frac{c^4}{16}$ for $x\notin \mathcal{B}'$ from the definition of $\mathcal{B}'$, and $\mathbb{E}_{u\sim \mathcal{D},y\sim \mu}[\min\{\beta u(y), c\}]\geq c^3$ from the assumption that $\mathbb{P}_{u\sim \mathcal{D}, x\sim \mu}[\beta u(x)>c]\geq c^2$.
    Because $\mathcal{B}$ is the set of $x$ satisfying ${\sf Borda}(x) \geq \frac12 - \frac{c^3}{32}$, we have  $\mathcal{B}\subseteq \mathcal{B}'$. 

Furthermore, 
\begin{align}
    \mathbb{E}_{y\sim \mu}[\sigma(\bar{r}(x)-\bar{r}(y))]
    &={\sf Borda}(x) \quad (\because \text{ Lemma~\ref{lemma:FOC}})
    \\ & \leq \frac12 - \frac{c^3}{16}\quad
    (\because \text{ \eqref{eq:high-avg-util-1}})
    \\ &= \mathbb{E}_{y\sim \mu}[\sigma(r_{\min}-\bar{r}(y))] .\quad (\because \text{ the definition of $r_{\min}$ \eqref{eq:definition-clip}})
\end{align}
The monotonicity of $\sigma$ implies that $\bar{r}(x)\leq r_{\min}$.
Since $\bar{r}(x)$ smaller than $r_{\min}$ is clipped, we have $r(x)= r_{\min}$ for all $x\notin \mathcal{B}'$.
\end{proof}
Based on the above technical lemmas, we establish a linear distortion bound for the case $\pi_{\mathrm{ref}} = \mu$ with unclipped rewards, under $\mathbb{P}_{u\sim \mathcal{D}, x\sim \mu}[\beta u(x)>c]\geq c^2$. 
Since the social choice setting does not impose a KL constraint, the corresponding result for the social choice setting is obtained by taking the limit $\tau \to \infty$ in this bound.
\begin{theorem}\label{lemma:RLHF-proof-4-noclip}
    Suppose that 
    $\mathbb{P}_{u\sim \mathcal{D}, x\sim \mu}[\beta u(x)>c]\geq c^2$ with $0<c\leq 1$  
    and $\pi_{\mathrm{ref}} = \mu$. 
    Let $\bar{r}(x)$ be the maximizer of the population log-likelihood \eqref{eq:Setting-LossFunc-pop}, and use $\bar{r}$ as the reward without reward clipping \eqref{eq:setting-reward-clipping}.
    Thus, the RLHF distribution under the unclipped reward is defined as
    \begin{align}
        \pi_{\mathrm{RLHF}} = \arg\max_{\pi\colon \mathrm{KL}(\pi\|\mu)\leq \tau}\mathbb{E}_{u\sim \mathcal{D},x\sim \pi}[\bar{r}(x)].
    \end{align}

    Then, the distortion of this distribution $\pi_{\mathrm{RLHF}}$ is bounded by
    \begin{align}
       \frac{16(c^3+16)\beta}{c^7}.
    \end{align}
\end{theorem}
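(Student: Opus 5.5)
Since the data distribution itself already has large average utility, the plan is to show that $\pi_{\mathrm{RLHF}}$ inherits a constant fraction of it. We then bound the numerator trivially by $1$.

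\textbf{Step 1 (lower bound on the denominator).} Let $\mathcal{B}$ and $\mathcal{B}'$ be the sets of Lemmas~\ref{lemma:set-B-1} and~\ref{lemma:set-B-2}. Since $\mathcal{B}\subseteq\mathcal{B}'$, every $x\in\mathcal{B}$ has $\mathbb{E}_u[u(x)]\ge \beta^{-1}\mathbb{E}_u[\min\{\beta u(x),c\}]\ge \frac{c^4}{16\beta}$. Moreover $\mu(\mathcal{B})\ge \frac{c^3}{c^3+16}$. So it suffices to show that $\pi_{\mathrm{RLHF}}(\mathcal{B})\ge \mu(\mathcal{B})$, or at least a constant fraction of it. Then
\begin{align}
\mathbb{E}_{u,x\sim\pi_{\mathrm{RLHF}}}[u(x)]\ge \frac{c^3}{c^3+16}\cdot\frac{c^4}{16\beta},
\end{align}
and since the numerator is at most $1$, the distortion is at most $\frac{16(c^3+16)\beta}{c^7}$, exactly the claimed constant. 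This suggests the intended argument is precisely $\pi_{\mathrm{RLHF}}(\mathcal{B})\ge\mu(\mathcal{B})$ with no loss.

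\textbf{Step 2 (RLHF does not decrease mass on high-reward alternatives).} The solution of $\max_{\pi:\mathrm{KL}(\pi\|\mu)\le\tau}\mathbb{E}_{\pi}[\bar r]$ is a Gibbs tilt $\pi_{\mathrm{RLHF}}(x)\propto \mu(x)e^{\eta\bar r(x)}$ for some $\eta\in[0,\infty]$. In the degenerate case $\eta=\infty$, it is uniform-on-argmax restricted to $\mu$. Exponential tilting by a function is monotone in the following sense. There is a threshold $t^*$ (namely $\log\mathbb{E}_\mu[e^{\eta\bar r}]/\eta$) such that the density ratio satisfies $\pi_{\mathrm{RLHF}}/\mu\ge1$ exactly when $\bar r(x)\ge t^*$. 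Hence for any upper set $S=\{\bar r\ge t\}$ we have $\pi_{\mathrm{RLHF}}(S)\ge\mu(S)$. This is the standard fact that tilting stochastically increases the distribution of $\bar r$.

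The set $\mathcal{B}$ is an upper set in $\bar r$. Indeed, by Lemma~\ref{lemma:FOC}, ${\sf Borda}(x)=\mathbb{E}_{y\sim\mu}[\sigma(\bar r(x)-\bar r(y))]$, which is strictly increasing in $\bar r(x)$. So $\mathcal{B}=\{x:\bar r(x)\ge t\}$ for some $t$ (this is also Lemma~\ref{lemma:Borda_order_match}). Therefore $\pi_{\mathrm{RLHF}}(\mathcal{B})\ge\mu(\mathcal{B})$.

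If the maximizer is not unique (e.g.\ ties at the top or $\tau$ large), one still argues via the KL-constrained optimality. Any feasible $\pi$ can be replaced by its tilt without lowering the reward. More directly, the maximizer set for large $\tau$ concentrates on $\arg\max\bar r\subseteq\mathcal{B}$, giving mass $1\ge\mu(\mathcal{B})$. The social-choice case is the limit $\tau\to\infty$, where $\pi_{\mathrm{Borda}}$ is supported on $\arg\max\bar r\subseteq\mathcal{B}$, and $\mathcal{B}$ is nonempty because $\mu(\mathcal{B})>0$.

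\textbf{Main obstacle.} The delicate point is Step 2 in full generality. We must justify the Gibbs form, or the stochastic-dominance claim, for an arbitrary (possibly non-unique) maximizer of a linear objective over a KL ball. I would handle this by a direct exchange argument. Suppose $\pi(\mathcal{B})<\mu(\mathcal{B})$ at a maximizer. Consider the convex combination with the tilt $\pi_\eta$ that attains $\mathrm{KL}\le\tau$ and the same or greater reward. Strict concavity of $\pi\mapsto-\mathrm{KL}(\pi\|\mu)$ shows the maximizer on the boundary is unique unless the constraint is inactive. If the constraint is inactive, the maximizer sits on $\arg\max\bar r\subseteq\mathcal{B}$. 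Either way the conclusion $\pi_{\mathrm{RLHF}}(\mathcal{B})\ge\mu(\mathcal{B})$ holds, and combining with Step 1 finishes the proof.
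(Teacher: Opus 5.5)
Your proof is correct, and its skeleton is the paper's: reduce to $\pi_{\mathrm{RLHF}}(\mathcal{B})\ge\mu(\mathcal{B})$, then combine $\mu(\mathcal{B})\ge c^3/(c^3+16)$ from Lemma~\ref{lemma:set-B-1}, $\mathcal{B}\subseteq\mathcal{B}'$ from Lemma~\ref{lemma:set-B-2}, and the trivial numerator bound, which gives exactly the stated constant.

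Where you differ is the justification of the mass inequality.
\begin{itemize}
\item The paper argues pointwise. It invokes Lemma~\ref{lemma:set-B-2} to say the alternatives outside the good set sit at the clipping floor $r_{\min}$, so each of them individually cannot gain mass under RLHF.
\item You argue in aggregate. By the Borda/reward order correspondence of Lemma~\ref{lemma:FOC}, $\mathcal{B}$ is an upper level set of $\bar r$. The KL-constrained maximizer is either supported on $\arg\max\bar r\subseteq\mathcal{B}$ (slack constraint) or is the unique exponential tilt of $\mu$. That tilt has density ratio nondecreasing in $\bar r$, so every upper set gains mass.
\end{itemize}

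The paper's route is shorter and reuses the clipping machinery shared with Theorem~\ref{lemma:RLHF-proof-4}. However, it is really an argument about the clipped reward. For the unclipped $\bar r$ in this theorem, Lemma~\ref{lemma:set-B-2} only gives $\bar r(x)\le r_{\min}$ for $x\notin\mathcal{B}'$. That neither covers all of $\mathcal{B}^c$ nor makes those alternatives the lowest-reward ones.

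In fact the pointwise claim can fail. Take deterministic utilities $(1,0,0.95)$ with $\mu\approx(0.9,0.1,0)$ and large $\beta$. The third alternative lies outside $\mathcal{B}$, yet its reward $0.95\beta$ exceeds $\mathbb{E}_\mu[\bar r]\approx 0.9\beta$. So for small $\tau$ the tilt increases its mass.

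Your stochastic-dominance argument proves exactly the aggregate inequality that is needed, without this issue. Your slack/active dichotomy also handles non-uniqueness of the maximizer correctly.
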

\begin{proof}

    According to Lemma~\ref{lemma:set-B-2}, for $x\notin \mathcal{B}$, we have $r_{\min}=r(x)$. 
    This implies that $\pi_{\mathrm{RLHF}}(x)\leq \pi_{\mathrm{ref}}(x)$ for all $x\notin \mathcal{B}$, as $\pi_{\mathrm{RLHF}}$ maximizes $\mathbb{E}[\bar{r}(x)]$.
    From this, we obtain $1-\pi_{\mathrm{RLHF}}(\mathcal{B})\leq 1-\pi_{\mathrm{ref}}(\mathcal{B})$, implying $ \pi_{\mathrm{RLHF}}(\mathcal{B})\geq \pi_{\mathrm{ref}}(\mathcal{B})$.
    By using this, Lemma~\ref{lemma:set-B-1}, and $\mathcal{B} \subseteq \mathcal{B}'$ from Lemma~\ref{lemma:set-B-2}, we have that
    \begin{align}
        \mathbb{P}_{x\sim \pi_{\mathrm{RLHF}}}[x\in \mathcal{B}']
        &\geq \mathbb{P}_{x\sim \pi_{\mathrm{ref}}}[x\in \mathcal{B}'] 
        \geq \mathbb{P}_{x\sim \pi_{\mathrm{ref}}}[x\in \mathcal{B}]
        \geq \frac{c^3}{c^3+16}.
    \end{align}
    
    Also, $x\in \mathcal{B}'$ satisfies
  $\mathbb{E}_{u\sim \mathcal{D}}[ u(x)] \geq \beta^{-1}\mathbb{E}_{u\sim \mathcal{D}}[\min\{\beta u(x),c\}] \geq \frac{c^4}{16\beta}$.
    Therefore, 
    \begin{align}
        \mathbb{E}_{x\sim \pi_{\mathrm{RLHF}}}[u(x)]
        \geq \frac{c^4}{16\beta}\mathbb{P}_{x\sim \pi_{\mathrm{RLHF}}}[x\in \mathcal{B}']
       \geq \frac{c^4}{16\beta} \cdot \frac{c^3}{c^3+16} = \frac{c^7}{16(c^3+16)\beta}.
    \end{align}
    Because $\mathbb{E}_{x\sim \pi}[u(x)]$ is at most $1$, the distortion is bounded by $ \frac{16(c^3+16)\beta}{c^7}$.
\end{proof}
Next, as complements to Theorem~\ref{theorem:RLHF-upper-representative}, 
we present Theorem~\ref{lemma:RLHF-proof-4} and Theorem~\ref{lemma:RLHF-proof-4-2}. 
As explained at the beginning of this subsection, 
Theorem~\ref{lemma:RLHF-proof-4} covers the case 
$\mathbb{P}_{u\sim \mathcal{D},\, x\sim \mu}[\beta u(x) > c] > c^2$, 
while Theorem~\ref{lemma:RLHF-proof-4-2} covers the case 
$\mathbb{P}_{u\sim \mathcal{D},\, x\sim \mu}[\beta u(x) > c] \le c^2$ with $r_{\min} > 0$.
\begin{theorem}\label{lemma:RLHF-proof-4}
    Suppose that 
$\mathbb{P}_{u\sim \mathcal{D},\, x\sim \mu}[\beta u(x) > c] \ge c^2$ 
with $0 < c \le \frac{1}{316}$, and suppose also that Assumption~\ref{assumption:B_boundedness} holds. 
Let $\pi_{\mathrm{RLHF}}$ be defined as in Theorem~\ref{theorem:RLHF-upper-representative}. 
Then, the distortion of $\pi_{\mathrm{RLHF}}$ is bounded by
    \begin{align}
        \frac{5120(c^3+16)}{c^{9}}\min\Big\{e^B\tau,B,\frac{B}{\tau}\Big\}\beta + 4.
    \end{align}
\end{theorem}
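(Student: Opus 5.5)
The plan is to follow the proof of Theorem~\ref{lemma:RLHF-proof-4-noclip}: lower bound the mass that $\pi_{\mathrm{RLHF}}$ puts on the set $\mathcal{B}'$ of Lemma~\ref{lemma:set-B-2}, on which utility is $\Omega(\beta^{-1})$. The new difficulty is that $\pi_{\mathrm{ref}}\neq\mu$, so this mass cannot be compared to $\mu(\mathcal{B})$ directly. The mismatch will be paid for in two complementary ways, and the numerator will be controlled separately for small $\tau$.

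\textbf{Step 1 (utility on $\mathcal{B}'$).} By Lemma~\ref{lemma:set-B-2}, $\mathcal{B}\subseteq\mathcal{B}'$ and $r(x)=r_{\min}$ for $x\notin\mathcal{B}'$. By Lemma~\ref{lemma:set-B-1}, $r\ge r_{\min}+c^3/8$ on $\mathcal{B}$ and $\mu(\mathcal{B})\ge c^3/(c^3+16)$. As in the no-clipping proof, every $x\in\mathcal{B}'$ has $\mathbb{E}_u[u(x)]\ge c^4/(16\beta)$. Hence
\[
\mathbb{E}_{u,x\sim\pi_{\mathrm{RLHF}}}[u(x)]\ge \frac{c^4}{16\beta}\,\pi_{\mathrm{RLHF}}(\mathcal{B}').
\]

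\textbf{Step 2 (two lower bounds on $\pi_{\mathrm{RLHF}}(\mathcal{B}')$).}
\emph{First bound.} The KL-constrained maximizer is an exponential tilt of $\pi_{\mathrm{ref}}$ (or a point mass on the reward maximizers). Alternatives attaining the minimal reward $r_{\min}$ therefore cannot gain mass, so $\pi_{\mathrm{RLHF}}(x)\le\pi_{\mathrm{ref}}(x)$ off $\mathcal{B}'$. This gives
\[
\pi_{\mathrm{RLHF}}(\mathcal{B}')\ge\pi_{\mathrm{ref}}(\mathcal{B})\ge e^{-B}\mu(\mathcal{B}).
\]
\emph{Second bound.} Take the interpolant $\pi'=(1-\lambda)\pi_{\mathrm{ref}}+\lambda\mu$ with $\lambda=\min\{1,B^{-1}\tau\}$, which is feasible by Lemma~\ref{lemma:interpolation-mu-and-base}. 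Optimality of $\pi_{\mathrm{RLHF}}$ together with $r\le r_{\max}=r_{\min}+2c$ gives
\[
2c\,\pi_{\mathrm{RLHF}}(\mathcal{B}')\ge\mathbb{E}_{\pi_{\mathrm{RLHF}}}[r]-r_{\min}\ge\mathbb{E}_{\pi'}[r]-r_{\min}\ge\tfrac{c^3}{8}\lambda\mu(\mathcal{B}).
\]
Combining the two bounds, $\pi_{\mathrm{RLHF}}(\mathcal{B}')\gtrsim c^5\max\{\lambda,e^{-B}\}$. Since the numerator is at most $1$, this already yields
\[
{\sf Dist}\lesssim c^{-9}\beta\,\min\{\max\{1,B/\tau\},e^B\},
\]
which covers the $B/\tau$ regime.

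\textbf{Step 3 (numerator for small $\tau$).} Repeat the Lemma~\ref{lemma:Pinsker-to-Linear} computation from case (ii) of Lemma~\ref{lemma:u-to-uhat}, but with $u\in[0,1]$ in place of $uI_1$. This gives
\[
\mathbb{E}_{\pi^*}[u]-4\mathbb{E}_{\pi_{\mathrm{RLHF}}}[u]\le20\tau.
\]
Dividing by the Step 1--2 lower bound on $\mathbb{E}_{\pi_{\mathrm{RLHF}}}[u]$ gives
\[
{\sf Dist}\le4+O(c^{-9})\,\beta\,\tau\min\{\lambda^{-1},e^B\}=4+O(c^{-9})\,\beta\min\{B,e^B\tau\}
\]
when $\tau\le B$. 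Taking the better of Steps 2 and 3 in each regime of $\tau$ gives the claimed $\min\{e^B\tau,B,B/\tau\}$ factor, with $c^{-9}(c^3+16)$-type constants that one then checks fit under $5120(c^3+16)/c^9$.

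\textbf{Main obstacle.} The delicate points are the monotonicity claim in Step 2 and the large-$\tau$ regime. For the former, one must verify that off $\mathcal{B}'$ the reward equals the global minimum of $r$, so the KL-constrained optimum never upweights it; this holds because of clipping at $r_{\min}$. For the latter, when $\tau\ge B$ the target is $(B/\tau)\beta$, which can be below the $\beta$ that Step 2 delivers with $\lambda=1$. I would first try to absorb this regime using the Step 3 bound, and otherwise treat it as the same constant-factor slack that the statement's $\min$ implicitly allows.
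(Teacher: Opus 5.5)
Your Steps 1--3 follow the paper's proof: mass on the high-reward set via the interpolant $(1-\lambda)\pi_{\mathrm{ref}}+\lambda\mu$ together with the cap $r\le r_{\min}+2c$; the bound $\mathbb{E}_{\pi^*}[u]-4\mathbb{E}_{\pi_{\mathrm{RLHF}}}[u]\le 20\tau$ from Lemma~\ref{lemma:Pinsker-to-Linear}; and $e^{-B}$ replacing $\lambda$ for tiny $\tau$. Your small departures are all improvements:
\begin{itemize}
\item The tilt/monotonicity derivation of $\pi_{\mathrm{RLHF}}(\mathcal{B}')\ge e^{-B}\mu(\mathcal{B})$ is valid. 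It uses that $r_{\min}$ is the global minimum of $r$ and is attained off $\mathcal{B}'$, that $r$ is nonconstant because $\mathcal{B}\neq\emptyset$, and that in the saturated regime every maximizer is supported on $\arg\max r$. It is slightly sharper than the paper's reward comparison.
\item Working with $\mathcal{B}'$ is the correct bookkeeping, since Lemma~\ref{lemma:set-B-2} gives $r=r_{\min}$ only off $\mathcal{B}'$, not off $\mathcal{B}$.
\item Restricting Step 3 to $\tau\le B$ is more careful than the paper's case (ii), whose $\lambda=\tau/B$ exceeds $1$ when $B<\tau<1$.
\end{itemize}

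The one thing to change is your handling of $\tau>B$. There is no ``implicit slack'' in the $\min$: in that regime the stated bound is false, so no argument can close it.

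Take $B=0$ ($\mu=\pi_{\mathrm{ref}}$), where the claim reads ${\sf Dist}\le 4$. Use three alternatives with deterministic utilities $u=(1,\,10c/\beta,\,0)$ and $\mu=(\epsilon,\,c^2,\,1-c^2-\epsilon)$ with $\epsilon\le c^3/\beta$. Then:
\begin{itemize}
\item $\mathbb{P}_{u,x\sim\mu}[\beta u(x)>c]\ge c^2$, and $\bar r=\beta u$.
\item The defining equation for $r_{\min}$ gives $r_{\min}=O(c^3)$, so $r_{\max}\le 2c+O(c^3)<10c$. Hence alternatives $1$ and $2$ tie at $r_{\max}$, while alternative $3$ gets strictly less.
\item For $\tau\ge\log(1/\epsilon)$ the optimum is $1$, attained by $\delta_1$.
\item $\mu(\cdot\mid\{1,2\})$ is an admissible RLHF maximizer, with KL equal to $\log\frac{1}{\epsilon+c^2}\le\tau$. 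Its utility is at most $\epsilon/c^2+10c/\beta\le 11c/\beta$.
\end{itemize}
So ${\sf Dist}\ge\beta/(11c)\ge 28$. In fact ties are not needed: at $\tau=1$ the unique tilted maximizer also gives distortion of order $\beta/c$.

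The paper's own case (i) only yields $\frac{256(c^3+16)}{c^9}\max\{B/\tau,1\}\beta$, and its closing summary silently drops the $\max\{\cdot,1\}$. What is actually provable, by the paper and by your argument (Step 2 with $\lambda=1$ gives $O(c^{-9})\beta$ when $\tau>B$), is the bound with $\min\{e^B\tau,B,B/\tau\}+1$ in place of $\min\{e^B\tau,B,B/\tau\}$. That is exactly the form Theorem~\ref{theorem:upper-linear-main} uses. State and prove that version rather than appealing to slack.
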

\begin{proof}

The proof proceeds by a case analysis on the value of $\tau$, following the proof of Lemma~\ref{lemma:u-to-uhat}.

\noindent
{\bf (i) When $\tau\geq1$.}\quad
Consider the probability measure $\pi'=(1-\lambda)\pi_{\mathrm{ref}}+\lambda \mu$ with $\lambda = \min\{B^{-1}\tau,1\}$.
Because $\mathrm{d}\pi' \geq \lambda \mathrm{d}\mu$ holds, we have that
\begin{align}
    \!\!\!\!\!\mathbb{E}_{x\sim \pi'}[r(x)]& \geq \lambda \cdot \pi'(\mathcal{B})\big(\min_{x\in \mathcal{B}}r(x)-r_{\min}\big) + r_{\min}
    \\ & \geq \lambda \cdot \mu(\mathcal{B})\big(\min_{x\in \mathcal{B}}r(x)-r_{\min}\big) + r_{\min}
   \\ & \geq \lambda \cdot \frac{c^3}{c^3+16} \cdot \frac{c^3}{8} + r_{\min},
    \label{eq:high-avg-util-5}
\end{align}
where we used \eqref{eq:high-avg-util-10} and \eqref{eq:high-avg-util-11} from Lemma~\ref{lemma:set-B-1} for the final inequality.
Therefore, 
\begin{align}
    \max_{\mathrm{KL}(\pi\|\pi_{\mathrm{ref}})\leq \tau}\mathbb{E}_{x\sim \pi}[r(x)] \geq  \frac{c^6\lambda}{8(c^3+16)} + r_{\min}.
    \label{eq:high-avg-util-12}
\end{align}
Because $\mathbb{E}_{x\sim \pi}[r(x)]\leq r_{\max}=r_{\min}+2c$ and $r(x)=r_{\min}$ for $x\notin \mathcal{B}$ from Lemma~\ref{lemma:set-B-2}, \eqref{eq:high-avg-util-12} implies that  $\pi_{\mathrm{RLHF}}(\mathcal{B}) \geq \frac{c^5\lambda}{16(c^3+16)}$. 

For $x\in \mathcal{B}$, since $\mathcal{B}\subseteq \mathcal{B}'$ from Lemma~\ref{lemma:set-B-2}, the average utility is at least 
\begin{align}
    \mathbb{E}_{u \sim \mathcal{D}}[u(x)] \geq \beta^{-1}\mathbb{E}_{u \sim \mathcal{D}}[\min\{\beta u(x),c\}] \geq\frac{c^4}{16} \beta^{-1}.
    \label{eq:high-avg-util-6}
\end{align}
By using this, the average utility with respect to $\pi_{\mathrm{RLHF}}$ is
\begin{align}
    \mathbb{E}_{u \sim \mathcal{D},x\sim \pi_{\mathrm{RLHF}}}[u(x)] &\geq \pi_{\mathrm{RLHF}}(\mathcal{B})\times \frac{c^4}{16} \beta^{-1} \geq \frac{c^{9}}{256(c^3+16)}\lambda\beta^{-1}
     \label{eq:high-avg-util-2}
    \\ & = \frac{c^{9}}{256(c^3+16)}\min\Big\{\frac{\tau}{B},1\Big\}\beta^{-1}.
\end{align}
Because the maximum average utility is at most $1$, the distortion is bounded by $\frac{256(c^3+16)}{c^{9}}\max\big\{\frac{B}{\tau},1\big\}\beta$.

\noindent
{\bf (ii) When $Be^{-B}\leq \tau<1$.}\quad
Similarly to \eqref{eq:u-to-uhat-6} in the proof of Lemma~\ref{lemma:u-to-uhat}, Lemma~\ref{lemma:Pinsker-to-Linear} implies that
    \begin{align}
        \mathbb{E}_{u\sim \mathcal{D}, x\sim \pi^*}[u(x)] - 4\mathbb{E}_{u\sim \mathcal{D}, x\sim \pi_{\mathrm{RLHF}}}[u(x)]
        \leq 20 \tau.
        \label{eq:high-avg-util-3}
    \end{align}
    Here $\pi^*$ is the distribution that maximizes the average utility.
    
Consider the probability measure $\pi'=(1-\lambda)\pi_{\mathrm{ref}}+\lambda \mu$ with $\lambda = B^{-1}\tau<1$.
By following the argument for (i) $\tau\geq 1$ until \eqref{eq:high-avg-util-2}, we have that
\begin{align}
    \mathbb{E}_{u \sim \mathcal{D},x\sim \pi_{\mathrm{RLHF}}}[u(x)] \geq \frac{c^{9}}{256(c^3+16)}\lambda\beta^{-1} = \frac{c^{9}}{256(c^3+16)}\times \frac{\tau}{B}\beta^{-1}.
    \label{eq:high-avg-util-4}
\end{align}
Combining \eqref{eq:high-avg-util-3} and \eqref{eq:high-avg-util-4}, we obtain that
\begin{align}
    \mathbb{E}_{u\sim \mathcal{D}, x\sim \pi^*}[u(x)] 
    \leq 
    \Big(\frac{5120(c^3+16)}{c^{9}}B\beta+4\Big)\mathbb{E}_{u \sim \mathcal{D},x\sim \pi_{\mathrm{RLHF}}}[u(x)]
\end{align}
Therefore, the distortion is bounded by $\frac{5120(c^3+16)}{c^{9}}B\beta +4$.

\noindent
{\bf (iii) When $\tau <Be^{-B}$.}\quad
For a probability measure $\pi'=(1-\lambda)\pi_{\mathrm{ref}}+\lambda \mu$ with $0\leq \lambda \leq 1$, we also have $\mathrm{d}\pi'\geq e^{-B}\mathrm{d}\pi_{\mathrm{ref}}$ from the assumption.
Therefore, \eqref{eq:high-avg-util-12} holds with $\lambda$ replaced by $e^{-B}$, and
\begin{align}
     \max_{\pi\colon\mathrm{KL}(\pi\|\pi_{\mathrm{ref}})\leq \tau}\mathbb{E}_{x\sim \pi}[r(x)] \geq  \frac{c^6e^{-B}}{8(c^3+16)} + r_{\min}.
     \label{eq:high-avg-util-9}
\end{align}
By following the subsequent argument from \eqref{eq:high-avg-util-12}, \eqref{eq:high-avg-util-2} is also modified to
\begin{align}
    \mathbb{E}_{u \sim \mathcal{D},x\sim \pi_{\mathrm{RLHF}}}[u(x)] \geq
    \frac{c^{9}}{256(c^3+16)}e^{-B}\beta^{-1}.
    \label{eq:high-avg-util-7} 
\end{align}
By combining \eqref{eq:high-avg-util-3} and \eqref{eq:high-avg-util-7}, we have that
\begin{align}
        \mathbb{E}_{u\sim \mathcal{D}, x\sim \pi^*}[u(x)] \leq  \Big(\frac{5120(c^3+16)}{c^{9}}e^{B}\beta \tau + 4\Big)\mathbb{E}_{u\sim \mathcal{D}, x\sim \pi_{\mathrm{RLHF}}}[u(x)]
        .
    \end{align}
Therefore, the distortion is bounded by $\frac{5120(c^3+16)}{c^{9}}e^{B}\beta \tau + 4$.

Summarizing the three cases, the distortion is bounded by $\frac{5120(c^3+16)}{c^{9}}\min\big\{e^B\tau,B,\frac{B}{\tau}\big\}\beta + 4$ as desired.
\end{proof}

\begin{theorem}\label{lemma:RLHF-proof-4-2}
    Suppose that 
$\mathbb{P}_{u\sim \mathcal{D},\, x\sim \mu}[\beta u(x) > c] \leq c^2$ 
with $0 < c \le \frac{1}{316}$, $r_{\min}> 0$, and Assumption~\ref{assumption:B_boundedness} holds. 
Let $\pi_{\mathrm{RLHF}}$ be defined as in Theorem~\ref{theorem:RLHF-upper-representative}. 
Then, the distortion of $\pi_{\mathrm{RLHF}}$ is bounded by
    \begin{align}
        \frac{80(c^3+16)}{c^6\sigma'(233 c)}\min\Big\{e^B\tau,B,\frac{B}{\tau}\Big\}\beta + 4.
    \end{align}
\end{theorem}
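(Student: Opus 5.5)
The plan is to reuse the three-case structure of Theorem~\ref{lemma:RLHF-proof-4}, with one change: instead of lower bounding the utility on $\mathcal{B}$ via $\mathcal{B}'$ (which needed $\mathbb{P}[\beta u>c]\ge c^2$), we convert the excess clipped reward $r(x)-r_{\min}$ directly into utility through Lemma~\ref{lemma:reward_by_utility}. That lemma is available because we are in the regime $\mathbb{P}_{u,x\sim\mu}[\beta u(x)>c]\le c^2$ with $c\le\frac1{316}$.

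\emph{Step 1: pointwise bound $r(x)-r_{\min}\le \frac{\beta}{2\sigma'(233c)}\mathbb{E}_{u}[u(x)]$ for all $x$.} There are two cases.
\begin{itemize}
\item If $\bar r(x)<r_{\min}$, then $r(x)=r_{\min}$ and the left side is $0$.
\item If $\bar r(x)\ge r_{\min}$, then $r(x)=\min\{\bar r(x),r_{\max}\}$. Since $r_{\min}>0$, we get $r(x)-r_{\min}\le \min\{\bar r(x),r_{\max}\}$, and Lemma~\ref{lemma:reward_by_utility} gives the claim.
\end{itemize}
This is exactly where the hypothesis $r_{\min}>0$ enters. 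Integrating against $\pi_{\mathrm{RLHF}}$ yields
\begin{align}
\mathbb{E}_{u,x\sim\pi_{\mathrm{RLHF}}}[u(x)]\ge \frac{2\sigma'(233c)}{\beta}\Big(\max_{\pi\colon\mathrm{KL}(\pi\|\pi_{\mathrm{ref}})\le\tau}\mathbb{E}_{x\sim\pi}[r(x)]-r_{\min}\Big),
\end{align}
using that $\pi_{\mathrm{RLHF}}$ maximizes the clipped reward over the KL ball.

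\emph{Step 2: lower bound the reward gain.} Take $\pi'=(1-\lambda)\pi_{\mathrm{ref}}+\lambda\mu$. By Lemma~\ref{lemma:interpolation-mu-and-base}, $\pi'$ lies in the KL ball when $\lambda=\min\{B^{-1}\tau,1\}$. Since $r\ge r_{\min}$ everywhere, Lemma~\ref{lemma:set-B-1} gives
\begin{align}
\mathbb{E}_{\pi'}[r]-r_{\min}\ge \lambda\,\mu(\mathcal{B})\,\frac{c^3}{8}\ge \frac{\lambda c^6}{8(c^3+16)}.
\end{align}
In the regime $\tau<Be^{-B}$ we also have $\mathrm{d}\pi'\ge e^{-B}\mathrm{d}\mu$ for every $\lambda\in[0,1]$, so $\lambda$ may be replaced by $e^{-B}$, as in Theorem~\ref{lemma:RLHF-proof-4}(iii). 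Combining with Step 1,
\begin{align}
\mathbb{E}_{u,x\sim\pi_{\mathrm{RLHF}}}[u(x)]\ge \frac{\sigma'(233c)\,c^6\,\lambda}{4(c^3+16)\beta}.
\end{align}

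\emph{Step 3: case split on $\tau$.}
\begin{itemize}
\item \emph{Case $\tau\ge1$.} The optimum utility is at most $1$, so the distortion is at most $\frac{4(c^3+16)}{c^6\sigma'(233c)}\max\{B/\tau,1\}\beta$.
\item \emph{Cases $Be^{-B}\le\tau<1$ and $\tau<Be^{-B}$.} Reuse the bound \eqref{eq:u-to-uhat-6}, obtained from Lemma~\ref{lemma:Pinsker-to-Linear}:
\begin{align}
\mathbb{E}_{\pi^*}[u]-4\mathbb{E}_{\pi_{\mathrm{RLHF}}}[u]\le 20\tau.
\end{align}
Dividing $20\tau$ by the Step~2 lower bound gives a ratio of $\frac{80(c^3+16)}{c^6\sigma'(233c)}\cdot\frac{\tau}{\lambda}\,\beta$. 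With $\lambda=B^{-1}\tau$ this is $B$, and with $\lambda=e^{-B}$ it is $e^B\tau$, which produces the stated constant.
\end{itemize}
Taking the minimum across cases gives $\min\{e^B\tau,B,B/\tau\}$ plus the additive $4$.

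The only delicate point is Step~1: verifying that clipping interacts correctly with the sign of $r_{\min}$, so that the excess reward, rather than the reward itself, is what gets compared to utility. The rest is bookkeeping copied from Theorem~\ref{lemma:RLHF-proof-4}.
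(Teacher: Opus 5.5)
Your proof follows the paper's argument essentially step for step. It uses the same split into $\tau\ge1$, $Be^{-B}\le\tau<1$, $\tau<Be^{-B}$, and the same interpolant $\pi'=(1-\lambda)\pi_{\mathrm{ref}}+\lambda\mu$ fed into Lemma~\ref{lemma:set-B-1}. Like the paper, you use $r_{\min}>0$ to get $r(x)-r_{\min}\le\min\{\bar r(x),r_{\max}\}$ and hence \eqref{eq:high-avg-util-13} via Lemma~\ref{lemma:reward_by_utility}, and you appeal to \eqref{eq:high-avg-util-3} for $\tau<1$.

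The one step that does not go through as written, and which the paper's proof shares, is your final ``take the minimum across cases.''
\begin{itemize}
\item For $\tau\ge1$ you derived $\max\{B/\tau,1\}$, which is not bounded by $\min\{e^B\tau,B,B/\tau\}=B/\tau$ once $B<\tau$.
\item In the middle regime, $\lambda=B^{-1}\tau$ is a valid mixing weight only when $\tau\le B$. Otherwise $\lambda=1$ and you get $\tau$ rather than $B$.
\end{itemize}

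What both arguments actually prove is the bound with $\min\{e^B\tau,B,B/\tau\}+1$ in place of the minimum. This is the form appearing in Theorem~\ref{theorem:upper-linear-main}.

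The version without the $+1$ is false in general: at $B=0$ its right-hand side is just $4$. For a counterexample, take $\mu=\pi_{\mathrm{ref}}$ concentrated on an alternative with deterministic utility $c/(2\beta)$. This forces $r_{\min}\approx c/2>0$ and keeps the tail condition. Add two rare alternatives with mean utilities $\frac12$ and $3c/\beta$, both clipped to $r_{\max}$. A large $\tau$ then allows $\pi_{\mathrm{RLHF}}$ to sit on the latter, giving distortion $\beta/(6c)>4$.
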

\begin{proof}
    We follow the proof of Theorem~\ref{lemma:RLHF-proof-4}, reusing its intermediate results as needed.

    \noindent
    {\bf (i) When $\tau\geq1$.}\quad
    Consider the intermediate distribution $\pi'=(1-\lambda)\pi_{\mathrm{ref}}+\lambda \mu$ with $\lambda = \min\big\{\frac{\tau}{B},1\big\}$.
    From \eqref{eq:high-avg-util-12}, 
    \begin{align}
    \mathbb{E}_{x\sim \pi_{\mathrm{RLHF}}}[r(x)] = \max_{\pi\colon\mathrm{KL}(\pi\|\pi_{\mathrm{ref}})\leq \tau}\mathbb{E}_{x\sim \pi}[r(x)] \geq  \frac{c^6\lambda}{8(c^3+16)} + r_{\min}.
    \end{align}
    According to Lemma~\ref{lemma:reward_by_utility}, we have that
    \begin{align}
    \frac12\beta(\sigma'(233 c))^{-1} 
   \mathbb{E}_{u\sim \mathcal{D}}[u(x)]
   \geq 
   \mathbb{E}_{x\sim \pi_{\mathrm{RLHF}}}[\min\{\bar{r}(x),r_{\max}\}] 
    \geq \mathbb{E}_{x\sim \pi_{\mathrm{RLHF}}}[r(x)]- r_{\min}.
    \label{eq:high-avg-util-13}
    \end{align}
    By combining these two, we obtain that
    \begin{align}
   \frac{c^6\lambda}{8(c^3+16)}\leq 
   \frac12\beta(\sigma'(233 c))^{-1} 
   \mathbb{E}_{u\sim \mathcal{D},x\sim\pi_{\mathrm{RLHF}}}[u(x)].
   \label{eq:RLHF-proof-4-1}
    \end{align}
    Therefore, the distortion is bounded by
    \begin{align}
        \frac{8(c^3+16)}{c^6\lambda}\cdot \frac12\beta(\sigma'(233 c))^{-1} 
        =\frac{4(c^3+16)}{c^6\sigma'(233 c)}\max\Big\{\frac{B}{\tau},1\Big\} \beta.
    \end{align}

    \noindent
    {\bf (ii) When $Be^{-B}\leq \tau<1$.}\quad
    Let $\lambda = B^{-1}\tau<1$ instead of $\lambda = \min\big\{\frac{\tau}{B},1\big\}$.
    Eq.~\eqref{eq:RLHF-proof-4-1} still holds despite this modification.
    Combining that with \eqref{eq:high-avg-util-3}, we have
    \begin{align}
        \mathbb{E}_{u\sim \mathcal{D}, x\sim \pi^*}[u(x)] - 4\mathbb{E}_{u\sim \mathcal{D}, x\sim \pi_{\mathrm{RLHF}}}[u(x)]
        \leq 20 \tau
       \leq \frac{80(c^3+16)}{c^6\sigma'(233 c)}\beta B \times \mathbb{E}_{u\sim \mathcal{D}, x\sim \pi_{\mathrm{RLHF}}}[u(x)],
    \end{align}
    which implies that the distortion is bounded by
    \begin{align}
        \frac{80(c^3+16)}{c^6\sigma'(233 c)}\beta B + 4.
    \end{align}

    \noindent
    {\bf (iii) When $\tau<Be^{-B}$.}\quad    
    Recalling \eqref{eq:high-avg-util-9}, we have
    \begin{align}
         \max_{\pi\colon \mathrm{KL}(\pi\|\pi_{\mathrm{ref}})\leq \tau}\mathbb{E}_{x\sim \pi}[r(x)] \geq  \frac{c^6e^{-B}}{8(c^3+16)} + r_{\min}.
    \end{align}
   Combining this with the result \eqref{eq:high-avg-util-13} of Lemma~\ref{lemma:reward_by_utility}, 
    \begin{align}
        \frac12\beta(\sigma'(233 c))^{-1} 
   \mathbb{E}_{u\sim \mathcal{D}}[u(x)] \geq \frac{c^6e^{-B}}{8(c^3+16)}.
    \end{align}
    Combining this with \eqref{eq:high-avg-util-3}, we have that
    \begin{align}
        \mathbb{E}_{u\sim \mathcal{D}, x\sim \pi^*}[u(x)] - 4\mathbb{E}_{u\sim \mathcal{D}, x\sim \pi_{\mathrm{RLHF}}}[u(x)]
        \leq 20 \tau
       \leq \frac{80(c^3+16)}{c^6\sigma'(233 c)}\beta e^B \tau \mathbb{E}_{u\sim \mathcal{D}, x\sim \pi_{\mathrm{RLHF}}}[u(x)],
    \end{align}
    which implies that the distortion is bounded by
    \begin{align}
        \frac{80(c^3+16)}{c^6\sigma'(233 c)}\beta e^B\tau + 4.
    \end{align}
\end{proof}

\subsection{Proof of Corollary~\ref{corollary:finetuning}}\label{subsection:Appendix-finetuning}
Since $\log \frac{\mu(x)}{\pi_{\mathrm{ref}}(x)} = \log \frac{\mu(x)}{(1-e^{-\lambda}) \pi_{\mathrm{base}}(x) + e^{-\lambda} \mu(x)} \leq \log \lambda$ for all $x\in A$, according to Theorem~\ref{theorem:upper-linear-main} and the definition of distortion, we have
\begin{align}
    \frac{\max_{\pi\colon \mathrm{KL}(\pi \|\pi_{\mathrm{ref}})\leq \tau} \mathbb{E}_{u\sim \mathcal{D},x\sim \pi}[u(x)]}{\mathbb{E}_{u\sim \mathcal{D},x\sim \pi_{\mathrm{RLHF}}}[u(x)]}
\leq \beta C_2(\lambda+1)+4.
\end{align}

Let $\pi^*$ denote a maximizer of $\max_{\pi \colon \mathrm{KL}(\pi \| \pi_{\mathrm{base}}) \leq \tau} \mathbb{E}_{u, x \sim \pi}[u(x)]$.
To replace $\max_{\pi\colon \mathrm{KL}(\pi \|\pi_{\mathrm{ref}})\leq \tau} $ $\mathbb{E}_{u\sim \mathcal{D},x\sim \pi}[u(x)]$ by $\max_{\pi\colon \mathrm{KL}(\pi \|\pi_{\mathrm{base}})\leq \tau} \mathbb{E}_{u\sim \mathcal{D},x\sim \pi}[u(x)]$, 
we consider $\pi' = (1-e^{-\lambda}) \pi^* + e^{-\lambda} \mu$, and we have that
\begin{align}
\max_{\pi \colon \mathrm{KL}(\pi \| \pi_{\mathrm{base}}) \leq \tau} \mathbb{E}_{u, x \sim \pi}[u(x)]=\mathbb{E}_{u\sim \mathcal{D},x\sim \pi^*}[u(x)]\leq \frac{1}{1-e^{-\lambda}}
    \mathbb{E}_{u\sim \mathcal{D},x\sim \pi'}[u(x)].
\end{align}
Also, according to Lemma~\ref{lemma:joint-convexity} ($P=\pi^*, Q=\pi_{\mathrm{base}},R=\mu, \lambda = e^{-\lambda}$), $\pi'$ satisfies $\mathrm{KL}(\pi'\|\pi_{\mathrm{ref}})\leq (1-e^{-\lambda})\tau \leq \tau$, which implies that
\begin{align}
    \mathbb{E}_{u\sim \mathcal{D},x\sim \pi'}[u(x)] \leq \max_{\pi\colon\! \mathrm{KL}(\pi \|\pi_{\mathrm{ref}})\leq \tau} \mathbb{E}_{u\sim \mathcal{D},x\sim \pi}[u(x)].
\end{align}

Combining these three bounds above yields
\begin{align}
    \frac{\max_{\pi \colon \mathrm{KL}(\pi \| \pi_{\mathrm{base}}) \leq \tau} \mathbb{E}_{u, x \sim \pi}[u(x)]}{\mathbb{E}_{u\sim \mathcal{D},x\sim \pi_{\mathrm{RLHF}}}[u(x)]}
    \leq \frac{\beta C_2( \lambda+1)+4}{1-e^{-\lambda}},
\end{align}
which concludes the proof. \hfill $\square$

\subsection{Auxiliary Lemmas}
\subsubsection{First-order Optimality}\label{subsubsection:RLHF-proof-6}
In our upper-bound analysis, we frequently use the following first-order condition to connect the reward and the average utility. 
The right-hand side of \eqref{eq:FOC-1} corresponds to the so-called (expected) Borda score.
While a similar relationship between the reward and the Borda score has already appeared in prior work \citep{siththaranjan2023distributional,rajkumar2014statistical}, 
we formalize it here as a lemma in a form convenient for our analysis and provide a complete proof.
\begin{lemma}\label{lemma:FOC}
    Let $\bar{r}(x)$ be the maximizer of the population log-likelihood
    \begin{align}\label{eq:MLE-2}
    \mathcal{L}(\tilde{r}) = \mathbb{E}_{u\sim\mathcal{D},x,y\sim \mu}\big[\sigma (u(x)-u(y))\log (\sigma(\tilde{r}(x) - \tilde{r}(y))) \big]
    \end{align}
    such that
    \begin{align}
        \mathbb{E}_{x\sim \mu}\big[\sigma(-\bar{r}(x))\big]= \mathbb{E}_{u\sim \mathcal{D},x\sim \mu}\big[\sigma(\beta(-u(x)))\big].
        \label{eq:reward-zero-2}
    \end{align}
    Then, for each $x = 1, \dots, m$, it holds that
    \begin{align}\label{eq:FOC-1}
         \mathbb{E}_{y\sim \mu}\big[\sigma(\bar{r}(x)-\bar{r}(y))\big]= \mathbb{E}_{u\sim \mathcal{D},y\sim \mu}\big[\sigma(\beta(u(x)-u(y)))\big].
    \end{align}
\end{lemma}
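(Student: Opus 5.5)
The plan is to derive \eqref{eq:FOC-1} as the stationarity condition of the population log-likelihood at its maximizer. (I read the statement's \eqref{eq:MLE-2} as having $\beta$ inside the first sigmoid, as in \eqref{eq:Setting-LossFunc-pop}.) Along the way I need to check that a maximizer exists, and that the normalization \eqref{eq:reward-zero-2} can be imposed, since $\mathcal{L}$ is invariant under constant shifts. First I rewrite the objective in symmetrized form. Write $P(x,y)=\mathbb{E}_{u\sim\mathcal{D}}[\sigma(\beta(u(x)-u(y)))]$, so that $P(x,y)+P(y,x)=1$. Then
\begin{align}
\mathcal{L}(\tilde r)=\sum_{x,y}\mu(x)\mu(y)\,P(x,y)\log\sigma(\tilde r(x)-\tilde r(y)).
\end{align}
Each summand is concave in $\tilde r$, because $\log\sigma$ is concave and $\tilde r(x)-\tilde r(y)$ is linear in $\tilde r$. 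Hence every stationary point is a global maximizer. Conversely, any maximizer in $\mathbb{R}^m$ is stationary, so it suffices to compute the gradient.

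Next I differentiate with respect to $\tilde r(x)$, using $\frac{d}{dt}\log\sigma(t)=\sigma(-t)=1-\sigma(t)$. The variable $\tilde r(x)$ enters the terms $(x,y)$ with sign $+$ and the terms $(y,x)$ with sign $-$. Collecting these contributions gives
\begin{align}
\partial_{\tilde r(x)}\mathcal{L}=\mu(x)\sum_y\mu(y)\big[P(x,y)\sigma(\tilde r(y)-\tilde r(x))-P(y,x)\sigma(\tilde r(x)-\tilde r(y))\big].
\end{align}
Now substitute $P(y,x)=1-P(x,y)$ and $\sigma(\tilde r(y)-\tilde r(x))=1-\sigma(\tilde r(x)-\tilde r(y))$. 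The bracket simplifies to $P(x,y)-\sigma(\tilde r(x)-\tilde r(y))$. Since $\mu(x)>0$, setting the derivative to zero yields exactly \eqref{eq:FOC-1} for every $x$.

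The main obstacle is \textbf{existence of a finite maximizer}, which the paper implicitly assumes. My approach is as follows. Because $u\in[0,1]^m$, we have $P(x,y)\in[\sigma(-\beta),\sigma(\beta)]$, which is bounded away from $0$ and $1$. Also $\mu$ has full support, so every pair $(x,y)$ carries positive weight. If $\tilde r(x)-\tilde r(y)\to-\infty$ for some pair, the term $\mu(x)\mu(y)P(x,y)\log\sigma(\tilde r(x)-\tilde r(y))$ tends to $-\infty$, while all other terms are nonpositive. Therefore the superlevel sets of $\mathcal{L}$, restricted to the subspace orthogonal to the all-ones vector, are bounded. With continuity, this gives a maximizer, which is unique up to a shift by strict concavity in the differences.

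Finally, I fix the additive constant. The map $t\mapsto\mathbb{E}_{x\sim\mu}[\sigma(-\bar r(x)-t)]$ is continuous and strictly decreasing, and it ranges over $(0,1)$. The right-hand side of \eqref{eq:reward-zero-2} lies in $(0,1)$, so a unique shift achieves the normalization. Such a shift leaves every difference $\bar r(x)-\bar r(y)$ unchanged, so \eqref{eq:FOC-1} continues to hold.
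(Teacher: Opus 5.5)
Your proposal is correct and follows the paper's proof. Like the paper, you differentiate the population log-likelihood in $\tilde r(z)$ and use $\sigma(-t)=1-\sigma(t)$ together with $P(y,x)=1-P(x,y)$ to collapse the gradient to $\mu(z)\bigl(\mathbb{E}_{u,y}[\sigma(\beta(u(z)-u(y)))]-\mathbb{E}_y[\sigma(\tilde r(z)-\tilde r(y))]\bigr)$, then invoke $\mu(z)>0$. Your arguments for existence of a finite maximizer (coercivity on the complement of the all-ones direction) and for achieving the normalization by a shift are sound; the paper simply takes both for granted. Your reading of $\beta$ inside the first sigmoid matches what the paper actually differentiates.
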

\begin{proof}
    Note that $\frac{\mathrm{d}}{\mathrm{d}t}\log \sigma(t) = 1-\sigma(t)$ and $1-\sigma(t)=\sigma(-t)$.
    By differentiating \eqref{eq:MLE-2} with $r(z)\ (z=1,\dots,m)$, we have that
    \begin{align}
        & \frac{\mathrm{d}}{\mathrm{d}r(z)}\mathbb{E}_{u\sim \mathcal{D},x,y\sim \mu}\Big[\sigma(\beta(u(x)-u(y)))\log (\sigma(r(x) - r(y)))\Big]
        \\ & = \mathbb{E}_{u\sim \mathcal{D},x,y\sim \mu}\big[\sigma(\beta(u(x)-u(y)))\sigma(r(y) - r(x))(\mathbbm{1}[x=z]-\mathbbm{1}[y=z])\big]
        \\ & = \mu(z)\mathbb{E}_{u\sim \mathcal{D},y\sim \mu}\big[\sigma(\beta(u(z)-u(y)))\sigma(r(y) - r(z))\big] 
       \\ & \quad -\mu(z)\mathbb{E}_{u\sim \mathcal{D},x\sim \mu}\big[\sigma(\beta(u(x)-u(z)))\sigma(r(z) - r(x))\big] 
        \\ & =\mu(z)\mathbb{E}_{u\sim \mathcal{D},y\sim \mu}\big[\sigma(\beta(u(z)-u(y)))\big(1-\sigma(r(z) - r(y))\big)\big] 
       \\ & \quad -\mu(z)\mathbb{E}_{u\sim \mathcal{D},y\sim \mu}\big[\big(1-\sigma(\beta(u(z)-u(y)))\big)\sigma(r(z) - r(y))\big] 
        \\ & = \mu(z)\Big(\mathbb{E}_{u\sim \mathcal{D},y\sim \mu}\big[\sigma(\beta(u(z)-u(y)))\big] - \mathbb{E}_{u\sim \mathcal{D},y\sim \mu}\big[\sigma(r(z) - r(y))\big]\Big).
    \end{align}
    Because 
    $\mu(z)>0$, $\frac{\mathrm{d}\mathcal{L}(r)}{\mathrm{d}r(z)}\big|_{r=\bar{r}}=0$ implies \eqref{eq:FOC-1} for all $x=1,\dots,m$.
\end{proof}
\begin{remark}[Interpretation of the condition \eqref{eq:reward-zero-2}]\label{remark:virtualalternative}
    Since \eqref{eq:FOC-1} holds for any alternative, \eqref{eq:reward-zero-2} sets $\bar{r}(x)=0$ for an alternative $x$ whose utility is always $u(x)=0$. Even when no such alternative exists, this condition can be interpreted as introducing a virtual alternative $x$ with infinitesimal sampling probability $\mu(x)$ and setting $\bar{r}(x)=0$.
\end{remark}
The first-order optimality condition implies that the ordering of the Borda scores coincides with that of the estimated rewards.
\begin{lemma}[Theorem 3.1 of \citet{siththaranjan2023distributional}]\label{lemma:Borda_order_match}
For any $x,x'\in A$, we have
\begin{align}
    \mathbb{E}_{u\sim \mathcal{D},y\sim \mu}\big[\sigma(\beta(u(x)-u(y)))\big]
    >
    \mathbb{E}_{u\sim \mathcal{D},y\sim \mu}\big[\sigma(\beta(u(x')-u(y)))\big],
\end{align}
if and only if $\bar{r}(x)>\bar{r}(x')$.
\end{lemma}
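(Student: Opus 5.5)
The plan is to reduce the claim to the first-order optimality condition of Lemma~\ref{lemma:FOC} and then use strict monotonicity of the sigmoid. By \eqref{eq:FOC-1}, for every alternative $x\in A$ we have
\begin{align}
    \mathbb{E}_{u\sim \mathcal{D},y\sim \mu}\big[\sigma(\beta(u(x)-u(y)))\big] = g(\bar{r}(x)),
    \qquad
    g(t)\coloneqq \mathbb{E}_{y\sim \mu}\big[\sigma(t-\bar{r}(y))\big].
\end{align}
The key point is that the function $g$ is the same for all $x$: it depends only on $\mu$ and on the fixed vector $\bar{r}$, not on the alternative being scored. So the Borda score of $x$ is a single fixed function of the scalar $\bar{r}(x)$.

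Next I show that $g$ is strictly increasing on $\mathbb{R}$. For $t<t'$ we have $\sigma(t-\bar{r}(y))<\sigma(t'-\bar{r}(y))$ for every $y$, because $\sigma'>0$ everywhere. Since $A$ is finite and $\mu(y)>0$ for all $y$, we get
\begin{align}
    g(t')-g(t)=\sum_{y}\mu(y)\big(\sigma(t'-\bar{r}(y))-\sigma(t-\bar{r}(y))\big)>0 .
\end{align}
In fact strictness needs only that $\mu$ is a probability measure; full support is not required.

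Both directions of the equivalence then follow at once. If $\bar{r}(x)>\bar{r}(x')$, then $g(\bar{r}(x))>g(\bar{r}(x'))$, which is the strict Borda inequality. Conversely, if $\bar{r}(x)\le\bar{r}(x')$, monotonicity gives $g(\bar{r}(x))\le g(\bar{r}(x'))$, so the Borda inequality fails. This is the contrapositive of the other direction.

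The only point needing care is that Lemma~\ref{lemma:FOC} applies: $\bar{r}$ must be a stationary point of the population log-likelihood \eqref{eq:Setting-LossFunc-pop}. I take its existence and the normalization \eqref{eq:reward-zero} as given in the setting, as the paper does. Stationarity then holds because $\bar{r}$ is an interior maximizer of a differentiable function on $\mathbb{R}^m$. The normalization only fixes the additive constant and does not affect differences $\bar{r}(x)-\bar{r}(y)$, so it plays no role here. I do not expect any step to be a real obstacle, since the lemma is essentially immediate once \eqref{eq:FOC-1} is available.
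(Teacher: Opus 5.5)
Your proposal is correct and follows the paper's proof essentially verbatim. Both arguments use the first-order condition \eqref{eq:FOC-1} to rewrite each Borda score as $\mathbb{E}_{y\sim\mu}[\sigma(\bar{r}(x)-\bar{r}(y))]$ and then conclude from the strict monotonicity of $\sigma$; naming the common function $g$ and checking both directions just makes explicit the paper's one-line ``if and only if.''
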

\begin{proof}
    According to the optimality condition \eqref{eq:FOC-1}, \begin{align}
\mathbb{E}_{u\sim \mathcal{D},y\sim \mu}\big[\sigma(\beta(u(x)-u(y)))\big]
>
\mathbb{E}_{u\sim \mathcal{D},y\sim \mu}\big[\sigma(\beta(u(x')-u(y)))\big]
\end{align}
holds if and only if
\begin{align}
\mathbb{E}_{y\sim \mu}\big[\sigma(\bar{r}(x)-\bar{r}(y))\big]
>
\mathbb{E}_{y\sim \mu}\big[\sigma(\bar{r}(x')-\bar{r}(y))\big].
\end{align}
Since the sigmoid function is strictly increasing, the latter inequality holds
if and only if $\bar{r}(x)>\bar{r}(x')$.
\end{proof}

\subsubsection{Proof of Technical Lemmas}\label{subsubsection:auxiliary}
Below we provide the proofs of the technical lemmas. For clarity, we restate each lemma before presenting its proof.
\InterpolationMuAndBase*
\begin{proof}
    Let $f = \frac{\mathrm{d}\pi_2}{\mathrm{d}\pi_1}$ so $\log f \leq B$ and 
    \begin{align}
        \mathrm{KL}(\pi_2\|\pi_1) = \int f \log f \mathrm{d}\pi_1 = \int \log f \mathrm{d}\pi_2 \leq B.
    \end{align}
    Then, since $\frac{\mathrm{d}\pi_\lambda}{\mathrm{d}\pi_1} = (1 - \lambda) + \lambda f$ and $\phi(t) = t \log t$ is convex for $t > 0$ with $\phi(1) = 0$, 
    \begin{align}
        \mathrm{KL}(\pi_\lambda\|\pi_1) \leq (1 - \lambda) \phi(1) + \lambda \int \phi(f) \mathrm{d}\pi_1 = \lambda \mathrm{KL}(\pi_2\|\pi_1) \leq \lambda B \leq \tau.
    \end{align}
\end{proof}

\PinskerToLinear*

\begin{proof}
    Rewrite the KL divergence as 
    \begin{align}
        \mathrm{KL}(\pi'\|\pi) = \int f \log f \mathrm{d}\pi
         = \int \phi(f)\mathrm{d}\pi,
    \end{align}
    where $f = \frac{\mathrm{d}\pi'}{\mathrm{d}\pi}$ and $\phi(f) = f\log f - f + 1$.

    \noindent
    {\bf (i) Restriction to $A_-$.}\quad
    The function $\phi(f)$ is nonnegative and decreasing on $(0,1]$. Hence, on $A_-$, $\phi(f) \geq \phi(\frac12)$, and
    \begin{align}
        \mathrm{KL}(\pi'\|\pi) \geq \int_{A_-}\phi(f) \mathrm{d}\pi
        \geq \phi\Big(\frac12\Big) \pi(A_-).
         \label{eq:Pinsker-to-Linear-2}
    \end{align}
    Moreover, since $f\leq \frac12$ on $A_-$, we have that
    \begin{align}
       \int_{A_-}|\mathrm{d}\pi' - \mathrm{d}\pi| = 
       \int_{A_-}|f-1|\mathrm{d}\pi =
       \int_{A_-}(1-f)\mathrm{d}\pi \leq \pi(A_-).
        \label{eq:Pinsker-to-Linear-3}
    \end{align}
    Combining \eqref{eq:Pinsker-to-Linear-2} and \eqref{eq:Pinsker-to-Linear-3} gives
    \begin{align}
        \int_{A_-}|\mathrm{d}\pi' - \mathrm{d}\pi| \leq \phi\Big(\frac12\Big)^{-1}  \mathrm{KL}(\pi'\|\pi)
        = 2(\log(e/2))^{-1}\mathrm{KL}(\pi'\|\pi),
    \end{align}
    which yields \eqref{eq:Pinsker-to-Linear-1}.

    \noindent
    {\bf (ii) Restriction to $A_+$.}\quad
    Since $\phi(f)$ is nonnegative and $\phi(f) \geq \frac{\phi(2)}{2}f = \frac{\log (4/e)}{2}f$ holds on $[2,\infty)$, we have that
    \begin{align}
        \mathrm{KL}(\pi'\|\pi) \geq \int_{A_+}\phi(f)\mathrm{d}\pi
        \geq \int_{A_+}\frac{\log (4/e)}{2}f\mathrm{d}\pi
        \geq \frac{\log (4/e)}{2}\int_{A_+}(f-1)\mathrm{d}\pi.
         \label{eq:Pinsker-to-Linear-5}
    \end{align}
   Moreover, since $f\geq 2$ on $A_+$, we obtain
    \begin{align}
       \int_{A_+}|\mathrm{d}\pi' - \mathrm{d}\pi| = 
       \int_{A_+}|f-1|\mathrm{d}\pi =
       \int_{A_+}(f-1)\mathrm{d}\pi.
        \label{eq:Pinsker-to-Linear-6}
    \end{align}
    Combining \eqref{eq:Pinsker-to-Linear-5} and \eqref{eq:Pinsker-to-Linear-6} and dividing by 2 gives
    \begin{align}
       \frac12 \int_{A_+}|\mathrm{d}\pi' - \mathrm{d}\pi| \leq \frac{1}{\log (4/e)}\mathrm{KL}(\pi'\|\pi),
    \end{align}
    which yields \eqref{eq:Pinsker-to-Linear-4} and thus completes the proof.
\end{proof}
\LowerBoundDiffSigmoid*
\begin{proof}
    {\bf (i) When $t\le b$.}\quad 
    By the mean value theorem, there exists some $p \in [-s, t-s]$ such that
    \begin{align}
    \sigma(t-s) - \sigma(-s)
    = \sigma'(p) t = \sigma'(p)\min\{t,b\}.
    \end{align}
    Since $t\le b$ and $t-s\ge -a$, we have $-a-b \le -a \le t-s \le b \le a+b$, and hence $p \in [-a-b, a+b]$.
    Moreover, since $\sigma'(t)$ is decreasing on $t\geq 0$, $\sigma'(p)\ge \sigma'(a+b)$, and the claim follows.

    \noindent
    {\bf (ii) When $t> b$.}\quad 
    By assumption, the length of the interval $[-s, t-s]$ has length $t > b$. We show that it contains a subinterval of length at least $b$ lying inside $[-a-b, a+b]$. Since $t-s \ge -a$, its right endpoint is larger than $-a-b$ by at least $b$. Since the left endpoint $-s<0$, the left endpoint is smaller than $a+b$ by at least $b$. Therefore, the intersection of the intervals $[-s, t-s]$ and $[-a-b, a+b]$, which is $[\max\{-s, -a-b\},\min\{t-s,a+b\}]$, has length at least $b$.
    Therefore, by the mean value theorem, there exists some $p \in [-a-b, a+b]$ such that
    \begin{align}
    \sigma(t-s) - \sigma(-s)
    &\geq \sigma(\min\{t-s,a+b\}) - \sigma(\max\{-s, -a-b\})
    \\ &= \sigma'(p) (\min\{t-s,a+b\}-\max\{-s, -a-b\}) 
    \\ &\geq \sigma'(p) b.
    \end{align}
    Because $p \in [-a-b,a+b]$, $\sigma'(p)\ge \sigma'(a+b)$, which completes the proof.
\end{proof}

\LinearApproxSigmoid*
\begin{proof}
We first prove the left-hand inequality. 
When $s>4$, the left-hand side of \eqref{eq:LinearApproxSigmoid-2} can be bounded as
    $\frac12 + \frac{1-a}{4}\min\{t,a\}-\frac14\min\{s,4\}< \frac12 + \frac{(1-a)a}{4} - \frac14 \cdot 4 \leq \frac12 + \frac{1}{16} - 1 < 0$, while $\sigma(t-s)$ is always positive, so the left-hand inequality of \eqref{eq:LinearApproxSigmoid-2} is true.
Therefore, we focus on the case $s\leq 4$ in the following.

Recall that $\sigma(0) = \frac12$ and $\sigma'(w) = \frac{e^{-w}}{(1+e^{-w})^2}$. 
For any $w\ge 0$, since $1-w\le e^{-w}$, we have
$1-w\le (1+w)e^{-w}$, and hence
$\frac{1-e^{-w}}{1+e^{-w}}\le w$.
Combining this with $\frac{1-e^{-w}}{1+e^{-w}}\in[0,1]$, we further obtain
$(\frac{1-e^{-w}}{1+e^{-w}})^2 \le w$.
Substituting this bound, we obtain
\begin{align}\label{eq:Sigmoid-Derivative-Bound}
    \sigma'(w) = \frac{e^{-w}}{(1+e^{-w})^2} = \frac14 \biggl(1 - \biggl(\frac{1-e^{-w}}{1+e^{-w}}\biggr)^2\biggr) \geq \frac14 (1-w).
\end{align}

\noindent
{\bf (i) When $t-s\geq 0$.}\quad 
    Letting $p=\min\{t-s,a\}\ge0$, we have
    \[
        \sigma(p) = \sigma(0) + \int_0^{p}\sigma'(z)\mathrm{d}z
        = \frac12 + \int_0^p\sigma'(z)\mathrm{d}z
        \ge \frac12 + p\sigma'(p)
        \ge \frac12 + p\sigma'(a),
    \]
    where we used the fact that $\sigma'(z)$ is decreasing for $z\ge0$, which implies that $\sigma'(z)\geq \sigma'(p) \geq \sigma'(a)$ when $z\leq p\leq a$.
    By using \eqref{eq:Sigmoid-Derivative-Bound} with $w=a$, we continue as follows:
    \begin{align}
        \sigma(p) &\geq \frac12 + \sigma'(a)\min\{t-s,a\} \\
        &\geq \frac12 + \frac{1-a}{4}\min\{t-s,a\} \\
        &\geq \frac12 + \frac{1-a}{4}\min\{t,a\} - \frac{1-a}{4}s \\
        &\geq \frac12 + \frac{1-a}{4}\min\{t,a\} - \frac{s}{4},
        \label{eq:LinearApproxSigmoid-1}
    \end{align}
    where we used $\min\{t-s,a\}\ge\min\{t,a\}-s$ in the third inequality.
    When $s\leq 4$, we have $\frac{s}{4}=\frac14\min\{s,4\}$ and \eqref{eq:LinearApproxSigmoid-1} immediately yields the left-hand inequality of \eqref{eq:LinearApproxSigmoid-2}.
    
\noindent
{\bf (ii) When $t-s< 0$.}\quad 
    As $\sigma'(w)\leq \frac14$ for all $w \in \mathbb{R}$, 
    \begin{align}
        \sigma(t-s) \geq \sigma(0) + \frac14(t-s)= \frac12 + \frac{1}{4}(t-s)
        \geq \frac12 + \frac{1-a}{4}\min\{t,a\} - \frac{1}{4}s,
        \label{eq:LinearApproxSigmoid-3}
    \end{align}
    where we used $t \geq (1-a)\min\{t,a\}$ for the second inequality.
    As above, when $s\leq 4$, we have $\frac{s}{4}=\frac14\min\{s,4\}$ and \eqref{eq:LinearApproxSigmoid-3} immediately yields the left-hand inequality of \eqref{eq:LinearApproxSigmoid-2}.

    Finally, the right-hand inequality of \eqref{eq:LinearApproxSigmoid-2} is obtained by applying the left-hand inequality of \eqref{eq:LinearApproxSigmoid-2} to $\sigma(s-t)$ and using the identity $\sigma(t-s)=1-\sigma(s-t)$:
    \begin{align}
        \sigma(t-s) &= 1 - \sigma(s-t) \\&\leq 1 - \Big(\frac12 + \frac{1-a}{4}\min\{s,a\} - \frac14\min\{t,4\}\Big)
        \\ &= \frac12 - \frac14\min\{t,4\}+ \frac{1-a}{4}\min\{s,a\}.
    \end{align}
\end{proof}

\begin{lemma}\label{lemma:joint-convexity}
Let $P,Q,R$ be probability measures. For $\lambda \in [0,1]$, define
\[
P_\lambda := (1-\lambda)P + \lambda R,
\qquad
Q_\lambda := (1-\lambda)Q + \lambda R.
\]
Then
\[
\mathrm{KL}(P_\lambda \| Q_\lambda)
\;\le\;
(1-\lambda)\,\mathrm{KL}(P\|Q).
\]
\end{lemma}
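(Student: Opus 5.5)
We will prove the inequality by viewing $\mathrm{KL}(P\|Q)$ as an $f$-divergence, $\int \phi\big(\tfrac{\mathrm{d}P}{\mathrm{d}Q}\big)\,\mathrm{d}Q$ with $\phi(t)=t\log t$, and invoking joint convexity of the KL divergence. No structure of the specific measures is used beyond the fact that $R$ appears in both mixtures.

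The key step is the joint convexity of $(P,Q)\mapsto \mathrm{KL}(P\|Q)$. We derive it from the log-sum inequality applied pointwise. Fix a common dominating measure $\nu$ for $P,Q,R$, with densities $p,q,\rho$. Then
\[
\mathrm{KL}(P_\lambda\|Q_\lambda)=\int \big((1-\lambda)p+\lambda\rho\big)\log\frac{(1-\lambda)p+\lambda\rho}{(1-\lambda)q+\lambda\rho}\,\mathrm{d}\nu .
\]
The log-sum inequality, $(a_1+a_2)\log\frac{a_1+a_2}{b_1+b_2}\le a_1\log\frac{a_1}{b_1}+a_2\log\frac{a_2}{b_2}$, is applied with $a_1=(1-\lambda)p$, $b_1=(1-\lambda)q$, $a_2=b_2=\lambda\rho$. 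It bounds the integrand pointwise by
\[
(1-\lambda)p\log\frac{p}{q}+\lambda\rho\log 1=(1-\lambda)p\log\frac{p}{q}.
\]
Integrating gives $\mathrm{KL}(P_\lambda\|Q_\lambda)\le (1-\lambda)\mathrm{KL}(P\|Q)$.

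The remaining work is bookkeeping for degenerate cases:
\begin{itemize}
\item If $\lambda\in\{0,1\}$, the claim is trivial: equality when $\lambda=0$, and $0\le 0$ when $\lambda=1$.
\item If $P\not\ll Q$, the right-hand side is $+\infty$ (for $\lambda<1$), so there is nothing to prove.
\item On points where some of $p,q,\rho$ vanish, use the conventions $0\log(0/b)=0$ and $a\log(a/0)=+\infty$ for $a>0$. The log-sum inequality remains valid under these conventions.
\end{itemize}

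I expect the main obstacle to be only the rigorous handling of these zero-density and infinite cases in the pointwise inequality. If needed, I would justify the log-sum inequality itself via convexity of $\phi(t)=t\log t$. Writing $b=b_1+b_2$,
\[
b\,\phi\Big(\frac{a_1+a_2}{b}\Big)\le b_1\phi\Big(\frac{a_1}{b_1}\Big)+b_2\phi\Big(\frac{a_2}{b_2}\Big)
\]
by Jensen's inequality with weights $b_i/b$.
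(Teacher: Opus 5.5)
Your proof is correct, but it takes a different route from the paper.

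\textbf{The paper's route.} It never works with densities. It lifts the two mixtures to joint laws of a Bernoulli label $Z$ and the outcome:
\[
\tilde P=(1-\lambda)\,\delta_0\otimes P+\lambda\,\delta_1\otimes R,\qquad \tilde Q=(1-\lambda)\,\delta_0\otimes Q+\lambda\,\delta_1\otimes R.
\]
Since $\mathrm{d}\tilde P/\mathrm{d}\tilde Q$ equals $1$ on $\{Z=1\}$ and $\mathrm{d}P/\mathrm{d}Q$ on $\{Z=0\}$, one gets $\mathrm{KL}(\tilde P\|\tilde Q)=(1-\lambda)\mathrm{KL}(P\|Q)$ exactly. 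The paper then applies the data-processing inequality to the projection that forgets $Z$.

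\textbf{Your route.} You prove the needed special case of joint convexity directly, using the pointwise log-sum inequality with $a_2=b_2=\lambda\rho$, so the $R$-contribution vanishes identically. This is more elementary and self-contained: only convexity of $t\log t$ is used.

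\textbf{What your route costs.} It requires density-level bookkeeping. Besides the zero-density conventions you list, integrating a pointwise inequality whose two sides can be negative needs both integrals to exist in $(-\infty,+\infty]$. This holds because $a\log(a/b)\ge a-b$, so the negative parts are dominated by $(b-a)_+\le b$, which is integrable. You should state this step explicitly.

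\textbf{What the paper's route buys.} It hands all of that bookkeeping to the data-processing inequality. It also makes the reason for the bound transparent: mixing in a common $R$ is just a garbling that discards the label $Z$. In exchange, it uses DPI as a black box, and the standard proof of DPI is essentially your Jensen/log-sum argument applied to conditional distributions.
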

\begin{proof}
Introduce an auxiliary Bernoulli random variable $Z \in \{0,1\}$ with
\begin{align}
    \mathbb{P}[Z=0]=1-\lambda,
\qquad
\mathbb{P}[Z=1]=\lambda.
\end{align}
Now define two probability measures $\tilde P$ and $\tilde Q$ by
\begin{align}
\tilde P(\mathrm{d}z,\mathrm{d}x)
:=
(1-\lambda)\delta_0(\mathrm{d}z)P(\mathrm{d}x)
+
\lambda\delta_1(\mathrm{d}z)R(\mathrm{d}x),
\end{align}
and
\begin{align}
\tilde Q(\mathrm{d}z,\mathrm{d}x)
:=
(1-\lambda)\delta_0(\mathrm{d}z)Q(\mathrm{d}x)
+
\lambda\delta_1(\mathrm{d}z)R(\mathrm{d}x).
\end{align}

Since the two measures differ only on the event $\{Z=0\}$, we obtain
\begin{align}
\mathrm{KL}(\tilde P \| \tilde Q)
=
\int \log\frac{\mathrm{d}\tilde P}{\mathrm{d}\tilde Q}(z,x)
\mathrm{d}\tilde P(z,x)
=
(1-\lambda)\int \log\frac{\mathrm{d} P}{\mathrm{d} Q}(x)
\mathrm{d}P(x)
=
(1-\lambda)\,\mathrm{KL}(P\|Q).
\end{align}

Next, let $\pi$ be the projection map $\pi(z,x)=x$. The pushforwards of $\tilde P$ and $\tilde Q$ under $\pi$ are precisely
\begin{align}
\pi_\# \tilde P = P_\lambda,
\qquad
\pi_\# \tilde Q = Q_\lambda.
\end{align}
Therefore, by the data-processing inequality for KL divergence,
\begin{align}
\mathrm{KL}(P_\lambda \| Q_\lambda)
=
\mathrm{KL}(\pi_\#\tilde P \| \pi_\#\tilde Q)
\le
\mathrm{KL}(\tilde P \| \tilde Q)
=
(1-\lambda)\,\mathrm{KL}(P\|Q).
\end{align}
This proves the claim.
\end{proof}
\section{Removal of Reward Clipping}\label{section:appendix_clipping}
For alternatives whose Borda scores deviate significantly from $\frac12$, 
the reward estimation can become inaccurate. 
Reward clipping \eqref{eq:setting-reward-clipping} was introduced as a technical device to prevent the KL budget 
from being spent on optimizing such unreliable rewards. 
However, we do not believe that the absence of clipping completely invalidates 
the above analysis. 
In this section, we show that even without clipping, a polynomial distortion 
bound can still be obtained, at least in the case $\pi_{\mathrm{ref}} = \mu$.

The proof is divided into two cases depending on whether 
$\mathbb{P}_{u\sim \mathcal{D},\, x\sim \mu}[\beta u(x) > c] \le c^2$ holds. 
Recalling the discussion in Section~\ref{subsection:remove-clipping}, when this condition holds, removing reward clipping requires bounding $\bar{r}(x)$ itself in Lemma~\ref{lemma:reward_by_utility}\footnote{Note that Lemma~\ref{lemma:r-by-u-main} corresponds to Lemma~\ref{lemma:reward_by_utility} in the appendix.}, rather than $\min\{\bar{r}(x), r_{\max}\}$.
Therefore, if $\bar{r}(x) \geq r_{\max}$, the following modification is required:
\begin{align}
    \bar{r}(x) \leq \bigg(\frac{1}{r_{\max}} \max_{x'\in A} \bar{r}(x')\bigg)\times\underbrace{\min\{\bar{r}(x),r_{\max}\}}_{=r_{\max}}\underbrace{\lesssim}_{\text{Lemma~\ref{lemma:reward_by_utility}}} \bigg(\frac{1}{r_{\max}} \max_{x'\in A} \bar{r}(x')\bigg)\times\beta \mathbb{E}_{u\sim \mathcal{D}}[u(x)].
\end{align}
Since Lemma~\ref{lemma:lower-bound-rmin} implies that $r_{\max}=\Omega(1)$, it suffices to bound $\max_{x}\bar{r}(x)$. 
However, even if $\beta u\in [0,\beta]^m$, it is not immediate that
the estimate $\bar{r}$ obtained from a mixture of preferences generated by
heterogeneous utilities satisfies that 
$\max_x \bar{r}(x)-\min_x \bar{r}(x)$ and $\max_x \bar{r}(x)$ are bounded.
This non-expansiveness of the maximum likelihood estimator for mixtures of Bradley--Terry models can be proven as follows.
\begin{restatable}{lemma}{nonexpansiveness}\label{lemma:nonexpansiveness-main}
Let $\mu$ be a distribution on $\{1,\dots,m\}$, and let $\mathcal{D}$ be an arbitrary distribution on $[0,\beta]^m$. Define
    \begin{align}
        \mathcal{L}(\tilde{r})=\mathbb{E}_{u\sim \mathcal{D},x,y\sim \mu}
        \big[\sigma(u(x)-u(y))\log \sigma(\tilde{r}(x)-\tilde{r}(y))\big].
    \end{align}
    for $\tilde{r}\in \mathbb{R}^m$. Then the maximizer $\bar{r}$ of $\mathcal{L}$ satisfies
    \begin{align}
    \max_x \bar{r}(x)-\min_x \bar{r}(x)\le \beta.
    \end{align}
\end{restatable}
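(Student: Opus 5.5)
We prove the claim from the first-order condition of Lemma~\ref{lemma:FOC}, applied at the two extreme alternatives. In the present normalization the utilities already lie in $[0,\beta]$, so the condition reads
\begin{align}
\mathbb{E}_{y\sim\mu}[\sigma(\bar r(x)-\bar r(y))]=\mathbb{E}_{u,y\sim\mu}[\sigma(u(x)-u(y))]
\end{align}
for every $x$. This condition holds for any maximizer, since $\mathcal{L}$ is concave and the additive shift does not matter.

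Let $x^+\in\arg\max_x\bar r(x)$ and $x^-\in\arg\min_x\bar r(x)$, and set $D=\bar r(x^+)-\bar r(x^-)$. Subtracting the condition at $x^-$ from the condition at $x^+$ gives
\begin{align}
\mathbb{E}_{y\sim\mu}\big[\sigma(\bar r(x^+)-\bar r(y))-\sigma(\bar r(x^-)-\bar r(y))\big]
=\mathbb{E}_{u,y}\big[\sigma(u(x^+)-u(y))-\sigma(u(x^-)-u(y))\big].
\end{align}
We compare a lower bound on the left-hand side with an upper bound on the right-hand side, using a pointwise monotonicity argument rather than linearization.

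For each $y$, write $a_y=\bar r(x^+)-\bar r(y)\ge0$ and $b_y=\bar r(y)-\bar r(x^-)\ge0$, so that $a_y+b_y=D$. The left-hand integrand is then $\sigma(a_y)-\sigma(-b_y)=\sigma(a_y)+\sigma(b_y)-1$.

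On the right-hand side, for fixed $u$ and $y$, set $p=u(x^+)-u(y)$ and $q=u(y)-u(x^-)$. Then $p+q=u(x^+)-u(x^-)\le\beta$, and the integrand is $\sigma(p)+\sigma(q)-1$.

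We use the function $g(s,t)=\sigma(s)+\sigma(t)-1$. Both differences are values of $g$ along lines $s+t=\text{const}$. Along such a line, $g$ is minimized at the extremes and maximized at the midpoint. Two facts would close the argument if $D>\beta$:
\begin{itemize}
\item[(a)] $g(a,b)$ with $a+b=D$ and $a,b\ge0$ is at least $g(D,0)=\sigma(D)-\tfrac12$.
\item[(b)] $g(p,q)$ with $p+q\le\beta$ is at most $2\sigma(\beta/2)-1$, where $p,q$ may be negative but satisfy $p,q\ge-\beta$.
\end{itemize}
These two bounds do not compare directly, so the pointwise route is too crude. The difficulty is that $\sigma(D)-\tfrac12$ may be smaller than $2\sigma(\beta/2)-1$ even when $D>\beta$.

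We therefore switch to a contraction argument. Suppose $D>\beta$ and consider the clipped candidate $\tilde r=\min\{\max\{\bar r,\,m_0\},\,m_0+\beta\}$ for a suitable level $m_0$. We aim to show $\mathcal{L}(\tilde r)>\mathcal{L}(\bar r)$, contradicting optimality.

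A cleaner route uses the characterization of the minimizer through the condition above. Take the extreme $x^+$ and suppose $\bar r(x^+)-\bar r(y)>\beta$ for some $y$. Consider the set $S=\{x:\bar r(x)\ge\bar r(x^-)+\beta+\epsilon\}$ and shift all of $S$ down by a small $\delta$. By the derivative computation in Lemma~\ref{lemma:FOC}, the directional derivative is
\begin{align}
\sum_{z\in S}\mu(z)\big(\text{Borda}(z)-\mathbb{E}_y\sigma(\bar r(z)-\bar r(y))\big).
\end{align}
Pairs with both endpoints in $S$ cancel, so only cross pairs $z\in S$, $y\notin S$ remain. We will show that each cross pair contributes a favorable sign in aggregate. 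For such a pair the model predicts $\sigma(\bar r(z)-\bar r(y))$, while the data give $\mathbb{E}_u\sigma(u(z)-u(y))$. The hard part is that $\bar r(z)-\bar r(y)$ need not exceed $\beta$ for every cross pair.

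The remedy is to level-set the cut. Using the co-area identity, integrate the directional derivatives over the thresholds $t\in[\bar r(x^-),\bar r(x^+)]$ with cut $S_t=\{\bar r>t\}$. Summing the derivatives of all these shifts is equivalent to perturbing $\bar r$ by $\phi(\bar r)$ with a $1$-Lipschitz contraction $\phi$, say the map toward the clipped version. Each pair $(x,y)$ then contributes
\begin{align}
\big(\mathbb{E}_u\sigma(u(x)-u(y))-\sigma(\bar r(x)-\bar r(y))\big)\cdot\big(\phi(\bar r(x))-\phi(\bar r(y))\big).
\end{align}
For pairs with $|\bar r(x)-\bar r(y)|>\beta$, the first factor has sign opposite to the gap, since $|u(x)-u(y)|\le\beta$. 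Choosing $\phi(r)=\max\{r-\beta,0\}$ measured relative to $\bar r(x^-)$, the second factor vanishes unless the gap exceeds $\beta$.

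This is the key step and the main obstacle. The second factor is not zero for all pairs with small gaps, for instance two points both above the level, so the choice of $\phi$ must make every pair with nonzero weight have a gap greater than $\beta$. We will try $\phi(r)=\mathbb{1}[r>t]$ averaged over $t$, and argue by choosing the extremal threshold window $t\in(\bar r(x^-)+\beta,\ \bar r(x^+))$ restricted to pairs across the cut. If this fails, we will fall back to strict concavity and a Perron-type maximum-principle argument: compare the condition at $x^+$ and at $x^-$ using $\mathbb{E}_y[\sigma(\bar r(x^+)-\bar r(y))]\le\max_u$ Borda, combined with monotonicity of $\sigma$ in the shifted argument.

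The resulting nonzero derivative contradicts stationarity, which gives $D\le\beta$.
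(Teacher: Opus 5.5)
Your proposal has a genuine gap: it never completes an argument. Each route is either abandoned or stops at the step you yourself flag as the main obstacle, and the final sentence asserts a nonzero derivative that was never established.

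You correctly see that comparing the first-order conditions only at $x^+$ and $x^-$ is too weak. The clipped candidate $\tilde r$ is never analyzed. A pairwise comparison also cannot give $\mathcal{L}(\tilde r)>\mathcal{L}(\bar r)$, since clipping collapses all differences among alternatives above $m_0+\beta$ (or below $m_0$), and this can lower those pairs' contributions.

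The level-set route fails exactly where you say it does. For a monotone $\phi$, the directional derivative is $\tfrac12\sum_{x,y}\mu(x)\mu(y)\big(P(x,y)-\sigma(\bar r(x)-\bar r(y))\big)\big(\phi(\bar r(x))-\phi(\bar r(y))\big)$ with $P(x,y)=\mathbb{E}_u\sigma(u(x)-u(y))$. The sign of the first factor is known only when $|\bar r(x)-\bar r(y)|>\beta$. If the reward values are, say, $0$, $\beta/2+\epsilon$, $\beta+2\epsilon$, then every non-constant monotone $\phi$ separates some pair with gap below $\beta$. So no choice of thresholds or windows restores pairwise sign control. Your fallback (a ``Perron-type maximum principle comparing the condition at $x^+$ and $x^-$'') is unspecified, and as described it returns to the extreme-point comparison you already ruled out.

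The missing idea is a test direction whose small-gap contributions are controlled globally rather than pair by pair. The paper proceeds as follows.
\begin{enumerate}
\item It reduces to uniform $\mu$ by splitting alternatives, and rescales to $r'=\tfrac{\beta}{D}(\bar r-\min_x\bar r(x))\in[0,\beta]^m$.
\item It takes $h=h_{r'}$ solving the discrete Dirichlet problem $h(x^-)=0$, $h(x^+)=1$, and $(D_{r'}h)(x)=0$ at all other $x$. Here $D_{r'}$ is the Jacobian of $g_v(x)=\mathbb{E}_{y\sim\mu}\sigma(v(x)-v(y))$ at $r'$.
\item It proves that $h$ is monotone in the reward order, via a random-walk coupling.
\item It proves that $v\mapsto h^\top g_v$ is maximized over $[0,\beta]^m$ at $v=r'$. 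This uses a rearrangement onto the sorted cone, concavity there, and the identity $D_{r'}h=\kappa(e_{x^+}-e_{x^-})$ with $\kappa>0$, so the first-order term is $\kappa(v(x^+)-v(x^-)-\beta)\le 0$.
\item It shows $h^\top g_{\bar r}>h^\top g_{r'}$ term by term, because $\bar r$ is a strict dilation of $r'$.
\end{enumerate}
Combined with the first-order condition $g_{\bar r}=\mathbb{E}_u g_u$, this gives $h^\top g_{\bar r}\le h^\top g_{r'}<h^\top g_{\bar r}$, a contradiction.

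In your language, $h_{r'}$ is exactly the perturbation direction along which the derivative is provably nonzero. It is the harmonic property of $h$, not a sign condition on each pair, that absorbs the pairs with gap at most $\beta$.
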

To prove this, we introduce an auxiliary functional $\Phi(v;v')$ for $v,v'\in\mathbb{R}^m$. Specifically, let $g_v=\nabla \mathcal{L}(v)$ and $D_{v'}=\nabla^2 \mathcal{L}(v')$, and consider a vector $h_{v'}\in\mathbb{R}^m$ satisfying the following condition. 
\begin{align}
    h_{v'}(1) = 0, h_{v'}(m) = 1, (D_{v'} h_{v'})(x) = 0\ (2\!\leq\! x\!\leq\! m\!-\!1).
\end{align}
Using this vector $h_{v'}$, the functional $\Phi(v;v')$ is defined as
\begin{align}
    \Phi(v;v') = h_{v'}^\top g_v.
\end{align}
We prove Lemma~\ref{lemma:nonexpansiveness-main} by contradiction by supposing that $\max_{x} \bar{r}(x)-\min_{x}\bar{r}(x)=\beta'>\beta$ for the maximizer $\bar{r}$ of $\mathcal{L}$, and defining $\bar{r}'=\frac{\beta}{\beta'}\bar{r}$. Then, by Lemma~\ref{lemma:NonExpansiveness-Phi}, the functional $\Phi(v;v')$ attains its maximum at $v=v'$ for $v,v'\in[0,\beta]^m$. Moreover, by the optimality condition of $\bar{r}$ in Lemma~\ref{lemma:FOC}, we have $\mathbb{E}_{u\sim\mathcal{D}}[g_{\beta u}]=g_{\bar{r}}$. Hence,
\begin{align}
\Phi(\bar{r}';\bar{r}') \!\geq\! \mathbb{E}_{u}[\Phi(\beta u;\bar{r}')]\!=\! h_{\bar{r}'}^\top \mathbb{E}_{u}[g_{\beta u}] \!=\! h_{\bar{r}'}^\top g_{\bar{r}} \!=\! \Phi(\bar{r};\bar{r}').
\end{align}
On the other hand, we can show that $\Phi(\bar{r}';\bar{r}') < \Phi(\bar{r};\bar{r}')$ by comparing the two quantities term by term. Therefore, we obtain a contradiction, which implies that $\max_{x}\bar{r}(x)-\min_{x}\bar{r}(x)\leq \beta$.

When 
$\mathbb{P}_{u\sim \mathcal{D},\, x\sim \mu}[\beta u(x) > c] > c^2$, 
an $O(\beta)$ bound has already been established in 
Theorem~\ref{lemma:RLHF-proof-4-noclip} for the case $\pi_{\mathrm{ref}} = \mu$.

\begin{theorem}[Theorem~\ref{theorem:upper-linear-main-noclip}, restated]\label{theorem:relax-clipping-1}
Suppose that $\pi_{\mathrm{ref}} = \mu$. 
    Let $\bar{r}(x)$ be the maximizer of the population log-likelihood \eqref{eq:Setting-LossFunc-pop}, 
    and use this $\bar{r}$ as the reward without clipping.
    Thus, the RLHF distribution is defined as
    \begin{align}
        \pi_{\mathrm{RLHF}} = \arg\max_{\pi\colon \mathrm{KL}(\pi\|\mu)\leq \tau}\mathbb{E}_{u\sim \mathcal{D},x\sim \pi}[\bar{r}(x)].
    \end{align}

    Then, the distortion of this distribution $\pi_{\mathrm{RLHF}}$ is bounded by
    \begin{align}
    \max\Big\{\frac{16(c^3+16)\beta}{c^7},\frac{3\max\{\beta,c^{-1}\beta^2\} + 4}{c(2\sigma'(233c))^{-2}}\Big\},
        \label{eq:appendix-noclip-dist}
    \end{align}
    for any $0<c<\frac{1}{316}$.
\end{theorem}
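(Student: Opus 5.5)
The argument splits on the tail condition, as elsewhere in the paper. If $\mathbb{P}_{u\sim\mathcal{D},x\sim\mu}[\beta u(x)>c]>c^2$, Theorem~\ref{lemma:RLHF-proof-4-noclip} already treats exactly this unclipped, $\pi_{\mathrm{ref}}=\mu$ situation. It gives distortion at most $\frac{16(c^3+16)\beta}{c^7}$, which is the first term in the maximum.

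Now assume $\mathbb{P}_{u\sim\mathcal{D},x\sim\mu}[\beta u(x)>c]\le c^2$. I would rerun the proof of Theorem~\ref{theorem:RLHF-upper-representative} with $B=0$, where the $\min\{e^B\tau,B,B\tau^{-1}\}$ terms vanish. The definitions \eqref{eq:definition-clip} of $r_{\min}$ and $r_{\max}=r_{\min}+2c$ are kept only as analysis devices; the policy itself maximizes $\mathbb{E}_{\pi}[\bar r]$.

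Lemma~\ref{lemma:u-to-uhat} is a statement about utilities and holds for any $\pi_{\mathrm{RLHF}}$ in the KL ball. With $B=0$ it gives
\[
\max_{\mathrm{KL}(\pi\|\mu)\le\tau}\mathbb{E}[u]\le 3c^{-1}\beta\max_{\mathrm{KL}(\pi\|\mu)\le\tau}\mathbb{E}[\hat u]+4\mathbb{E}_{\pi_{\mathrm{RLHF}}}[u].
\]
Inequality \eqref{eq:RoundedUtilitybyClippedReward-5} in Lemma~\ref{lemma:utility_by_reward} gives $\mathbb{E}_u[\hat u(x)]\le(2\beta\sigma'(2c))^{-1}\bar r(x)$ with no clipping. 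Since $\bar r\ge 0$ under the normalization, this yields $\max_\pi\mathbb{E}_\pi[\hat u]\le(2\beta\sigma'(2c))^{-1}\mathbb{E}_{\pi_{\mathrm{RLHF}}}[\bar r]$, using optimality of $\pi_{\mathrm{RLHF}}$ for the unclipped reward.

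It remains to bound $\bar r(x)$ by $\beta\mathbb{E}_u[u(x)]$ pointwise, replacing Lemma~\ref{lemma:reward_by_utility}.
\begin{itemize}
\item If $\bar r(x)\le r_{\max}$, that lemma applies directly.
\item Otherwise, $\bar r(x)\le\frac{\max_{x'}\bar r(x')}{r_{\max}}\min\{\bar r(x),r_{\max}\}$.
\item By Lemma~\ref{lemma:nonexpansiveness-main} applied to $\beta u\in[0,\beta]^m$, together with $\bar r\ge0$ and the virtual zero-utility alternative fixing $\bar r=0$, we get $\max_{x'}\bar r(x')\le\beta$.
\item By Lemma~\ref{lemma:lower-bound-rmin}, $r_{\max}\ge c$.
\end{itemize}
Hence $\bar r(x)\le\max\{1,c^{-1}\beta\}\cdot\tfrac12\beta(\sigma'(233c))^{-1}\mathbb{E}_u[u(x)]$.

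Chaining the three steps gives
\[
\max\mathbb{E}[u]\le\Big(3c^{-1}\max\{\beta,c^{-1}\beta^2\}(2\sigma'(2c))^{-1}(2\sigma'(233c))^{-1}\cdot 2+4\Big)\mathbb{E}_{\pi_{\mathrm{RLHF}}}[u].
\]
Using $\sigma'(2c)\ge\sigma'(233c)$, this collapses to the second term of \eqref{eq:appendix-noclip-dist}, up to the same constant bookkeeping as in Theorem~\ref{theorem:RLHF-upper-representative}. Taking the maximum of the two cases finishes the proof.

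The main obstacle is the max-reward bound. The nonexpansiveness lemma controls the range $\max\bar r-\min\bar r$, not $\max\bar r$ itself. I expect to need the normalization \eqref{eq:reward-zero} as a virtual alternative with $u\equiv0$, so that it lies in $[0,\beta]$ and has reward $0$ inside the extended problem. I would then check that Lemma~\ref{lemma:nonexpansiveness-main} still applies when this alternative has infinitesimal $\mu$-mass, for example by taking a limit of positive masses.
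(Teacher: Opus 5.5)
Your proposal is correct and follows the paper's proof essentially step for step. It uses the same split on whether $\mathbb{P}_{u\sim\mathcal{D},x\sim\mu}[\beta u(x)>c]$ exceeds $c^2$, and Theorem~\ref{lemma:RLHF-proof-4-noclip} for the heavy-tail case. In the light-tail case it replaces Lemma~\ref{lemma:reward_by_utility} by the pointwise bound $\bar r(x)\lesssim \max\{\beta,c^{-1}\beta^2\}\,\mathbb{E}_{u\sim\mathcal{D}}[u(x)]$, obtained from $r_{\max}\ge c$ and $\max_x\bar r(x)\le\beta$ (nonexpansiveness plus the vanishing-mass zero-utility alternative, exactly the paper's footnoted perturbation), and then runs the $B=0$ chain of Theorem~\ref{theorem:RLHF-upper-representative}.

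Only the constants need tidying. The trailing factor $2$ in your chained bound is spurious; the correct chain gives $3c^{-1}(2\sigma'(233c))^{-2}\max\{\beta,c^{-1}\beta^2\}+4$, which is also what the paper's substitution produces. The displayed second term, with $(2\sigma'(233c))^{-2}$ in the denominator and the $+4$ inside the fraction, is literally smaller than this, so your claim that the chain "collapses to" it is not exact; that term should be read as a typo for the expression above.
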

\begin{proof}
    Let $c$ be a constant satisfying $0<c<\frac{1}{316}$.
    If $\mathbb{P}_{u\sim \mathcal{D}, x\sim \mu}[\beta u(x)>c]\geq c^2$, Theorem~\ref{lemma:RLHF-proof-4-noclip} bounds the distortion by $\frac{16(c^3+16)\beta}{c^7}$.
    Therefore, it suffices to show that the latter term in the distortion bound \eqref{eq:appendix-noclip-dist} arises when 
$\mathbb{P}_{u\sim \mathcal{D},\, x\sim \mu}[\beta u(x) > c] \le c^2$.

    When $\mathbb{P}_{u\sim \mathcal{D}, x\sim \mu}[\beta u(x)>c]\leq c^2$, we consider how Lemma~\ref{lemma:reward_by_utility} is modified.
    If $\bar{r}(x)\leq c$, it implies that $\min\{\bar{r}(x), r_{\max}\}=\bar{r}(x)$ from Lemma~\ref{lemma:lower-bound-rmin}.
    This implies that
    \begin{align}\label{eq:barrbound-1}
        \bar{r}(x) \leq 
   2\beta(\sigma'(233 c))^{-1} 
   \mathbb{E}_{u\sim \mathcal{D}}[u(x)]
    \end{align}
     On the other hand, if $\bar{r}(x) \ge c$, then $\min\{\bar{r}(x), r_{\max}\}\ge c$ \footnote{While we are not using reward clipping here, we treat $r_{\max}$ as the quantity defined in \eqref{eq:definition-clip-Appendix}, with $r_{\max} = r_{\min} + 2c$, to utilize the previous lemmas.} by Lemma~\ref{lemma:lower-bound-rmin}, 
and thus Lemma~\ref{lemma:reward_by_utility} implies that
     \begin{align}
      c \leq  2\beta(\sigma'(233 c))^{-1} 
   \mathbb{E}_{u\sim \mathcal{D}}[u(x)]  \Leftrightarrow 1 \leq \frac{2\beta}{c\sigma'(233 c)}\mathbb{E}_{u\sim \mathcal{D}}[u(x)].
     \end{align}
     By letting $\beta'$ be the uniform upper bound on $\bar{r}(x)$, we have
     \begin{align}\label{eq:barrbound-2}
         \bar{r}(x) \leq\beta' \leq \frac{2\beta \beta'}{c\sigma'(233 c)}\mathbb{E}_{u\sim \mathcal{D}}[u(x)].
     \end{align}
    By combining \eqref{eq:barrbound-1} and \eqref{eq:barrbound-2}, Lemma~\ref{lemma:reward_by_utility} is modified to 
    \begin{align}
        \bar{r}(x) \leq \frac{2 \max\{\beta,c^{-1}\beta'\beta\}}{\sigma'(233 c)}\mathbb{E}_{u\sim \mathcal{D}}[u(x)].
    \end{align}
    Therefore, the key step is to bound the quantity $\beta'$. 
Lemma~\ref{lemma:nonexpansiveness-main} shows that $\beta'$ is bounded by $\beta$\footnote{Lemma~\ref{lemma:nonexpansiveness-main} gives an upper bound on
$\max_x \bar{r}(x)-\min_x \bar{r}(x)$, rather than directly on
$\max_x \bar{r}(x)$. However, this gap can be bridged by a simple perturbation argument. For any arbitrarily small $\delta>0$, we can
multiply the sampling probabilities of all existing alternatives by
$1-\delta$ and add a new alternative $x$ with sampling probability $\delta$
and utility $u(x)\equiv 0$. This alternative satisfies $\bar{r}(x)=0$
(see Remark~\ref{remark:virtualalternative}), and attains the minimum
reward in the augmented instance. Moreover, the rewards of the original
alternatives $x=1,\dots,m$ vary continuously under this operation as
$\delta\downarrow 0$. Therefore, by applying
Lemma~\ref{lemma:nonexpansiveness-main} to the augmented instance and then
letting $\delta\downarrow 0$, we obtain the desired upper bound 
$\max_x \bar{r}(x) \leq \beta$.}, 
and hence
    \begin{align}\label{eq:barrbound-3}
        \bar{r}(x) \leq \frac{2 \max\{\beta,c^{-1}\beta^2\}}{\sigma'(233 c)}\mathbb{E}_{u\sim \mathcal{D}}[u(x)].
    \end{align}
    Using this bound in place of 
Lemma~\ref{lemma:reward_by_utility}, the proof of 
Theorem~\ref{theorem:RLHF-upper-representative} is unaffected by 
removing clipping, except that $\beta$ is replaced by 
$\max\{\beta, c^{-1}\beta^2\}$. 
We therefore obtain the desired bound by making this substitution in 
Theorem~\ref{theorem:RLHF-upper-representative} with $B=0$. 
\end{proof}

\begin{remark}
Before proceeding to the proof of Lemma~\ref{lemma:nonexpansiveness-main}, we
discuss the current limitations in removing reward clipping in Theorem~\ref{theorem:upper-linear-main} for a general reference policy $\pi_{\mathrm{ref}}\ne\mu$, without incurring the additional factor of $\beta$.
First, for the case 
$\mathbb{P}_{u\sim \mathcal{D},\, x\sim \mu}[\beta u(x) > c] \le c^2$, the proof of Theorem~\ref{theorem:upper-linear-main} is valid for a general $\pi_{\mathrm{ref}}$ without reward clipping, except that we need to use $\bar{r}(x)\lesssim \beta^2 \mathbb{E}_{u\sim\mathcal{D}}[u(x)]$ \eqref{eq:barrbound-3} in place of 
Lemma~\ref{lemma:reward_by_utility}, which incurs the additional factor of $\beta$. 
Regarding this, the following counterexample shows that the bound $\bar{r}(x)\lesssim \beta^2 \mathbb{E}_{u\sim\mathcal{D}}[u(x)]$ is tight: 
for $m=3$, let $u=(0,1,1)$ with probability $\frac{1}{\beta}$ and $u=(0,1,0)$ with probability $1-\frac{1}{\beta}$, and take 
$\mu=(1-c^2,\,c^2-\epsilon,\,\epsilon)$ with $\epsilon\to 0$. 
Then, one can see that $\bar{r}(1)=0$, $\bar{r}(2)\xrightarrow{\epsilon\to0} \beta$, and $\bar{r}(3)\xrightarrow{\epsilon\to0} \beta(1-o_{\beta}(1))$, as well as $\mathbb{E}_{u\sim\mathcal{D}}[u(3)]\simeq \beta^{-1}$, which implies that $\bar{r}(3)\lesssim \beta^2 \mathbb{E}_{u\sim\mathcal{D}}[u(3)]$. 
Consequently, a linear bound without the additional factor of $\beta$ is not obtained at least from the point-wise bound of $\bar{r}(x)$ by $\mathbb{E}_{u\sim\mathcal{D}}[u(x)]$; in other words, if it were to be obtained, it would require an argument that accounts for the interaction among alternatives.

On the other hand, when 
$\mathbb{P}_{u\sim \mathcal{D},\, x\sim \mu}[\beta u(x) > c] < c^2$, 
the reference policy already achieves the optimal $O(\beta)$ distortion when $\pi_{\mathrm{ref}}=\mu$, which yields the optimal distortion of $\pi_{\mathrm{RLHF}}$. 
However, once a general reference policy $\pi_{\mathrm{ref}} \ne \mu$ is allowed, this property of $\pi_{\mathrm{ref}}$ no longer holds, which prevents the current proof from carrying over directly.
\end{remark}

\subsection{Non-expansiveness of Mixture of Bradley--Terry Models}
\nonexpansiveness*
\begin{proof}
    If we redefine the alternatives by splitting each alternative into multiple ones, the value of $\bar{r}(x)$ remains unchanged between the original and the resulting alternatives. Therefore, the quantity $\max_{x}\bar{r}(x) - \min_{x}\bar{r}(x)$ is invariant under such transformations. Building on this, by splitting the alternatives so that each has (approximately) equal sampling probability, the general case can be approximated arbitrarily well by the case where $\mu$ is uniform. Therefore, in the following we assume that $\mu$ is the uniform distribution.
    When $\mu$ is uniform, relabeling the coordinates does not change the problem, and we may assume without loss of generality that
    \begin{align}
        r(1)\leq r(2)\leq \cdots \leq r(m),
    \end{align}
    in the following.

    For $v\in \mathbb{R}^m$, define $g_v\in\mathbb{R}^m$ as 
    \begin{align}
        g_v(x)=\mathbb{E}_{y\sim \mu}[\sigma(v(x)-v(y))]=\frac1m\sum_{y=1}^m \sigma(v(x)-v(y)).
        \label{eq:NonExpansiveness-def-g}
    \end{align}
    According to Lemma~\ref{lemma:FOC}, the maximizer $\bar{r}$ is characterized by
    \begin{align}
        g_{r}(x)=\mathbb{E}_{u\sim \mathcal{D}}[g_{\beta u}(x)]
        \quad (x=1,\dots,m).
        \label{eq:NonExpansiveness-general-FOC}
    \end{align}
    For later use, we denote the Jacobian of $g_v$ with respect to $(v(1),\dots,v(m))$ by $D_v$, i.e., 
    \begin{align}
        D_v(x,y) = \frac{\mathrm{d}}{\mathrm{d}v(y)}g_v(x)
        = \begin{cases}
            - \frac1m \sigma'(v(x)-v(y)) & (y\ne x)
            \\
            \frac1m \sum_{y'\ne x}\sigma'(v(x)-v(y'))  &(y=x).
        \end{cases}
        \label{eq:NonExpansiveness-Duxy}
    \end{align}

    Suppose, for contradiction, that $\beta':=\max_x \bar{r}(x)-\min_x \bar{r}(x) = \bar{r}(m) - \bar{r}(1)>\beta$. 
    Define
    \begin{align}
        r'(x):=\frac{\beta}{\beta'}\big(\bar{r}(x)-\bar{r}(1)\big)\quad (x=1,\dots,m).
    \end{align}
    Then
    \begin{align}
        0=r'(1)\leq r'(2)\leq \cdots \leq r'(m)=\beta,
    \end{align}
    and in particular $r'\in[0,\beta]^m$.

    Also, for $v'\in \mathbb{R}^m$,  we introduce an auxiliary vector $h_{v'}\in \mathbb{R}^m$ such that 
    \begin{align}
        h_{v'}(1) = 0,\ h_{v'}(m) = 1,\ \text{and}\ (D_{v'} h_{v'})(x) = 0 \ \text{for all $x=2,\dots,m-1$}.
        \label{eq:NonExpansiveness-def-h} 
    \end{align}
    (Note that, because $(D_{v'} h_{v'})(x) = 0 \Leftrightarrow \sum_{y\ne x} \sigma'(v'(x)-v'(y))h_{v'}(x)=\sum_{y\ne x} \sigma'(v'(x)-v'(y))h_{v'}(y)$ from \eqref{eq:NonExpansiveness-Duxy}, we can regard the definition of $h_{v'}$ as the Dirichlet problem, and the existence and uniqueness of the solution follow from this perspective.)
    Then, we define
    \begin{align}
        \Phi(v;v') = h_{v'}^\top g_v.
    \end{align}
    By Lemma~\ref{lemma:NonExpansiveness-Phi}, for every $v,v'\in[0,\beta]^m$ with $v'(1) = 0\leq v'(2)\leq \cdots \leq v'(m) = \beta$, we have
    \begin{align}
        \Phi(v;v')\le \Phi(v';v').
    \end{align}
    Since $\beta u$ takes values in $[0,\beta]^m$ when $u\sim\mathcal D$ and $0=r'(1)\leq r'(2)\leq \cdots \leq r'(m)=\beta$, it follows that
    \begin{align}
        h_{r'}^\top \mathbb{E}_{u\sim\mathcal{D}}[g_{\beta u}]
        = \mathbb{E}_{u\sim\mathcal{D}}[\Phi(\beta u;r')]
        \leq
        \Phi(r';r')
        = h_{r'}^\top g_{r'} .
        \label{eq:NonExpansiveness-general-upper}
    \end{align}
    Using \eqref{eq:NonExpansiveness-general-FOC}, we obtain
    \begin{align}
        h_{r'}^\top g_{\bar{r}}\le h_{r'}^\top g_{r'} \Leftrightarrow \Phi(\bar{r};r') \leq \Phi(r';r').
        \label{eq:NonExpansiveness-general-upper2}
    \end{align}

    On the other hand,
    \begin{align}
        h_{r'}^\top g_{\bar{r}}-h_{r'}^\top g_{r'}
        &=
        \frac1m\sum_{x,y=1}^m
        h_{r'}(x)\big(\sigma(\bar{r}(x)-\bar{r}(y))-\sigma(r'(x)-r'(y))\big)
        \\
        &=
        \frac1m\sum_{1\leq x<y\leq m}
        (h_{r'}(x)-h_{r'}(y))
        \big(\sigma(\bar{r}(x)-\bar{r}(y))-\sigma(r'(x)-r'(y))\big).
        \label{eq:NonExpansiveness-general-diff}
    \end{align}
    By Lemma~\ref{lemma:NonExpansiveness-h}, we have $h_{r'}(x)\le h_{r'}(y)$ whenever $x<y$.
    Moreover,
    \begin{align}
        \bar{r}(x)-\bar{r}(y)
        =
        \frac{\beta'}{\beta}(r'(x)-r'(y))
        \le
        r'(x)-r'(y)
        \le 0
        \quad (x<y),
    \end{align}
    and since $\sigma$ is increasing,
    \begin{align}
        \sigma(\bar{r}(x)-\bar{r}(y))
        \le
        \sigma(r'(x)-r'(y))
        \quad (x<y).
    \end{align}
    Therefore every term in \eqref{eq:NonExpansiveness-general-diff} is nonnegative.
    Furthermore, we have $h_{r'}(m)-h_{r'}(1)=1$
    and $\bar{r}(m)-\bar{r}(1)=\beta'>\beta=r'(m)-r'(1)$ implies that $\sigma(\bar{r}(1)-\bar{r}(m))<\sigma(r'(1)-r'(m))$. 
    Hence one of the terms in \eqref{eq:NonExpansiveness-general-diff} is strictly positive, and consequently
    \begin{align}
        h_{r'}^\top g_{\bar{r}}-h_{r'}^\top g_{r'}>0 \Leftrightarrow \Phi(\bar{r};r') > \Phi(r';r').
    \end{align}
    This contradicts \eqref{eq:NonExpansiveness-general-upper2}.

    Therefore $\beta'>\beta$ is impossible, and we conclude that
    \begin{align}
        \max_x \bar{r}(x)-\min_x \bar{r}(x)\leq \beta.
    \end{align}
\end{proof}

\begin{lemma}\label{lemma:NonExpansiveness-Phi}
    Consider $v'\in [0,\beta]^m$ such that $0=v'(1)\leq v'(2)\leq \dots \leq v'(m-1)\leq v'(m)=\beta$.
    Let $D_{v'}\in \mathbb{R}^{m\times m}$ be as defined in Lemma~\ref{lemma:nonexpansiveness-main}, i.e., 
    \begin{align}
        D_{v'}(x,y) = \frac{\mathrm{d}}{\mathrm{d}v'(y)}g_{v'}(x)
        = \begin{cases}
            - \frac1m \sigma'(v'(x)-v'(y)) & (y\ne x)
            \\
            \frac1m \sum_{y'\ne x}\sigma'(v'(x)-v'(y'))  &(y=x),
        \end{cases}
        \label{eq:NonExpansiveness-Duxy-2}
    \end{align}
    and define $h_{v'}\in \mathbb{R}^m$ by
    \begin{align}
        h_{v'}(1) = 0,\ h_{v'}(m) = 1,\ \text{and}\ (D_{v'} h_{v'})(x) = 0 \ \text{for all $x=2,\dots,m-1$},
    \end{align}
    following \eqref{eq:NonExpansiveness-def-h}.
    Moreover, for $v\in [0,\beta]^m$, 
    define $\Phi(v;v') = h_{v'}^\top g_{v}$, where $g_{v} \in \mathbb{R}^m$ is defined as 
    \begin{align}
        g_{v}(x) = \frac1m \sum_{y=1}^m \sigma(v(x)-v(y)),
    \end{align}
    as in \eqref{eq:NonExpansiveness-def-g}.

    Then, when $v$ runs over $[0,\beta]^m$, this $\Phi(v;v')$ is maximized at $v=v'$.
\end{lemma}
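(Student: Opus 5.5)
The proof has three steps. First I reduce to nondecreasing $v$ by a rearrangement argument. Then I show that, in the increments of a nondecreasing $v$, $\Phi(\cdot;v')$ is concave. Finally I check that $v'$ satisfies the KKT conditions of the resulting concave program. I write $h=h_{v'}$ and use $\mu$ uniform, as in the proof of Lemma~\ref{lemma:nonexpansiveness-main}. I take from Lemma~\ref{lemma:NonExpansiveness-h} that $h$ is nondecreasing in the index, $0=h(1)\le h(2)\le\cdots\le h(m)=1$.

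\textbf{Step 1 (rearrangement).} Since $\sigma(t)+\sigma(-t)=1$, we have $\sum_x g_v(x)=m/2$ for every $v$. Moreover, $g_v(x)$ depends on $v$ only through the value $v(x)$ and the multiset of all values, and it is increasing in $v(x)$. Hence permuting the coordinates of $v$ permutes the entries of $g_v$ in the same way. So if $v^\uparrow$ is the nondecreasing rearrangement of $v$, then $g_{v^\uparrow}$ is the nondecreasing rearrangement of $g_v$. Since $h$ is nondecreasing, the rearrangement inequality gives $\Phi(v;v')=h^\top g_v\le h^\top g_{v^\uparrow}=\Phi(v^\uparrow;v')$. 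Also $v^\uparrow\in[0,\beta]^m$. It therefore suffices to maximize over nondecreasing $v\in[0,\beta]^m$.

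\textbf{Step 2 (concavity in increments).} Pair the terms of $h^\top g_v$ over $x<y$:
\begin{align}
m\,\Phi(v;v')=\sum_{x<y}h(x)+\sum_{x<y}\big(h(y)-h(x)\big)\,\sigma\big(v(y)-v(x)\big).
\end{align}
The first sum does not depend on $v$, and the weights $h(y)-h(x)$ are nonnegative. For nondecreasing $v$, set $d_i=v(i+1)-v(i)\ge0$. Then $\sum_i d_i\le\beta$, and $v(y)-v(x)=\sum_{i=x}^{y-1}d_i\ge0$ is linear in $d$. Because $\sigma$ is concave on $[0,\infty)$, $\Phi$ is a concave function of $d$ on the polytope $P=\{d\ge0,\ \sum_i d_i\le\beta\}$. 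The shift $v(1)$ does not matter. The point $v'$ corresponds to $d'\in P$ with $\sum_i d'_i=\beta$.

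\textbf{Step 3 (KKT at $v'$).} Differentiating the display above gives
\begin{align}
\partial\Phi/\partial v(z)=\frac1m\sum_{y\ne z}\big(h(z)-h(y)\big)\sigma'\big(v(z)-v(y)\big).
\end{align}
At $v=v'$ this is exactly $(D_{v'}h)(z)$, which vanishes for $2\le z\le m-1$ by the definition of $h$. Because the coordinates $v(z)$ with $z>i$ all move together when $d_i$ changes, $\partial\Phi/\partial d_i=\sum_{z>i}\partial\Phi/\partial v(z)$. At $d'$ this equals $(D_{v'}h)(m)=:\lambda$ for every $i$. Moreover $\lambda\ge0$, since $h(m)-h(y)\ge0$ for all $y$.

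So the gradient in $d$ is the constant vector $\lambda\mathbf 1$ with $\lambda\ge0$, and the constraint $\sum_i d_i\le\beta$ is active at $d'$. These are exactly the KKT conditions for maximizing a concave function over $P$, with multiplier $\lambda$ on the sum constraint and zero multipliers on $d\ge0$. Hence $d'$ is a global maximizer on $P$. Combined with Step 1, $\Phi(v;v')\le\Phi(v';v')$ for all $v\in[0,\beta]^m$.

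The main obstacle is that $\Phi$ is not concave in $v$ itself, because $\sigma$ is convex on negative arguments. The reduction in Step 1 is what puts every relevant difference $v(y)-v(x)$ with $x<y$ into the concave region of $\sigma$. That step relies on the monotonicity of $h$ from Lemma~\ref{lemma:NonExpansiveness-h}, which I would check first, via the maximum principle for the Dirichlet problem defining $h$. A minor point is ties in $v'$, where some $d'_i=0$. This causes no difficulty, since the KKT conditions hold with equality in every coordinate.
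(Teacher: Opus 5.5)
Your proof is correct and is essentially the paper's argument: the paper also reduces to nondecreasing $v$ by swapping out-of-order coordinates (your rearrangement step, which rests on the same monotonicity of $h_{v'}$). It then uses concavity of $\Phi(\cdot;v')$ on that cone and closes with the tangent-plane inequality $\Phi(v;v')\le\Phi(v';v')+(D_{v'}h_{v'})^\top(v-v')$, where $D_{v'}h_{v'}$ is supported on $\{1,m\}$ with $(D_{v'}h_{v'})(m)=-(D_{v'}h_{v'})(1)>0$; this is exactly your KKT computation, written in $v$ rather than in the increments $d$.

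Two minor points. First, your pairing identity omits the $v$-independent diagonal term $\frac12\sum_x h_{v'}(x)$, which is harmless. Second, the maximum principle by itself only yields $0\le h_{v'}\le 1$, which is enough for $\lambda\ge 0$, but not the index-monotonicity you need in Steps 1 and 2. The paper derives that monotonicity from the single-crossing ordering of the weights $\sigma'(v'(x)-v'(y))$, via a monotone coupling of the associated random walk.
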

\begin{proof}
    Consider an arbitrary $v\in [0,\beta]^m$. 
    If there exist $x,x'\in \{1,\dots,m\}$ with $x<x'$ and $v(x)>v(x')$, swapping the values of $v(x)$ and $v(x')$ does not decrease the value of $\Phi(v;v')$. Let $v_{\sf swap}$ denote the vector obtained by this swap.
    Then,
    \begin{align}
        \Phi(v_{\sf swap};v') - \Phi(v;v')
        = (h_{v'}(x')-h_{v'}(x)) (g_{v}(x)-g_{v}(x')).
        \label{eq:NonExpansiveness-swap-nonnegative}
    \end{align}
    According to Lemma~\ref{lemma:NonExpansiveness-h}, $h_{v'}(x')-h_{v'}(x)\geq 0$.
    Also, the monotonicity of $\sigma$ implies that $\sigma(v(x)-v(y)) - \sigma(v(x')-v(y))\geq 0$ for any $y \in A$, and 
    \begin{align}
        g_{v}(x)-g_{v}(x')
        = \frac1m \sum_{y=1}^m (\sigma(v(x)-v(y)) - \sigma(v(x')-v(y)))
        \geq 0.
    \end{align}
    By combining them, it holds that \eqref{eq:NonExpansiveness-swap-nonnegative} is always nonnegative.
    Therefore, the maximum of $\Phi(v;v')$ is achieved in $C = \{v\in [0,\beta]^m \ |\ v(1)\leq v(2)\leq \dots \leq v(m)\}$.
    Therefore, we will restrict our attention to $C$ in the following.

    From the definition of $\Phi(v;v')$, we have that
    \begin{align}
        \Phi(v;v') &= h_{v'}^\top g_{v} \\ &= \sum_{x=1}^m h_{v'}(x)\frac1m \sum_{y=1}^m \sigma(v(x)-v(y))
    \\   & = \frac1m\sum_{1\leq x<y\leq m}h_{v'}(x)\sigma(v(x)-v(y))
        + \frac1m\sum_{1\leq y<x\leq m}h_{v'}(x)(1-\sigma(v(x)-v(y))) + \frac{1}{2m}\sum_{x=1}^m h_{v'}(x)
       \\ &  = \frac1m \sum_{1\leq x<y\leq m}(h_{v'}(x)-h_{v'}(y))\sigma(v(x)-v(y))
       + (\text{a constant independent of $v$}).
    \end{align}
    Note that $v(x)-v(y)\leq 0$ for all $x < y$ for  $v \in C$, and $h_{v'}(x)-h_{v'}(y)\leq 0$ according to Lemma~\ref{lemma:NonExpansiveness-h}.
    Because $\sigma$ is convex if the domain is restricted to $t\geq 0$, and thus $\Phi(v;v') $ is concave in $v\in C$. Therefore, 
    \begin{align}
        \Phi(v;v') &\leq \Phi(v';v') + \nabla \Phi(v';v') (v-v')
      \\  & \leq \Phi(v';v') + h_{v'}^\top D_{v'}(v-v').
      \label{eq:NonExpansiveness-concavity}
    \end{align}
    
 Because  
 $(D_{v'} h_{v'})(x)=0\ (x=2,\dots,m-1)$ and because the column sum of $D_{v'}$ is $0$ from \eqref{eq:NonExpansiveness-Duxy-2}, we have that
    \begin{align}
         D_{v'} h_{v'}  = \kappa (\mathbbm{1}(m) - \mathbbm{1}(1))\quad \text{with some $\kappa>0$},
        \label{eq:NonExpansiveness-Dh}
    \end{align}
where $\mathbbm{1}(x)$ is the one-hot vector with $1$ at $x$.
    The positivity of $\kappa$ follows from the fact that $h_{v'}^\top D_{v'} h_{v'} = \kappa = \frac1m \sum_{x,y=1}^m \sigma'(v'(x)-v'(y))(h_{v'}(x)-h_{v'}(y))^2>0$.
    Also, $D_{v'}$ is symmetric from its definition \eqref{eq:NonExpansiveness-Duxy-2} and the fact that $\sigma'$ is even. 
    Therefore, \eqref{eq:NonExpansiveness-concavity} is further bounded by
    \begin{align}
        \Phi(v;v') &\leq \Phi(v';v') + \kappa (\mathbbm{1}(m)-\mathbbm{1}(1))^\top (v-v')
        \\ & = \Phi(v';v') + \kappa (v(m)-v(1)-(v'(m)-v'(1)))
        \\ & = \Phi(v';v') + \kappa (v(m)-v(1)-\beta).
    \end{align}
    As $v(m)-v(1)\leq \beta$ holds from $v\in C$, we obtain that $\Phi(v;v') \leq \Phi(v';v')$, which concludes the proof.
\end{proof}

\begin{lemma}\label{lemma:NonExpansiveness-h}

Consider $v'\in [0,\beta]^m$ such that $0=v'(1)\leq v'(2)\leq \dots \leq v'(m-1)\leq v'(m)=\beta$.
    Let $D_{v'}\in \mathbb{R}^{m\times m}$ be as defined in Lemma~\ref{lemma:nonexpansiveness-main}, i.e., 
    \begin{align}
        D_{v'}(x,y) = \frac{\mathrm{d}}{\mathrm{d}v'(y)}g_{v'}(x)
        = \begin{cases}
            - \frac1m \sigma'(v'(x)-v'(y)) & (y\ne x)
            \\
            \frac1m \sum_{y'\ne x}\sigma'(v'(x)-v'(y'))  &(y=x),
        \end{cases}
    \end{align}
    and define $h_{v'}\in \mathbb{R}^m$ by
    \begin{align}
        h_{v'}(1) = 0,\ h_{v'}(m) = 1,\ \text{and}\ (D_{v'} h_{v'})(x) = 0 \ \text{for all $x=2,\dots,m$},
        \label{eq:NonExpansiveness-def-h-2}
    \end{align}
    following \eqref{eq:NonExpansiveness-def-h}.
    
    Then, $h_{v'}(x)$ satisfies that
    \begin{align}
        0 = h_{v'}(1) \leq h_{v'}(2) \leq \dots \leq h_{v'}(m)= 1.
    \end{align}
\end{lemma}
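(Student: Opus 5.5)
The plan is to read $h_{v'}$ as the harmonic function of a random walk on $\{1,\dots,m\}$. With $w(x,y)=\sigma'(v'(x)-v'(y))>0$, the condition $(D_{v'}h_{v'})(x)=0$ for interior $x$ (i.e., $2\le x\le m-1$; the constraints at the boundary points $1,m$ are replaced by the boundary values) reads
\begin{align}
h_{v'}(x)=\sum_{y\neq x}P(x,y)\,h_{v'}(y),\qquad P(x,y)=\frac{w(x,y)}{\sum_{y'\neq x}w(x,y')}.
\end{align}
So $h_{v'}(x)$ is the probability that the walk with transitions $P$, started at $x$, hits $m$ before $1$. Since all weights are positive, the walk is absorbed at $\{1,m\}$ almost surely, which gives existence and uniqueness. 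The maximum principle then gives $0\le h_{v'}\le 1$. It remains to prove monotonicity.

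\textbf{Monotonicity via value iteration.} I would obtain $h_{v'}$ by iteration. Set $h_0=\mathbbm{1}(m)$, and for $k\ge0$ let
\begin{align}
h_{k+1}(1)=0,\qquad h_{k+1}(m)=1,\qquad h_{k+1}(x)=(Ph_k)(x)\ \ (2\le x\le m-1).
\end{align}
Then $h_k\to h_{v'}$ by the absorption argument: $h_k(x)$ is the probability of being absorbed at $m$ within $k$ steps. It therefore suffices to show that each step preserves the property
\[
h_k \text{ is nondecreasing in the index, with values in } [0,1].
\]
Once the interior values lie in $[0,1]$, clamping the endpoints to $0$ and $1$ keeps the sequence nondecreasing. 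So the key claim is that $P$, extended to all rows, maps nondecreasing functions to nondecreasing functions; that is, $P$ is stochastically monotone.

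\textbf{Key step: stochastic monotonicity of $P$.} For this I would use total positivity. First, $\sigma'(t)=\sigma(t)\sigma(-t)$ is log-concave, since $\log\sigma$ is concave. Because the $v'(x)$ are sorted, the kernel $w(x,y)=\sigma'(v'(x)-v'(y))$ is then TP2 in $(x,y)$, i.e. $w(x,y)w(x',y')\ge w(x,y')w(x',y)$ for $x<x'$ and $y<y'$; this is the Pólya-frequency property of log-concave functions of a difference. Row normalization preserves TP2. A TP2 transition kernel has the monotone likelihood ratio property row to row, so its rows are stochastically ordered, and hence $Pf$ is nondecreasing whenever $f$ is.

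The one wrinkle is that $P$ excludes the diagonal term $y=x$. I would handle it by adding the self-loop weight $w(x,x)=\sigma'(0)$. This gives a lazy walk with the same harmonic functions, because $h(x)$ appears on both sides of the harmonic equation with equal coefficient. The full kernel with the diagonal included is still TP2.

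\textbf{Main obstacle.} I expect the main difficulty to be this step: verifying cleanly that TP2 survives the ordering with ties in $v'$ and the diagonal modification, and that TP2 implies the required preservation of monotone functions. Ties are harmless, since the inequalities are then equalities. I would cite or briefly reprove the standard fact that an MLR-ordered family of distributions is stochastically ordered, via a one-line Abel summation. Passing to the limit $k\to\infty$ then gives $0=h_{v'}(1)\le h_{v'}(2)\le\cdots\le h_{v'}(m)=1$.
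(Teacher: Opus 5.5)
Your proposal is correct. It shares the paper's starting point: $h_{v'}$ is the probability that a lazy random walk, absorbed at $1$ and $m$, hits $m$ before $1$. The two remaining steps are done differently.

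\textbf{How the paper does it.} The paper uses a uniformized kernel. Off the diagonal it sets $p(x\to y)=\sigma'(v'(x)-v'(y))/\Lambda$ with a single constant $\Lambda$, and puts the leftover mass on the self-loop. It compares rows $x<x'$ by a direct single-crossing argument (Lemma~\ref{lemma:NonExpansiveness-pxy-order}). Off the diagonal, $\sigma'(v'(x)-v'(y))-\sigma'(v'(x')-v'(y))$ changes sign once, because $\sigma'$ is even and decreasing on $[0,\infty)$. The inflated diagonal entries are checked separately via $p(x\to x)\ge \sigma'(0)/\Lambda$. It then concludes with a monotone coupling: two chains driven by common uniforms through inverse CDFs stay ordered pathwise, so the hitting events are nested.

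\textbf{How you do it.} You normalize row by row with self-loop weight exactly $\sigma'(0)$. Your kernel is then a row rescaling of the full matrix $(\sigma'(v'(x)-v'(y)))_{x,y}$. That matrix is TP2 by log-concavity of $\sigma'$, since $(\log\sigma')''=-2\sigma'<0$. MLR then gives stochastic monotonicity. Value iteration $h_k(x)=\mathbb{P}_x[\tau(m)\le k]$ replaces the coupling.

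\textbf{What each buys.} Your route hands the row comparison to a standard total-positivity fact. It avoids both the separate diagonal check and the pathwise event-inclusion argument. The price is that you need log-concavity of $\sigma'$, whereas the paper needs only symmetric unimodality. The normalization is matched to the tool in each case. The paper's uniformized kernel is generally not TP2: in the minor with rows $x<x'$ and columns $y<x$, the inflated entry $p(x\to x)$ appears with a minus sign. Your row-normalized kernel is TP2.

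\textbf{Two small points.} First, the self-loop is essential rather than a wrinkle, so the iteration must be run with the lazy kernel, as your fix does. The off-diagonal kernel $P$ is not stochastically monotone in general. For example, take $m=4$ and $\beta$ small, so all weights are nearly $\sigma'(0)$, and let $f=\mathbbm{1}(3)+\mathbbm{1}(4)$. Then $(Pf)(2)\approx 2/3 > (Pf)(3)\approx 1/3$.

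Second, you were right to impose $(D_{v'}h_{v'})(x)=0$ only for $2\le x\le m-1$, as in Lemma~\ref{lemma:nonexpansiveness-main}. With the statement's range $x=2,\dots,m$, the vanishing column sums of $D_{v'}$ would force $D_{v'}h_{v'}=0$. That makes $h_{v'}$ constant, which contradicts $h_{v'}(1)=0$ and $h_{v'}(m)=1$.
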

\begin{proof}    
    To show that this $h_{v'}(x)$ is monotonically increasing in $x$, we consider a stochastic process $\{X_t\}_{t=0}^\infty$ corresponding to a random walk over $1,\dots,m$.
    Specifically, define
    \begin{align}
        \Lambda = \max_{x}\sum_{y=1}^m\sigma'(v'(x)-v'(y)),
        \label{eq:NonExpansiveness-def-lambda}
    \end{align}
    and, for each interior state $x\in\{2,\dots,m-1\}$, define the transition probability $p(x\to y)$ to $y$ by
    \begin{align}
        p(x\to y) = \frac{\sigma'(v'(x)-v'(y))}{\Lambda}\ (y \ne x), \quad p(x\to x) = 1- \frac{\sum_{y\ne x}\sigma'(v'(x)-v'(y))}{\Lambda}.
        \label{eq:NonExpansiveness-def-pxy}
    \end{align}
    Also, let $1$ and $m$ be absorbing states.
    In Lemma~\ref{lemma:NonExpansiveness-pxy-order}, we will show that, for $x,x'\in \{2,\dots,m-1\}$ with $x<x'$, there exists some $x\leq x''\leq x'$ such that
    \begin{align}
        p(x\to y)\geq p(x'\to y) \text{ if } y\leq x'' ,\text{ and } p(x\to y)\leq p(x'\to y) \text{ if } y>x''.
        \label{eq:NonExpansiveness-pxy-order}
    \end{align}
    
    Let $\tau(x)$ be the hitting time when the process $\{X_t\}_{t=0}^\infty$ arrives at $x$ for the first time.
    If there is no such $t$, we let $\tau(x) = \infty$.
    We consider the probability $\mathbb{P}[\tau(m)<\tau(1)\ | \ X_0 = x]=:q(x)$, i.e., the probability where it arrives at $m$ before visiting $1$, when the process starts from $x$.
    
    Then, $q$ satisfies $q(1)=0$ and $q(m)=1$.
    Also, for $x\in \{2,\dots,m-1\}$, because the probability to move from $x$ to $y$ is $p(x\to y)$ and the probability of arriving at $m$ before $1$ starting from $y$ is $q(y) = \mathbb{P}[\tau(m)<\tau(1)\ | \ X_0 = y]$, we have that
    \begin{align}
        &q(x) = \sum_{y=1}^m p(x\to y)q(y)
        \\ & \Leftrightarrow
        q(x) = 
        \Big(1-\frac{\sum_{y\ne x} \sigma'(v'(x)-v'(y))}{\Lambda} \Big)q (x)  + \sum_{y\ne x}\frac{ \sigma'(v'(x)-v'(y))}{\Lambda} q (y) 
        \\ & \Leftrightarrow
        \sum_{y\ne x} \sigma'(v'(x)-v'(y)) q(x)= \sum_{y\ne x}\sigma'(v'(x)-v'(y))q(y).
        \\ & \Leftrightarrow  q(x) = (D_{v'} q)(x).
    \end{align}
    By looking at \eqref{eq:NonExpansiveness-def-h-2}, these properties of $q$ are the same as the definition of $h_{v'}$, and thus $q=h_{v'}$ holds. 
    Therefore, what we need to show is that $q(x)\leq q(x') \Leftrightarrow \mathbb{P}[\tau(m)<\tau(1)\ | \ X_0 = x] \leq \mathbb{P}[\tau(m)<\tau(1)\ | \ X_0 = x']$ for all $x,x'$ with $x \leq x'$.

    To prove that, we consider an equivalent definition of $\{X_t\}_{t=0}^\infty$.
    For $x\in\{2,\dots,m-1\}$, define the cumulative mass function of the next step by
    \begin{align}
    L(y;x) := \sum_{1\leq z \leq y} p(x\to z).
    \end{align}
    For convenience, let $L(0;x) = 0$.
   We introduce $\{\alpha_t\}_{t=0}^\infty$, where $\alpha_t \overset{\text{i.i.d.}}{\sim} \mathrm{Unif}([0,1])$. If the current state is $X_t = x$, define $X_{t+1}=y$, where
    \begin{align}
    L(y-1;x)<\alpha_t\leq L(y;x).
    \label{eq:NonExpansiveness-def-Xt}
    \end{align}
    Because $L(y;x)- L(y-1;x) = p(x\to y)$, this is an equivalent definition of $\{X_t\}_{t=0}^\infty$.

    We then consider two instances of this process with different initial states,
    denoted by $\{X_t^{(x)}\}_{t=0}^\infty$ and $\{X_t^{(x')}\}_{t=0}^\infty$ with $x \le x'$,
    both driven by the same randomness $\{\alpha_t\}_{t=0}^\infty$.
    For each $t$, we show that $X_{t+1}^{(x)} \le X_{t+1}^{(x')}$ by induction.
    
    For $x,x'\in\{2,\dots,m-1\}$ with $x<x'$, $L(y;x) - L(y;x') = 0$ at $y=1,m$.
    Also, as $y$ goes from $1$ to $m-1$, the sign of $\Delta(y) = (L(y+1;x) - L(y+1;x')) - (L(y;x) - L(y;x')) = p(x\to y+1) - p(x'\to y+ 1)$ changes only once from $-$ to $+$ according to \eqref{eq:NonExpansiveness-pxy-order}.
    Therefore, for $x,x'$ with $x<x'$, $L(y;x)\geq L(y;x')$ always holds.

    Therefore, when $X_{t}^{(x)}\leq X_{t}^{(x')}$, we have that $L(X_{t+1}^{(x)}-1;X_{t}^{(x)}) \geq L(X_{t+1}^{(x)}-1;X_{t}^{(x')})$, and because of the definition of $X_{t+1}^{(x)}$ \eqref{eq:NonExpansiveness-def-Xt}
    \begin{align}
        \alpha_t > L(X_{t+1}^{(x)}-1;X_{t}^{(x)}) \geq L(X_{t+1}^{(x)}-1;X_{t}^{(x')}).
    \end{align}
    As $X_{t+1}^{(x')}$ is defined as $y$ such that $L(y-1;X_{t}^{(x')})<\alpha_t \leq L(y;X_{t}^{(x')})$, it follows that $X_{t+1}^{(x')}\geq X_{t+1}^{(x)}$.
    
    Therefore, the two processes with different initial points are coupled so that $X_{t}^{(x)}\leq X_{t}^{(x')}$ always holds, which implies that $\mathbb{P}[\tau(m)<\tau(1)\ | \ X_0 = x] \leq \mathbb{P}[\tau(m)<\tau(1)\ | \ X_0 = x'] \Leftrightarrow q(x)\leq q(x') \Leftrightarrow h_{v'}(x) \leq h_{v'}(x')$ for all $x,x'$ with $x \leq x'$.
\end{proof}

\begin{lemma}\label{lemma:NonExpansiveness-pxy-order}
    Consider $v'\in [0,\beta]^m$ satisfying $0=v'(1)\leq v'(2)\leq \dots \leq v'(m-1)\leq v'(m)=\beta$. 
    Let $p(x\to y)$ be defined as in \eqref{eq:NonExpansiveness-def-lambda} and \eqref{eq:NonExpansiveness-def-pxy} in Lemma~\ref{lemma:NonExpansiveness-h}. 
    For any $x,x' \in \{2,\dots,m-1\}$ with $x<x'$, there exists some $x\leq x'' \leq x'$ such that the following holds:
    \begin{align}
        \text{for all $y \in \{1,\dots,x''\}$, }p(x\to y) \geq p(x'\to y),  \text{ and for all $y \in \{x''+1,\dots,m\}$, }p(x\to y) \leq p(x'\to y).
    \end{align}
\end{lemma}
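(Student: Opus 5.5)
The plan is to compare the two rows of the transition matrix entry by entry, treating the off-diagonal entries (determined by $\sigma'$) and the two diagonal entries $y=x$ and $y=x'$ (determined by $\Lambda$) separately. Write $a=v'(x)\le b=v'(x')$, which holds because $v'$ is nondecreasing and $x<x'$. Set $t_y=v'(y)$; these values are also nondecreasing in $y$.

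\textbf{Off-diagonal entries.} For $y\notin\{x,x'\}$ we have
\begin{align}
p(x\to y)=\frac{\sigma'(a-t_y)}{\Lambda},\qquad p(x'\to y)=\frac{\sigma'(b-t_y)}{\Lambda}.
\end{align}
Since $\sigma'$ is even and strictly decreasing on $[0,\infty)$, we get $p(x\to y)\ge p(x'\to y)$ iff $|a-t_y|\le|b-t_y|$, i.e.\ iff $t_y\le m_0:=\frac{a+b}{2}$. Equality holds when $t_y=m_0$, and also for every $y$ when $a=b$. Because $t_y$ is monotone in $y$, the sign of $p(x\to y)-p(x'\to y)$ on these entries switches at most once, from $\ge$ to $\le$.

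\textbf{Choice of $x''$.} Define
\begin{align}
x''=\max\{y\le x'-1 : t_y\le m_0\}.
\end{align}
This set contains $x$, since $t_x=a\le m_0$, so $x\le x''\le x'-1$. For $y\le x''$ with $y\ne x$, monotonicity gives $t_y\le t_{x''}\le m_0$, hence $p(x\to y)\ge p(x'\to y)$. For $x''<y<x'$, maximality gives $t_y>m_0$, hence $p(x\to y)\le p(x'\to y)$. For $y>x'$ we have $t_y\ge b\ge m_0$, with equality in the comparison in the boundary case, hence $p(x\to y)\le p(x'\to y)$.

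\textbf{Diagonal entries.} It remains to check $y=x$, which lies on the side $y\le x''$, and $y=x'$, which lies on the side $y>x''$. Since $\Lambda$ is the maximum over rows of the full sum including the $y=x$ term $\sigma'(0)=\frac14$, we have
\begin{align}
p(x\to x)=1-\frac{\sum_{y\ne x}\sigma'(a-t_y)}{\Lambda}=1-\frac{S_x-\tfrac14}{\Lambda}\ge\frac{1}{4\Lambda},
\end{align}
where $S_x=\sum_y\sigma'(a-t_y)\le\Lambda$. On the other hand,
\begin{align}
p(x'\to x)=\frac{\sigma'(b-a)}{\Lambda}\le\frac{1}{4\Lambda},
\end{align}
because $\sigma'\le\frac14$. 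Hence $p(x\to x)\ge p(x'\to x)$, as required for $y=x\le x''$. Symmetrically, $p(x'\to x')\ge\frac{1}{4\Lambda}\ge p(x\to x')$, which is the required direction for $y=x'>x''$.

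\textbf{Main obstacle.} I expect the only delicate point to be these diagonal entries. The argument relies on $\Lambda$ including the self-term $\sigma'(0)=\frac14$, which is what forces every self-loop probability to be at least $\frac{1}{4\Lambda}$. If $\Lambda$ were instead read as a sum over $y\ne x$ only, the self-loop could vanish and I would need to redefine $\Lambda$, for example by adding $\frac14$. The degenerate case $a=b$ is already covered: all off-diagonal comparisons are equalities, and the choice $x''=x$ (more generally, any $x''\ge x$ given by the definition above) works together with the diagonal check.
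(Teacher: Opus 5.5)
Your proof is correct and follows the same approach as the paper. The off-diagonal entries are compared using the evenness and monotonicity of $\sigma'$, which gives a single sign change in $y$. The diagonal entries are handled by $p(x\to x)\ge \sigma'(0)/\Lambda\ge p(x'\to x)$, which uses the fact that $\Lambda$ includes the self-term. Your explicit midpoint choice of $x''\le x'-1$ makes the $y=x'$ diagonal step slightly more precise than the paper's, which only records $x''\le x'$ even though strictness is what that step needs.
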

\begin{proof}
    Remember that $\sigma'(t)$ is even and monotonically decreasing in $t\geq 0$.
    Because $v'(1)\leq v'(2)\leq \dots \leq v'(m-1)\leq v'(m)$, the sign of $\sigma'(v'(x)-v'(y)) - \sigma'(v'(x')-v'(y))$ only changes as $y$ increases from $y=1$ to $y=m$.
    Also, because $v'(x)\leq v'(x')$ from $x<x'$, $\sigma'(v'(x)-v'(y)) - \sigma'(v'(x')-v'(y))\geq 0$ when $y \leq x$, and $\sigma'(v'(x)-v'(y)) - \sigma'(v'(x')-v'(y))\leq 0$ when $y \geq x'$.
    Therefore, there exists some $x''$ with $x\leq x''\leq x'$ such that $\sigma'(v'(x)-v'(y)) - \sigma'(v'(x')-v'(y))\geq 0$ when $y \leq x''$ and $\sigma'(v'(x)-v'(y)) - \sigma'(v'(x')-v'(y))\leq 0$ when $y > x''$.
    
    For $y \ne x, x'$, $p(x\to y) = \Lambda^{-1}\sigma'(v'(x)-v'(y))$ and $p(x'\to y) = \Lambda^{-1}\sigma'(v'(x')-v'(y))$.
    Therefore, the assertion directly follows.
    
    For $y = x$, 
    \begin{align}
        p(x\to x) = 1 - \frac{\sum_{y'\ne x}\sigma'(v'(x)-v'(y))}{\Lambda}
        &= 1 - \frac{\sum_{y=1}^m\sigma'(v'(x)-v'(y))}{\Lambda} + \frac{\sigma'(v'(x)-v'(x))}{\Lambda}
       \\ & \geq \frac{\sigma'(v'(x)-v'(x))}{\Lambda} = \frac{\sigma'(0)}{\Lambda},
    \end{align}
    as $\sum_{y=1}\sigma'(v'(x)-v'(y))\leq \Lambda$ follows from the definition of $\Lambda$.
    Also, $p(x'\to x) =  \frac{\sigma'(v'(x')-v'(x))}{\Lambda}$.
    Therefore, according to $\sigma'(0)\geq \sigma'(v'(x')-v'(x))$, we have $p(x\to x) \geq p(x'\to x)$.
    Because $x\leq x''$, this is precisely the inequality that we need to establish.
    
    Similarly, for $y = x'$, it holds that $p(x'\to x')\geq \frac{\sigma'(0)}{\Lambda} \geq p(x'\to x) = \frac{\sigma'(v'(x)-v'(x'))}{\Lambda}$.
    Because $x''\leq x'$, this is also the inequality that we need to establish. Combining the above, the claim is established for all $y$.
\end{proof}

\section{Proof of Lower Bounds}
In this section, we present the proofs of the lower bounds. The proof of Theorem~\ref{theorem:lower-RLHF} is given in Section~\ref{subsection:lower-dependent}, and the proof of Theorem~\ref{theorem:lower-linear-main} is provided in Section~\ref{subsection:lower-linear}.
\subsection{A Lower Bound Dependent on the KL Constraint}\label{subsection:lower-dependent}
We first show below that, for $e^{-\Theta(B)} \le \tau \le B$, the distortion is lower bounded by $\tilde{\Theta}(\min\{B\beta,\,B\beta/\tau\})$.
\lowerRLHF*
\begin{proof}
    We consider four alternatives and set $(\mu(1), \mu(2), \mu(3), \mu(4)) = (e^{-B}, e^{-B}, 1-3e^{-B}, e^{-B})$.
    We define the utility $u$ as follows.

    \noindent{\bf (i) With probability $p_1$:}\quad
    $(u(1), u(2), u(3), u(4)) = (\frac{1}{\beta}, 0, 0, 0)$.

    \noindent{\bf (ii) With probability $p_2$:}\quad
    $(u(1), u(2), u(3), u(4)) = (0, \frac{1}{2}, 1, 0)$.

    \noindent{\bf (iii) With probability $p_3=1-p_1-p_2$:}\quad
    $(u(1), u(2), u(3), u(4)) = (0, 0, 0, \frac{1}{\beta})$.

    By Lemma~\ref{lemma:Borda_order_match}, the ordering of the Borda scores
${\sf Borda}(x)=\mathbb{E}_{y \sim \mu}[p(x\succ y)]$ coincides with that of the estimated rewards $\bar{r}$.
    We compute the expected Borda scores and define $p_1$, $p_2$, and $p_3$ so that ${\sf Borda}(1)= {\sf Borda}(2)\leq {\sf Borda}(3) = {\sf Borda}(4)$, which implies that $\bar{r}(1)=\bar{r}(2) \leq \bar{r}(3)=\bar{r}(4)$. 
    Regardless of the lower and upper bounds in the reward clipping, this implies that $r(1)=r(2)\leq r(3)=r(4)$.

    \noindent{\bf Alternative $1$:}\quad
    \begin{align}
        {\sf Borda}(1) = &p_1\Big(e^{-B}\cdot \frac12 + (1-e^{-B})\cdot \sigma(1)\Big) + p_2\Big(2e^{-B}\cdot \frac12 + (1-3e^{-B})\cdot \sigma(-\beta)+e^{-B}\sigma\Big(-\frac{\beta}{2}\Big)\Big) \\ &+ p_3 \Big((1-e^{-B})\cdot \frac12 + e^{-B}\cdot\sigma(-1)\Big).
    \end{align}

    \noindent{\bf Alternative $2$:}\quad
    \begin{align}
        {\sf Borda}(2) = &p_1\Big(e^{-B}\cdot \sigma(-1) + (1-e^{-B})\cdot \frac12\Big) + p_2\Big(2e^{-B}\cdot \sigma\Big(\frac{\beta}{2}\Big) + e^{-B}\cdot \frac12 + (1-3e^{-B})\cdot \sigma\Big(-\frac{\beta}{2}\Big)\Big) \\ &+ p_3 \Big((1-e^{-B})\cdot\frac12 + e^{-B}\cdot\sigma(-1)\Big).
    \end{align}

    \noindent{\bf Alternative $3$:}\quad
    \begin{align}
        {\sf Borda}(3) = &p_1\Big(e^{-B}\cdot \sigma(-1) + (1-e^{-B})\cdot \frac12\Big) + p_2\Big(2e^{-B}\cdot \sigma(\beta) + e^{-B}\cdot \sigma\Big(\frac{\beta}{2}\Big) + (1-3e^{-B})\cdot \frac12\Big) \\ &+ p_3 \Big((1-e^{-B})\cdot\frac12 + e^{-B}\cdot\sigma(-1)\Big).
    \end{align}

    \noindent{\bf Alternative $4$:}\quad
    \begin{align}
        {\sf Borda}(4) = &p_1\Big(e^{-B}\cdot \sigma(-1) + (1-e^{-B})\cdot \frac12\Big) + p_2\Big(2e^{-B}\cdot \frac12 + (1-3e^{-B})\cdot \sigma(-\beta)+e^{-B}\sigma\Big(-\frac{\beta}{2}\Big)\Big) \\ &+ p_3 \Big((1-e^{-B})\cdot\sigma(1) + e^{-B}\cdot \frac12\Big).
    \end{align}

    Using the above calculations, the condition where ${\sf Borda}(1)= {\sf Borda}(2)$ holds is
    \begin{align}
        p_1 = \underbrace{\frac{e^{-B}\big(2\sigma\big(\frac{\beta}{2}\big)-\sigma\big(-\frac{\beta}{2}\big)-\frac12\big) + (1-3e^{-B})\big(\sigma\big(-\frac{\beta}{2}\big)-\sigma(-\beta)\big)}{e^{-B}\big(\frac12-\sigma(-1)\big) + (1-e^{-B})\big(\sigma(1)-\frac12\big)}}_{=c_1'}p_2,
    \end{align}
    and the condition where ${\sf Borda}(3)= {\sf Borda}(4)$ holds is
    \begin{align}
        p_3 = \underbrace{\frac{e^{-B}\big(2\sigma(\beta)+\sigma\big(\frac{\beta}{2}\big)-\sigma\big(-\frac{\beta}{2}\big)-1\big) + (1-3e^{-B})\big(\frac12-\sigma(-\beta)\big)}{e^{-B}\big(\frac12-\sigma(-1)\big) + (1-e^{-B})\big(\sigma(1)-\frac12\big)}}_{=c_2'}p_2.
    \end{align}
    Here, $c_1' = \Theta(e^{-B}+\sigma(-\frac{\beta}{2}))=\Theta(e^{-B}+e^{-\frac{\beta}{2}})$ and $c_2' = \Theta(1)$.
    So we rewrite $p_1=c_1(e^{-B}+e^{-\frac{\beta}{2}})p_3$ and $p_2 = c_2 p_3$ with $c_2 = (c_2')^{-1}=\Theta(1)$, 
    $c_1 = (e^{-B}+e^{-\frac{\beta}{2}})^{-1}c_1'c_2=\Theta(1)$, and $p_3 = \big(1+c_1(e^{-B}+e^{-\frac{\beta}{2}})+c_2\big)^{-1}=\Theta(1)$.
    Also, because $u(3)\geq u(2)$ is always true, ${\sf Borda}(2) \leq {\sf Borda}(3)$.

    Next, we specify $\pi_{\mathrm{ref}}$ and analyze $\pi_{\mathrm{RLHF}}$.
    When $\tau \leq B$, $\beta \geq 3$, and $B \geq \log 3(1+\beta)$, we define
    \begin{align}
    (\pi_{\mathrm{ref}}(1), \pi_{\mathrm{ref}}(2), \pi_{\mathrm{ref}}(3), \pi_{\mathrm{ref}}(4)) = \Big(1-\frac{\tau}{B\beta}-(1+\beta)e^{-B}, \frac{\tau}{B\beta}, e^{-B}, \beta e^{-B}\Big)
    \end{align}
    and we can see that $\big\|\log \frac{\mathrm{d}\mu}{\mathrm{d}\pi_{\mathrm{ref}}}\big\|_\infty = B$.
    Because $\pi_{\mathrm{RLHF}}$ is the maximizer of $\mathbb{E}_{u\sim \mathcal{D},x\sim \pi}[r(x)]$ within $\mathrm{KL}(\pi\|\pi_{\mathrm{ref}})\leq \tau$ (if the maximizer is not unique, we choose the one with the smallest KL divergence), $r(1)=r(2)\leq r(3)=r(4)$ implies that $\frac{\pi_{\mathrm{RLHF}}(1)}{\pi_{\mathrm{ref}}(1)}= \frac{\pi_{\mathrm{RLHF}}(2)}{\pi_{\mathrm{ref}}(2)}\leq 1$ while $\frac{\pi_{\mathrm{RLHF}}(3)}{\pi_{\mathrm{ref}}(3)}=\frac{\pi_{\mathrm{RLHF}}(4)}{\pi_{\mathrm{ref}}(4)}\geq 1$.
    We denote $\pi_{\mathrm{RLHF}}(3)$ by $q$ in the following.

    Then,
    \begin{align}
        \mathrm{KL}(\pi\|\pi_{\mathrm{ref}}) = (1-(1+\beta)q)\log \frac{1-(1+\beta)q}{1-(1+\beta)e^{-B}} + (1+\beta)q\log \frac{(1+\beta)q}{(1+\beta)e^{-B}} \leq \tau.
    \end{align}
    By using $\log (1-x)\geq -x$, we have that
    \begin{align}
        (1+\beta)q\Big(\log q + B - 1\Big) \leq \tau.
    \end{align}
    When $3\log \frac{B\beta}{3\tau}\leq B$ and $B \geq 3$, this implies that $q \leq \frac{3\tau}{B\beta}$.
    Therefore, the average utility of $\pi_{\mathrm{RLHF}}$ is 
    \begin{align}
        \mathbb{E}_{u\sim\mathcal{D},x\sim \pi_{\mathrm{RLHF}}}[u(x)] &\leq \pi_{\mathrm{ref}}(1) \cdot \frac{1}{\beta}\cdot p_1 + \pi_{\mathrm{ref}}(2) \cdot \frac12 \cdot p_2 + q \cdot 1 \cdot p_2 + \beta q \cdot \frac{1}{\beta}\cdot p_3
        \\ & \leq \frac{c_1(e^{-B}+e^{-\frac{\beta}{2}})p_3}{\beta^2} + \frac{c_2p_3\tau}{2B\beta} + \frac{3c_2p_3\tau}{B\beta} + \frac{3p_3\tau}{B\beta}
       \\ & \leq \Big(\frac{27c_1\tau^2}{\beta^3 B^2}+\frac{c_1}{\beta} + \frac{7c_2}{2} + 3\Big)\frac{\tau p_3}{B\beta},
       \label{eq:lower-dependent-1}
    \end{align}
    where we have used $3\log \frac{B\beta}{3\tau}\leq B$ and $\beta \geq 2\log \frac{B}{\tau}$ to evaluate $e^{-B}$ and $e^{-\frac{\beta}{2}}$, respectively, in the final inequality.

    On the other hand, when $B \beta \geq e$, 
    consider a distribution $\pi'=\min\{\frac{\tau}{\log B\beta},1\}\mathbbm{1}_2 + \big(1-\min\{\frac{\tau}{\log B\beta},1\}\big)\mu$.
    Then, we have that
    \begin{align}
        \mathrm{KL}(\pi'\|\pi_{\mathrm{ref}})
        \leq \min\Big\{\frac{\tau}{\log B\beta},1\Big\} \times \log \frac{\min\big\{\frac{\tau}{\log B\beta},1\big\}}{\frac{\tau}{B\beta}} \leq \frac{\tau}{\log B \beta}\times \log B \beta \leq \tau.
    \end{align}
    Also, the average utility is
    \begin{align}
        \max_{\mathrm{KL}(\pi\|\pi_{\mathrm{ref}})\leq \tau}\mathbb{E}_{u\sim \mathcal{D},x\sim \pi}[u(x)]
        \geq \mathbb{E}_{u\sim \mathcal{D},x\sim \pi'}[u(x)] \geq \min\Big\{\frac{\tau}{\log B\beta},1\Big\} \cdot \frac12 \cdot c_2 p_3 .
        \label{eq:lower-dependent-2}
    \end{align}

    From \eqref{eq:lower-dependent-1} and \eqref{eq:lower-dependent-2}, the distortion is lower bounded by
    \begin{align}
         {\sf Dist}(\pi_{\mathrm{RLHF}})\geq \frac{c_2}{2}\Big(\frac{27c_1\tau^2}{\beta^3 B^2}+\frac{c_1}{\beta} + \frac{7c_2}{2} + 3\Big)^{-1} \min\Big\{\frac{B}{\log B\beta},\frac{B}{\tau}\Big\}\beta,
    \end{align}
    as desired.
\end{proof}

\subsection{A Lower Bound Independent of the KL Constraint}\label{subsection:lower-linear}
Theorem~\ref{theorem:lower-RLHF} provided a lower bound that depends on the KL budget $\tau$ and the density ratio $B$ between $\mu$ and $\pi_{\mathrm{ref}}$.
Here, we complement Theorem~\ref{theorem:lower-RLHF} by showing that an $\Omega(\beta)$ distortion arises for any value of $B$, including the case $B=0$, and for any choice of $\tau$.

\lowerlinear*
\begin{remark}[relationship to identifiability results]
The problem of identifying the underlying parameters of mixtures of ranking models is closely related to this lower bound. For the BT model, \citet{wu2015clustering} establish identifiability under the assumption that comparisons for all pairs of alternatives are observed from the same user, while \citet{oh2014learning} and \citet{chierichetti2018learning} assume that additional comparison information is available for each user. Subsequently, \citet{zhang2022identifiability} show that when there are two distinct users, the parameters are identifiable up to a measure-zero set. 

Our proof considers two users, and thus falls within the scope of the positive result of \citet{zhang2022identifiability}. Our lower bound is based on the fact that, even when the model parameters are identifiable, a degree of freedom corresponding to a permutation of alternatives remains, which prevents distinguishing differences in average utility on the order of $\Theta(\beta)$.
\end{remark}
\begin{proof}[Proof of Theorem~\ref{theorem:lower-linear-main}]
    Take $\delta,\varepsilon$ to be sufficiently small and $K \in \mathbb{N}$ to be sufficiently large.
    We consider $K+2$ alternatives and set $(\mu(1), \dots, \mu(K+1), \mu(K+2)) = (\delta, \dots,\delta, 1-(K+1)\delta)$.
    We define an instance $\mathcal{D}$ by specifying the utility $u$ as follows.
    The collection $\{\mathcal{D}_i\}_{i=1}^N$ is then obtained by uniformly randomly permuting the alternatives $1, \dots, K+ 1$.

    \noindent{\bf (i) With probability $p_1$:}\quad
    Define the utility vector $(u(1),u(2),\dots,u(K+2))=(1,0,\dots,0)$.

    \noindent{\bf (ii) With probability $p_2=1-p_1$:}\quad
    Choose $y$ uniformly at random from $\{2,\dots,K+1\}$, and set
    $u(x) = \varepsilon \beta^{-1} \ (\text{if $x=y$}),\ 0 \ (\text{otherwise})$.

    Now, let us compute the expected win rates between alternatives in $\mathcal{D}$:
    
    \noindent{\bf When $x,y\in \{2,\dots,K+1\}$:}\quad
    Because of the symmetry within each set of alternatives, $p(x \succ y) = \frac12$.

    \noindent
    {\bf When $x=1$ and $y\in \{2,\dots,K+1\}$:}\quad
    By considering the deviation from the win rate of $\frac12$, we have that
    \begin{align}
        p(x\succ y) - \frac12= p_1 \cdot \delta \Big(\sigma(\beta)-\frac12\Big) - \frac{p_2}{K} \cdot \delta \Big(\frac12-\sigma(-\varepsilon )\Big).
        \label{eq:lower-linear-4}
    \end{align}

    \noindent{\bf When $x=1$ and $y=K+2$:}\quad
    \begin{align}
        p(x\succ y) - \frac12= p_1 \cdot \delta \Big(\sigma(\beta)-\frac12\Big) .
        \label{eq:lower-linear-5}
    \end{align}

    \noindent{\bf When $x\in \{2,\dots,K+1\}$ and $y=K+2$:}\quad
    \begin{align}
        p(x\succ y) - \frac12=\frac{p_2}{K} \cdot \delta \Big(\frac12-\sigma(-\varepsilon )\Big).
        \label{eq:lower-linear-6}
    \end{align}

    The condition where $\eqref{eq:lower-linear-4}=0$ and $\eqref{eq:lower-linear-5}=\eqref{eq:lower-linear-6}$ hold is
    \begin{align}
        p_1 = \underbrace{\frac{\sigma(\varepsilon )-\frac12}{\sigma(\beta)-1}}_{=c_p} \cdot K^{-1}p_2\quad (>0).
    \end{align}
Under this condition, $\hat{\pi}$ cannot identify the permutation of the alternatives $1, \dots, K+1$.

Now, consider the output $\hat{\pi}$ of the algorithm that achieves the optimal distortion.
For $\delta$ chosen sufficiently small, it is not possible to allocate all probability mass to alternatives $1$ through $1+K$, which implies that $\mathrm{KL}(\hat{\pi} \| \pi_{\mathrm{ref}}) = \tau$.
Letting $P$ be a random permutation of $\{1,\dots,K+2\}$ fixing $K+2$, using $P_{\#}\hat{\pi}$ instead of $\hat{\pi}$ decreases the distortion and the KL constraint unless $P_{\#}\hat{\pi}$ and $\hat{\pi}$ are equal as distributions.
Therefore, $\hat{\pi}$ must satisfy
\begin{align}
\hat{\pi}(1)=\dots=\hat{\pi}(K+1)
= q \geq \delta.
\end{align}
    Let us consider the KL constraint.
    \begin{align}
       \mathrm{KL}(\hat{\pi}\|{\pi_\mathrm{ref}})=\tau \Leftrightarrow q\log \frac{q}{\delta} + Kq\log \frac{q}{\delta} + (1-(1+K)q)\log \frac{1-(1+K)q}{1-(1+K)\delta} =\tau.
    \end{align}
    By using $\log(1-x)\geq -x$, we have that
     \begin{align}
     (1+K)q\Big(\log \frac{q}{\delta} - 1\Big)\leq \tau ,
     \label{eq:lower-linear-1}
    \end{align}
    and 
    \begin{align}
        (1+K)q\log \frac{q}{\delta} \geq \tau
        \label{eq:lower-linear-2}
    \end{align}
    By taking $\delta\leq (1+K)^{-1}(K^{K+1}\log K)^{-1}\tau$, we can see that $\log \frac{q}{\delta} \geq K\log K$ from \eqref{eq:lower-linear-2}.
    Under this, \eqref{eq:lower-linear-1} implies that
    \begin{align}
       &\Big(1-\frac{1}{K\log K}\Big)(1+K)q\log \frac{q}{\delta} \leq (1+K)q\Big(\log \frac{q}{\delta} - 1\Big)\leq \tau
       \\ & \Rightarrow \Big(1-\frac{1}{K\log K}\Big)(1+K)q\log \frac{q}{\delta} \leq \tau.
        \label{eq:lower-linear-3}
    \end{align}

    Let us consider a distribution $\pi'=\big(1-\frac{1}{K\log K}\big)\big(K q-\delta\big)\mu' + \big(1-\big(1-\frac{1}{K\log K}\big)(K q-\delta)\big)\mu$, where $\mu'$ is the uniform distribution over $1,\dots,a$.
    Then, by using $\log \frac{q}{\delta} \geq K\log K$, 
    \begin{align}
        \mathrm{KL}(\pi'\|{\pi_\mathrm{ref}})
        &\leq \Big(1-\frac{1}{K\log K}\Big)K q \Big(\log \frac{q}{\delta}+\log K\Big) 
        \\ & \leq \Big(1-\frac{1}{K\log K}\Big)(1+K) q \log \frac{q}{\delta},
    \end{align}
    where the RHS is bounded by $\tau$ according to \eqref{eq:lower-linear-3}.
    Using this distribution $\pi'$, the maximum average utility is lower bounded by 
    \begin{align}
        \max_{\mathrm{KL}(\pi\|\pi_{\mathrm{ref}})\leq \tau}\mathbb{E}_{u\sim \mathcal{D},x\sim \pi}[u(x)]
        \geq \mathbb{E}_{u\sim \mathcal{D},x\sim \pi'}[u(x)] \geq \Big(1-\frac{1}{K\log K}\Big)K q p_1.
    \end{align}
    
    On the other hand, 
    \begin{align}
        \mathbb{E}_{u\sim \mathcal{D},x\sim \hat{\pi}}[u(x)] = q \cdot p_1+  q\cdot \varepsilon \beta^{-1} p_2= qp_1 + q \varepsilon \beta^{-1}\beta 'c_p^{-1}p_1 .
    \end{align}
    Therefore, the distortion of $\hat{\pi}$ is lower bounded by
    \begin{align}
        {\sf Dist}(\hat{\pi})\geq \Big(1-\frac{1}{K\log K}\Big) \frac{K}{1+\varepsilon \beta^{-1}Kc_p^{-1}} .
        \label{eq:lower-linear-7}
    \end{align}
    We let $K\to \infty$ and $\varepsilon\to 0$.
    Note that $\frac{c_p}{\varepsilon}\to \frac{\varepsilon}{4(\sigma(\beta)-1/2)}$ as $\varepsilon\to 0$.
    Therefore, the RHS of \eqref{eq:lower-linear-7} converges to 
    \begin{align}
        \frac{\beta}{4(\sigma(\beta)-1/2)} = \frac{\beta}{2}\frac{1+e^{-\beta}}{1-e^{-\beta}},
    \end{align}
    as desired.
\end{proof}
\section{Details of the Experiments}

\subsection{Details of Reward Scale Evaluation}\label{subsection:appendix-reward-scale}
Skywork-Reward-V2-Llama-3.1-8B \citep{liu2024skywork} and UltraRM-13B \citep{cui2023ultrafeedback} are trained on preference data by maximizing the likelihood under a single Bradley--Terry model.
For a dataset of prompt $z$, chosen completion $x$, and rejected completion $y$ (i.e., $x \succ y$), consisting of pairs $\{(x^i, y^i, z^i)\}_{i=1}^n$, the log-likelihood conditioned by the prompt $z$ is defined as follows. (Eq.~\eqref{eq:Setting-LossFunc} is the unconditioned version.)
\begin{align}
  \max_{\theta\in \mathbb{R}^d}
\sum_{i=1}^n
\log\big(\sigma(r_\theta(x^i\ |\ z^i) - r_\theta(y^i\ | \ z^i))\big).
\end{align}
While this objective was used in Skywork \citep{liu2024skywork}, UltraRM-13B made a slight modification to incorporate a guide of the annotated rewards for a subset of their datasets.
Specifically, when the annotated reward difference $\Delta r_{\sf annotated}$ between a chosen completion $x$ and a rejected completion $y$ is available, the objective is modified to
\begin{align}
\max_{\theta\in \mathbb{R}^d}
\sum_{i=1}^n
\log\big(\sigma(r_\theta(x^i\ |\ z^i) - r_\theta(y^i\ | \ z^i) - \Delta r_{\sf annotated})\big).
\end{align}
Here, $\Delta r_{\sf annotated}$ is the absolute difference between the annotated reward of two texts. 
It is normalized so that $\Delta r_{\sf annotated}\in (0,1]$.
When using datasets with only preference rankings, they simply set $\Delta r_{\sf annotated}=0$.
We denote the resulting reward model by $r_{\hat{\theta}}$.

For Skywork and UltraRM, we evaluate the reward scale using 5000 samples drawn from their training datasets, Skywork-Reward-Preference-80K-v0.1 \citep{liu2024skywork} and UltraFeedback \citep{cui2023ultrafeedback}, respectively.
Specifically, for each pair of prompt $z^i$, completions $x^i$ and $y^i$, we calculated the reward difference $\Delta r_{\hat{\theta}}(x^i,y^i;z^i)$ defined by
\begin{align}
 \Delta r_{\hat{\theta}}(x^i,y^i;z^i)=\big|r_{\hat{\theta}}(x^i\ |\ z^i) - r_{\hat{\theta}}(y^i\ | \ z^i)\big|.
\end{align}
Because UltraFeedback has four completions for each prompt, we selected a completion with the highest reward as $x^i$ and a completion with the lowest reward as $y^i$ for each.

\subsection{Details of the Synthetic Experiment}\label{subsection:appendix-toy-exp}
We consider three alternatives.
The utility distribution is defined as $u=(0,0.5,1)$ with probability $0.99$, and $u=(0.01,0,0)$ with probability $0.01$. 
We set $\beta=10$, $\mu=(10^{-4},10^{-4},1-2\cdot 10^{-4})$, and $\pi_{\mathrm{ref}}=(1-0.05-10^{-5},\,0.05,\,10^{-5})$, which yields $B=11.5$. 
Also, we tuned the KL budget $\tau=0.143$ so that the maximum average utility 
$\max_{\pi\colon \mathrm{KL}(\pi\|\pi_{\mathrm{ref}})\leq \tau}\mathbb{E}_{u\sim \mathcal{D},x\sim \pi}[u(x)]$, which is numerically computed, is $0.1$.

Rewards are computed by first evaluating the objective in \eqref{eq:Setting-LossFunc} exactly in the limit $n \to \infty$, and then performing second-order optimization using the L-BFGS algorithm by exploiting the concavity of the problem. 
In Figure~\ref{figure:toy-example}, the case $\mu \neq \pi_{\mathrm{ref}}$ corresponds to rewards computed under the above setting, whereas the case $\mu = \pi_{\mathrm{ref}}$ corresponds to replacing $\mu$ with $\pi_{\mathrm{ref}}$ as the data distribution in the same setup.

In both cases, we optimize the distribution using a mirror descent-style update \citep{beck2003mirror} with step size $\eta=10^{-3}$.
Specifically, starting from $\pi^0 = \pi_{\mathrm{ref}}$, we gradually update $\pi^t$ until $\pi^t$ violates the KL constraint by repeating the following update:
\begin{align}
 \pi_{t+1}
= \arg\max_{\pi}\left\{
\langle \pi, r\rangle
-\frac{1}{\eta}\,\mathrm{KL}(\pi\|\pi_t)
\right\} 
\quad \Longrightarrow\quad
\pi_{t+1}(x)
= \frac{\pi_t(x)\exp(\eta r(x))}{\sum_{x'}\pi_t(x')\exp(\eta r(x'))}.
\end{align}

\subsection{Plot of the Log-likelihood Ratio}\label{subsection:Appendix-estimation-B}
It is generally difficult to obtain a concrete value of the distribution mismatch $B$, unless the reference policy is taken to be the model immediately prior to applying RLHF, in which case $B=0$.
However, given the sampling distribution (i.e., the model used to generate completions), the RLHF training data sampled from it, and the model immediately prior to RLHF training (i.e., reference policy), one can estimate $B$ as the per-completion log-likelihood ratio.

Specifically, OLMo~3~\citep{olmo2025olmo} is an open-source model for which both the training data and intermediate checkpoints are publicly available, and it is trained using DPO. \citet{golz2025distortion} show that, under the present formulation, DPO is equivalent to RLHF. The training data includes samples drawn from the previous-generation open-source model OLMo~2~\citep{olmo20242}.
Therefore, we regarded the checkpoint of \texttt{allenai/Olmo-3-7B-Think} immediately prior to DPO as $\pi_{\mathrm{ref}}$, and \texttt{allenai/OLMo-2-1124-7B-Inst} \texttt{ruct} as $\mu$.
We measured the difference in log-likelihood on completions of the flan subset, derived from \texttt{allenai/OLMo-2-1124-7B-Instruct} and used in the DPO training of \texttt{allenai/Olmo-3-7B-Think}. Completions whose lengths fall within the top and bottom 5\% are excluded.

\begin{figure}[h]
\centering

\begin{minipage}{0.48\textwidth}
    \centering
    \includegraphics[width=\textwidth]{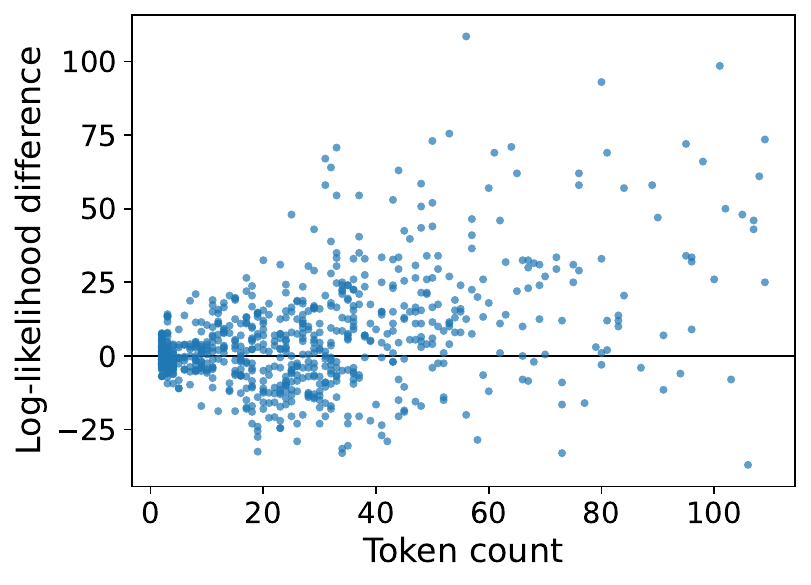}
    \caption{Distribution of $\log \frac{\mu(x)}{\pi_{\mathrm{ref}}(x)}$}
    \label{figure:4}
\end{minipage}
\hfill
\begin{minipage}{0.48\textwidth}
    \centering
    \includegraphics[width=\textwidth]{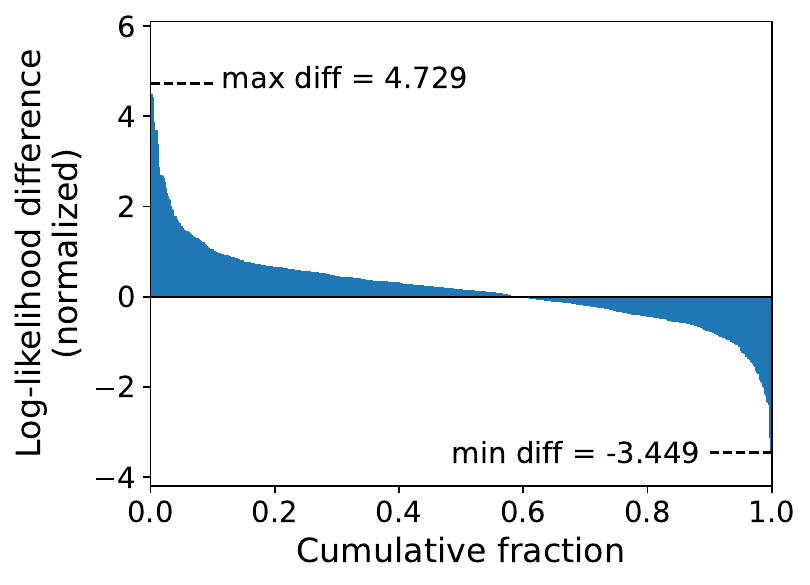}
    \caption{$\log \frac{\mu(x)}{\pi_{\mathrm{ref}}(x)}$ normalized by token length}
    \label{figure:5}
\end{minipage}
\vspace{3mm}
\end{figure}

The results are shown in Figures~\ref{figure:4}~and~\ref{figure:5}. 
At the completion level, we find that the maximum value of the log-likelihood difference $|\log \frac{\mu(x)}{\pi_{\mathrm{ref}}(x)}|$ is approximately 100. 
As this quantity tends to increase with token length, we also compute the average log-likelihood difference per token by normalizing by the token length. This value reaches up to 4.7.

If each completion is treated as an alternative, this amounts to an estimate of $B$.
However, considering utility to be driven by the worst-case completion-level log-likelihood differences may be overly pessimistic.
Since different models may express the same meaning differently, even when conveying essentially the same content with similar probabilities, the completion-level log-likelihood differences can be very large.
In light of this, it may be more appropriate to interpret alternatives as groups of semantically equivalent completions, in which case $B$ would likely be smaller. At the same time, however, we remark that the lack of an established methodology for grouping completions and estimating $B$ at the group level makes it challenging, at present, to evaluate $B$ concretely from this perspective.

\end{document}